\newtheorem{theorem}{Theorem}
\newtheorem*{theorem*}{Theorem}
\newtheorem{corollary}{Corollary}
\newtheorem{lemma}{Lemma}
\theoremstyle{definition}
\newtheorem{definition}{Definition}
\newtheorem{assumption}{Assumption}
\newtheorem{remark}{Remark}
\newcommand{\PORMDP}{\textsc{PORMDP }}
\newcommand{\PORMDPs}{\textsc{PORMDPs }}
\newcommand{\POMDP}{\textsc{POMDP }}
\newcommand{\POMDPs}{\textsc{POMDPs }}
\newcommand\numberthis{\addtocounter{equation}{1}\tag{\theequation}}
\newcommand{\mse}{\mathrm{MSE}} %%% I added a 1 here but it may not have been needed.
\newcommand{\reg}{\mathrm{Regret}}
\newcommand{\prob}{\mathbb{P}}
\newcommand{\cA}{{\cal A}}
\newcommand{\cC}{{\cal C}}
\newcommand{\cD}{{\cal D}}
\newcommand{\cF}{{\cal F}}
\newcommand{\cG}{{\cal G}}
\newcommand{\cH}{{\cal H}}
\newcommand{\cL}{{\cal L}}
\newcommand{\cM}{{\cal M}}
\newcommand{\cN}{{\cal N}}
\newcommand{\cO}{{\cal O}}
\newcommand{\cP}{{\cal P}}
\newcommand{\cQ}{{\cal Q}}
\newcommand{\cR}{{\cal R}}
\newcommand{\cS}{{\cal S}}
\newcommand{\cT}{{\cal T}}
\newcommand{\cU}{{\cal U}}
\newcommand{\cW}{{\cal W}}
\newcommand{\R}{\mathbb{R}}
\newcommand{\N}{\mathbb{N}}
\newcommand{\M}{\mathbb{M}}
\newcommand{\E}{\mathbb{E}}
\renewcommand{\epsilon}{\varepsilon}
\DeclareSymbolFont{extraup}{U}{zavm}{m}{n}
\DeclareMathSymbol{\varheart}{\mathalpha}{extraup}{86}
\DeclareMathSymbol{\vardiamond}{\mathalpha}{extraup}{87}
\DeclareMathOperator*{\argmax}{arg\,max}
\DeclareMathOperator*{\argmin}{arg\,min}
\DeclareMathOperator*{\habe}{HABE}
\DeclareMathOperator*{\bell}{BE}
\renewcommand{\epsilon}{\varepsilon}
\renewcommand{\epsilon}{\varepsilon}
\newcommand{\ind}{\mathds{1}}
\newcommand{\pushright}[1]{\ifmeasuring@#1\else\omit\hfill$\displaystyle#1$\fi\ignorespaces}
\newcommand{\pushleft}[1]{\ifmeasuring@#1\else\omit$\displaystyle#1$\hfill\fi\ignorespaces}
\title{A Theoretical Framework for Partially-Observed Reward States in RLHF}
\author{%
  Chinmaya Kausik$^1$, Mirco Mutti$^2$, Aldo Pacchiano$^3$, Ambuj Tewari$^1$
  % \And
  % Coauthor \\
  % Affiliation \\
  % Address \\
  % \texttt{email} \\
}
\date{$^1$University of Michigan, $^2$Technion, $^3$Boston University}
\begin{document}

\maketitle

\begin{abstract}
The growing deployment of reinforcement learning from human feedback (RLHF) calls for a deeper theoretical investigation of its underlying models. The prevalent models of RLHF do not account for neuroscience-backed, partially-observed ``internal states'' that can affect human feedback, nor do they accommodate intermediate feedback during an interaction. Both of these can be instrumental in speeding up learning and improving alignment. To address these limitations, we model RLHF as reinforcement learning with partially observed reward-states (PORRL). We accommodate two kinds of feedback -- cardinal and dueling feedback. We first demonstrate that PORRL subsumes a wide class of RL problems, including traditional RL, RLHF, and reward machines. For cardinal feedback, we present two model-based methods (POR-UCRL, POR-UCBVI). We give both cardinal regret and sample complexity guarantees for the methods, showing that they improve over naive history-summarization. We then discuss the benefits of a model-free method like GOLF with naive history-summarization in settings with recursive internal states and dense intermediate feedback. For this purpose, we define a new history aware version of the Bellman-eluder dimension and give a new guarantee for GOLF in our setting, which can be exponentially sharper in illustrative examples. For dueling feedback, we show that a naive reduction to cardinal feedback fails to achieve sublinear dueling regret. We then present the first explicit reduction that converts guarantees for cardinal regret to dueling regret. In both feedback settings, we show that our models and guarantees generalize and extend existing ones.
\end{abstract}

\section{Introduction}

As automated systems become more ubiquitous, the need to understand how to align their objectives with the needs of humans that interact with them has become increasingly important~\citep{ji2023ai}. The development and study of reinforcement learning from human feedback (RLHF) has been an important way of formalizing these problems and design methods for alignment~\citep{wirth2017survey}.  RLHF is concerned with the study of how to find a policy that maximizes an objective defined in terms of human labeled data in an RL domain~\citep{christiano2017deep}.

Many RLHF methods entail learning a reward function from human data, and then using the learned reward function as an input to a traditional reinforcement learning algorithm such as PPO~\citep{schulman2017proximal}. These reward-based RLHF methods have been pivotal in the development of several technologies such as robotics~\citep{brown2019extrapolating, shin2023benchmarks}, recommender systems~\citep{xue2023prefrec}, and the training of large language models (LLMs)~\citep{ouyang2022training}. 

It is important to emphasize that reward-based RLHF is not limited to preferential feedback. In fact, there exist two dominant kinds of feedback, namely \emph{cardinal} and \emph{dueling} feedback. Cardinal feedback requires the human labeler to provide a single label over an entire trajectory of interaction between the agent and the environment. Dueling feedback requires the human to specify a preference between two trajectories. In practice, cardinal feedback has been used for LLM alignment algorithms like KTO \citep{ethayarajh2024ktomodelalignmentprospect}, while dueling feedback has been used in algorithms like DPO \citep{rafailov2023direct} and PPO-RLHF \citep{ouyang2022training}. Past theoretical work \citep{chatterji2021theory, wang2023rlhf, saha2023dueling} has designed algorithms for both cardinal and dueling feedback under various metrics -- standard/cardinal regret, sample complexity or dueling regret.

We observe that current models of reward-based RLHF assume a very specific model of non-Markovian rewards. Modeling rewards as non-Markovian is natural, since human responses to stimuli are known to be affected by partially-observed and evolving ``internal states''~\citep{flavell2022emergence}. For example, when a human reads a piece of text (possibly generated by an LLM), their assessment may oscillate between opposing sentiments in different parts of the text. Unfortunately, current models do not explicitly incorporate such ``internal states'' that affect rewards, and are limited to a specific linear model of rewards. While one can incorporate internal states using naive history-summarization, i.e. by treating the entire trajectory $\tau[h]$ so far as the state, we show below that better general algorithms can be designed with improved guarantees.

Additionally, current models assume that feedback is received only once at the end of an episode or pair of episodes. In many applications such as robot motion \citep{lee2021pebble} and mathematical reasoning \citep{uesato2022solving}, correctly incorporating intermediate or ``snippet-level'' feedback can speed up learning as well as improve alignment. With this in mind, we ask the following questions:
\begin{center}
    \textit{How do we generalize the RLHF setting to incorporate internal states and intermediate feedback? What algorithms and guarantees can improve over naive history-summarization here?}
\end{center}
\textbf{Contributions:}
\begin{itemize}[itemsep=0pt, topsep=0pt, leftmargin=*]
    \item \textbf{Introducing PORRL:} In Section~\ref{sec:porrl}, we introduce PORRL, which generalizes current RLHF models to incorporate ``internal states'' and intermediate feedback.
    \item \textbf{Improving over naive history-summarization (model-based algorithms):} In Section~\ref{subsec:model-based}, we design model-based optimistic algorithms that, POR-UCRL and POR-UCBVI, achieving a regret of $\widetilde \cO((poly(H,S,A)+ p\sqrt{d_Ed_C})\sqrt{T})$ and a sample complexity of $\widetilde O ((poly(H,S,A)/\epsilon^2 + p^2d_Ed_C/\epsilon^2)$ under minimal assumptions.\footnote{$d_E$ is a relevant eluder dimension and $d_{C}$ is a relevant covering dimension. We are working under general function-approximation for the \textit{reward} model. It is straightforward to also add general function-approximation for the \textit{transition} model, abstracting out the $S,A$ dependence. See remark \ref{rem:gen-func-approx-prob} and Appendix~\ref{app:general-function-approximation-prob}.} The $poly(H,S,A)$ term would be $(SA)^{\Omega(H)}$ under naive history-summarization. We show that our guarantees subsume and improve over past results.
    \item \textbf{Leveraging recursive structure on internal states (model-free algorithms):} In Section~\ref{subsec:model-free}, we study the model-free algorithm GOLF, applied using history-summarization. We define a new ``history-aware'' notion of dimension, $d_{\habe}$ and show that GOLF has regret $\widetilde \cO(pH\sqrt{d_{\habe}d_CT})$. We show using an example that when internal states have a recursive structure, our guarantee can be exponentially smaller than existing guarantees and guarantees for our model-based methods.
    \item \textbf{Reduction from Dueling to Cardinal PORRL:} In section~\ref{sec:dueling_porrl}, we show that a naive blackbox reduction from dueling to cardinal PORRL always fails. We design a whitebox reduction from dueling PORRL to a large class of optimistic algorithms for cardinal PORRL. To the best of our knowledge, this is the first explicit reduction from cardinal to dueling regret guarantees for MDPs.
    \item \textbf{Practical Implications:} While the aim of our work is largely theoretical, we extract practical insights from our results throughout the text. These are summarized in section~\ref{sec:conclusions}.
\end{itemize}

%\paragraph{Existing models of RLHF.} There exist two prominent models of reward-based RLHF based on wether the feedback is produced once-per-episode or the result of a comparison between trajectories. These models have an impact in the development of algorithms for a varied application domains ranging from the training of large language models to robotics. Some of the most successful algorithms for training, fine-tuning and (safe-tuning) the language models used in technologies such as ChatGPT~\citep{achiam2023gpt} rely on an RLHF pipeline. \aldo{sprinkle references ehre}.

\subsection{Related Work}

\textbf{RLHF.}~~
RL with human preferences has a long history~\citep{akrour2012april,busa2014survey,sadigh2017active}. It has been successfully used in disparate domains such as robotics, games, and LLMs. The problem of learning from cardinal feedback has been theoretically studied in \citep{efroni2021reinforcement, chatterji2021theory}. Theoretical guarantees for utility-based preferential (dueling) feedback can be found in~\citep{DBLP:journals/corr/abs-1908-01289,saha2023dueling,chen2022human,zhan2023query}. The non-Markovian nature of the optimal policy under these RLHF models contributes greatly to why the problem is harder than traditional RL.

%Preference feedback in the form of trajectory comparisons was introduced in~\citep{}. 

% Non-markovianity is the definining characteristic of preference based RLHF. ß  

 % dueling feedback \citep{akrour2012april,busa2014survey,sadigh2017active,saha2023dueling} and~\citep{chen2022human}. \ck{Talk about reductions, explicitly compare.}

\textbf{Internal states and intermediate feedback.}~~%\aldo{this can be named diferently}
There is evidence in neuroscience research indicating that human responses to stimuli are affected by ``internal states'' — partially hidden variables that profoundly shape perception, cognition, and action”~\citep[see][]{flavell2022emergence}. Despite not explicitly recognizing the phenomenon of human internal states, several works in RLHF incorporate richer forms of feedback. For example, Wu et al.~\citep{wu2023fine} consider human labeling over sub-sections of the text. In work on process supervision~\citep{uesato2022solving,lightman2023let}, humans give feedback on intermediate steps. Motivated by these, our work aims to lay out the groundwork for a theoretical treatment of internal human states and intermediate feedback in RLHF, using partially observed reward-states. 

\textbf{Partial observability in RL.}~~The problem of partial observability in RL is not new. Although learning in \POMDPs~\citep{aastrom1965optimal} is known to be statistically intractable in general~\citep{krishnamurthy2016pac, jin2020sample}, a flurry of recent works have studied \POMDPs under various structural assumptions~\citep{du2019provably, liu2022partially, liu2022optimistic, golowich2022learning, zhan2022pac, cai2022reinforcement, chen2022partially, chen2023lower, wang2023represent, zhong2023gec}. Our model is distinct from \POMDPs since our results do not require the latent state evolution to be Markovian, but assumes Markovian transitions for observed states. See Section~\ref{subsec:tractable_porrl} for a discussion.

% \paragraph{Reward machines} reward machines \citep{icarte2022reward}  model non-Markovian feedback in RL where a state automaton evolves according to some dynamics. Thus the rewards do not only depend on the state and action but also on the latent state of the reward machine. While most of the work study deterministic automata over MDPs~\citep{icarte2018using, icarte2022exploiting, xu2020joint}, the model has been generalized to \POMDP \citep{icarte2019learning}, stochastic latent transitions \citep{corazza2022reinforcement}, and hierarchical feedback \citep{furelos2023hierarchies}. Our model is a super-set of reward machines. %\aldo{when considering recursive decoders}

\section{Defining RL with Partially-Observed Reward States (PORRL)}\label{sec:porrl}
\begin{figure}[H]
    %\vspace{-1ex}
    \centering
    \includegraphics[scale=0.18]{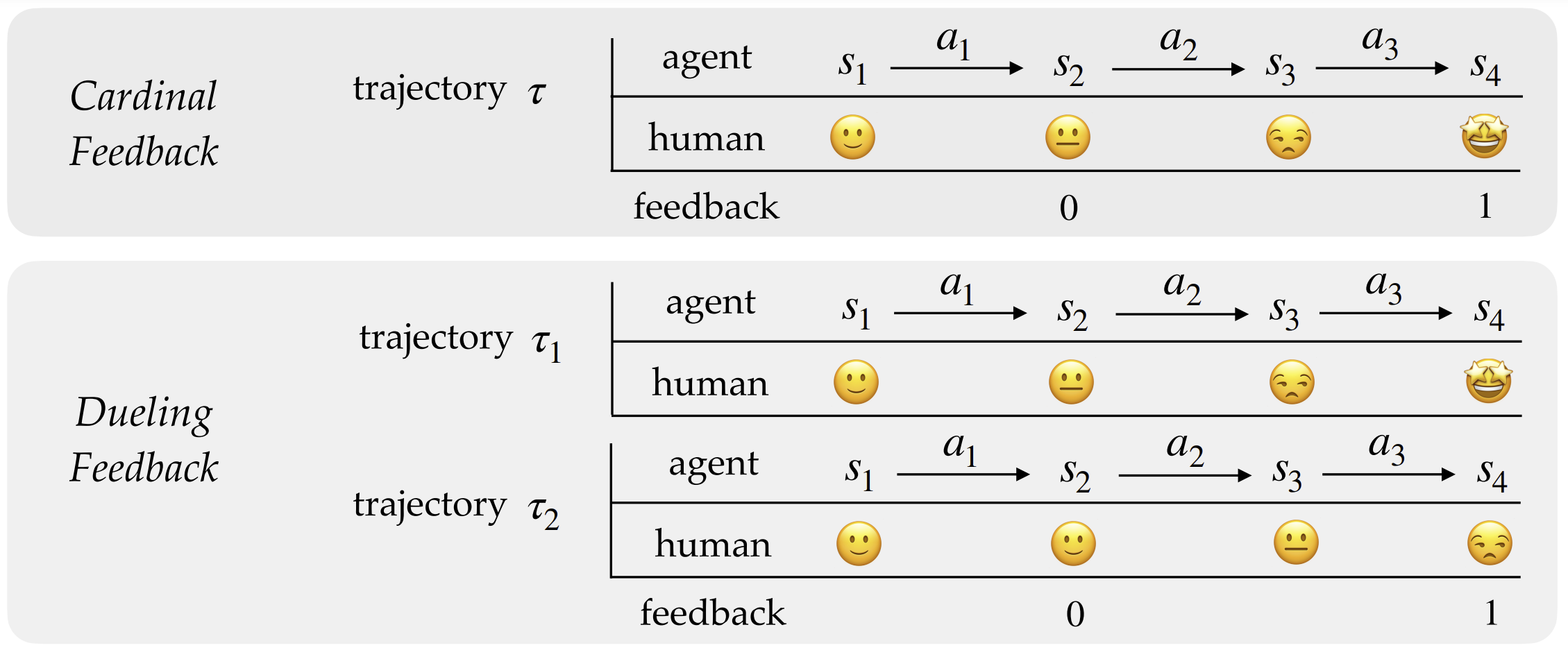}
    \caption{Illustrating how a human's internal states (represented by emojis) affect their feedback to an agent or LLM. Top: Cardinal or good/bad feedback. Bottom: Dueling or preferential feedback. In line with Definition~\ref{def:pormdp}, $u_h \in \cU$ are represented by the emojis, $p=2$ and $\cH_p = \{2, 4\}$ in both cases.}
    \label{fig:feedback_models}
    %\vspace{-3ex}
\end{figure}

%%%%%%%%%%%%%%%%%%%%%%%%%%%%%%%%%%%%%%%%%%%%%%%%%%%%%%%%%%%%
% -- \textbf{Notation:} Put everything here, just do a better job at it.
In this paper, we consider an episodic reinforcement learning setting in which a learner interacts with an MDP having a state space $\cS$, an action space $\cA$, transitions dynamics $\prob$, and episode length $H$.
At each time-step $h \in [H]$ of an episode, the learner observes the state $s_h$ and takes an action $a_h$, generating a \emph{trajectory} $\tau = (s_1, a_1, \cdots, s_H, a_H ) \in \Gamma$, where $\Gamma$ denotes the space of trajectories.\footnote{We will further denote $\tau[h] = (s_1, a_1, \ldots, s_h, a_h)$ the sub-trajectory of $\tau$ of length $h$ and $\Gamma_h$ the corresponding space of sub-trajectories of length $h$.}
In a typical RLHF setting, the learner observes a human feedback $o_H \in \cO$ at the end of the episode, which is associated to but potentially different from a reward $r : \Gamma \to \R$ encoding the task. We now describe how internal states and intermediate feedback shall be incorporated in the latter RLHF framework through a guiding example, and we use this to formally introduce the PORMDP model.

\subsection{PORMDPs}
\label{subsec:pormdps}

Let us consider the example of a human interacting with a language model, as in Figure~\ref{fig:feedback_models}. Here, an action is a token, the state is the text so far, and the reward is some score representing the human's satisfaction, which induces stochastic feedback. The internal states could be the human's emotional reaction to the text (e.g., happy, frustrated, or amused), or numbers in $[0,1]$ encoding a confidence level that the text is progressing towards a coherent response. While an agent goes through a sequence of states and actions, the system (i.e., the human) progresses through internal states, which inevitably affect, together with agent's actions and the state of the process, the human's satisfaction.

Formally, this can be modeled by introducing internal states $u \in \cU$ and defining the set of \textit{underlying} histories $\Gamma^u_{h-1}$ that incorporate internal states by $\Gamma^u_{h-1} := \{\tau^u[h-1] = \{(s_l, u_l, a_l)\}_{l=1}^{h-1} \mid s_l \in \cS, a_l \in \cA, u_l \in \cU\}$. We model the dynamics of internal states by saying that there exists an internal state generator $w_h: \Gamma^u_{h-1} \times \cS \times \cA \to \Delta(\cU)$ so that the human's internal state $u_{h}$ is sampled from the distribution defined by $w_h(\tau^u[h-1], s_h, a_h)$. 
The human's satisfaction at time $h$ should then be a function of the current state and action, but also the current internal state, given by $r_h(s_h, u_h, a_h)$. 

The agent does not observe the reward $r_h$ directly, but a feedback $o_h$ depending on $r_h$. Typically, $o_h$ will be $\{0,1\}$ feedback reflecting whether the human says that they are satisfied or not. In general, this could be stochastic. For instance, this could be $Ber(\sigma_h(r_h))$ for some function $\sigma_h$. So, $o_h \sim e_h(r_h)$ for some distribution $e_h(r_h)$. This leads to the general definition below, where we have introduced new objects $\cU, \cH_p, w, e$ not seen in traditional RL: 
%\vspace{-1ex}
\begin{definition}
\label{def:pormdp}
    A   \PORMDP $\M$ with \textit{cardinal feedback} is a tuple $(\cS, \cA, \cU, \cO, \prob, \cH_p, r, w, e)$, where:
    \begin{itemize}[itemsep=-2pt, leftmargin=*, topsep=0pt]
        \item $\cS, \cA$ are fully observable states and actions, $\cU$ are \textit{unobserved internal reward-states}, $\cO$ is a space of feedback, $\prob(\cdot \mid s,a)$ is a Markovian transition matrix, $s_1 \in \cS$ is an initial state.\footnote{Recall that choosing a formal state $s_1$ to serve as a placeholder initial state is not restrictive.}
        \item $\cH_p \subset [H]$ is a set of timesteps where reward and feedback is obtained with size $|\cH_p| = p$.
        \item $r := \{r_h\}_{h\in\cH_p}$ so that $r_h: \cS \times \cU \times \cA \to \R$ are {reward functions} at time $h$. 
        \item $w := \{w_h\}_{h\in\cH_p}$ so that $w_h: \Gamma^u_{h-1} \times \cS \times \cA \to \Delta(\cU)$ are \emph{internal state generators} that map underlying histories of $(s,a,u)$ tuples to distributions over $\cU$.
        \item $e := \{e_h\}_{h\in \cH_p}$ are \emph{feedback functions} so that the feedback $o_h \sim e_h(r_h)$ is sampled from an $\eta_h$-subgaussian distribution $e_h$ with mean $\sigma_h(r_h)$ for some activation function $\sigma_h: \R \to \R$.\footnote{This subsumes and generalizes the example of Bernoulli feedback in RLHF.}
    \end{itemize}
\end{definition}
%\vspace{-1ex}
In some relevant RLHF applications, the human is presented with two trajectories and they provide feedback based on the pair. In most cases, this involves indicating a $0$-$1$ preference between trajectories. To accommodate this setting, we extend the framework to dueling feedback.
%\vspace{-1ex}
\begin{definition}
\label{def:pormdp-dueling}
    A  \PORMDP $\M$ with \textit{dueling feedback} is a tuple $(\cS, \cA, \cU, \cO, \prob, \cH_p, r, w, e)$, where everything is identical to Definition~\ref{def:pormdp}, except that every episode now involves running two trajectories $\tau_1, \tau_2$ that produce rewards $r_{h,1}, r_{h,2} \ \forall h \in \cH_p$, and feedback is distributed as $o_h \sim e_h(r_{h,1} - r_{h,2})$.
\end{definition}
%\vspace{-1ex}
We note that \PORMDPs subsume and model a wide class of RL settings, including RLHF. A brief list of settings that \PORMDPs subsume is as follows: (i) traditional MDPs, by setting $\cU = \{\star\}$; (ii) existing linear models of RLHF, setting $\cU = \{\phi(\tau)^\top \textbf{w}\}$ for a known feature map $\phi$ and unknown $\textbf{w}$ \citep{chatterji2021theory, efroni2021reinforcement,saha2023dueling, wang2023rlhf}; (iii) learning reward models with stochastic feedback by setting $\cU$ to be the set of reward states \citep{icarte2019learning, icarte2018using, icarte2022exploiting, icarte2022reward}. By using $\cU$ to model implicit intentions, \PORMDPs can also model learning from the following feedback: (iv) process supervision \citep{lightman2023let, uesato2022solving}, (v) fine-grained feedback \citep{wu2023fine} and (vi) snippet-level feedback \citep{lee2021pebble}. Further, one can show that in all these settings, we can define the $\cU$ generators $w_h$ to be deterministic. 

One illustrative hard example of PORRL is that of a \textit{combination lock,}\footnote{This is a variant of a common example used to generate lower bounds in \POMDPs~\citep{krishnamurthy2016pac, jin2020sample}. In contrast, we will use it to illustrate the power of our upper bounds. } which we will also use later in the paper. Consider an $H$-digit numerical lock with a set $\cA$ of options at each digit. Let the true combination be $a^\star_1, \dots a^\star_H$. An agent tries to unlock it by listening for ``clicks'' while rotating the dial at each digit $h$. Naturally, we only hear clicks at digit $h$ if the entire combination so far is correct. We thus model this as a \PORMDP with non-Markovian rewards, $\cS = \{\star\}$, $\cU = \{\bigcup_h \cA^h\}$ and the appropriate dynamics. Arguing that the click might sometimes be too faint, we consider stochastic rewards. Specifically, we model this as $r_h(s_h, u_h, a_h) = Ber(q\ind_{a^\star_1, \dots a^\star_h}(u_h))$ for some uncertainty parameter $q$. Notice that the internal states have a recursive structure here, and they evolve in a Markovian way. This is a toy model for the problem of learning to take desirable sequences of actions using intermediate feedback. It can be viewed as a simplified version of many such tasks -- navigating mazes, writing structured essays with guidance, writing a proof with feedback on correctness.

\subsection{Reinforcement Learning in PORMDPs (PORRL) with Cardinal and Dueling Feedback}
\label{subsec:porrl}

Due to the complex nature of observability in our problem, we will use this subsection to carefully set up a meaningful set of RL problems, in which an agent interacts with a PORMDP to optimize a  policy. At each step $h$, the agent observes a history $\tau[h-1] \in \Gamma_{h-1}$ and takes an action $a_h \sim \pi(\tau[h-1], s_h)$. The agent does not observe the reward $r_h$, but receives an observation $o_h \sim e_h(r_h)$. 

\paragraph{Defining the learning objective.} Since rewards are partially observed and dependent on the entire history, there is a subtlety in defining value functions. We first choose and fix some subclass $\Pi$ of history-dependent policies and we define the total expected reward of a policy $\pi \in \Pi$ as
$$V_w(\M, \pi) := \E_{\tau^u \sim \prob^{w, \pi}} \bigg[\sum\nolimits_{h \in \cH_p} r_h(s_h, u_h, a_h)\bigg]$$

$V_w(\M, \pi)$ is taking an expectation over the dynamics of \textit{underlying} trajectories $\tau^u = \{(s_h, u_h, a_h)\}_{h=1}^H \sim \prob^{w, \pi}$. Since the states $u$ are never revealed, these dynamics can never be learnt, making $V_w$ hard to directly deal with. In this light, we introduce stochastic functions $g_h: \Gamma_h \to \Delta(\cU)$ that marginalize the internal state generator $w_h$ over the sequence $u_1, \dots u_{h-1}$. That is, given an $(s,a)$ history $\tau[h]$, we can define\footnote{More technically, define $g_h(\tau[h])$ to be the \textit{regular conditional distribution} of the random variable $w_h((\tau[h-1], u_1, \dots u_{h-1}), s_h, a_h)$, conditioned on $\tau[h]$.}  $g_h(\tau[h]) \sim u_h \mid \tau[h]$. Now define 
$$V_g(\M, \pi) := \E_{\tau \sim \prob^\pi} \bigg[\sum\nolimits_{h \in \cH_p} \E_{u_h \sim g_h(\tau[h])} [r_h(s_h, u_h, a_h)]\bigg]$$

$V_g(\M, \pi)$ is a much more tractable object, where the outer expectation is taken over the dynamics of the \textit{observed} trajectories $\tau$. The following result establishes that as one would hope, $V_w = V_g$.
%\vspace{-1ex}
\begin{restatable}[Replacing $w$ with $g$]{lemma}{WGSame} \label{lem:w-g-same}
    For any history-dependent policy $\pi$ that selects an action $a_h \sim \pi(\tau[h-1], s_h)$, $V_w(\M, \pi) = V_g(\M, \pi)$ holds for any $\M$.
\end{restatable}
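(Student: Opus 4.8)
The plan is to prove the identity term-by-term over $h \in \cH_p$ and, for each fixed $h$, to invoke the law of iterated expectations by conditioning on the observed sub-trajectory $\tau[h]$. By linearity of expectation, it suffices to show that for every $h \in \cH_p$,
$$\E_{\tau^u \sim \prob^{w,\pi}}\big[r_h(s_h, u_h, a_h)\big] = \E_{\tau \sim \prob^\pi}\Big[\E_{u_h \sim g_h(\tau[h])}\big[r_h(s_h, u_h, a_h)\big]\Big].$$
Summing over $h \in \cH_p$ then yields $V_w(\M,\pi) = V_g(\M,\pi)$.

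First I would establish that the marginal law of the observed trajectory $\tau = (s_1, a_1, \ldots, s_H, a_H)$ induced by the joint underlying dynamics $\prob^{w,\pi}$ (after integrating out the internal states $u_1, \ldots, u_H$) coincides exactly with $\prob^\pi$. This holds because the observed transitions $\prob(\cdot \mid s,a)$ are Markovian and do not depend on the internal state, and because $\pi$ selects $a_h \sim \pi(\tau[h-1], s_h)$ as a function of the observed history alone; hence the internal states never feed back into the evolution of the observed process, and marginalizing them out leaves the observed dynamics unchanged. Consequently, any expectation of a function of $\tau$ can be computed equivalently under $\prob^{w,\pi}$ or under $\prob^\pi$.

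Next I would fix $h$ and apply the tower property under $\prob^{w,\pi}$, conditioning on $\tau[h]$. Since $r_h$ depends on the underlying trajectory only through $(s_h, u_h, a_h)$ and the pair $(s_h, a_h)$ is $\tau[h]$-measurable, we have
$$\E_{\prob^{w,\pi}}\big[r_h(s_h, u_h, a_h) \mid \tau[h]\big] = \E_{u_h \sim g_h(\tau[h])}\big[r_h(s_h, u_h, a_h)\big],$$
which is exactly the defining property of $g_h(\tau[h])$ as the regular conditional distribution of $u_h$ given $\tau[h]$ (per the footnote construction). Taking the outer expectation over $\tau[h]$ and using the marginalization identity from the previous step converts this outer expectation from $\prob^{w,\pi}$ to $\prob^\pi$, giving the claimed per-$h$ equality.

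The hard part will be the measure-theoretic justification when $\cU$ is a general (possibly uncountable) space: one must guarantee the existence and the defining property of the regular conditional distribution $g_h(\tau[h]) = \mathrm{Law}(u_h \mid \tau[h])$, which is implicit in the footnote's construction of $g_h$ from $w_h$ marginalized over $u_1, \ldots, u_{h-1}$. I would handle this by appealing to the existence of regular conditional probabilities on standard Borel spaces, and then checking that marginalizing the conditional law of $u_h$ given $(\tau[h-1], u_1, \ldots, u_{h-1})$ (i.e.\ $w_h$) over the conditional law of $(u_1, \ldots, u_{h-1})$ given $\tau[h]$ indeed reproduces $g_h(\tau[h])$. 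This is a routine application of the chain rule for conditional expectations, but it is the only place where genuine care is needed.
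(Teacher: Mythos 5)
Your proposal is correct and follows essentially the same route as the paper's proof: linearity over $h \in \cH_p$, the tower property conditioning on the observed sub-trajectory $\tau[h]$, and the defining property of $g_h(\tau[h])$ as the regular conditional distribution of $u_h$ given $\tau[h]$. The paper's version simply treats the marginalization identity (that the observed marginal of $\prob^{w,\pi}$ is $\prob^\pi$) and the measure-theoretic details implicitly, under an acknowledged "slight abuse of notation," where you spell them out.
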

%\vspace{-1ex}
For the purposes of value functions, $\M$ is fully specified by $(\cS, \cA, \cU, \cO, \prob, \cH_p, r, g, e)$. Henceforth we replace $w$ with $g$ and denote the value function $V_g(\M, \pi)$ by $V(\M, \pi)$. Define the optimal policy as $\pi_\star := \argmax_{\pi \in \Pi} V(\M, \pi)$. 

\textbf{Cardinal PORRL.} Consider an algorithm producing a sequence of policies $\pi_1, \dots, \pi_T \in \Pi$, where $\pi_t$ is chosen only using trajectories $\{\tau_i\}_{i=1}^{t - 1}$ generated by $\{\pi_i\}_{i=1}^{t - 1}$. We measure the performance of such an algorithm by its \textit{cardinal regret} under model $\M_\star$:
$$\reg(T) = \sum_{t=1}^T V(\M_\star, \pi_\star) - V(\M_\star, \pi_t)$$
One can also ask for the sample complexity of learning a good policy. Given a randomized algorithm that completes $N$ episodes of interaction and outputs $\pi_N$, the \textit{sample complexity} $N(\epsilon, \delta)$ of the algorithm is the minimum $N$ so that $V(\M_\star, \pi_\star) - V(\M_\star, \pi_N) \leq \epsilon$ with probability at least $1-\delta$ over the randomness of the feedback and the algorithm. It makes sense to study cardinal regret and sample complexity in two RLHF settings:
\begin{itemize}[itemsep=-1pt, topsep=0pt, leftmargin=*]
    \item Using a learnt reward model: In most deployments of offline RLHF, an offline dataset of dueling feedback from humans is typically used to create a cardinal feedback oracle (a reward model), which is then used to train the policy using RL. In fact, \citet{anon2024enhancingmultistep} do exactly this under our model. The sample complexity of the algorithm is important in this setting. 
    \item Improving a deployed model with batched feedback: One can learn from batches of interaction with humans and hope to improve the model/policy adaptively over multiple batches. This is compatible with deploying LLMs or recommender systems to users, collecting a batch of good/bad feedback, and then fine-tuning the model offline using this batch. This approach is also discussed in \citep{swamy2024minimaximalist, dong2024rlhf}. Regret is a better metric than sample complexity here, since we want users to be satisfied (exploiting) while improving the model (exploring). Instead of good/bad feedback, we can also ask for dueling feedback against a fixed $\pi_0$ and treat it as cardinal feedback.\footnote{If the activation function is Lipschitz and monotone, then we can get cardinal regret guarantees for this problem by using the difference function class.}
\end{itemize}   

\textbf{Dueling PORRL.} In dueling PORRL, we play a \textit{duel} by running two policies $(\pi_1, \pi_2) \in \Pi \times \Pi$ in parallel to obtain trajectories $(\tau_1, \tau_2)$ and receive feedback $\{o_h\}_{h \in \cH_p}$. Again, note that the rewards of the policies are not observed. While the definitions of $V(\M, \pi)$ and $\pi_\star$ are the same as before, we define a new measure of regret accordingly. If we play $T$ duels $(\pi_{1,1}, \pi_{2,1}), \dots, (\pi_{1,T}, \pi_{2,T})$ according to an algorithm, we aim to minimize the \textit{dueling regret} given by
$$\reg_D(T) = \sum_{t=1}^T V(\M_\star, \pi_\star) - \frac{V(\M_\star, \pi_{1,t}) + V(\M_\star, \pi_{2,t})}{2}$$
It makes sense to consider this metric when improving a deployed model with batched dueling feedback. We can do the same batching as the batched feedback example above, but instead compare our model/policy $\pi_t$ to a fixed base policy $\pi_0$ and ask for dueling feedback. The induced feedback can be treated as cardinal feedback. This is similar to the ideas in \citet{wang2023rlhf}, who consider this setting and give cardinal regret/sample complexity guarantees. However, when deploying a model, we typically want humans to be satisfied with \textit{both} the options they are given. Cardinal regret only accounts for one of the options being good. Dueling regret demands that \textit{both} policies used are good.
\begin{remark}\label{rem:porrl-subsumes}
PORRL subsumes the settings of \citep{saha2023dueling, chatterji2021theory}, which in turn subsume the feedback models of RLHF \citep{wang2023rlhf}. Crucially, \citep{wang2023rlhf, chatterji2021theory} measure performance using only sample complexity or cardinal regret, while \citep{saha2023dueling} only study dueling regret. We have discussed above why both metrics are important.
\end{remark}

\subsection{A General Yet Tractable Case}
\label{subsec:tractable_porrl}

The nature of the feedback in \PORMDPs, which depends on a reward that is function of the entire history, signals that PORRL may be intractable in general. We now instantiate the model into a statistically tractable sub-class that still subsumes most existing work on RLHF and all the examples provided at the end of Section~\ref{subsec:pormdps}.
Specifically, we assume that the internal reward-state functions $g_h$ are deterministic and the feedback is emitted according to a Bernoulli distribution depending on the reward. We will work under this assumption in the remainder of the paper.
%\vspace{-1ex}
\begin{assumption}\label{assump:tractable-porrl}
    We work in a realizable setting. That is, the unknown transition kernel $\prob$ lies in a known class $\cP$, and the unknown reward function $r_h: \cS\times \cU \times \cA \to \R$ lies in a known class $\cR_h$ with $|r_h| \leq B$ for all $h$. Assume that $g_h$ is deterministic (but unknown) and belongs to a known class of ``decoder functions'' $\cG_h$. Let $\cO = \{0,1\}$  and let $e_h$ only depend on the rewards. For dueling feedback, let $e_h(r_{h,1} -  r_{h,2})$ be $\eta_h$-subgaussian with unknown mean $\sigma_h(r_{1,h} - r_{2,h})$. Also assume that $\sigma_h$ and $\sigma_h^{-1}$ are Lipschitz with unknown Lipschitz constants $\kappa_{1,h}$ and $\kappa_{2,h}$ respectively. Call the resulting class of  \PORMDPs $\cM$.
\end{assumption}
%\vspace{-1ex}
We also define a function class induced by $\cR_h$ and $\cG_h$.
%\vspace{-1ex}
\begin{definition}\label{def:f-h-class}
    Let us then consider the decoder-induced function classes $\cF_h$ given by
$$
    \cF_h := \Big\{ f_h: \Gamma_h \to \R\ \ \Big\vert\ \ \exists g_h \in \cG_h, r_h \in 
    \cR_h \quad \text{ s.t. } \quad f_h(\tau[h]) = r_h(s_h, g_h(\tau[h-1]), a_h),\ \forall \tau \Big\}
$$
Also define $\cF := \prod_{h \in \cH_p} \cF_h$ so that $f = \{f_h\}_{h\in\cH_p} \in \cF$. A model $\M$ is then fully determined by $(\prob, f)$, so we denote $V(\prob, f, \pi) := V(\M, \pi)$. Note that $V(\prob, f, \pi) = \E_{\tau \in \prob^\pi} \left[\sum_{h \in \cH_p} f_h(\tau[h])\right]$. 
\end{definition}
%\vspace{-1ex}
\begin{remark}\label{rem:tractable-subsumes}
    We note that all examples from Section~\ref{subsec:pormdps} work with deterministic dynamics for $\cU$ and satisfy Assumption~\ref{assump:tractable-porrl}.
\end{remark}
% In the case of \citep{chatterji2021theory}, $\cF = \cF_H = \{\phi(\tau)^\top \textbf{w} \mid \|\textbf{w}\| \leq W\}$, where $\phi(\tau)$ is an arbitrary known function satisfying $\|\phi(\tau)\| \leq B$ for all $\tau$.
%\vspace{-1ex}
Giving statistically efficient algorithms for this framework comes with numerous challenges:
\begin{itemize}[itemsep=-2pt, topsep=0pt, leftmargin=*]
    \item \textbf{Traditional RL incurs linear regret:} We show in Lemma~\ref{lem:markovian-not-enough} that any method returning a possibly time-dependent but memoryless policy can incur linear regret.
    \item \textbf{\POMDP results do not apply:} \PORMDPs cannot be viewed as a subcase of \POMDPs with latent states $\cS \times \cU$ since $s,u,a \to s',u'$ is not Markovian.\footnote{Since observed state transitions are Markovian, \PORMDPs are also not more general than \POMDPs.} Even if we considered the subclass of \PORMDPs where $s,u,a \to s',u'$ is Markovian, which would be a subclass of reward machines, this is a specific kind of overcomplete \POMDP. Literature on overcomplete \POMDPs is much more scarce than their undercomplete counterpart. The only paper that gives guarantees for overcomplete \POMDPs to our knowledge is \citep{liu2022partially}, where they assume that the reward function is \textit{fully} observable and only depends on \textit{observed} states. This cannot apply to our setting, since our rewards have to be \textit{partially} observable, and fundamentally depend on latent states too. Also, this is not a minor difference, since the number of latent states can be $(SA)^{\Omega(H)}$.
    \item \textbf{Naive history-summarization is inefficient:} It is overkill to use naive history-summarization -- where one treats the history $\tau[h]$ as the state $s_h$ and executes traditional RL. This is because while policies are non-Markovian, state transitions are Markovian. It is unclear if we can leverage this structure without running into explicit exponential dependence on $H$. Moreover, most work on MDPs works with known rewards, but not knowing the rewards is a truly non-trivial problem here, since exploring the reward at each latent state could take $(SA)^{\Omega(H)}$ steps.
    \item \textbf{Ensuring satisfactory utilization of additional structure:} Examples like the combination lock signal that there are intuitive ways to leverage a recursive structure on the internal states. In the combination lock, one should wait for the ``click'' at each digit before moving onto the next digit, giving us a polynomial dependence on $A,H$ in sample complexity. It is unclear if \textit{general} algorithms for PORRL can implicitly leverage such structure to achieve polynomial guarantees.
\end{itemize}

\section{Optimistic Algorithms for Cardinal PORRL}
\label{sec:cardinal_porrl}

\subsection{Improving over Naive History-Summarization with Model-Based Methods}\label{subsec:model-based}
In this section, we present two optimistic methods that leverage Markovian transitions in \PORMDPs -- POR-UCRL and POR-UCBVI. The methods explicitly learn both the unknown reward model and the unknown transition model, while still accounting for the Markovian nature of transitions. We describe them below and provide formal versions in Appendix~\ref{app:por-ucrl}, \ref{app:por-ucbvi}. 
\begin{itemize}[itemsep=-2pt, topsep=0pt, leftmargin=*]
    \item \textbf{POR-UCRL:} At each timestep $t$, we maintain a least squares estimate $\hat{f}^{t+1}$ of $f$ and an MLE estimate $\hat \prob_t$ and define confidence sets $\cC_h^t(\delta)$ that consider all $f_h$ with a small mean squared error against $\hat{f}_h^{t+1}$, such that $\cC_\cF^t (\delta) = \prod_{h = 1}^H \cC_h^t (\delta)$. The probability transition confidence sets $\cC_\cP^t(\delta)$ are the same as UCRL~\citep{jaksch2010near}. At timestep $t$, following confidence-set optimism, we play an optimistic policy $\widetilde \pi_t$ that maximizes its highest value $V(\M, \pi)$ over all models ${\M \in \cC_\cF^t \times \cC_\cP^t}$.
    \item \textbf{POR-UCBVI:} It is trickier to adapt ideas from UCBVI~\citep{azar2017minimax}. Yet again, we maintain a least squares estimate $\hat f^t$ and an MLE estimate $\hat \prob_t$. Instead of confidence sets, we define trajectory-dependent bonuses for $\cF$ as $b_\cF^t(\tau, \delta) := \sum_{h\in \cH_p} \max_{f_h, f_h' \in \cC_h^t\left( \delta \right) }  f_h(\tau[h])  - f_h'(\tau[h]) $. We use these to define policy-level bonuses for $\cF$ as $b_\cF^t(\prob, \pi, \delta) := \E_{\tau \sim \prob^\pi} \left[ b_\cF^t(\tau, \delta)\right]$. Then, the standard UCBVI bonuses provide policy-level bonuses for $\cP$. At timestep $t$, following bonus-based optimism, we play an optimistic policy $\tilde \pi_t$ that maximizes its bonus-boosted value under $\hat f^t, \hat \prob^t$. POR-UCBVI bonuses are in fact computable for many $\cU$ and $\cF$, such as those in remark~\ref{rem:porrl-compare-past-results}.
\end{itemize} 
%\vspace{-0.75ex}
We show that POR-UCRL enjoys the guarantee below.
%\vspace{-1ex}
\begin{restatable}[POR-UCRL Regret]{theorem}{UnknownModelUCRLOnce} \label{thm:unknown-model-ucrl-once}
    Under Assumption~\ref{assump:tractable-porrl}, the regret $\reg(T)$ of POR-UCRL is bounded by the following with probability at least $1-\delta$
\begin{equation*}
   \widetilde \cO\left(\left(pS\sqrt{HA} + \sum\nolimits_{h \in \cH_p}\sqrt{ d_{E,h}   d_{C,h} } \right)\sqrt{T}\right)
\end{equation*}
where $d_{E,h} = \dim_{E}\left(\cF_h, \frac{B}{T}\right)$ and $d_{C,h} = \log(\cN(\cF_h, 1/T, \|\cdot \|_\infty))$.
\end{restatable}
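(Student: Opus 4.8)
The plan is to run the standard optimism-based regret decomposition, but to split the per-round error into a \emph{reward} term, controlled by an eluder-dimension argument on the history-dependent class $\cF_h$, and a \emph{transition} term, controlled by the usual UCRL-style $L_1$ counting argument.

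First I would establish validity of the confidence sets: with probability at least $1-\delta$, the true model $\M_\star = (\prob_\star, f_\star)$ satisfies $f_\star \in \cC_\cF^t(\delta)$ and $\prob_\star \in \cC_\cP^t(\delta)$ for every $t$. For transitions this is the MLE/$L_1$ concentration used in UCRL~\citep{jaksch2010near}. For rewards, since each $o_h$ is $\eta_h$-subgaussian with mean a known transform of $f_{\star,h}(\tau[h])$ and realizability holds by Assumption~\ref{assump:tractable-porrl}, a least-squares self-normalized concentration argument with a union bound over a $1/T$-cover of $\cF_h$ places $f_{\star,h}$ in the mean-squared-error ball around $\widehat f_h^{t+1}$ with radius $\beta_t \lesssim \eta_h^2\, d_{C,h}$ (up to logs); this is where $d_{C,h} = \log \cN(\cF_h, 1/T, \|\cdot\|_\infty)$ enters. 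On this event optimism gives $V(\widetilde\M_t, \widetilde\pi_t) \geq V(\M_\star, \pi_\star)$ for the maximizing model $\widetilde\M_t = (\widetilde\prob_t, \widetilde f_t)$, so the instantaneous regret is at most $V(\widetilde\prob_t, \widetilde f_t, \widetilde\pi_t) - V(\prob_\star, f_\star, \widetilde\pi_t)$.

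Next I would split this via a hybrid argument that swaps one component at a time:
\[
V(\widetilde\prob_t, \widetilde f_t, \widetilde\pi_t) - V(\prob_\star, f_\star, \widetilde\pi_t) = \underbrace{\big[V(\widetilde\prob_t, \widetilde f_t, \widetilde\pi_t) - V(\prob_\star, \widetilde f_t, \widetilde\pi_t)\big]}_{\text{transition term}} + \underbrace{\big[V(\prob_\star, \widetilde f_t, \widetilde\pi_t) - V(\prob_\star, f_\star, \widetilde\pi_t)\big]}_{\text{reward term}}.
\]
The decomposition is chosen so that the reward term is measured under the \emph{true} trajectory law $\prob_\star^{\widetilde\pi_t}$ -- exactly the distribution the algorithm samples when it runs $\widetilde\pi_t$. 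Using $V(\prob,f,\pi) = \E_{\tau \sim \prob^\pi}[\sum_{h\in\cH_p} f_h(\tau[h])]$ and $\widetilde f_t, f_\star \in \cC_\cF^t$, the reward term is at most $\E_{\tau \sim \prob_\star^{\widetilde\pi_t}}[\sum_{h\in\cH_p} w_h^t(\tau[h])]$, where $w_h^t(\tau[h]) := \max_{f,f' \in \cC_h^t} |f_h(\tau[h]) - f_h'(\tau[h])|$ is the confidence width. An Azuma step replaces the expectation by the realized on-policy width $w_h^t(\tau_t[h])$ up to an $\widetilde\cO(pB\sqrt{T})$ error, and then the Russo--Van Roy eluder potential bound gives, for each fixed $h$, $\sum_{t} w_h^t(\tau_t[h]) \lesssim \sqrt{d_{E,h}\,\beta_T\, T} \lesssim \sqrt{d_{E,h}\,d_{C,h}\,T}$; summing over $h\in\cH_p$ yields the $\sum_{h\in\cH_p}\sqrt{d_{E,h}d_{C,h}}\,\sqrt{T}$ term.

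For the transition term I would invoke a simulation/performance-difference lemma: since $\widetilde f_t$ is a sum of $p$ functions each bounded by $B$, the value range is $pB$, and the kernel difference telescopes into $\sum_{h=1}^H \E_{\prob_\star^{\widetilde\pi_t}}[\, pB\,\|\widetilde\prob_t(\cdot\mid s_h,a_h) - \prob_\star(\cdot\mid s_h,a_h)\|_1\,]$. Both kernels lie in $\cC_\cP^t$, so the $L_1$ width at $(s_h,a_h)$ is $\lesssim \sqrt{S/N_t(s_h,a_h)}$; summing $pB\sqrt{S}\sum_{t,h} N_t(s_h^t,a_h^t)^{-1/2} \lesssim pB\sqrt{S}\cdot\sqrt{SAHT} = pBS\sqrt{HA}\,\sqrt{T}$ produces the $pS\sqrt{HA}\,\sqrt{T}$ term with $B$ absorbed into $\widetilde\cO$. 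Combining the two terms with the Azuma error gives the claim. The main obstacle is the reward term: rewards are non-Markovian, living on trajectory space $\Gamma_h$ through the decoder $g_h$ rather than on $\cS\times\cU\times\cA$, and the query points are on-policy trajectories generated by an optimistic policy \emph{in the unknown true environment}. Making the eluder argument apply requires precisely the hybrid decomposition above to pin all reward widths to the sampled law $\prob_\star^{\widetilde\pi_t}$, together with the Azuma step aligning expected with realized widths; cleanly decoupling this history-dependent reward estimation from the simultaneous Markovian transition estimation is the delicate part.
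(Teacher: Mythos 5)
Your proposal is correct and follows essentially the same route as the paper's proof: confidence-set validity for $\cF$ and $\cP$, optimism, the identical hybrid split into a transition term $V(\widetilde\prob_t,\widetilde f^t,\pi_t)-V(\prob_\star,\widetilde f^t,\pi_t)$ and a reward term $V(\prob_\star,\widetilde f^t,\pi_t)-V(\prob_\star,f_\star,\pi_t)$ pinned to the true law $\prob_\star^{\pi_t}$, an Azuma step plus eluder potential bound for the reward term, and a simulation/telescoping argument with $L_1$ widths and the standard counting bound for the transition term. The only cosmetic difference is that the paper packages these two bounds as Lemmas~\ref{lem:bounding-f-optimism} and~\ref{lem:bounding-prob-optimism} verifying a generic ``confidence-set optimism'' template (Assumption~\ref{assump:conf-set-assumption} and Theorem~\ref{thm:regret-generic-conf-set}), whereas you argue directly.
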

Here, the first term comes from uncertainty in $\prob$. Under naive history-summarization, the first term would be exponential in $H$ since the modified state space of trajectories would have size $\Omega((SA)^H)$. Similar regret guarantees are given for POR-UCBVI in Theorem~\ref{thm:unknown-model-ucbvi-once}. Both guarantees are proved by viewing each algorithm as a specific instance of a generic optimistic algorithm for PORRL (see Appendix~\ref{app:general-optimistic-algorithms}, \ref{app:por-ucrl}, \ref{app:por-ucbvi}). By a simple regret-to-PAC conversion, we also show that POR-UCRL has sample complexity of $\widetilde \cO \left(\frac{p^2HS^2A}{\epsilon^2} + \frac{p^2d_Ed_{C}} {\epsilon^2} \right)$, where $d_E := \max_{h \in \cH_p} d_{E,h}$, and $d_C := \max_{h \in \cH_p} d_{C,h}$. POR-UCBVI has sample complexity $\widetilde \cO \left(\frac{p^2HSA\max(H,S)}{\epsilon^2} + \frac{p^2 d_E \max(d_C, H) \log (1 / \delta)} {\epsilon^2} \right)$.

\textbf{Challenges:} There are three main technical challenges in proving these guarantees. First, we have to handle non-Markovian reward functions with Markovian transitions. Second, in POR-UCBVI, we have the added challenge of ensuring that the bonus is uniformly optimistic over all history-dependent policies. This is typically a doubly exponential set ($A^{(SA)^H}$), so a union bound does not help us. Third, we are working with general function approximation for reward functions using $\cF$.
%\vspace{-1ex}
\begin{remark}[Comparison to past results]\label{rem:porrl-compare-past-results}
Notice that with $\cU = \phi(\tau)^\top \textbf{w}$ with $\textbf{w} \in \R^d$ and $\cH_p = \{H\}$, we are in the setting of \citep{chatterji2021theory}. Here, $d_{E,H} = d_{C, H} = d$, so POR-UCRL and POR-UCBVI both improve over their regret guarantees. With respect to sample complexity guarantees, we compare to \citep{wang2023rlhf}. While they use dueling feedback, our methods use cardinal feedback. In their setting, $\cU$ is the set of all histories and $\cH_p = \{H\}$. Their best guarantee is from P-OMLE, which makes $\widetilde \cO\left( \frac{H^2S^2A}{\epsilon^2} + \frac{H^2d_{E, H}d_{C, H}}{\epsilon^2} \right)$ dueling oracle queries for tabular $\cP$. Both POR-UCRL and POR-UCBVI have a smaller complexity for cardinal feedback queries.
\end{remark}
%\vspace{-1ex}
\begin{remark}[General function approximation for $\cP$] \label{rem:gen-func-approx-prob}
For clearer exposition, we have assumed that $\cP$ is a tabular class with finite $\cS, \cA$ in the results stated above. This is because handling general function approximation for $\cF$ is the non-trivial part of this work. We provide straightforward extensions to general function approximation for $\cP$ with continuous $\cS, \cA$ in Appendix~\ref{app:general-function-approximation-prob}, using existing work.
\end{remark}
\subsection{Leveraging Recursive Structures Using Model-Free Methods}\label{subsec:model-free}
We have established that the model-based methods POR-UCRL and POR-UCBVI improve over naive history-summarization and have a $poly(S,A,H)$ guarantee in terms of transition function estimation. However, we recall the last challenge mentioned in Section~\ref{subsec:tractable_porrl} -- can they adapt to examples like the combination lock, where there is a recursive structure on the internal states? Disappointingly, we will see in Proposition~\ref{prop:dim-comb-lock} that the answer is no -- they are \textit{exponentially} worse than the ideal solution. Intuitively, learning the reward and transition models separately is needlessly expensive here. At the more technical level, since POR-UCRL and POR-UCBVI decouple the learning of reward functions across timesteps, they are unable to incorporate a recursive structure on the reward functions.

In this light, we consider model-free methods. Unlike model-based methods that have to account for Markovian transitions, we can simply use naive history-summarization here and treat $\tau[h]$ as the state for Q-functions $Q_h$. However, under history-summarization, there is a subtlety involved in choosing the class $\cQ$ of Q-functions given a known class $\cM$ of models. Using product classes $\cQ_1 \times \dots \times \cQ_H$ is wasteful, since often exponentially many tuples in a product class cannot be realized by any model $\cM$.\footnote{The reader can use the example of the combination lock to convince themselves of this.} Instead, one should consider the class of only the tuples $(Q_1, \dots, Q_H)$ that can be \textit{realized} by a model $\cM$. In practice, this translates to the problem of good representation learning -- one should use a shared network for all Q-functions instead of using a different network for each timestep. This is reflected in the experimental choices of \citet{anon2024enhancingmultistep}.

Model-free methods rely on the Bellman error, which relates consecutive Q-functions and couples their learning. It is thus natural to expect model-free methods like GOLF \citep{jin2021bellman} to adapt to a recursive structure on internal states and perform better than model-based methods. However, existing guarantees do not reflect this. It turns out from Proposition~\ref{prop:dim-comb-lock} below that the Bellman-eluder (BE) dimension of the combination lock problem is $A^H$, even with the minimal Q-function class. 

The issue is that the proof of GOLF bounds the $h$-step Bellman errors in a decoupled manner, which is why it still fails to incorporate a recursive structure on internal states. Intuitively, one wants to \textit{wait} for Bellman errors at timesteps $1, \dots, h-1$ to become small before bounding the Bellman error at $h$. In this light, given a parameter $\alpha$, we define the function class
\begin{equation*}
    \cQ(\alpha, h) := \left\{ Q \in \cQ \mid |\E_{\mu_l(Q)}[Q_l - \cT_lQ_{l+1}]| \leq \alpha,\ \forall 1 \leq l \leq h\right\}
\end{equation*}
that considers all tuples $(Q_1, \dots, Q_H)$ where the Bellman errors until step $h$ are already low. We can use this class to define the $\alpha$-history aware Bellman-eluder dimension (HABE) of $\cQ$ as follows. Recall that $\pi_Q$ is the policy that acts greedily according to $Q = (Q_1, \dots Q_H)$.
%\vspace{-1ex}
\begin{definition}
    Consider the Bellman errors $\Phi_h := \big\{Q_h - \cT_hQ_{h+1} \big\vert Q \in \cQ(\alpha, h-1) \big\}$. Denote $\mu_h(Q)$ the distribution induced on $\tau[h-1], a_h$ by $\pi_Q$ and let $\cD_{h, \cQ} := \{\mu_h(Q) \mid Q \in \cQ\}$. Let $\dim_{DE}$ the distributional eluder dimension and define $\dim_{\habe}(\cQ, \alpha, \epsilon) := \max_h \dim_{DE}(\Phi_h, \cD_{h, \cQ(\alpha, h-1)}, \epsilon)$.
\end{definition}
%\vspace{-1ex}
Intuitively, $\alpha$-$\habe$ dimension measures how hard it is to reduce the Bellman error at timestep $h$ if the errors at \textit{previous timesteps} $1, \dots, h-1$ are already small. This captures the hardness of adapting to the recursive structure on internal states one/a few timesteps at a time. We discuss in Appendix~\ref{subapp:comparing-habe} how the $\alpha$-$\habe$ dimension compares to the Bellman-eluder dimension in general. We now give a new guarantee for GOLF using the $\alpha$-$\habe$ dimension.
%\vspace{-1ex}
\begin{restatable}[Modified GOLF Regret]{theorem}{GolfModifiedRegret}
    Let Assumption~\ref{assump:tractable-porrl} hold, and let $d_{\habe}= \dim_{\habe}(\cQ, \alpha, \min(\alpha, \sqrt{1/T}))$. Choose hyperparameter $\beta = c\log(HT\cN(\cQ \cup \cG, 1/T, \|\cdot \|_\infty))$ for some universal constant $c$ and the auxiliary function class $\cG$ used in GOLF, and define $d_{C, \cQ} :=  \log(\cN(\cQ \cup \cG, 1/T, \|\cdot \|_\infty))$. Then, GOLF satisfies $\reg (T) = \cO\left(pH\sqrt{d_{\habe} d_{C, \cQ} T}\right)$.
\end{restatable}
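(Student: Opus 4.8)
The plan is to mirror the optimistic confidence-set analysis that underlies GOLF through the Bellman-eluder dimension, but to run the eluder bookkeeping against the \emph{history-aware} restrictions $\cQ(\alpha,h-1)$ so that $d_{\habe}$ replaces the usual $d_{\bell}$. I would first nail down the two scaffolding facts. (i) Validity of the confidence sets: a uniform martingale (Freedman) concentration bound for the squared Bellman/least-squares loss, unioned over a $1/T$-cover of $\cQ\cup\cG$, shows that with probability at least $1-\delta$ the realizable $Q^\star$ lies in every GOLF confidence set $\cB^t$ and that every $Q\in\cB^t$ has cumulative in-sample squared Bellman error at most $\beta$ at each step; this is exactly where $\beta=c\log(HT\cN(\cQ\cup\cG,1/T,\|\cdot\|_\infty))$, hence $d_{C,\cQ}$, enters. (ii) Regret decomposition: optimism gives $V(\M_\star,\pi_\star)\le \max_{Q\in\cB^t}V_Q=V_{Q^t}$, and the Bellman performance-difference identity yields the exact telescoping $V_{Q^t}-V(\M_\star,\pi_{Q^t})=\sum_h \E_{\mu_h(Q^t)}[Q^t_h-\cT_hQ^t_{h+1}]$, so that $\reg(T)\le \sum_t\sum_h \big|\E_{\mu_h(Q^t)}[Q^t_h-\cT_hQ^t_{h+1}]\big|$.

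The core is to bound this double sum by $d_{\habe}$ rather than the full $d_{\bell}$. For each round $t$ let $m_t$ be the first step whose on-policy Bellman error exceeds the threshold $\alpha$; then by definition $Q^t\in\cQ(\alpha,m_t-1)$, the function $Q^t_{m_t}-\cT_{m_t}Q^t_{m_t+1}$ lies in the restricted class $\Phi_{m_t}$, and $\mu_{m_t}(Q^t)\in\cD_{m_t,\cQ(\alpha,m_t-1)}$. The steps $h<m_t$ contribute at most $\alpha$ apiece. For the frontier contributions I would, at each step $h$, restrict to the rounds $S_h=\{t:Q^t\in\cQ(\alpha,h-1)\}$ and apply the distributional-eluder potential lemma to the sequence $\{(\mu_h(Q^t),\,Q^t_h-\cT_hQ^t_{h+1})\}_{t\in S_h}$: this is legitimate precisely because both the witness distributions and the error functions remain inside the restricted classes, and the in-sample bound from (i) supplies the hypothesis $\sum_{s<t}(\cdots)^2\le\beta$. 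At the eluder scale $\min(\alpha,\sqrt{1/T})$ this gives $\sum_{t\in S_h}|\E_{\mu_h(Q^t)}[\cdot]|=\widetilde{O}(\sqrt{d_{\habe}\,\beta\,T})$, and summing over the at most $H$ steps, with the $pB$ value range supplying the factor $p$, produces the advertised leading term $pH\sqrt{d_{\habe}d_{C,\cQ}T}$.

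The step I expect to be the main obstacle is controlling the terms \emph{past} a round's frontier, namely $|\E_{\mu_h(Q^t)}[\cdot]|$ with $h>m_t$, where $Q^t\notin\cQ(\alpha,h-1)$ and the restriction fails, so these do not fit the $d_{\habe}$ potential argument at step $h$. A naive range bound on them is too lossy (it degrades the rate to $T^{2/3}$), so the real work is to charge each such term back to the frontier step $m_t$ and to exploit that a step can be the frontier for only a $d_{\habe}$-bounded number of rounds: the restricted-eluder large-deviation count says $\#\{t:Q^t\in\cQ(\alpha,h-1),\ |\E_{\mu_h(Q^t)}[\cdot]|>x\}=O((1+\beta/x^2)d_{\habe})$, and a dyadic peel over $x$ recombines this into the $\sqrt{T}$ potential scale rather than a counting scale. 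This is precisely the ``wait until the Bellman errors at steps $1,\dots,h-1$ fall below $\alpha$ before charging step $h$'' mechanism that motivates the HABE definition, formalized as an induction showing the frontier advances as earlier steps are resolved. I would close by optimizing the free threshold $\alpha$ (balancing the $O(pH\alpha T)$ pre-frontier slack against the eluder terms, taking $\alpha$ at scale $\sqrt{1/T}$ up to problem constants, consistent with $d_{\habe}=\dim_{\habe}(\cQ,\alpha,\min(\alpha,\sqrt{1/T}))$) and absorbing the covering factors into $\beta=\widetilde{O}(d_{C,\cQ})$ to obtain $\reg(T)=\cO(pH\sqrt{d_{\habe}d_{C,\cQ}T})$.
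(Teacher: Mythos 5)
Your proposal is correct and takes essentially the same route as the paper: after the standard optimism/policy-loss reduction and the covering-based concentration defining $\beta$, the paper's key lemma (its Lemma on bounding $\sum_j |\E_{\mu_h(Q^j)}[Q^j_h - \cT_h Q^j_{h+1}]|$) performs exactly your frontier decomposition, applying the restricted distributional-eluder potential argument to rounds with $Q^t \in \cQ(\alpha, h-1)$ and disposing of post-frontier terms via the large-deviation count $O\left(\left(1+\beta/\alpha^2\right)d_l(\alpha)\right)$ at the level where historical accuracy first fails, which is precisely your ``charge back to the frontier'' step. The only cosmetic difference is that the paper never optimizes $\alpha$ (it stays a free parameter of the theorem): pre-frontier rounds are absorbed into the potential argument together with a separate small-error cutoff $\omega = Bp/T$, so your closing $O(pH\alpha T)$ slack and the attendant choice $\alpha \sim \sqrt{1/T}$ are unnecessary, and indeed your own $S_h$-restricted potential argument already covers those rounds.
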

%\vspace{-1ex}
Using a regret-to-PAC conversion, we also show in Corollary~\ref{cor:golf-sample-complexity} that the sample complexity of GOLF is $\widetilde \cO \left(\frac{p^2 H^2 d_{\habe} d_{C, \cQ}} {\epsilon^2} \right)$. As foreshadowed above, we now show in Proposition~\ref{prop:dim-comb-lock} that these guarantees can be polynomial even when the the usual guarantees for GOLF as well as guarantees for our model-based algorithms are exponential. Note that this improvement is achieved only given dense intermediate feedback. Under sparse intermediate feedback, one cannot adapt to internal states "a few timesteps at a time," and we in fact have $\Omega(\sqrt{A^HT})$ regret under \textit{any} algorithm. However, dense feedback case is quite realistic for many applications, such as automated mathematical reasoning.
%\vspace{-1ex}
\begin{restatable}[Dimensions for the Combination Lock]{proposition}{DimCombLock}\label{prop:dim-comb-lock}
    Consider the combination lock problem with model class $\cM = \cP \times \cF$ and induced Q-function class $\cQ$.
    \begin{itemize}[noitemsep, topsep=0pt, leftmargin=*]
        \item Under dense intermediate feedback with $\cH_p=[H]$, $\dim_{\habe}(\cQ, \alpha) = A$ for all $\alpha < q$, while its BE dimension is at least $A^H-2$. The eluder dimension for reward functions $\dim_E(\cF_h, \frac{B}{T})$ is at least $A^h$ for any $h \leq H$. 
        \item For sparse intermediate feedback with $\cH_p = \{H\}$ and any $\alpha>0$, the $\alpha$-$\habe$ dimension, the BE dimension and the eluder dimension of $\cF_H$ are all at least $A^H-2$. Moreover, \textit{any} algorithm in this setting will have regret $\Omega(\sqrt{A^H T})$.
    \end{itemize}
\end{restatable}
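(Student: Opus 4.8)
The plan is to index the induced $Q$-function class $\cQ$ by candidate combinations $c \in \cA^H$, writing $Q^c$ for the optimal $Q$-function of the lock whose secret is $c$. Since $\cS = \{\star\}$ and transitions are deterministic, the greedy roll-in $\pi_{Q^c}$ walks deterministically down the path $c$, so the occupancy $\mu_h(Q^c)$ is the point mass at the history-action pair $((c_1,\dots,c_{h-1}), c_h)$. A direct computation of the on-policy Bellman residual along this path (using $r_l = q\,\ind[\text{prefix correct}]$ and the candidate's own value propagation) gives, in the dense case $\cH_p = [H]$,
\begin{equation*}
\E_{\mu_l(Q^c)}\big[Q^c_l - \cT_l Q^c_{l+1}\big] = q\big(1 - \ind[(c_1,\dots,c_l) = (c^\star_1,\dots,c^\star_l)]\big),
\end{equation*}
i.e.\ the residual is $0$ while $c$ still agrees with the true secret $c^\star$ and jumps to $q$ at the first wrong digit. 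I would establish this identity first, since everything else follows from it.

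For the HABE upper bound, the key step is to show that $\cQ(\alpha, h-1)$ with $\alpha < q$ forces the correct prefix: by the identity above, $|\E_{\mu_l}[\cdot]| \le \alpha < q$ for all $l \le h-1$ can hold only if $(c_1,\dots,c_{h-1}) = (c^\star_1,\dots,c^\star_{h-1})$. Hence every surviving candidate reaches the same length-$(h-1)$ history $c^\star_{<h}$, and its roll-in at step $h$ is a point mass at $(c^\star_{<h}, c_h)$ depending only on the single digit $c_h \in \cA$. Evaluating such a candidate's Bellman-error function off its own greedy path shows that the relevant part of $\Phi_h$ is likewise parameterized by $c_h$, so both $\cD_{h,\cQ(\alpha,h-1)}$ and the residuals collapse to $A$ objects; thus $\dim_{DE} \le A$ for every $h$ and $\dim_{\habe} = A$, the matching lower bound coming from the $A$ distinct single-digit roll-ins being mutually eluder-independent.

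For the large dimensions I would work at the final step $h=H$, where $Q^c_{H+1}=0$ and $\cT_H Q^c_{H+1}$ is just the true reward; evaluating candidate $c$'s residual at the full combination reached by any roll-in yields the spike form $\phi^c(x) = q(\ind[x=c] - \ind[x=c^\star])$ on $\cA^H$, while the reward class $\cF_h$ consists of spikes $q\,\ind[\cdot=(c_1,\dots,c_h)]$ on $\cA^h$. The standard eluder argument for indicator/spike classes over a domain of size $N$ (reserving a reference point) then gives $\dim_E(\cF_h) \ge A^h$ and a distributional-eluder sequence of length $A^H - 2$ for $\Phi_H$ over the point masses at all $A^H$ combinations, hence $\mathrm{BE} \ge A^H-2$ even under dense feedback — the point being that the unrestricted $\cQ$ in the BE dimension never discards wrong-prefix candidates. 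For the sparse case $\cH_p=\{H\}$ the same residual computation shows that, absent intermediate rewards, the on-policy residuals at every $l < H$ vanish for all candidates, so $\cQ(\alpha,h-1) = \cQ$ for every $\alpha>0$ and the HABE dimension degenerates to the BE dimension, recovering $\ge A^H-2$ from the step-$H$ spikes. The regret bound then follows by noting that a sparse lock is exactly a stochastic bandit over the $A^H$ action sequences with only the secret arm carrying nonzero mean, and invoking the standard needle-in-a-haystack minimax lower bound over $\{c^\star \in \cA^H\}$ to obtain $\reg(T) = \Omega(\sqrt{A^H T})$.

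The main obstacle I anticipate is the off-path Bellman-error computation behind the HABE upper bound: one must verify that restricting to $\cQ(\alpha,h-1)$ genuinely collapses both the roll-in distributions and the residual functions to an $A$-sized family, which requires evaluating each surviving candidate's residual at actions other than its own greedy choice and checking that the later digits of $c$ do not leak into $\Phi_h$ at step $h$. The spike/eluder lower bounds and the bandit reduction are comparatively routine once the residual identity is in hand.
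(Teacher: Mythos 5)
Your proposal is correct and takes essentially the same route as the paper's own proof: the same on-path residual identity (zero while the prefix is correct, jumping to $q$ at the first wrong digit), the same prefix-forcing argument collapsing $\cD_{h,\cQ(\alpha,h-1)}$ and $\Phi_h$ to $A$ objects for the HABE bound, the same spike/point-mass eluder sequences for the BE and reward-class lower bounds, and the same reduction to an $A^H$-armed bandit for the $\Omega(\sqrt{A^H T})$ regret lower bound. The off-path subtlety you flag (later digits of $c$ not leaking into $\Phi_h$) indeed resolves exactly as you anticipate, since for a correct-prefix candidate the step-$h$ Bellman error function is $q\ind_{c^\star_1,\dots,c^\star_{h-1},c_h} - q\ind_{c^\star_1,\dots,c^\star_h}$, which depends only on $c_h$.
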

%\vspace{-1ex}
We discuss in Appendix~\ref{subapp:comparing-habe} that in general, we do not have an inequality in either direction. However, the $\alpha$-$\habe$ dimension is typically smaller.

\section{Dueling to Optimism Reduction}
\label{sec:dueling_porrl}

The dueling and cardinal feedback models are intimately related. It is thus tempting to use algorithms for cardinal PORRL to solve dueling PORRL. However, we detail why the ``obvious'' reduction from dueling feedback to cardinal feedback fails. This both demonstrates the hardness of the problem and motivates our reduction.

\subsection{The Naive Reduction Always Fails}
Consider a modified \PORMDP $\overline \M$ with $\overline \cS := \cS \times \cS$, $\overline \cA := \cA \times \cA$, $\overline \prob := \prob \otimes \prob$, where we run the pair of policies $\overline \pi := (\pi_1, \pi_2)$ and obtain observations based on the decoder-induced function $\overline f_h(\tau_1[h], \tau_2[h]) := f_h(\tau_1[h]) - f_h(\tau_2[h])$. Consider the space of all such \PORMDPs induced by $\cM$, and denote it by $\overline \cM$. Since cardinal feedback in $\overline \M$ exactly corresponds to dueling feedback in $\M$, it is tempting to restrict to searching over $\Pi \times \Pi$ and run any algorithm for cardinal PORRL on this modified \PORMDP $\overline \M$ to achieve low dueling regret.

This fails because the feedback model and regret metric are fundamentally non-aligned in dueling feedback, unlike in cardinal feedback. While the agent receives dueling feedback over the duel for $(\pi_{1,t}, \pi_{2,t})$, dueling regret is instead concerned with duels for $(\pi_\star, \pi_{1,t})$ and $(\pi_\star, \pi_{2,t})$. Running an algorithm for cardinal PORRL on the modified MDP will maximize the dueling \textit{feedback} itself. This is achieved by playing one good and one really bad policy, unlike the two good policies needed for low dueling regret. We formalize this in Lemma~\ref{lem:naive-reduction-fails}, showing that the naive reduction leads to linear dueling regret for \textit{any}  \PORMDP and \textit{any} cardinal PORRL algorithm with sublinear regret.

\subsection{Reducing Dueling to Optimistic Cardinal PORRL}

The naive reduction fails because maximizing dueling feedback can lead to bad policies being played. In this subsection, we present a white-box reduction where we ensure that we only play potentially good policies for \textit{both} $\pi_{1,t}$ and $\pi_{2,t}$. We detail here how we can obtain an algorithm for the dueling feedback problem from \textit{any} optimistic algorithm for cardinal PORRL. We will focus on the case of confidence sets here for smoother exposition, the much harder case of bonuses is treated in Appendix~\ref{app:dueling-bonus-based}. A \textit{generic optimistic algorithm using confidence sets} maintains confidence sets $\cC_\cM(\cD_t, \delta)$ using the collected dataset $\cD_t$ of trajectories and feedback. We define it formally in Appendix~\ref{app:general-conf-set-optimism}. For the reduction to work, we require that the confidence sets are well-designed, as demanded by Assumption~\ref{assump:conf-set-assumption-informal}. This assumption is satisfied for confidence sets used by POR-UCRL.
%\vspace{-1ex}
\begin{assumption}[Controlling Value Error due to Confidence Sets]\label{assump:conf-set-assumption-informal}
    $\M_\star \in \cC_\cM(\cD_t, \delta)$ for arbitrary sequences $(\prob_t, f^t) \in \cC_\cM(\cD_t, \delta)$, both $\big\vert \sum_{t=1}^T V(\prob_t, f^t, \pi_t) - V( \prob_\star, f^t, \pi_t) \big\vert = \widetilde \cO( C_P(\cM, T, \delta))$ and $\big\vert \sum_{t=1}^T V(\prob_\star, f^t, \pi_t) - V( \prob_\star, f_\star, \pi_t) \big\vert = \widetilde \cO( C_F(\cM, T, \delta))$ hold with probability $1 - \delta/2$ each.
\end{assumption}
%\vspace{-1ex}
% \begin{assumption}[Controlling Value Error due to Confidence Sets]\label{assump:conf-set-assumption-informal}
%     $\M_\star \in \cC_\cM(\cD_t, \delta)$ for arbitrary sequences $(\prob_t, f^t) \in \cC_\cM(\cD_t, \delta)$, the following hold with probability $1-\delta/2$ each: the sum $\left\vert \sum_{t=1}^T V(\prob_t, f^t, \pi_t) - V( \prob_\star, f^t, \pi_t) \right\vert$ is $\widetilde \cO( C_P(\cM, T, \delta))$ and $\left\vert \sum_{t=1}^T V(\prob_\star, f^t, \pi_t) - V( \prob_\star, f_\star, \pi_t) \right\vert$ is $\widetilde \cO( C_F(\cM, T, \delta))$
% \end{assumption}
The key insight is to use confidence sets from cardinal PORRL to search for $\pi_{1,t}$ and $\pi_{2,t}$ only among policies that \textit{both} have a chance of being optimal. Then one plays the \textit{most uncertain} duel among all possible choices for $\pi_{1,t}$ and $\pi_{2,t}$. This generalizes and abstracts out ideas in \citep{pacchiano2021dueling}, which presents a specific algorithm to achieve low dueling regret in their model. We present the reduction to optimism over confidence sets in Algorithm~\ref{algo:unknown_model_dueling}, the version for bonuses is in Appendix~\ref{app:dueling-bonus-based}. Define $V_D(\overline \M, \pi, \pi') = V(\M, \pi) - V(\M, \pi')$. We compute the confidence sets $\cC_{\overline \cP}(\cD, \delta)$ as the image of $\cC_\cP(\cD, \delta)$ under $\prob \mapsto \overline\prob$. We compute $\cC_{\overline \cF}(\cD, \delta)$ by treating $\{o_h\}_{h \in \cH_p}$ as cardinal feedback in $\overline \M$. As an example, for POR-UCRL, we perform a least squares fit for $\overline f$ and use Lemma~\ref{lem:f-h-conc} to define our confidence sets again.
\begin{algorithm}
  \caption{Reduction from Dueling to Cardinal Confidence-Set Optimism}
  \label{algo:unknown_model_dueling}
  \begin{algorithmic}[1]
    \STATE \textbf{Input} Known reward function $\{ r_h\}_{h=1}^H$, method to compute $\cC_{\overline \cM}(\cD, \delta) \leftrightarrow \cC_{\overline \cP}(\cD, \delta) \times \cC_{\overline \cF}(\cD, \delta)$
    \STATE \textbf{Initialize} dataset $\cD_1 \gets \{\}$, $\cC_{\overline \cM}(\cD_1, \delta) := \overline \cP \times \overline \cF$
      \FOR{$t=1,...,T$} 
        \STATE \textbf{Compute} 
        $
            \Pi_t = \Big\{ \pi \in \Pi \Big\vert \exists \overline \M \in \cC_{\overline \cM}(\cD_t, \delta) \text{ s.t. } V(\overline \M, \pi, \pi_1) \geq 0\ \forall \pi_1 \in \Pi \Big\}
        $ \hfill \COMMENT{Candidates $\pi_\star$}
        \STATE \textbf{Play} $(\pi_{1,t}, \pi_{2,t}) \in \argmax\limits_{\pi, \pi' \in \Pi_t} \max\limits_{\overline \M, \overline \M' \in \cC_{\overline \cM}(\cD_t, \delta)} \hspace{-0.4cm} V_D(\overline \M, \pi, \pi') - V_D(\overline \M', \pi, \pi') $ \hfill \COMMENT{Most uncertain duel}
        \STATE \textbf{Observe} trajectories $\tau_{i,t} = \big\{(s_{i,h}^{t}, a_{i,h}^{t})\big\}_{h = 1}^H$ along with feedback $\{o_h\}_{h\in \cH_p}$
        \STATE \textbf{Update} $\cD_t$ to $\cD_{t+1}$ using the data and compute $\cC_{\overline \cP}(\cD_{t+1}, \delta)$, $\cC_{\overline \cF}(\cD_{t+1}, \delta)$
      \ENDFOR
  \end{algorithmic}
\end{algorithm}
% \begin{algorithm}
%   \caption{Reduction from Dueling to Cardinal Confidence-Set Optimism}
%   \label{algo:unknown_model_dueling}
%   \begin{algorithmic}[1]
%     \STATE \textbf{Input} Known reward function $\{ r_h\}_{h=1}^H$, method to compute $\cC_{\overline \cM}(\cD, \delta) \leftrightarrow \cC_{\overline \cP}(\cD, \delta) \times \cC_{\overline \cF}(\cD, \delta)$
%     \STATE \textbf{Initialize} dataset $\cD_1 \gets \{\}$, $\cC_{\overline \cM}(\cD_1, \delta) := \overline \cP \times \overline \cF$
%       \FOR{$t=1,...,T$} 
%         \STATE \textbf{Compute} good set $\Pi_t$ \hfill \COMMENT{Valid $\pi_\star$ candidates}
%         \begin{align*}
%             \Pi_t := \Big\{ &\pi \in \Pi \Big\vert \exists \overline \M \in \cC_{\overline \cM}(\cD_t, \delta) \text{ s.t. } V(\overline \M, \pi, \pi_1) \geq 0\ \forall \pi_1 \in \Pi \Big\}
%         \end{align*}
%         \STATE \textbf{Play} $(\pi_{1,t}, \pi_{2,t})$ given by \hfill \COMMENT{Most uncertain duel}
%         $$\argmax_{\pi, \pi' \in \Pi_t} \max_{\overline \M, \overline \M' \in \cC_{\overline \cM}(\cD_t, \delta)} V_D(\overline \M, \pi, \pi') - V_D(\overline \M', \pi, \pi') $$
%         \STATE \textbf{Observe} trajectories $\tau_{i,t} = \big\{(s_{i,h}^{t}, a_{i,h}^{t})\big\}_{h = 1}^H$ along with feedback $\{o_h\}_{h\in \cH_p}$
%         \STATE \textbf{Update} $\cD_t$ to $\cD_{t+1}$ using the data and compute $\cC_{\overline \cP}(\cD_{t+1}, \delta)$, $\cC_{\overline \cF}(\cD_{t+1}, \delta)$
%       \ENDFOR
%   \end{algorithmic}
% \end{algorithm}
We then get the following regret guarantee.
%\vspace{-1ex}
\begin{restatable}[Reduction from Dueling to Confidence-Set-Based Optimism]{theorem}{ReductionSetBased} \label{thm:reduction-set-based}
If the confidence sets $\cC_\cM(\cD_t, \delta)$ satisfy Assumption~\ref{assump:conf-set-assumption-informal}, then the dueling regret $\reg_D(T)$ of Algorithm~\ref{algo:unknown_model_dueling} is given by
$$\reg_D(T) = \widetilde \cO(C_P(\cM, T, \delta) + C_F(\overline \cM, T, \delta))$$
\end{restatable}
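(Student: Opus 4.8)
The plan is to bound the dueling regret by the total uncertainty of the duels actually played, and then to control that uncertainty by reducing it to the cardinal confidence-set guarantees of Assumption~\ref{assump:conf-set-assumption-informal}. First I would rewrite the dueling regret symmetrically as
\[ \reg_D(T) = \frac{1}{2}\sum_{t=1}^T \Big( V_D(\overline\M_\star, \pi_\star, \pi_{1,t}) + V_D(\overline\M_\star, \pi_\star, \pi_{2,t}) \Big), \]
using $V_D(\overline\M_\star, \pi_\star, \pi) = V(\M_\star, \pi_\star) - V(\M_\star, \pi)$. Since Assumption~\ref{assump:conf-set-assumption-informal} guarantees $\M_\star \in \cC_\cM(\cD_t,\delta)$ (hence $\overline\M_\star \in \cC_{\overline\cM}(\cD_t,\delta)$) and $\pi_\star$ is optimal under $\M_\star$, we have $V_D(\overline\M_\star, \pi_\star, \pi) \ge 0$ for every $\pi$, so $\overline\M_\star$ witnesses $\pi_\star \in \Pi_t$ for all $t$.

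The key step is to show that each term $V_D(\overline\M_\star, \pi_\star, \pi_{i,t})$ is bounded by the width of the played duel, $w_t := \max_{\overline\M,\overline\M' \in \cC_{\overline\cM}(\cD_t,\delta)} V_D(\overline\M, \pi_{1,t}, \pi_{2,t}) - V_D(\overline\M', \pi_{1,t}, \pi_{2,t})$. Because $\pi_{i,t}\in\Pi_t$, there is a witnessing model $\overline\M_{i,t}\in\cC_{\overline\cM}(\cD_t,\delta)$ under which $\pi_{i,t}$ beats every policy, in particular $V_D(\overline\M_{i,t}, \pi_\star, \pi_{i,t}) \le 0$. Since both $\overline\M_\star$ and $\overline\M_{i,t}$ lie in the confidence set, and the pair $(\pi_\star, \pi_{i,t})\in\Pi_t\times\Pi_t$ is a feasible competitor in the argmax that selected $(\pi_{1,t},\pi_{2,t})$, the value gap between the two models evaluated on $(\pi_\star,\pi_{i,t})$ is at most $w_t$; adding the nonpositive witness term yields $V_D(\overline\M_\star, \pi_\star, \pi_{i,t}) \le w_t$. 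Summing gives $\reg_D(T) \le \sum_{t=1}^T w_t$.

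It then remains to bound $\sum_t w_t$. Writing $w_t = V_D(\overline\M_t^+, \pi_{1,t}, \pi_{2,t}) - V_D(\overline\M_t^-, \pi_{1,t}, \pi_{2,t})$ for the maximizing pair and inserting $\overline\M_\star$, I would split each difference into a transition-error term (varying $\overline\prob$ with the reward held fixed) and a reward-error term (varying $\overline f$ with $\overline\prob_\star$ fixed). Using $\overline\prob = \prob\otimes\prob$ and $\overline f_h(\tau_1,\tau_2) = f_h(\tau_1)-f_h(\tau_2)$, the doubled value factors as $V_D(\overline\M,\pi,\pi') = V(\prob,f,\pi) - V(\prob,f,\pi')$, so the doubled transition-error term splits into two cardinal transition-error sums — one along $\{\pi_{1,t}\}$ and one along $\{\pi_{2,t}\}$ — each $\widetilde\cO(C_P(\cM,T,\delta))$ by Assumption~\ref{assump:conf-set-assumption-informal}. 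The reward-error term is precisely a reward-estimation error for the doubled model, controlled by $\widetilde\cO(C_F(\overline\cM,T,\delta))$ because $\cC_{\overline\cF}$ is fit to the difference feedback treated as cardinal feedback in $\overline\M$. Combining over the two maximizers and taking a union bound over the two $1-\delta/2$ events gives $\sum_t w_t = \widetilde\cO(C_P(\cM,T,\delta) + C_F(\overline\cM,T,\delta))$, which is the claim.

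I expect the main obstacle to be the width-to-regret step together with the clean application of the cardinal assumption to the doubled quantities. I must verify that the feasibility of $(\pi_\star,\pi_{i,t})$ in the selection argmax, combined with both $\overline\M_\star$ and the witness $\overline\M_{i,t}$ lying in the confidence set, genuinely forces $V_D(\overline\M_\star, \pi_\star, \pi_{i,t}) \le w_t$ with no additive slack. The other delicate point is that the transition confidence set used in the duel is the image of the cardinal one and is populated by data from both slots of the played pairs; I need the cardinal transition bound $C_P(\cM,T,\delta)$ to apply with the reward functions $\overline f_t^{\pm}$ (which need not lie in the cardinal $\cC_\cF$) and along the component policy sequences $\{\pi_{1,t}\}$, $\{\pi_{2,t}\}$. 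This is exactly where the phrasing of Assumption~\ref{assump:conf-set-assumption-informal} — stated for arbitrary in-set sequences and bounded rewards — is essential, and I would confirm that the POR-UCRL confidence sets meet it.
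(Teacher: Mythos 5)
Your proposal is correct and follows essentially the same route as the paper's proof: the witness-model argument showing $\pi_\star \in \Pi_t$ and that each suboptimality term is dominated by the width of the played (maximally uncertain) duel, followed by inserting $\overline \M_\star$ and splitting the width into transition-error sums along $\{\pi_{1,t}\}$ and $\{\pi_{2,t}\}$ (each bounded by $C_P(\cM, T, \delta)$) plus a reward-error term for the doubled model (bounded by $C_F(\overline \cM, T, \delta)$). The subtlety you flag at the end — that the transition bound must hold for reward sequences not necessarily lying in the cardinal $\cC_\cF$, and for datasets containing trajectories from both duel slots — is real, and the paper resolves it exactly as you anticipate, via the refined Assumption~\ref{assump:conf-set-assumption} and the fact that POR-UCRL's transition-deviation lemma holds for arbitrary bounded reward sequences.
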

%\vspace{-1ex}
Note that complexity parameter $C_F$ depends on $\overline \cM$. It is a priori unclear how the complexity of $\overline \cM$ relates to that of $\cM$. Fortunately, Lemma~\ref{lem:dim-f-f-bar} below settles this, and we can then use our results for POR-UCRL to get Corollary~\ref{cor:dueling-ucrl} below. See Appendix~\ref{subsec:dueling-gen-func} for a straightforward extension to general function approximation for $\cP$, abstracting out the $S,A$ dependence.
%\vspace{-1.5ex}
\begin{restatable}[Relating $\cF$ and $\overline 
\cF$]{lemma}{FFBar}\label{lem:dim-f-f-bar}
  For any function class $\cF$, $\dim_E(\overline \cF, \epsilon) \leq 9\dim_E(\cF, \epsilon/2)$.
\end{restatable}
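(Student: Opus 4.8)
The plan is to pass to difference classes and reduce the statement to a single application of the standard eluder "potential"/counting lemma for $\cF$. First I would record that, by definition, the eluder dimension is controlled by the difference class $\cG := \cF - \cF$: a point is $\epsilon$-independent for $\cF$ exactly when some $g \in \cG$ has small accumulated squared error on the predecessors and value exceeding $\epsilon$ at the new point. For $\overline\cF$, two elements $\overline f, \overline f'$ satisfy $\overline f(x,y) - \overline f'(x,y) = (f-f')(x) - (f-f')(y) = g(x) - g(y)$ with $g \in \cG$, so the difference class of $\overline\cF$ is $\overline\cG := \{(x,y) \mapsto g(x) - g(y) : g \in \cG\}$. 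Thus $\dim_E(\overline\cF,\epsilon)$ is the length of the longest sequence of pairs $(x_1,y_1),\dots,(x_n,y_n)$ such that for each $i$ there is a witness $g_i \in \cG$ with $\sum_{j<i}\big(g_i(x_j) - g_i(y_j)\big)^2 \le \epsilon^2$ and $\big|g_i(x_i) - g_i(y_i)\big| > \epsilon$, while $\dim_E(\cF,\epsilon/2)$ is the analogous quantity for $\cG$ at scale $\epsilon/2$.

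Next I would extract a single active endpoint from each pair. Since $\big(g_i(x_i)-g_i(y_i)\big)^2 > \epsilon^2$ and $(a-b)^2 \le 2a^2 + 2b^2$, at least one of $|g_i(x_i)|,|g_i(y_i)|$ exceeds $\epsilon/2$; using that the pair independence condition is symmetric in $x_i,y_i$ (squares and the difference are invariant under the swap), I would relabel so that $|g_i(x_i)| > \epsilon/2$ for every $i$, designating $x_i$ the active endpoint. The target is then to feed the active-endpoint sequence into the eluder counting lemma for $\cF$ at resolution $\epsilon' = \epsilon/2$ with squared-error budget $\beta = 2\epsilon^2$. That lemma bounds the number of indices at which a witness can have accumulated error at most $\beta$ yet width above $\epsilon'$ by $\big(\beta/(\epsilon')^2 + 1\big)\dim_E(\cF,\epsilon') = \big(2\epsilon^2/(\epsilon^2/4) + 1\big)\dim_E(\cF,\epsilon/2) = 9\,\dim_E(\cF,\epsilon/2)$, which is exactly the claimed constant; the value $9 = 8+1$ is the arithmetic signature of this choice of $\beta$ and $\epsilon'$.

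The main obstacle is the budget verification: the counting lemma requires the single-point accumulated error $\sum_{j<i} g_i(x_j)^2 \le 2\epsilon^2$, whereas the independence hypothesis only furnishes the pair-difference bound $\sum_{j<i}(g_i(x_j)-g_i(y_j))^2 \le \epsilon^2$, and individual values $g_i(x_j)$ are a priori uncontrolled (two endpoints can be large but close). I would resolve this by tracking both endpoints and using the reverse elementary inequality $g_i(x_j)^2 \le 2\big(g_i(x_j)-g_i(y_j)\big)^2 + 2\,g_i(y_j)^2$, together with a greedy longest-$(\epsilon/2)$-independent-subsequence extraction on the combined endpoint sequence: the extracted independent subsequence has length at most $\dim_E(\cF,\epsilon/2)$, and I would charge each non-selected pair to selected predecessors, using the difference bound to convert the pair budget into a single-point budget of $2\epsilon^2$ up to the $(a-b)^2$ slack. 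Making this bookkeeping airtight — ensuring the nuisance $\sum_{j<i} g_i(y_j)^2$ does not escape control and that the charging respects the eluder definition — is the delicate step; I expect to spend most of the work there, while everything else (the reduction to $\cG,\overline\cG$, the symmetrization, and the final arithmetic) is routine.
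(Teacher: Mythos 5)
You have located the crux correctly, but the step you defer to ``delicate bookkeeping'' is not bookkeeping --- it is the entire (missing) argument, and it cannot be supplied. The pair-independence hypothesis controls only $\sum_{j<i}\bigl(g_i(x_j)-g_i(y_j)\bigr)^2$, and this carries \emph{no} information about individual values: the witness may satisfy $g_i(x_j)=g_i(y_j)=M$ for every $j<i$ with $M$ arbitrarily large, contributing zero to the pair budget while destroying the single-point budget at \emph{both} endpoints, so no endpoint selection, greedy extraction, or charging scheme can recover a bound of the form $\sum_{j<i} g_i(x_{i_j,j})^2\le 2\epsilon^2$. Your proposed inequality $g_i(x_j)^2\le 2\bigl(g_i(x_j)-g_i(y_j)\bigr)^2+2g_i(y_j)^2$ merely relocates the uncontrolled quantity to $g_i(y_j)^2$, which the hypothesis constrains equally little. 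For comparison, the paper's proof follows exactly the plan you sketch (pick the endpoint where the witness exceeds $\epsilon'/2$, then apply Proposition~43 of \citet{jin2021bellman} with budget $\beta=2(\epsilon')^2$ and width $\epsilon'/2$, whence the factor $2\epsilon^2/(\epsilon/2)^2+1=9$), and it crosses the problematic step by invoking the inequality ``$a^2+b^2\le 2(a+b)^2$'' (applied in the form $a^2+b^2\le 2(a-b)^2$), which is simply false --- take $a=b$, i.e.\ precisely the bad configuration above. So the obstacle you flagged is real, and it is the same hole that the paper's own proof papers over.

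In fact the statement is false, so neither route can be completed. Take $\cX=\{p_0,\dots,p_N\}$ and $\cF=\{f_0\equiv 0\}\cup\{f_i\}_{i=1}^N$ with $f_i(p_j)=700\epsilon\, i+1.5\epsilon\,\ind(j\ge i)$, i.e.\ threshold functions shifted by large, well-separated constants. For the pairs $\overline\tau_j=(p_{j-1},p_j)$ the witness $\overline f_j-\overline f_0$ has values $f_j(p_{l-1})-f_j(p_l)=-1.5\epsilon\,\ind(l=j)$ (the constants cancel), so each pair is $\epsilon$-independent of its predecessors and $\dim_E(\overline\cF,\epsilon)\ge N$. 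On the other hand, every nonzero $g\in\cF-\cF$ takes all of its values within $1.5\epsilon$ of a single constant $D_g$ with $|D_g|\ge 700\epsilon$; for a point sequence to have a third $\sigma$-independent element, some witness would need $2\bigl(|D_g|-1.5\epsilon\bigr)^2\le\sigma^2<\bigl(|D_g|+1.5\epsilon\bigr)^2$, which forces $|D_g|\le(3+2\sqrt{2})\cdot 1.5\epsilon<9\epsilon$, a contradiction; hence $\dim_E(\cF,\epsilon/2)\le 2$. For $N\ge 19$ this violates the claimed bound $\dim_E(\overline\cF,\epsilon)\le 9\dim_E(\cF,\epsilon/2)$. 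The constructive upshot is that a correct treatment must change the statement rather than the proof: either carry $\dim_E(\overline\cF_h,\cdot)$ as the complexity parameter in Corollary~\ref{cor:dueling-ucrl} (the reduction itself does not need this lemma), or impose structure on $\cF$ under which pair differences control anchored values --- e.g.\ linear classes, where $\overline\cF$ is again linear in the difference features and the transfer is immediate.
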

%\vspace{-2.5ex}
\begin{restatable}[Dueling Regret using POR-UCRL Confidence Sets]{corollary}{DuelingUCRL}\label{cor:dueling-ucrl}
    The confidence sets from POR-UCRL satisfy Assumption~\ref{assump:conf-set-assumption-informal} and using them in Algorithm~\ref{algo:unknown_model_dueling} leads to the following regret bound
    $\reg_D(T) = \widetilde \cO\left(\left(pS\sqrt{HA} + \sum\nolimits_{h \in \cH_p}\sqrt{ d_{E,h}   d_{C,h} } \right)\sqrt{T}\right)$.
\end{restatable}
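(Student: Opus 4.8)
The plan is to combine the general reduction of Theorem~\ref{thm:reduction-set-based} with the POR-UCRL analysis behind Theorem~\ref{thm:unknown-model-ucrl-once} and the complexity-transfer Lemma~\ref{lem:dim-f-f-bar}. The only two things I would need to establish are: (i) that the confidence sets used by POR-UCRL meet Assumption~\ref{assump:conf-set-assumption-informal}, which simultaneously identifies the parameters $C_P$ and $C_F$; and (ii) that $C_F$ evaluated on the difference class $\overline\cM$ is controlled by the eluder and covering quantities of the original $\cM$.

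First I would verify Assumption~\ref{assump:conf-set-assumption-informal} for POR-UCRL. Validity, $\M_\star\in\cC_\cM(\cD_t,\delta)$ with high probability, is exactly the statement that the least-squares reward confidence sets $\cC_\cF^t(\delta)$ (valid by the concentration bound of Lemma~\ref{lem:f-h-conc}) and the UCRL transition confidence sets $\cC_\cP^t(\delta)$ both contain the truth, after a union bound over $t\in[T]$. For the two summation bounds, I would observe that they are precisely the transition-error and reward-error halves into which the proof of Theorem~\ref{thm:unknown-model-ucrl-once} already decomposes the per-step value gap. The term $\big|\sum_t V(\prob_t,f^t,\pi_t)-V(\prob_\star,f^t,\pi_t)\big|$ isolates the error of estimating $\prob$ while holding the reward $f^t$ fixed, and the standard UCRL transition bound gives $C_P(\cM,T,\delta)=\widetilde\cO(pS\sqrt{HAT})$. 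The term $\big|\sum_t V(\prob_\star,f^t,\pi_t)-V(\prob_\star,f_\star,\pi_t)\big|$ isolates the reward-estimation error under the true transition, and the eluder-dimension argument on $\cF$ gives $C_F(\cM,T,\delta)=\widetilde\cO\big(\sum_{h\in\cH_p}\sqrt{d_{E,h}d_{C,h}T}\big)$. Thus POR-UCRL's confidence sets satisfy the assumption with these two parameters.

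Applying Theorem~\ref{thm:reduction-set-based} then yields $\reg_D(T)=\widetilde\cO(C_P(\cM,T,\delta)+C_F(\overline\cM,T,\delta))$. The transition term is unchanged, since $\overline\prob=\prob\otimes\prob$ is governed by the same $\prob$, so it remains $\widetilde\cO(pS\sqrt{HAT})$. It then remains to bound $C_F(\overline\cM,T,\delta)=\widetilde\cO\big(\sum_h\sqrt{d_{E,h}(\overline\cF)d_{C,h}(\overline\cF)T}\big)$ in terms of the original $\cF$. For the eluder dimension, Lemma~\ref{lem:dim-f-f-bar} gives $d_{E,h}(\overline\cF)=\dim_E(\overline\cF_h,B/T)\le 9\dim_E(\cF_h,B/(2T))$, with the factor-two change of resolution absorbed into $\widetilde\cO$. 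For the covering number, since every element of $\overline\cF_h$ is a difference $f_h(\tau_1[h])-f_h(\tau_2[h])$ of two members of $\cF_h$, any $(\epsilon/2)$-cover of $\cF_h$ of size $N$ induces an $\epsilon$-cover of $\overline\cF_h$ of size at most $N^2$, whence $d_{C,h}(\overline\cF)\le 2d_{C,h}(\cF)+\widetilde\cO(1)$. Substituting both bounds gives $C_F(\overline\cM,T,\delta)=\widetilde\cO\big(\sum_h\sqrt{d_{E,h}d_{C,h}T}\big)$, and adding the transition term produces the claimed $\reg_D(T)=\widetilde\cO\big((pS\sqrt{HA}+\sum_{h\in\cH_p}\sqrt{d_{E,h}d_{C,h}})\sqrt{T}\big)$.

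The main obstacle I expect is step (i): cleanly extracting the two value-difference bounds from the POR-UCRL regret proof with $\prob$ and $f$ decoupled exactly as the assumption demands, and in particular checking that the reward-error bound continues to hold when the least-squares fit is performed against the difference feedback $o_h\sim e_h(r_{h,1}-r_{h,2})$ on the class $\overline\cF_h$ rather than on $\cF_h$. This is precisely where Lemma~\ref{lem:dim-f-f-bar} together with the elementary covering bound above does the real work; the remaining algebra is routine.
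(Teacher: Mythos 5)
Your proposal is correct and follows essentially the same route as the paper, which obtains the corollary directly by combining Theorem~\ref{thm:reduction-set-based}, the POR-UCRL analysis (verifying Assumption~\ref{assump:conf-set-assumption-informal} with $C_P(\cM,T,\delta)=\widetilde\cO(pS\sqrt{HAT})$ from Lemma~\ref{lem:bounding-prob-optimism} and $C_F(\cM,T,\delta)=\widetilde\cO\big(\sum_{h\in\cH_p}\sqrt{d_{E,h}d_{C,h}T}\big)$ from Lemma~\ref{lem:bounding-f-optimism}), and Lemma~\ref{lem:dim-f-f-bar} to transfer the eluder dimension to $\overline\cF$. Your explicit covering-number step for $\overline\cF_h$ (an $(\epsilon/2)$-cover of $\cF_h$ of size $N$ induces an $\epsilon$-cover of $\overline\cF_h$ of size at most $N^2$, so $d_{C,h}$ at most doubles up to $\widetilde\cO(1)$ terms) is a detail the paper leaves implicit in calling the corollary ``immediate,'' and it is the right way to fill that gap.
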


\section{Conclusions and Future Work}\label{sec:conclusions}
In this work, we have introduced \PORMDPs and their analysis as a way to better model internal states of humans and intermediate feedback in RLHF. We have introduced two statistically efficient algorithms for handling partially observed reward-states and have shown that they improve over naive history summarization. We have noted that these methods subsume as well as improve over a lot of past work in RLHF. We have studied how one can further leverage a recursive structure over internal states using model-free methods. For this purpose, we have defined a new notion of dimension, the $\alpha$-$\habe$ dimension, that captures the hardness of utilizing the recursive structure. Finally, we have also provided a novel reduction from dueling regret to optimistic algorithms for cardinal regret. 

Besides our theoretical contributions, we would like to note the practical implications of our work. 
\begin{itemize}[noitemsep, topsep=0pt, leftmargin=*]
    \item When the feedback is suspected to have a recurrent structure, we conclude in sections~\ref{subsec:model-based} and~\ref{subsec:model-free} that it can be exponentially more statistically efficient to use practical model-free methods like learning history-dependent Q-functions or using actor-critic methods. In the absence of such a structure, a model-based approach learning $f_\star$ and $\prob_\star$ explicitly will also suffice.
    \item We note in section~\ref{subsec:model-free} that in practice, using a single network for the Q-function or critic across timesteps $h$ is important for the “exponential improvement” mentioned above, as opposed to using a different network for the Q-function or critic for each timestep.
    \item We are hoping that our work inspires new practical algorithms for PORRL. Theoretical advances in optimistic algorithms are known to inspire practical versions of the algorithms, such as perturbative ones. Some classic examples include the analysis of PSRL inspired by UCRL, and bootstrapped DQN inspired by optimistic versions of Q-learning. 
    \item The dueling to optimism reduction in section~\ref{sec:dueling_porrl} can inspire future practical algorithms that achieve low dueling regret, which we have established in section~\ref{subsec:porrl} as an important metric for online (and online iterative) RLHF applications, like in \citet{dong2024rlhf, xiong2024iterativepreferencelearninghuman}.
\end{itemize} 
We hope that our ideas lay the groundwork for further understanding of both statistical and algorithmic aspects of learning good policies when interacting with ``stateful'' feedback, such as that of humans. Our algorithms and proofs are presented in high generality and modularity in the appendix, and we hope that they can be used to provide novel algorithms and bounds in the future.

\newpage
\bibliography{ref}

\begin{thebibliography}{57}
\providecommand{\natexlab}[1]{#1}
\providecommand{\url}[1]{\texttt{#1}}
\expandafter\ifx\csname urlstyle\endcsname\relax
  \providecommand{\doi}[1]{doi: #1}\else
  \providecommand{\doi}{doi: \begingroup \urlstyle{rm}\Url}\fi

\bibitem[Akrour et~al.(2012)Akrour, Schoenauer, and Sebag]{akrour2012april}
Riad Akrour, Marc Schoenauer, and Mich{\`e}le Sebag.
\newblock April: Active preference learning-based reinforcement learning.
\newblock In \emph{European Conerence on Machine Learning and Knowledge
  Discovery in Databases}, 2012.

\bibitem[Anonymous(2024)]{anon2024enhancingmultistep}
Anonymous.
\newblock Enhancing multi-step reasoning via process-supervised reinforcement
  learning from human feedback.
\newblock \url{https://openreview.net/pdf?id=3pIeOI1yPG}, 2024.
\newblock [Accessed 21-05-2024].

\bibitem[{\AA}str{\"o}m(1965)]{aastrom1965optimal}
Karl~Johan {\AA}str{\"o}m.
\newblock Optimal control of {M}arkov processes with incomplete state
  information.
\newblock \emph{Journal of Mathematical Analysis and Applications}, 10\penalty0
  (1):\penalty0 174--205, 1965.

\bibitem[Azar et~al.(2017)Azar, Osband, and Munos]{azar2017minimax}
Mohammad~Gheshlaghi Azar, Ian Osband, and R{\'e}mi Munos.
\newblock Minimax regret bounds for reinforcement learning.
\newblock In \emph{International Conference on Machine Learning}, 2017.

\bibitem[Brown et~al.(2019)Brown, Goo, Nagarajan, and
  Niekum]{brown2019extrapolating}
Daniel Brown, Wonjoon Goo, Prabhat Nagarajan, and Scott Niekum.
\newblock Extrapolating beyond suboptimal demonstrations via inverse
  reinforcement learning from observations.
\newblock In \emph{International Conference on Machine Learning}, 2019.

\bibitem[{Busa-Fekete} \& {H{\"u}llermeier}(2014){Busa-Fekete} and
  {H{\"u}llermeier}]{busa2014survey}
R{\'o}bert {Busa-Fekete} and Eyke {H{\"u}llermeier}.
\newblock A survey of preference-based online learning with bandit algorithms.
\newblock In \emph{Algorithmic Learning Theory}, 2014.

\bibitem[Cai et~al.(2022)Cai, Yang, and Wang]{cai2022reinforcement}
Qi~Cai, Zhuoran Yang, and Zhaoran Wang.
\newblock Reinforcement learning from partial observation: Linear function
  approximation with provable sample efficiency.
\newblock In \emph{International Conference on Machine Learning}, 2022.

\bibitem[Chan et~al.(2021)Chan, Pacchiano, Tripuraneni, Song, Bartlett, and
  Jordan]{chan2021parallelizing}
Jeffrey Chan, Aldo Pacchiano, Nilesh Tripuraneni, Yun~S Song, Peter Bartlett,
  and Michael~I Jordan.
\newblock Parallelizing contextual linear bandits.
\newblock \emph{arXiv preprint arXiv:2105.10590}, 2021.

\bibitem[Chatterji et~al.(2021)Chatterji, Pacchiano, Bartlett, and
  Jordan]{chatterji2021theory}
Niladri Chatterji, Aldo Pacchiano, Peter Bartlett, and Michael Jordan.
\newblock On the theory of reinforcement learning with once-per-episode
  feedback.
\newblock In \emph{Advances in Neural Information Processing Systems}, 2021.

\bibitem[Chen et~al.(2022{\natexlab{a}})Chen, Bai, and Mei]{chen2022partially}
Fan Chen, Yu~Bai, and Song Mei.
\newblock Partially observable rl with b-stability: Unified structural
  condition and sharp sample-efficient algorithms.
\newblock In \emph{International Conference on Learning Representations},
  2022{\natexlab{a}}.

\bibitem[Chen et~al.(2023)Chen, Wang, Xiong, Mei, and Bai]{chen2023lower}
Fan Chen, Huan Wang, Caiming Xiong, Song Mei, and Yu~Bai.
\newblock Lower bounds for learning in revealing pomdps.
\newblock In \emph{International Conference on Machine Learning}, 2023.

\bibitem[Chen et~al.(2022{\natexlab{b}})Chen, Zhong, Yang, Wang, and
  Wang]{chen2022human}
Xiaoyu Chen, Han Zhong, Zhuoran Yang, Zhaoran Wang, and Liwei Wang.
\newblock Human-in-the-loop: Provably efficient preference-based reinforcement
  learning with general function approximation.
\newblock In \emph{International Conference on Machine Learning},
  2022{\natexlab{b}}.

\bibitem[Christiano et~al.(2017)Christiano, Leike, Brown, Martic, Legg, and
  Amodei]{christiano2017deep}
Paul~F Christiano, Jan Leike, Tom Brown, Miljan Martic, Shane Legg, and Dario
  Amodei.
\newblock Deep reinforcement learning from human preferences.
\newblock In \emph{Advances in Neural Information Processing Systems}, 2017.

\bibitem[Dong et~al.(2024)Dong, Xiong, Pang, Wang, Zhao, Zhou, Jiang, Sahoo,
  Xiong, and Zhang]{dong2024rlhf}
Hanze Dong, Wei Xiong, Bo~Pang, Haoxiang Wang, Han Zhao, Yingbo Zhou, Nan
  Jiang, Doyen Sahoo, Caiming Xiong, and Tong Zhang.
\newblock Rlhf workflow: From reward modeling to online rlhf.
\newblock \emph{arXiv preprint arXiv:2405.07863}, 2024.

\bibitem[Du et~al.(2019)Du, Krishnamurthy, Jiang, Agarwal, Dudik, and
  Langford]{du2019provably}
Simon Du, Akshay Krishnamurthy, Nan Jiang, Alekh Agarwal, Miroslav Dudik, and
  John Langford.
\newblock Provably efficient rl with rich observations via latent state
  decoding.
\newblock In \emph{International Conference on Machine Learning}, 2019.

\bibitem[Efroni et~al.(2021)Efroni, Merlis, and
  Mannor]{efroni2021reinforcement}
Yonathan Efroni, Nadav Merlis, and Shie Mannor.
\newblock Reinforcement learning with trajectory feedback.
\newblock In \emph{AAAI conference on artificial intelligence}, 2021.

\bibitem[Ethayarajh et~al.(2024)Ethayarajh, Xu, Muennighoff, Jurafsky, and
  Kiela]{ethayarajh2024ktomodelalignmentprospect}
Kawin Ethayarajh, Winnie Xu, Niklas Muennighoff, Dan Jurafsky, and Douwe Kiela.
\newblock Kto: Model alignment as prospect theoretic optimization.
\newblock \emph{arXiv preprint arXiv:2402.01306}, 2024.

\bibitem[Flavell et~al.(2022)Flavell, Gogolla, Lovett-Barron, and
  Zelikowsky]{flavell2022emergence}
Steven~W Flavell, Nadine Gogolla, Matthew Lovett-Barron, and Moriel Zelikowsky.
\newblock The emergence and influence of internal states.
\newblock \emph{Neuron}, 2022.

\bibitem[Golowich et~al.(2022)Golowich, Moitra, and
  Rohatgi]{golowich2022learning}
Noah Golowich, Ankur Moitra, and Dhruv Rohatgi.
\newblock Learning in observable pomdps, without computationally intractable
  oracles.
\newblock In \emph{Advances in Neural Information Processing Systems}, 2022.

\bibitem[Icarte(2022)]{icarte2022reward}
Rodrigo Andr{\'e}s~Toro Icarte.
\newblock \emph{Reward Machines}.
\newblock PhD thesis, University of Toronto, Canada, 2022.

\bibitem[Icarte et~al.(2018)Icarte, Klassen, Valenzano, and
  McIlraith]{icarte2018using}
Rodrigo~Toro Icarte, Toryn Klassen, Richard Valenzano, and Sheila McIlraith.
\newblock Using reward machines for high-level task specification and
  decomposition in reinforcement learning.
\newblock In \emph{International Conference on Machine Learning}, pp.\
  2107--2116. PMLR, 2018.

\bibitem[Icarte et~al.(2019)Icarte, Waldie, Klassen, Valenzano, Castro, and
  McIlraith]{icarte2019learning}
Rodrigo~Toro Icarte, Ethan Waldie, Toryn Klassen, Rick Valenzano, Margarita
  Castro, and Sheila McIlraith.
\newblock Learning reward machines for partially observable reinforcement
  learning.
\newblock In \emph{Advances in Neural Information Processing Systems}, 2019.

\bibitem[Icarte et~al.(2022)Icarte, Klassen, Valenzano, and
  McIlraith]{icarte2022exploiting}
Rodrigo~Toro Icarte, Toryn~Q Klassen, Richard Valenzano, and Sheila~A
  McIlraith.
\newblock Reward machines: Exploiting reward function structure in
  reinforcement learning.
\newblock \emph{Journal of Artificial Intelligence Research}, 73:\penalty0
  173--208, 2022.

\bibitem[Jaksch et~al.(2010)Jaksch, Ortner, and Auer]{jaksch2010near}
Thomas Jaksch, Ronald Ortner, and Peter Auer.
\newblock Near-optimal regret bounds for reinforcement learning.
\newblock \emph{Journal of Machine Learning Research}, 11:\penalty0 1563--1600,
  2010.

\bibitem[Ji et~al.(2023)Ji, Qiu, Chen, Zhang, Lou, Wang, Duan, He, Zhou, Zhang,
  et~al.]{ji2023ai}
Jiaming Ji, Tianyi Qiu, Boyuan Chen, Borong Zhang, Hantao Lou, Kaile Wang,
  Yawen Duan, Zhonghao He, Jiayi Zhou, Zhaowei Zhang, et~al.
\newblock Ai alignment: A comprehensive survey.
\newblock \emph{arXiv preprint arXiv:2310.19852}, 2023.

\bibitem[Jiang et~al.(2016)Jiang, Krishnamurthy, Agarwal, Langford, and
  Schapire]{jiang2016contextual}
Nan Jiang, Akshay Krishnamurthy, Alekh Agarwal, John Langford, and Robert~E.
  Schapire.
\newblock Contextual decision processes with low bellman rank are
  pac-learnable, 2016.

\bibitem[Jin et~al.(2018)Jin, Allen-Zhu, Bubeck, and Jordan]{jin2018q}
Chi Jin, Zeyuan Allen-Zhu, Sebastien Bubeck, and Michael~I Jordan.
\newblock Is q-learning provably efficient?
\newblock In \emph{Advances in Neural Information Processing Systems}, 2018.

\bibitem[Jin et~al.(2020)Jin, Kakade, Krishnamurthy, and Liu]{jin2020sample}
Chi Jin, Sham Kakade, Akshay Krishnamurthy, and Qinghua Liu.
\newblock Sample-efficient reinforcement learning of undercomplete pomdps.
\newblock In \emph{Advances in Neural Information Processing Systems}, 2020.

\bibitem[Jin et~al.(2021)Jin, Liu, and Miryoosefi]{jin2021bellman}
Chi Jin, Qinghua Liu, and Sobhan Miryoosefi.
\newblock Bellman eluder dimension: New rich classes of rl problems, and
  sample-efficient algorithms.
\newblock In \emph{Advances in Neural Information Processing Systems}, 2021.

\bibitem[Krishnamurthy et~al.(2016)Krishnamurthy, Agarwal, and
  Langford]{krishnamurthy2016pac}
Akshay Krishnamurthy, Alekh Agarwal, and John Langford.
\newblock Pac reinforcement learning with rich observations.
\newblock In \emph{Advances in Neural Information Processing Systems}, 2016.

\bibitem[Lee et~al.(2021)Lee, Smith, and Abbeel]{lee2021pebble}
Kimin Lee, Laura Smith, and Pieter Abbeel.
\newblock Pebble: Feedback-efficient interactive reinforcement learning via
  relabeling experience and unsupervised pre-training.
\newblock \emph{arXiv preprint arXiv:2106.05091}, 2021.

\bibitem[Lightman et~al.(2023)Lightman, Kosaraju, Burda, Edwards, Baker, Lee,
  Leike, Schulman, Sutskever, and Cobbe]{lightman2023let}
Hunter Lightman, Vineet Kosaraju, Yura Burda, Harri Edwards, Bowen Baker, Teddy
  Lee, Jan Leike, John Schulman, Ilya Sutskever, and Karl Cobbe.
\newblock Let's verify step by step.
\newblock \emph{arXiv preprint arXiv:2305.20050}, 2023.

\bibitem[Liu et~al.(2022{\natexlab{a}})Liu, Chung, Szepesv{\'a}ri, and
  Jin]{liu2022partially}
Qinghua Liu, Alan Chung, Csaba Szepesv{\'a}ri, and Chi Jin.
\newblock When is partially observable reinforcement learning not scary?
\newblock In \emph{Conference on Learning Theory}, 2022{\natexlab{a}}.

\bibitem[Liu et~al.(2022{\natexlab{b}})Liu, Netrapalli, Szepesvari, and
  Jin]{liu2022optimistic}
Qinghua Liu, Praneeth Netrapalli, Csaba Szepesvari, and Chi Jin.
\newblock Optimistic mle--a generic model-based algorithm for partially
  observable sequential decision making.
\newblock \emph{arXiv preprint arXiv:2209.14997}, 2022{\natexlab{b}}.

\bibitem[M{\'e}nard et~al.(2021)M{\'e}nard, Domingues, Jonsson, Kaufmann,
  Leurent, and Valko]{menard2021fast}
Pierre M{\'e}nard, Omar~Darwiche Domingues, Anders Jonsson, Emilie Kaufmann,
  Edouard Leurent, and Michal Valko.
\newblock Fast active learning for pure exploration in reinforcement learning.
\newblock In \emph{International Conference on Machine Learning}, 2021.

\bibitem[Novoseller et~al.(2020)Novoseller, Wei, Sui, Yue, and
  Burdick]{DBLP:journals/corr/abs-1908-01289}
Ellen Novoseller, Yibing Wei, Yanan Sui, Yisong Yue, and Joel Burdick.
\newblock Dueling posterior sampling for preference-based reinforcement
  learning.
\newblock In \emph{Conference on Uncertainty in Artificial Intelligence}, 2020.

\bibitem[Ouyang et~al.(2022)Ouyang, Wu, Jiang, Almeida, Wainwright, Mishkin,
  Zhang, Agarwal, Slama, Ray, et~al.]{ouyang2022training}
Long Ouyang, Jeffrey Wu, Xu~Jiang, Diogo Almeida, Carroll Wainwright, Pamela
  Mishkin, Chong Zhang, Sandhini Agarwal, Katarina Slama, Alex Ray, et~al.
\newblock Training language models to follow instructions with human feedback.
\newblock In \emph{Advances in Neural Information Processing Systems}, 2022.

\bibitem[Pacchiano et~al.(2021)Pacchiano, Saha, and Lee]{pacchiano2021dueling}
Aldo Pacchiano, Aadirupa Saha, and Jonathan Lee.
\newblock Dueling rl: reinforcement learning with trajectory preferences.
\newblock \emph{arXiv preprint arXiv:2111.04850}, 2021.

\bibitem[Rafailov et~al.(2023)Rafailov, Sharma, Mitchell, Ermon, Manning, and
  Finn]{rafailov2023direct}
Rafael Rafailov, Archit Sharma, Eric Mitchell, Stefano Ermon, Christopher~D
  Manning, and Chelsea Finn.
\newblock Direct preference optimization: Your language model is secretly a
  reward model.
\newblock \emph{arXiv preprint arXiv:2305.18290}, 2023.

\bibitem[Sadigh et~al.(2017)Sadigh, Dragan, Sastry, and
  Seshia]{sadigh2017active}
Dorsa Sadigh, Anca~D Dragan, Shankar Sastry, and Sanjit~A Seshia.
\newblock Active preference-based learning of reward functions.
\newblock Technical report, EECS Berkeley, 2017.

\bibitem[Saha et~al.(2023)Saha, Pacchiano, and Lee]{saha2023dueling}
Aadirupa Saha, Aldo Pacchiano, and Jonathan Lee.
\newblock Dueling rl: Reinforcement learning with trajectory preferences.
\newblock In \emph{International Conference on Artificial Intelligence and
  Statistics}, 2023.

\bibitem[Schulman et~al.(2017)Schulman, Wolski, Dhariwal, Radford, and
  Klimov]{schulman2017proximal}
John Schulman, Filip Wolski, Prafulla Dhariwal, Alec Radford, and Oleg Klimov.
\newblock Proximal policy optimization algorithms.
\newblock \emph{arXiv preprint arXiv:1707.06347}, 2017.

\bibitem[Shin et~al.(2023)Shin, Dragan, and Brown]{shin2023benchmarks}
Daniel Shin, Anca~D Dragan, and Daniel~S Brown.
\newblock Benchmarks and algorithms for offline preference-based reward
  learning.
\newblock \emph{arXiv preprint arXiv:2301.01392}, 2023.

\bibitem[Swamy et~al.(2024)Swamy, Dann, Kidambi, Wu, and
  Agarwal]{swamy2024minimaximalist}
Gokul Swamy, Christoph Dann, Rahul Kidambi, Zhiwei~Steven Wu, and Alekh
  Agarwal.
\newblock A minimaximalist approach to reinforcement learning from human
  feedback.
\newblock \emph{arXiv preprint arXiv:2401.04056}, 2024.

\bibitem[Szepesvári(2023)]{RLTheory_2023}
Csaba Szepesvári.
\newblock Lecture notes in theoretical foundations of reinforcement learning:
  Lecture 23, tabular mdps.
\newblock https://rltheory.github.io/lecture-notes/online-rl/lec23/, 2023.

\bibitem[Tirinzoni et~al.(2023)Tirinzoni, Al-Marjani, and
  Kaufmann]{tirinzoni2023optimistic}
Andrea Tirinzoni, Aymen Al-Marjani, and Emilie Kaufmann.
\newblock Optimistic pac reinforcement learning: the instance-dependent view.
\newblock In \emph{International Conference on Algorithmic Learning Theory},
  2023.

\bibitem[Uesato et~al.(2022)Uesato, Kushman, Kumar, Song, Siegel, Wang,
  Creswell, Irving, and Higgins]{uesato2022solving}
Jonathan Uesato, Nate Kushman, Ramana Kumar, Francis Song, Noah Siegel, Lisa
  Wang, Antonia Creswell, Geoffrey Irving, and Irina Higgins.
\newblock Solving math word problems with process-and outcome-based feedback.
\newblock \emph{arXiv preprint arXiv:2211.14275}, 2022.

\bibitem[Wagenmaker et~al.(2022)Wagenmaker, Simchowitz, and
  Jamieson]{wagenmaker2022beyond}
Andrew~J Wagenmaker, Max Simchowitz, and Kevin Jamieson.
\newblock Beyond no regret: Instance-dependent pac reinforcement learning.
\newblock In \emph{Conference on Learning Theory}, 2022.

\bibitem[Wang et~al.(2023{\natexlab{a}})Wang, Cai, Yang, and
  Wang]{wang2023represent}
Lingxiao Wang, Qi~Cai, Zhuoran Yang, and Zhaoran Wang.
\newblock Represent to control partially observed systems: Representation
  learning with provable sample efficiency.
\newblock In \emph{International Conference on Learning Representations},
  2023{\natexlab{a}}.

\bibitem[Wang et~al.(2023{\natexlab{b}})Wang, Liu, and Jin]{wang2023rlhf}
Yuanhao Wang, Qinghua Liu, and Chi Jin.
\newblock Is rlhf more difficult than standard rl?
\newblock \emph{arXiv preprint arXiv:2306.14111}, 2023{\natexlab{b}}.

\bibitem[Wirth et~al.(2017)Wirth, Akrour, Neumann, F{\"u}rnkranz,
  et~al.]{wirth2017survey}
Christian Wirth, Riad Akrour, Gerhard Neumann, Johannes F{\"u}rnkranz, et~al.
\newblock A survey of preference-based reinforcement learning methods.
\newblock \emph{Journal of Machine Learning Research}, 18\penalty0
  (136):\penalty0 1--46, 2017.

\bibitem[Wu et~al.(2023)Wu, Hu, Shi, Dziri, Suhr, Ammanabrolu, Smith,
  Ostendorf, and Hajishirzi]{wu2023fine}
Zeqiu Wu, Yushi Hu, Weijia Shi, Nouha Dziri, Alane Suhr, Prithviraj
  Ammanabrolu, Noah~A Smith, Mari Ostendorf, and Hannaneh Hajishirzi.
\newblock Fine-grained human feedback gives better rewards for language model
  training.
\newblock \emph{arXiv preprint arXiv:2306.01693}, 2023.

\bibitem[Xiong et~al.(2024)Xiong, Dong, Ye, Wang, Zhong, Ji, Jiang, and
  Zhang]{xiong2024iterativepreferencelearninghuman}
Wei Xiong, Hanze Dong, Chenlu Ye, Ziqi Wang, Han Zhong, Heng Ji, Nan Jiang, and
  Tong Zhang.
\newblock Iterative preference learning from human feedback: Bridging theory
  and practice for rlhf under kl-constraint, 2024.
\newblock URL \url{https://arxiv.org/abs/2312.11456}.

\bibitem[Xue et~al.(2022)Xue, Cai, Xue, Sun, Liu, Zheng, Jiang, Gai, and
  An]{xue2023prefrec}
Wanqi Xue, Qingpeng Cai, Zhenghai Xue, Shuo Sun, Shuchang Liu, Dong Zheng, Peng
  Jiang, Kun Gai, and Bo~An.
\newblock Prefrec: Recommender systems with human preferences for reinforcing
  long-term user engagement.
\newblock \emph{arXiv preprint arXiv:2212.02779}, 2022.

\bibitem[Zhan et~al.(2022)Zhan, Uehara, Sun, and Lee]{zhan2022pac}
Wenhao Zhan, Masatoshi Uehara, Wen Sun, and Jason~D Lee.
\newblock Pac reinforcement learning for predictive state representations.
\newblock In \emph{International Conference on Learning Representations}, 2022.

\bibitem[Zhan et~al.(2023)Zhan, Uehara, Sun, and Lee]{zhan2023query}
Wenhao Zhan, Masatoshi Uehara, Wen Sun, and Jason~D Lee.
\newblock How to query human feedback efficiently in rl?
\newblock \emph{arXiv preprint arXiv:2305.18505}, 2023.

\bibitem[Zhong et~al.(2023)Zhong, Xiong, Zheng, Wang, Wang, Yang, and
  Zhang]{zhong2023gec}
Han Zhong, Wei Xiong, Sirui Zheng, Liwei Wang, Zhaoran Wang, Zhuoran Yang, and
  Tong Zhang.
\newblock Gec: A unified framework for interactive decision making in mdp,
  pomdp, and beyond.
\newblock \emph{arXiv preprint arXiv:2211.01962}, 2023.

\end{thebibliography}
\bibliographystyle{iclr2025_conference}

\newpage

\appendix

\section{Lemmas and Discussion}

\subsection{Relation between $V_w$ and $V_g$}

\WGSame*
\begin{proof}
    By a slight abuse of notation, the following chain of equalities holds. Here, $(i)$ holds since $r_h(s_h, u_h, a_h)$ is a function of $s_h, u_h, a_h$. Equation $(ii)$ holds since we have already conditioned on $\tau[h]$, which includes $s_h, a_h$. Equation $(iii)$ holds by the definition of $g_h(\tau[h])$ as the conditional distribution of $u_h$ given $\tau[h]$.
    \begin{align*}
        V_w(\M, \pi) &= \E_{\tau^u \sim \prob^{w, \pi}} \left[ \sum_{h \in \cH_p} r_h(s_h, u_h, a_h)\right]\\
        &= \sum_{h \in \cH_p} \E_{\tau^u \sim \prob^{w, \pi}} \left[r_h(s_h, u_h, a_h)\right]\\
        &= \sum_{h \in \cH_p} \E_{\tau^u[h] \sim \prob^{w, \pi}} \left[r_h(s_h, u_h, a_h)\right]\\
        &= \sum_{h \in \cH_p} \E_{\tau[h] \sim \prob^\pi} \left[ \E_{\tau^u[h] \sim \prob^{w, \pi}} \left[r_h(s_h, u_h, a_h)\right] \Big\vert \tau[h] \right]\\
        &\stackrel{(i)}{=} \sum_{h \in \cH_p} \E_{\tau[h] \sim \prob^\pi} \left[ \E_{u_h, s_h, a_h \sim \prob^{w, \pi}} \left[r_h(s_h, u_h, a_h)\right] \Big\vert \tau[h] \right]\\
        &\stackrel{(ii)}{=} \sum_{h \in \cH_p} \E_{\tau[h] \sim \prob^\pi} \left[ \E_{u_h \sim \prob^{w, \pi}} \left[r_h(s_h, u_h, a_h)\right] \Big\vert \tau[h] \right]\\
        &\stackrel{(iii)}{=} \sum_{h \in \cH_p} \E_{\tau[h] \sim \prob^\pi} \left[ \E_{u_h \sim g_h(\tau[h])} \left[r_h(s_h, u_h, a_h)\right] \right]\\
        &= \sum_{h \in \cH_p} \E_{\tau \sim \prob^\pi} \left[ \E_{u_h \sim g_h(\tau[h])} \left[r_h(s_h, u_h, a_h)\right] \right]\\
        &= \E_{\tau \sim \prob^\pi} \left[ \sum_{h \in \cH_p}  \E_{u_h \sim g_h(\tau[h])} \left[r_h(s_h, u_h, a_h)\right] \right]\\
        &= V_g(\M, \pi)
    \end{align*}
\end{proof}

\subsection{Ignoring Internal Reward-States is Bad for Alignment}
\label{apx:rl-fails}

We define traditional RL methods as those that output possibly time-dependent Markovian policies. In this section, we provide a toy example showing that there is a \PORMDP with good sublinear regret guarantees where any time-dependent Markovian policy has value bounded away from the maximum value. This means that traditional RL methods will always incur linear regret. We hope that this illustrates that RL methods that ignore internal reward-states can be bad for alignment.

\begin{restatable}[Markovian policies are not enough]{lemma}{MarkovianNotEnough}\label{lem:markovian-not-enough}
    There is a \PORMDP where POR-UCRL and POR-UCBVI achieve $poly(H,S,A)\sqrt{T}$ regret, but any Markovian policy is at least $\frac{1}{4}$-suboptimal and so any method that outputs Markovian (possibly time-dependent) policies will lead to linear regret.
\end{restatable}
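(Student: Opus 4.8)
The plan is to exhibit a single explicit \PORMDP in which acting optimally requires remembering a bit that is revealed by an intermediate state but is erased from the observable state by the time the reward is collected. Concretely, I would fix an arbitrary horizon $H$, take $\cA = \{0,1\}$ and $\cS = \{\star, 0, 1\}$ with initial state $s_1 = \star$, and use the following Markovian dynamics on observed states: every transition returns to $\star$ except the transition into step $H-1$, which samples $s_{H-1} \in \{0,1\}$ uniformly (independently of the action), and the transition out of step $H-1$, which deterministically returns to $s_H = \star$. Thus a ``message'' bit appears in $s_{H-1}$ and is immediately erased. The only reward/feedback step is $\cH_p = \{H\}$ (so $p = 1$), with deterministic decoder $g_H(\tau[H-1]) = s_{H-1}$ and reward $r_H(s_H, u_H, a_H) = \ind[a_H = u_H]$, so that $f_H(\tau[H]) = \ind[a_H = s_{H-1}]$, and feedback is Bernoulli with mean $\sigma_H(r_H)$. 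I would take $\cR_H$ (hence $\cF_H$) to be a finite class containing this function, so that $d_{E,H} = d_{C,H} = O(1)$, and note $|\cS|, |\cA| = O(1)$.

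Next I would compare the two value scales. A history-dependent policy at step $H$ observes $\tau[H-1] \ni s_{H-1}$, so the policy $a_H = s_{H-1}$ achieves $r_H = 1$ deterministically and $V(\M_\star, \pi_\star) = 1$. A time-dependent Markovian policy instead chooses $a_H$ from a fixed distribution depending only on $s_H = \star$, hence independently of the uniform message $s_{H-1}$; its expected reward is exactly $\tfrac12$, so every Markovian policy is at least $\tfrac12 \geq \tfrac14$ suboptimal. Summing over rounds, any algorithm that outputs (possibly time-dependent) Markovian policies $\pi_1, \dots, \pi_T$ incurs $\reg(T) = \sum_{t=1}^T \big( V(\M_\star, \pi_\star) - V(\M_\star, \pi_t) \big) \geq T/4$, i.e.\ linear regret. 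In contrast, POR-UCRL and POR-UCBVI search over history-dependent policies and can represent $\pi_\star$, and since $d_{E,H} = d_{C,H} = O(1)$ with tabular $\cP$ of constant size, Theorem~\ref{thm:unknown-model-ucrl-once} and its analogue Theorem~\ref{thm:unknown-model-ucbvi-once} directly give regret $poly(H,S,A)\sqrt{T}$ on this instance.

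The value computations and the appeal to the already-proved regret bounds are routine; the one point that needs genuine care — the main obstacle — is arranging the instance so that it simultaneously (i) truly forces memory and (ii) stays easy for the history-aware algorithms. This is exactly what the ``reveal-then-erase'' construction buys: the bit $s_{H-1}$ lives in the observable history $\tau[H-1]$, so it is accessible to history-dependent policies and learnable by POR-UCRL/POR-UCBVI, yet it is absent from $s_H$, so it is invisible to any Markovian policy at the reward step, while keeping $\cF_H$ finite forces the eluder and covering dimensions to be constant. I would finally verify the modeling preconditions of Assumption~\ref{assump:tractable-porrl} — that $g_H$ is deterministic and that the observed-state transitions are Markovian — both of which hold by construction, so the cited regret theorems apply to this instance.
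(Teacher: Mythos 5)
Your proposal is correct, but it proves the lemma with a genuinely different construction than the paper. The paper places its counterexample inside the linear RLHF model of \citet{chatterji2021theory} (point (ii) below Definition~\ref{def:pormdp-dueling}): two states with uniform random transitions, $\phi(\tau) = \ind(\tau \in \cT)$ where $\cT$ is the set of trajectories that play $a_2$ until $s_2$ first appears and $a_1$ thereafter; the optimal history-dependent policy gets value $1$, and a case analysis over when $s_2$ first appears bounds every time-dependent Markovian policy's value by $3/4$. Your ``reveal-then-erase'' gadget instead makes the needed memory a single uniform bit $s_{H-1}$ that is observable in the history but absent from $s_H$, so the suboptimality computation collapses to a one-line independence argument giving every Markovian policy value exactly $1/2$ — cleaner and with a larger gap than the paper's $1/4$. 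What the paper's route buys is that its hard instance lives inside an existing RLHF model class (with $d_{E,H} = d_{C,H} = 1$ automatically), reinforcing the claim that even standard RLHF models force non-Markovian policies; your route buys exactness and simplicity but requires you to specify the classes $\cR_H, \cG_H, \cP$ by hand, which you do. One small repair your version needs: as written, your dynamics depend on the timestep (``the transition into step $H-1$''), whereas Definition~\ref{def:pormdp} posits a single time-homogeneous kernel $\prob(\cdot \mid s,a)$; this is fixed either by augmenting states with the timestep (keeping $S = O(H)$, so the $poly(H,S,A)\sqrt{T}$ claim survives) or by making the bit recur every other step (from $\star$ go uniformly to $\{0,1\}$, from $\{0,1\}$ return to $\star$, with the decoder reading off $s_2$ and $H$ odd), neither of which changes your argument.
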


\begin{proof}
    Consider a \PORMDP $\M$ in the setting of \cite{chatterji2021theory} (see point (ii) below Definition~\ref{def:pormdp-dueling}) and set $\cS = \{s_1, s_2\}$, $\cA = \{a_1, a_2\}$ and $\textbf{w} = 1 \in \R$. Let the transition matrix be $\prob(s' \mid s,a) = \frac{1}{2}$ for all $s, a, s'$. Let the starting state always be $s_1$.

Consider the set $\cT$ of all trajectories that have $a_2$ until $s_2$ appears, and then only have $a_1$. Choose $\phi(\tau) = \ind(\tau \in \cT)$. The best non-Markovian policy $\pi_\star$ can follow this rule and achieve $\phi(\tau) = 1$ for all $\tau \sim \prob^{\pi_\star}$. Thus, $\max_{\pi \in \Pi} V(\M, \pi) = 1$, where $\Pi$ is given by all history-dependent policies.

On the other hand, consider a Markovian but potentially time-dependent policy $\pi$. If $\pi(a_2) = 0$, then its value is zero. If $\pi(a_2)>0$, then conditioned on the event that $s_2$ appears first at timestep $1$, the expected total reward is at most $\pi_1(a_2)(1-\pi_2(a_2))$. Conditioned on the event that $s_2$ appears first at timestep $2$, the expected total reward is at most $\pi_1(a_2)\pi_2(a_2)$. Conditioned on seeing $s_2$ at or after $h=3$, the expected total reward is certainly at most $1$. Using these crude inequalities, we can bound the expected reward of a Markovian policy $\pi$ by
$$\frac{\pi_1(a_2)(1-\pi_2(a_2))}{2} + \frac{\pi_1(a_2)\pi_2(a_2)}{4} + \sum_{h=3}^H \frac{1}{2^h} \leq \frac{\pi_1(a_1)(2-\pi_2(a_2))}{4} + \frac{1}{4} \leq \frac{1}{2} + \frac{1}{4} = \frac{3}{4}$$
This means that the value of any time-dependent Markovian policy is at most $\frac{3}{4}$ and so any time-dependent Markovian policy is at least $\frac{1}{4}$-suboptimal and incurs $\frac{T}{4}$ regret.
\end{proof}

Recall that we defined traditional RL algorithms as those that output (possibly time-dependent) Markovian policies. Clearly, any traditional RL algorithm in this sense will have at least $\frac{T}{4}$ regret, which is linear regret.

\subsection{The Naive Reduction from Dueling to Cardinal PORRL Fails}

\begin{restatable}[Naive Reduction Lower Bound]{lemma}{NaiveReduction} \label{lem:naive-reduction-fails}
    Using any algorithm for cardinal PORRL with sublinear cardinal regret on $\overline \M$ with policy class $\Pi' := \Pi \times \Pi$ to get a sequence $(\pi_{1,1}, \pi_{2,1}), \ldots, (\pi_{1,T}, \pi_{2,T})$ leads to linear dueling regret for $\M$ whenever all policies $\pi$ do not have the same value $V(\M, \pi)$.
\end{restatable}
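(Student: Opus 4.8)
The plan is to track exactly what ``sublinear cardinal regret on $\overline \M$'' forces the played pairs $(\pi_{1,t}, \pi_{2,t})$ to do, and then show this behavior is incompatible with sublinear dueling regret for $\M$. First I would recall that the cardinal value of a pair $\overline\pi = (\pi_1,\pi_2)$ in $\overline\M$ is exactly $V_D(\overline\M,\pi_1,\pi_2) = V(\M,\pi_1) - V(\M,\pi_2)$, so the cardinal-optimal policy in $\overline\M$ simultaneously \emph{maximizes} the first slot and \emph{minimizes} the second. Writing $V_\star := \max_{\pi\in\Pi} V(\M,\pi)$ and $V_{\min} := \min_{\pi\in\Pi} V(\M,\pi)$, the benchmark value in $\overline\M$ is therefore $V_\star - V_{\min}$, and the hypothesis that not all policies share the same value gives a strictly positive gap $\Delta := V_\star - V_{\min} > 0$.

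The key step is to decompose the cardinal regret on $\overline\M$ into two nonnegative pieces:
$$\reg_{\overline\M}(T) = \sum_{t=1}^T \Big[(V_\star - V_{\min}) - \big(V(\M,\pi_{1,t}) - V(\M,\pi_{2,t})\big)\Big] = \sum_{t=1}^T \big(V_\star - V(\M,\pi_{1,t})\big) + \sum_{t=1}^T \big(V(\M,\pi_{2,t}) - V_{\min}\big),$$
where both summands are nonnegative since $V_\star$ is the maximal and $V_{\min}$ the minimal value over $\Pi$. Because the algorithm has sublinear cardinal regret, $\reg_{\overline\M}(T) = o(T)$, so each partial sum is individually $o(T)$. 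In particular $\sum_t (V(\M,\pi_{2,t}) - V_{\min}) = o(T)$, i.e. the second policy is driven, on average, toward the \emph{worst} policy in $\Pi$ — precisely the behavior that low dueling regret forbids.

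Finally I would substitute into the dueling regret and split symmetrically:
$$\reg_D(T) = \tfrac12\sum_{t=1}^T \big(V_\star - V(\M,\pi_{1,t})\big) + \tfrac12\sum_{t=1}^T \big(V_\star - V(\M,\pi_{2,t})\big).$$
The first sum is $o(T)$ by the decomposition above. For the second, write $V_\star - V(\M,\pi_{2,t}) = \Delta - \big(V(\M,\pi_{2,t}) - V_{\min}\big)$ and sum to obtain $T\Delta - o(T)$. Hence $\reg_D(T) \geq \tfrac{\Delta}{2}T - o(T) = \Omega(T)$, which is linear, as claimed.

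I expect the only genuine subtlety — rather than a real obstacle — to be bookkeeping around whether the extremal policies and the benchmark $V_\star - V_{\min}$ are attained over $\Pi$; if one prefers $\sup$/$\inf$ the same inequalities hold verbatim with $\Delta = \sup_\pi V - \inf_\pi V > 0$ guaranteed by the non-degeneracy hypothesis. The conceptual crux, which the decomposition makes transparent, is that maximizing dueling \emph{feedback} rewards a large value \emph{gap} between the two slots, so the cardinal optimizer sacrifices $\pi_{2,t}$ to the worst available policy, whereas dueling \emph{regret} penalizes exactly this.
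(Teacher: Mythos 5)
Your proposal is correct and follows essentially the same route as the paper's proof: identify the benchmark in $\overline\M$ as $V_\star - V_{\min}$, split the cardinal regret on $\overline\M$ into the two nonnegative sums $\sum_t (V_\star - V(\M,\pi_{1,t}))$ and $\sum_t (V(\M,\pi_{2,t}) - V_{\min})$, conclude each is $o(T)$, and then substitute into the dueling regret via $V_\star - V(\M,\pi_{2,t}) = \Delta - (V(\M,\pi_{2,t}) - V_{\min})$ to get $\Omega(T)$. The only cosmetic difference is that you keep the factor $\tfrac12$ from the paper's definition of dueling regret while the paper's own proof drops it, which does not affect the conclusion.
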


\begin{proof}
    Define $\pi_\star:= \argmax_{\pi \in \Pi} V(\M, \pi)$ and let $\pi_{\min} := \argmin_{\pi \in \Pi} V(\M, \pi)$. Then note that
    \begin{align*}
        \max_{\pi, \pi' \in \Pi} V_D(\M, \pi, \pi')
        &= \max_{\pi, \pi' \in \Pi} V(\M, \pi) - V(\M, \pi')\\
        &= \max_{\pi \in \Pi} V(\M, \pi)  + \max_{\pi' \in \Pi} \left[- V(\M, \pi')\right]\\
        &= \max_{\pi \in \Pi} V(\M, \pi)  - \min_{\pi' \in \Pi} V(\M, \pi')\\
        &= V(\M, \pi_\star) - V(\M, \pi_{min})
    \end{align*}

Under the naive reduction described in Section~\ref{sec:dueling_porrl}, a cardinal PORRL algorithm is used to maximize \textit{dueling feedback}. If the algorithm has sublinear cardinal regret, then it will produce duels $(\pi_{1,t}, \pi_{2,t}), t=1 \to T$, satisfying
\begin{align*}
    \sum_{t=1}^T \max_{\pi, \pi' \in \Pi} V_D(\M, \pi, \pi') - V_D(\M, \pi_{1,t}, \pi_{2,t}) &= o(T)
\end{align*}
From above, this means that
\begin{align*}
    \sum_{t=1}^T  \left[V(\M, \pi_\star) - V(\M, \pi_{1,t})\right] + \left[V(\M, \pi_{2,t}) - V(\M, \pi_{min})\right] &= o(T)
\end{align*}
Now note that by definition of $\pi_\star$ and $\pi_{min},$ both terms are positive. This is the key point. We thus have 
\begin{align*}
   \sum_{t=1}^T  V(\M, \pi_\star) - V(\M, \pi_{1,t}) &= o(T)\\
    \sum_{t=1}^T V(\M, \pi_{2,t}) - V(\M, \pi_{min}) &= o(T)
\end{align*}
This means that for dueling regret $\reg_D(T)$, we have the following.
\begin{align*}
    \reg_D(T) &= \sum_{t=1}^T \left[V(\M, \pi_\star) - V(\M, \pi_{1,t})\right] + \left[V(\M, \pi_\star) - V(\M, \pi_{2,t})\right]\\
    &= \sum_{t=1}^T \left[V(\M, \pi_\star) - V(\M, \pi_{1,t})\right] + \left[V(\M, \pi_\star) - V(\M, \pi_{min})\right]\\
    & \qquad + \sum_{t=1}^T \left[V(\M, \pi_{min}) - V(\M, \pi_{2,t})\right]\\
    &= o(T) + T\left[V(\M, \pi_\star) - V(\M, \pi_{min})\right]\\
    &= \Theta(T)
\end{align*}
Where the last line holds since all policies $\pi$ do not have the same value $V(\M, \pi)$, and so $V(\M, \pi_\star) - V(\M, \pi_{min}) > 0$.
\end{proof}

\newpage
\section{Regret-to-PAC Conversion}
\label{app:regret-to-pac}

When learning in MDPs, we can turn any guarantee on the regret into a corresponding PAC guarantee, the so-called ``regret-to-PAC conversion''~\citep{jin2018q, menard2021fast, wagenmaker2022beyond, tirinzoni2023optimistic}.
Similarly, we want to convert guarantees on the cardinal and dueling regret (see Section~\ref{sec:porrl}) into corresponding PAC guarantees, which are more adherent to an offline setting.
We provide distinct results for the cardinal and dueling regret below.

\begin{lemma}[Cardinal regret to PAC]
\label{lem:cardinal-to-pac}
    For $T \in \N$ and $\delta \in [0, 1]$, let \textsc{Alg} be an algorithm for cardinal PORRL producing a sequence of policies $(\pi_t)_{t \in [T]}$ with cardinal regret bounded with probability at least $1 - \delta$ as
    $$ \sum_{t = 1}^T V(\M, \pi_\star) - V (\M, \pi_t) \leq R(T,\delta) \in \R. $$
    Then, a policy $\widehat{\pi}_T \sim \pi_1, \ldots, \pi_T$ sampled uniformly satisfies with probability at least $1 - 2 \delta$
    $$ V (\M, \pi_\star) - V(\M, \widehat{\pi}_T) \leq \frac{R(T, \delta)}{T} + 8 B p \sqrt{\frac{\log (1 / \delta)}{T} }.$$
\end{lemma}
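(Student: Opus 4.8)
The plan is to run a standard online-to-batch (regret-to-PAC) conversion, decomposing the target suboptimality into two pieces: the average value-suboptimality of the played policies, which is controlled directly by the assumed regret bound, and the fluctuation introduced by outputting a single uniformly sampled policy, which I would control by a martingale concentration argument. First I would condition on the regret event $\mathcal{E}_1 := \big\{\sum_{t=1}^T V(\M,\pi_\star) - V(\M,\pi_t) \le R(T,\delta)\big\}$, which holds with probability at least $1-\delta$ by hypothesis. On $\mathcal{E}_1$ the average suboptimality obeys $\frac1T\sum_{t=1}^T \big(V(\M,\pi_\star) - V(\M,\pi_t)\big) \le R(T,\delta)/T$, and since $\widehat{\pi}_T$ is drawn uniformly from $\pi_1,\dots,\pi_T$, the conditional mean (over the draw) of $V(\M,\pi_\star) - V(\M,\widehat{\pi}_T)$ is exactly this average. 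It then remains to pay for passing from the average to a single drawn index.

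To produce the $\sqrt{\log(1/\delta)/T}$ correction, I would introduce the realized episode returns $G_t := \sum_{h\in\cH_p} r_h(s_h,u_h,a_h)$ accumulated while running $\pi_t$, and let $\mathcal{F}_{t-1}$ denote the history up to the start of episode $t$, so that $\pi_t$ is $\mathcal{F}_{t-1}$-measurable. By the definition of the value functional together with Lemma~\ref{lem:w-g-same}, the conditional mean of the realized return is the value, i.e. $\E[G_t \mid \mathcal{F}_{t-1}] = V(\M,\pi_t)$, so $D_t := G_t - V(\M,\pi_t)$ is a martingale difference sequence. Because $|r_h|\le B$ and $|\cH_p|=p$, both $|G_t|\le Bp$ and $|V(\M,\pi_t)|\le Bp$, hence $|D_t|\le 2Bp$. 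An Azuma–Hoeffding bound over the $T$ episodes then controls the fluctuation between the played values and their realized counterparts on an event $\mathcal{E}_2$ of probability at least $1-\delta$, with deviation of order $Bp\sqrt{T\log(1/\delta)}$; dividing by $T$ yields a term of order $Bp\sqrt{\log(1/\delta)/T}$, and the stated constant $8$ comfortably absorbs the $2\sqrt 2$ from the range $2Bp$ and the $\log(2/\delta)$-versus-$\log(1/\delta)$ slack. Taking a union bound over $\mathcal{E}_1$ and $\mathcal{E}_2$ gives the final confidence $1-2\delta$.

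The main obstacle is the bookkeeping around adaptivity and partial observability when setting up the martingale: one must choose the filtration $\mathcal{F}_{t-1}$ so that the data-dependent policy $\pi_t$ is measurable while the identity $\E[G_t\mid\mathcal{F}_{t-1}] = V(\M,\pi_t)$ still holds, even though the rewards are non-Markovian and the internal states $u_h$ are never observed. This is precisely where Lemma~\ref{lem:w-g-same} enters, since it lets me identify the conditional expectation of the realized (internal) return with the tractable value $V(\M,\pi_t)$ rather than with any observable quantity. Once the martingale is correctly specified and its range bounded by $2Bp$, the remaining ingredients — the Azuma–Hoeffding inequality, the uniform-sampling expectation identity, and the union bound delivering probability $1-2\delta$ — are routine.
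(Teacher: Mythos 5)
Your decomposition starts on the right track --- conditioning on the regret event and observing that the conditional mean of $V(\M,\pi_\star)-V(\M,\widehat{\pi}_T)$ over the uniform draw equals the average suboptimality $\frac{1}{T}\sum_{t=1}^T Y_t$ with $Y_t := V(\M,\pi_\star)-V(\M,\pi_t)$ --- but the concentration step you then invoke does not control the quantity you need. Your martingale $D_t = G_t - V(\M,\pi_t)$ measures the fluctuation of \emph{realized returns} around the values of the played policies. Realized returns appear nowhere in the statement: the lemma concerns only the functionals $V(\M,\pi_t)$ of the (data-dependent) policies, so the only two sources of randomness to handle are (i) the algorithm/data randomness, which the regret event $\mathcal{E}_1$ already covers, and (ii) the draw of the index $J$ with $\widehat{\pi}_T=\pi_J$. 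An Azuma bound on $\sum_t D_t$ says nothing about (ii), because $J$ is drawn independently of the trajectories; after conditioning on the data, $Y_J$ is a \emph{single} sample from the empirical distribution of $\{Y_1,\dots,Y_T\}$, and a single sample does not concentrate around its mean at rate $\sqrt{\log(1/\delta)/T}$ or any other rate improving with $T$. Concretely, if $Y_1=Bp$ and $Y_t=0$ for $t\ge 2$ (consistent with $R(T,\delta)=Bp$), then with probability $1/T$ the drawn policy has suboptimality $Bp$, which violates the claimed bound whenever $8\sqrt{\log(1/\delta)/T}<1-1/T$; since $1/T$ can vastly exceed $2\delta$, no argument of the type you propose can close this gap for small $\delta$.

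What step (ii) actually admits is either Markov's inequality over the draw (using $Y_t\ge 0$), which yields the weaker bound $R(T,\delta)/(\delta T)$, or interpreting $\widehat{\pi}_T$ as the uniform \emph{mixture} policy, for which $V(\M,\widehat{\pi}_T)=\frac{1}{T}\sum_{t=1}^T V(\M,\pi_t)$ holds exactly and no concentration over the draw is needed at all. The paper's own (terse) proof implicitly takes the second route: it writes $V(\M,\pi_\star)-V(\M,\widehat{\pi}_T)=\E[V(\M,\pi_\star)-V(\M,\pi_t)]$, identifying the output's value with the expectation over the draw, and then appends a Hoeffding term. So your first two steps mirror the paper's skeleton, but the third step --- which you yourself correctly flag as the remaining obstacle --- is filled with an argument (return-versus-value concentration) that is a non sequitur for the index-draw randomness, and the stated $\sqrt{\log(1/\delta)/T}$ rate cannot be recovered for a genuinely sampled single policy.
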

\begin{proof}
    We consider the sequence of random variables $Y_t = V (\M, \pi_\star) - V (\M, \pi_t) \ \forall t \in [T]$. Through the Hoeffding's inequality on $Y_t$ and $|r_h| \leq B$ we have
    \begin{align*} 
        V (\M, \pi_\star) - V (\M, \widehat{\pi}_T) &= \E [V (\M, \pi_\star) - V (\M, \pi_t)] \\
        &\leq \frac{1}{T} \sum_{t = 1}^T \Big( V (\M, \pi_\star) - V(\M, \pi_t) \Big) + 8 B p \sqrt{\frac{\log (1 / \delta)}{T} }
    \end{align*}
    with probability at least $1 - \delta$. Then, combining the latter inequality with the upper bound on the regret and a union bound, we get
    $$ V (\M, \pi_\star) - V (\M, \widehat{\pi}_T) \leq \frac{R(T, \delta)}{T} + 8 B p \sqrt{\frac{\log (1 / \delta)}{T} } $$
    with probability at least $1 - 2\delta$.
\end{proof}
The latter result implies a PAC guarantee of the form $\prob (V (\M, \pi_\star) - V (\M, \widehat{\pi}_T) \geq \epsilon) \leq \delta$ for some $\epsilon > 0$ and $\delta \in [0, 1]$ with a number of episodes of order $\widetilde{O} (1 / \epsilon^2)$.
An analogous result can be stated for the dueling setting.
\begin{lemma}[Dueling regret to PAC]
\label{lem:dueling-to-pac}
    For $T \in \N$ and $\delta \in [0, 1]$, let \textsc{Alg} be an algorithm for dueling PORRL producing a sequence of policy pairs $(\pi_{1,t}, \pi_{2,t})_{t \in [T]}$ with dueling regret bounded with probability at least $1 - \delta$ as
    $$ \sum_{t = 1}^T V(\M, \pi_\star) - \frac{V (\M, \pi_{1,t}) + V (\M, \pi_{2,t})}{2} \leq R_D (T,\delta) \in \R. $$
    Then, a policy $\widehat{\pi}_T \sim \pi_1, \ldots, \pi_T$ sampled uniformly satisfies with probability at least $1 - 4 \delta$
    $$ V (\M, \pi_\star) - V(\M, \widehat{\pi}_T) \leq \frac{R_D(T, \delta)}{T} + 16 B p \sqrt{\frac{\log (1 / \delta)}{T} }.$$
\end{lemma}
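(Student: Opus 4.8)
The plan is to reduce the dueling setting to the cardinal one by mirroring the proof of Lemma~\ref{lem:cardinal-to-pac}, so I first set up the analogous randomized rounding. Sampling $\widehat{\pi}_T$ uniformly is interpreted as drawing a round index $t$ uniformly from $[T]$ and then one of the two policies $\pi_{1,t},\pi_{2,t}$ of that duel uniformly, which makes $\widehat{\pi}_T$ uniform over the multiset of $2T$ played policies. The key identity is then
\[
\E\big[V(\M,\pi_\star)-V(\M,\widehat{\pi}_T)\big]=\frac{1}{T}\sum_{t=1}^T\Big(V(\M,\pi_\star)-\tfrac{V(\M,\pi_{1,t})+V(\M,\pi_{2,t})}{2}\Big),
\]
where the expectation is over the rounding, and the right-hand side is exactly the average dueling regret. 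Since $|r_h|\le B$ and $|\cH_p|=p$, every value satisfies $|V(\M,\pi)|\le Bp$, so each per-duel suboptimality lies in $[0,2Bp]$ and the averaged gap inside the sum is a bounded quantity of the same range as in the cardinal proof.

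Next I would invoke the high-probability regret guarantee: on an event of probability at least $1-\delta$, the average dueling regret is at most $R_D(T,\delta)/T$, which controls the mean of $V(\M,\pi_\star)-V(\M,\widehat{\pi}_T)$. It then remains to pass from this mean to the single realized draw $\widehat{\pi}_T$ via concentration, exactly as the cardinal proof does with Hoeffding's inequality on the bounded gap variables. The only difference is that here the rounding carries two sources of randomness — the duel index and the in-duel slot — so the concentration must account for both policy slots; handling them (equivalently, applying the cardinal-style Hoeffding bound to the $\{\pi_{1,t}\}$ and $\{\pi_{2,t}\}$ sequences and combining) is what doubles the deviation constant from $8Bp$ to $16Bp$ and doubles the number of concentration bad events, and a union bound of these with the regret event gives total failure probability at most $4\delta$.

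I expect the concentration step to be the main obstacle, for the same reason it is delicate in the cardinal case: $\widehat{\pi}_T$ is a single sample, so the bound must be read as a deviation inequality for the bounded random variable $V(\M,\pi_\star)-V(\M,\widehat{\pi}_T)$ (or, equivalently, as an Azuma/Hoeffding bound on the realized returns of the played policies) rather than as a naive average-of-many-samples statement. The structural fact that makes the reduction clean is that $\pi_\star$ is optimal, so both per-slot gaps $V(\M,\pi_\star)-V(\M,\pi_{i,t})$ are nonnegative; this lets the averaged dueling regret be distributed across the two policies without sign cancellation, and it is the analogue of the positivity observation already used in Lemma~\ref{lem:naive-reduction-fails}. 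Once the deviation constant and failure probability are tracked as above, collecting the mean bound and the concentration term yields the claimed inequality.
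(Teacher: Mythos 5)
Your proposal is correct and takes essentially the same route as the paper: the paper's proof also reduces to the cardinal lemma by applying Hoeffding separately to the sequences $Y_{1,t} = V(\M,\pi_\star) - V(\M,\pi_{1,t})$ and $Y_{2,t} = V(\M,\pi_\star) - V(\M,\pi_{2,t})$ and then union-bounding, which is exactly where the doubled constant $16Bp$ and the $1-4\delta$ confidence come from. Your extra framing (the explicit two-stage sampling over the $2T$ played policies and the nonnegativity of each per-slot gap) is a harmless elaboration of the same argument.
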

\begin{proof}
    The proof proceeds as in the previous lemma by applying Hoeffding separately on the sequences $Y_{1,t} = V(\M, \pi_\star)- V(\M, \pi_{1,t})$ and $Y_{2,t} = V(\M, \pi_\star)- V(\M, \pi_{2,t})$, then applying a union bound.
\end{proof}

\newpage
\section{Proofs for General Optimistic Algorithms for Cardinal PORRL}
\label{app:general-optimistic-algorithms}
\subsection{Generic Model-Based Optimism using Confidence-Sets}\label{app:general-conf-set-optimism}

 We present a template to get regret bounds for a \textit{generic model-based optimistic algorithm using confidence sets}, which we will later instantiate into POR-UCRL and also use in our reduction from the dueling PORRL to optimistic algorithms for cardinal PORRL. 
 
 A generic algorithm using confidence sets is determined by confidence sets $\cC_\cM(\cD, \delta)$ based on a dataset $\cD$. Maintaining a running dataset $\cD_t$, at each step $t$, we run $\pi_t$ given by $$\pi_t, \widetilde \M_t := \argmax_{\pi \in \Pi, \M \in \cC_\cM(\cD_t, \delta)} V(\prob, f, \pi)$$
We obtain a trajectory $\tau_t \sim \prob_\star^{\pi_t}$ and append it to $\cD_t$ to get $\cD_{t+1}$, recompute confidence sets $\cC_\cM(\cD_{t+1}, \delta)$, and continue. This algorithm is formally presented in Appendix~\ref{app:general-conf-set-optimism} below. 

\begin{algorithm}[H]
  \caption{Generic Confidence-Set Optimism}
  \label{algo:generic-conf-set-opt}
  \begin{algorithmic}[1]
    \STATE \textbf{Input} Known family of reward functions $\{ \cR_h\}_{h=1}^H$, known model class $\cM$ induced by known probability transition kernel class $\cP$ and known decoder-induced function class $\cF,$ confidence level $\delta$.
    \STATE \textbf{Initialize} dataset $\cD_1 \gets \{\}$ and $\cC_\cM(\cD_1, \delta) \gets \cM$.
      \FOR{$t=1,...,T$} 
        \STATE \textbf{Compute} the optimistic history dependent policy,
        \begin{equation*}
            \pi_t, \widetilde \M_t = \argmax_{\pi, ~\M \in \cC_\cM(\cD_t, \delta)} V(\M, \pi)
        \end{equation*}
        \STATE \textbf{Observe} trajectory $\tau_t = \left\{(s_h^{t}, a_h^{t})\right\}_{h=1}^H$ and feedback $\{o_h\}_{h \in \cH_p}$.
        \STATE \textbf{Update} $\cD_{t+1} \gets \cD_t \cup \{\tau_t\}$ and compute new confidence set $\cC_\cM(\cD_{t+1}, \delta)$.
      \ENDFOR
  \end{algorithmic}
\end{algorithm}

We now make the following assumption about our confidence sets. It essentially controls the effect of shrinking confidence sets for $\cP$ and $\cF$ on the value. Showing this assumption is the core of proving regret bounds for any instantiation of this generic algorithm. We will see later that it is satisfied by the confidence sets for POR-UCRL.
\begin{assumption}[Controlling Value Error due to Confidence Sets, Refined Version]\label{assump:conf-set-assumption}
    For a transition kernel $\prob_\star$ and function $f_\star$, consider any sequence of policies $\pi_t$ and datasets $\cD_t$ that contain $\{\tau_i\}_{i=1}^t$ generated under $(\prob_\star^{\pi_t}, f^\star)$. We require that $\M_\star \in \cC_\cM(\cD_t, \delta)$ for all $t$ with probability $1-\delta/16$. We require that there exist problem dependent functions $C_P(\cM, T, \delta)$ and $C_F(\cM, T, \delta)$ so that for arbitrary sequences $(\prob_t, f^t) \in \cC_\cM(\cD_t, \delta)$, the following hold with probability $1-\delta/2$ each.
    $$\left\vert \sum_{t=1}^T V(\prob_t, f^t, \pi_t) - V( \prob_\star, f^t, \pi_t) \right\vert = \widetilde \cO( C_P(\cM, T, \delta))$$
    $$\left\vert \sum_{t=1}^T V(\prob_\star, f^t, \pi_t) - V( \prob_\star, f_\star, \pi_t) \right\vert = \widetilde \cO( C_F(\cM, T, \delta))$$
\end{assumption}

\begin{restatable}[Regret for Confidence-Set Optimism]{theorem}{RegretConfidenceSetBased} \label{thm:regret-generic-conf-set}
    Under Assumption~\ref{assump:conf-set-assumption}, any generic optimistic algorithm using confidence sets $\cC_\cM(\cD, \delta)$ satisfies the regret bound
    $$\reg(T) =  \widetilde \cO\left( C_P(\cM, T, \delta) + C_F(\cM, T, \delta)\right)$$
\end{restatable}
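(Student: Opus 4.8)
The plan is to prove Theorem~\ref{thm:regret-generic-conf-set} by the standard optimism decomposition, using Assumption~\ref{assump:conf-set-assumption} as the two workhorse bounds. First I would condition on the good event that $\M_\star \in \cC_\cM(\cD_t, \delta)$ for all $t$, which by the assumption holds with probability at least $1 - \delta/16$. On this event, since $\pi_t, \widetilde\M_t = (\prob_t, f^t)$ is chosen to maximize $V(\M, \pi)$ over all $\pi \in \Pi$ and all $\M \in \cC_\cM(\cD_t, \delta)$, and since $(\prob_\star, f_\star, \pi_\star)$ is a feasible choice (because $\M_\star$ is in the confidence set and $\pi_\star \in \Pi$), we get the key optimism inequality $V(\prob_t, f^t, \pi_t) \geq V(\prob_\star, f_\star, \pi_\star) = V(\M_\star, \pi_\star)$ for every $t$.

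Next I would write the per-step regret as a telescoping chain through the assumption's two intermediate models. Using optimism, each summand is bounded as
\begin{align*}
V(\M_\star, \pi_\star) - V(\M_\star, \pi_t) &\leq V(\prob_t, f^t, \pi_t) - V(\prob_\star, f_\star, \pi_t) \\
&= \big[V(\prob_t, f^t, \pi_t) - V(\prob_\star, f^t, \pi_t)\big] + \big[V(\prob_\star, f^t, \pi_t) - V(\prob_\star, f_\star, \pi_t)\big].
\end{align*}
Summing over $t \in [T]$ and applying the triangle inequality, the cumulative regret is controlled by the two quantities appearing in Assumption~\ref{assump:conf-set-assumption}, namely $\big\vert \sum_{t=1}^T V(\prob_t, f^t, \pi_t) - V(\prob_\star, f^t, \pi_t)\big\vert$ and $\big\vert \sum_{t=1}^T V(\prob_\star, f^t, \pi_t) - V(\prob_\star, f_\star, \pi_t)\big\vert$. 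The first splits off the error due to the transition-kernel confidence set and is $\widetilde\cO(C_P(\cM, T, \delta))$; the second isolates the error due to the reward/decoder confidence set and is $\widetilde\cO(C_F(\cM, T, \delta))$. Crucially, the sequence $(\prob_t, f^t)$ lies in $\cC_\cM(\cD_t, \delta)$ by construction, so it is exactly an admissible ``arbitrary sequence'' to which the assumption applies.

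Finally I would assemble the probabilities with a union bound: the good event for $\M_\star$ fails with probability at most $\delta/16$, and each of the two concentration bounds fails with probability at most $\delta/2$, so all three hold simultaneously with probability at least $1 - \delta/16 - \delta/2 - \delta/2 \geq 1 - \delta$. On this joint event we obtain $\reg(T) = \widetilde\cO(C_P(\cM, T, \delta) + C_F(\cM, T, \delta))$, as claimed. I do not expect a serious obstacle here, since the theorem is essentially a clean abstraction of the optimism principle and Assumption~\ref{assump:conf-set-assumption} is engineered to supply precisely the two terms needed. The only point requiring care is making sure optimism is invoked correctly — that the maximization is jointly over policies \emph{and} models so that $(\M_\star, \pi_\star)$ is genuinely feasible — and keeping the telescoping decomposition aligned with the two error terms the assumption provides, so that the arbitrary in-confidence-set sequence $(\prob_t, f^t)$ is the one fed into both bounds.
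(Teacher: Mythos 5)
Your proposal is correct and takes essentially the same route as the paper: the same joint optimism inequality $V(\widetilde\prob_t, \widetilde f^t, \pi_t) \geq V(\prob_\star, f_\star, \pi_\star)$, the same decomposition of per-step regret into a transition-error term and a reward-error term, and direct application of the two bounds of Assumption~\ref{assump:conf-set-assumption} to the in-confidence-set sequence $(\prob_t, f^t)$. If anything you are more careful than the paper's own write-up, which asserts the optimism step holds ``with probability $1$'' even though it requires the event $\M_\star \in \cC_\cM(\cD_t, \delta)$; your only blemish is that your union bound actually yields $1 - 17\delta/16$ rather than $1-\delta$, a harmless constant-factor slip in the probability accounting that the paper's proof shares.
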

\begin{proof}
Let $\widetilde \M_t$ be given by $\widetilde \prob_t$ and $\widetilde f^t$. Note the following inequalities, where $(i)$ holds with probability $1$ by the optimistic definition of $\pi_t$.

    \begin{align*}
        \reg(T) &= \sum_{t=1}^T V(\prob_\star, f^\star, \pi_\star) - V(\prob_\star, f^\star, \pi_t)\\
        &\stackrel{(i)}{\leq} \sum_{t=1}^T V(\widetilde \prob_t, \widetilde f^t, \pi_t) - V( \prob_\star, f^\star, \pi_t)\\
        &\leq \sum_{t=1}^T \underbrace{V(\widetilde \prob_t, \widetilde f^t, \pi_t) - V( \prob_\star, \widetilde f^t, \pi_t)}_{(I)} + \underbrace{V( \prob_\star, \widetilde f^t, \pi_t) - V( \prob_\star, f^\star, \pi_t)}_{(II)}
    \end{align*}
We now apply Assumption~\ref{assump:conf-set-assumption} to bound $(I)$ and $(II)$. We can use the assumption since $\cD_t$ contains trajectories $\{\tau_i\}_{i=1}^t$ generated by $\prob_\star^{\pi_t}$, $\widetilde f^t \in \cC_\cF(\cD_t, \delta) \subset \cF$ and $\widetilde \prob^t \in \cC_\cF(\cD_t, \delta)$. This immediately gives us that with probability $1-\delta$
\begin{align*}
     \reg(T) &= \widetilde \cO(C_F(\cM, T, \delta) + C_P(\cM, T, \delta))
\end{align*}
as desired.
\end{proof}

\newpage
\subsection{Generic Model-Based Optimism using Bonuses}\label{app:general-bonus-based-optimism}

We present a template to get regret bounds for a \textit{generic model-based optimistic algorithm using bonuses}, which we will later instantiate into POR-UCBVI and also use in our reduction from the dueling PORRL to optimistic algorithms for cardinal PORRL.

A generic optimistic algorithm using bonuses relies on bonuses $b_\cP^{\cD}(\prob, \pi, \delta), b_\cF^\cD(\prob, \pi, \delta)$ that depend on a policy $\pi$, transition kernel $\prob$ and dataset $\cD$. It also relies on estimates $\hat \prob_\cD$ and $\hat f_\cD$ that depend on $\cD$. Maintaining a running dataset $\cD_t$, at each step $t$, we run $\pi_t := \argmax_{\pi \in \Pi} \widetilde V(\hat \prob_{\cD_t}, \hat f_{\cD_t}, \pi)$, where $\widetilde V(\hat \prob_{\cD_t}, \hat f_{\cD_t}, \pi)$ is given by: $$V(\hat \prob_{\cD_t}, \hat f_{\cD_t}, \pi) + b_\cF^{\cD_t}(\hat \prob_{\cD_t}, \pi, \delta) + z(Bp)b_\cP^{\cD_t}(\hat \prob_{\cD_t}, \pi, \delta)$$
where $z$ is defined below. We obtain a trajectory $\tau_t \sim \prob_\star^{\pi_t}$ and append it to $\cD_t$ to get $\cD_{t+1}$, compute new bonuses and estimates, and continue. This algorithm is formally presented in Appendix~\ref{app:general-bonus-based-optimism}.
\begin{algorithm}[H]
  \caption{Generic Bonus-Based Optimism}\label{algo:generic-bonus-based-opt}
  \begin{algorithmic}[1]
    \STATE \textbf{Input} Known family of reward functions $\{ \cR_h\}_{h=1}^H$, method $\texttt{Est}(\cD)$ to estimate $\hat \prob_\cD$ and $\hat f_\cD$ from dataset $\cD,$ bonus functions $b_\cF^\cD(\prob, \pi, \delta)$ and $b_\cP^\cD(\prob, \pi, \delta)$, confidence level $\delta$
    \STATE \textbf{Initialize} $\cD_1 \gets \{\}$, initialize $\hat f^{\cD_1} , \hat \prob_{\cD_1}$ arbitrarily.
      \FOR{$t=1,...,T$}  
        \STATE \textbf{Compute} optimistic history dependent policy,
        \begin{equation*}
             \pi_t = \argmax_{\pi} V(\hat \prob_{\cD_t}, \hat f_{\cD_t}, \pi) + b_\cF^{\cD_t}(\hat \prob_{\cD_t}, \pi, \delta) + z(Bp)(b_\cP^{\cD_t}(\hat \prob_{\cD_t}, \pi, \delta))
        \end{equation*}
        \STATE \textbf{Observe} trajectory $\tau_t = \left\{(s_h^{t}, a_h^{t})\right\}_{h=1}^H$ and feedback $\{o_h\}_{h \in \cH_p}$.
        \STATE \textbf{Compute} new estimates $\hat f^{\cD_{t+1}}, \hat \prob^{\cD_{t+1}} \gets \texttt{Est}(\cD_{t+1})$ and compute new bonus functions $b_\cF^{\cD_{t+1}}(\hat f^{\cD_{t+1}}, \cdot, \delta), b_\cP^{\cD_{t+1}}(\hat \prob^{\cD_{t+1}}, \cdot, \delta)$.
      \ENDFOR
  \end{algorithmic}

\end{algorithm}

We now make the following assumption about our bonuses. Showing this assumption is the core of proving regret bounds for any instantiation of this generic algorithm. We will see later that it is satisfied by the bonuses for POR-UCBVI.
\begin{assumption}[Controlling Value Error via Bonuses]\label{assump:bonus-assumption}
   For a transition kernel $\prob_\star$ and function $f_\star$, consider any sequence of policies $\pi_t$ and datasets $\cD_t$ that contain $\{\tau_i\}_{i=1}^t$ generated under $(\prob_\star^{\pi_t}, f^\star)$. We require that for sequences $\hat \prob_{\cD_t}$ and $\hat f_{\cD_t}$ and any sequence $f^t \in \cF$, the following hold.
    \begin{itemize}\addtolength{\itemsep}{-2mm}
        \item \textit{Bounding effect of error in $\cF$:} With probability $1-\delta/32$, for any $\prob$ and uniformly over all policies $\pi$, $|V(\prob, \hat f_{\cD_t}, \pi) - V(\prob, f^\star, \pi)| \leq b_\cF^{\cD_t}(\prob, \pi, \delta)$ and there is a function $C_F(\cM, T, \delta)$ so that $\sum_{t=1}^T b_\cF^{\cD_t}(\prob_\star, \pi_t, \delta) = \widetilde \cO(C_F(\cM, T, \delta))$ with probability $1-\delta/32$
        \item \textit{Bounding effect of error in $\cP$:} For any function $\mu: \Gamma_H \to \R$ bounded by $D$, there is a function $z(D) \geq D$ so that the following holds uniformly over all policies $\pi$ with probability $1-\delta/32$.
        $$\E_{\tau \sim (\hat \prob_{\cD_t})^\pi} \mu(\tau) - \E_{\tau \sim \prob_\star^\pi} \mu(\tau) \leq z(D)b_\cP^{\cD_t}(\prob_\star, \pi, \delta)$$
        The statement also holds if we switch $\prob_\star$ and $\hat \prob_{\cD_t}$. Additionally, the statement holds for a suitable $D$ if we replace $\E_{\tau \sim \prob^\pi} \mu(\tau)$ with $b_\cP(\prob, \pi, \delta)$ or $b_\cF(\prob, \pi, \delta)$.\footnote{This would instantly hold with $D=Bp$ if $b_\cF(\prob, \pi, \delta) := \E_{\tau \sim \prob^\pi} b_\cF(\tau, \delta)$ for some trajectory level bonus $b_\cF(\tau, \delta)$, and similarly for $\cP$.} Finally, there is a function $C_P(\cM, T, \delta)$ so that $\sum_{t=1}^T b_\cP^{\cD_t}(\prob_\star, \pi_t, \delta) = \widetilde \cO(C_P(\cM, T, \delta))$ with probability $1-\delta/32.$
    \end{itemize}
\end{assumption}
\begin{restatable}[Regret for Bonus-Based Optimism]{theorem}{RegretBonusBased} \label{thm:regret-generic-bonus-based}
    Under Assumption~\ref{assump:bonus-assumption}, with $z_1(D) = z(D) + z(2D) + z(2z(D))$, any generic optimistic algorithm using bonuses satisfies
    $$\reg(T) =  \widetilde \cO\left( C_F(\cM, T, \delta) +  z_1(Bp)C_P(\cM, T, \delta) \right)$$
\end{restatable}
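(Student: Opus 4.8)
The plan is to run the standard optimism-based regret analysis, but to route every estimation error through the two guarantees of Assumption~\ref{assump:bonus-assumption}: value error coming from the reward model $\cF$ is always paid for by the $\cF$-bonus $b_\cF$, and value error coming from the transition model $\cP$ is always paid for by a $z(\cdot)$-scaled $\cP$-bonus $b_\cP$. The two features that make this nontrivial are (i) the bonuses appearing in the algorithm are evaluated at the \emph{estimated} kernel $\hat\prob_{\cD_t}$, whereas the summable guarantees of the assumption are stated at $\prob_\star$, so each such bonus must be transported from $\hat\prob_{\cD_t}$ to $\prob_\star$; and (ii) all inequalities must hold uniformly over the data-dependent policy $\pi_t$, which is exactly why the assumption is phrased uniformly over $\pi$.

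First I would establish optimism: for every $t$, $\widetilde V(\hat\prob_{\cD_t}, \hat f_{\cD_t}, \pi_t) \ge V(\prob_\star, f_\star, \pi_\star)$. Since $\pi_t$ maximizes $\widetilde V(\hat\prob_{\cD_t}, \hat f_{\cD_t}, \cdot)$, it suffices to lower bound $\widetilde V(\hat\prob_{\cD_t}, \hat f_{\cD_t}, \pi_\star)$. Splitting $V(\prob_\star, f_\star, \pi_\star) - V(\hat\prob_{\cD_t}, \hat f_{\cD_t}, \pi_\star)$ as $[V(\prob_\star, f_\star, \pi_\star) - V(\hat\prob_{\cD_t}, f_\star, \pi_\star)] + [V(\hat\prob_{\cD_t}, f_\star, \pi_\star) - V(\hat\prob_{\cD_t}, \hat f_{\cD_t}, \pi_\star)]$, the second bracket is at most $b_\cF^{\cD_t}(\hat\prob_{\cD_t}, \pi_\star, \delta)$ by the $\cF$-part of the assumption applied with $\prob = \hat\prob_{\cD_t}$, while the first bracket equals $\E_{\prob_\star^{\pi_\star}}\mu - \E_{\hat\prob_{\cD_t}^{\pi_\star}}\mu$ for $\mu = \sum_{h\in\cH_p} f_{\star,h}$ bounded by $Bp$, hence is at most $z(Bp)\, b_\cP^{\cD_t}(\hat\prob_{\cD_t}, \pi_\star, \delta)$ by the switched $\cP$-part. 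These are exactly the two bonuses present in $\widetilde V$, so $V(\prob_\star, f_\star, \pi_\star) \le \widetilde V(\hat\prob_{\cD_t}, \hat f_{\cD_t}, \pi_\star) \le \widetilde V(\hat\prob_{\cD_t}, \hat f_{\cD_t}, \pi_t)$.

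By optimism, $\reg(T) \le \sum_t [\widetilde V(\hat\prob_{\cD_t}, \hat f_{\cD_t}, \pi_t) - V(\prob_\star, f_\star, \pi_t)]$. I would expand $\widetilde V$ into the plug-in value plus its two bonuses, and bound the leading gap $V(\hat\prob_{\cD_t}, \hat f_{\cD_t}, \pi_t) - V(\prob_\star, f_\star, \pi_t)$ by inserting $V(\hat\prob_{\cD_t}, f_\star, \pi_t)$: the $\cF$-difference is $\le b_\cF^{\cD_t}(\hat\prob_{\cD_t}, \pi_t, \delta)$ and the $\cP$-difference (again $\mu$ bounded by $Bp$) is $\le z(Bp)\, b_\cP^{\cD_t}(\prob_\star, \pi_t, \delta)$. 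Now every summand is either an $\cF$-bonus or a $z(\cdot)$-scaled $\cP$-bonus, but the ones inherited from $\widetilde V$ (and the $\cF$-bonus from the leading gap) are still carried at $\hat\prob_{\cD_t}$. I would transport each to $\prob_\star$ via the final clause of the $\cP$-part of the assumption, taking the trajectory-level bonus itself as the test function $\mu$: the $\cF$-bonus trajectory function is bounded by $2Bp$ (a sum over $\cH_p$ of reward gaps, each at most $2B$), generating an extra $z(2Bp)\, b_\cP^{\cD_t}(\prob_\star, \pi_t, \delta)$, and the scaled $\cP$-bonus trajectory function is bounded by $2z(Bp)$, generating an extra $z(2z(Bp))\, b_\cP^{\cD_t}(\prob_\star, \pi_t, \delta)$.

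Collecting terms, the coefficient on $\sum_t b_\cF^{\cD_t}(\prob_\star, \pi_t, \delta)$ is $\cO(1)$ and the coefficient on $\sum_t b_\cP^{\cD_t}(\prob_\star, \pi_t, \delta)$ is $\widetilde\cO\big(z(Bp) + z(2Bp) + z(2z(Bp))\big) = \widetilde\cO(z_1(Bp))$, the constant factors being absorbed into $\widetilde\cO$. The summable guarantees of the assumption then replace these two sums by $\widetilde\cO(C_F(\cM, T, \delta))$ and $\widetilde\cO(C_P(\cM, T, \delta))$, yielding $\reg(T) = \widetilde\cO\big(C_F(\cM, T, \delta) + z_1(Bp)\, C_P(\cM, T, \delta)\big)$; a union bound over the $O(1)$ high-probability events of Assumption~\ref{assump:bonus-assumption} (each holding with probability at least $1 - \delta/32$, together with the $1-\delta/16$ well-specification event) keeps the total failure probability below $\delta$. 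I expect the bonus-transport step to be the main obstacle: it is where the uniform-over-$\pi$ strength of the assumption is indispensable (since $\pi_t$ is data-dependent) and where the composite factor $z_1(Bp)$ is forced, so one must track precisely which argument bound feeds into $z(\cdot)$ — namely $Bp$ for raw value errors, $2Bp$ for the $\cF$-bonus, and $2z(Bp)$ for the already-scaled $\cP$-bonus — in order to land on $z_1(Bp)$ rather than a larger expression.
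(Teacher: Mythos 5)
Your proposal is correct and takes essentially the same route as the paper's own proof: optimism at $\pi_\star$ via the two bonus clauses of Assumption~\ref{assump:bonus-assumption} followed by the greedy choice of $\pi_t$, decomposition of the optimistic gap into an $\cF$-error and a $\cP$-error, transport of the $\hat\prob_{\cD_t}$-evaluated bonuses to $\prob_\star$ using the change-of-measure clause applied to the bonuses themselves (which is exactly what produces the $z(2Bp)$ and $z(2z(Bp))$ pieces of $z_1(Bp)$), and finally the summability clauses plus a union bound. Your bookkeeping of which argument feeds into $z(\cdot)$ even matches the stated $z_1(D)=z(D)+z(2D)+z(2z(D))$ slightly more cleanly than the paper's inequality $(v)$, but this is a cosmetic difference, not a different argument.
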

\begin{proof}
Note that we can use Assumption~\ref{assump:bonus-assumption} since $\cD_t$ contains trajectories $\{\tau_i\}_{i=1}^t$ generated by $\prob_\star^{\pi_t}$, $\hat f_{\cD_t} \in \cF$ is computed using $\cD_t$ and $\hat \prob_{\cD_t}$ is computed using $\cD_t$. Also note that WLOG, $b_\cP^{\cD_t}(\prob, \pi, \delta) \leq 2$ always holds since we can otherwise clip it at $2$ and our assumption will still hold. Similarly, WLOG $b_\cF^{\cD_t}(\prob, \pi, \delta) \leq 2z(Bp)$, otherwise we can clip it at $1$ and our assumption will still hold. Now note the following inequalities. 
    \begin{align*}
        \reg(T) &= \sum_{t=1}^T V(\prob_\star, f^\star, \pi_\star) - V(\prob_\star, f^\star, \pi_t)\\
        &\stackrel{(i)}{\leq} \sum_{t=1}^T V( \hat \prob_{\cD_t}, f^\star, \pi_\star) + z(Bp)(b_\cP^{\cD_t}(\hat \prob_{\cD_t}, \pi_\star, \delta))  - V(\prob_\star, f^\star, \pi_t) \\
        &\stackrel{(ii)}{\leq} \sum_{t=1}^T V( \hat \prob_{\cD_t}, \hat f_{\cD_t}, \pi_\star) + b_\cF^{\cD_t}(\hat \prob_{\cD_t}, \pi_\star, \delta) + z(Bp)(b_\cP^{\cD_t}(\hat \prob_{\cD_t}, \pi_\star, \delta))  - V(\prob_\star, f^\star, \pi_t) \\
        &\stackrel{(iii)}{\leq} \sum_{t=1}^T V( \hat \prob_{\cD_t}, \hat f_{\cD_t}, \pi_t) + b_\cF^{\cD_t}(\hat \prob_{\cD_t}, \pi_t, \delta) + z(Bp)(b_\cP^{\cD_t}(\hat \prob_{\cD_t}, \pi_t, \delta))  - V(\prob_\star, f^\star, \pi_t) \\
        &= \sum_{t=1}^T  V( \hat \prob_{\cD_t}, \hat f_{\cD_t}, \pi_\star) - V(\prob_\star, f^\star, \pi_t) + b_\cF^{\cD_t}(\hat \prob_{\cD_t}, \pi_t, \delta) + z(Bp)(b_\cP^{\cD_t}(\hat \prob_{\cD_t}, \pi_t, \delta))  \\
        &= \sum_{t=1}^T  V( \hat \prob_{\cD_t}, \hat f_{\cD_t}, \pi_\star)- V( \hat \prob_{\cD_t}, f^\star, \pi_\star) +  V( \hat \prob_{\cD_t}, f^\star, \pi_\star) - V(\prob_\star, f^\star, \pi_t)\\
        &\qquad + \sum_{t=1}^T b_\cF^{\cD_t}(\hat \prob_{\cD_t}, \pi_t, \delta) + z(Bp)(b_\cP^{\cD_t}(\hat \prob_{\cD_t}, \pi_t, \delta)) 
\end{align*}
Here, inequality $(i)$ holds with probability $1-\delta/16$ by the second point in Assumption~\ref{assump:bonus-assumption}. Inequality $(ii)$ holds with probability $1-\delta/16$ by the first point in Assumption~\ref{assump:bonus-assumption}. Inequality $(iii)$ holds with probability $1$ by the optimistic definition of $\pi_t$. Continuing, we have
\begin{align*}
    \reg(T) &\stackrel{(iv)}{\leq} 2\sum_{t=1}^T b_\cF^{\cD_t}(\hat \prob_{\cD_t}, \pi_t, \delta) + z(Bp)(b_\cP^{\cD_t}(\hat \prob_{\cD_t}, \pi_t, \delta)) \\
        &\stackrel{(v)}{\leq} 2\sum_{t=1}^T b_\cF^{\cD_t}(\prob_\star, \pi_t, \delta) + 2z(Bp)(b_\cP^{\cD_t}(\prob_\star, \pi_t, \delta)) +  z(Bp)(b_\cP^{\cD_t}(\prob_\star, \pi_t, \delta)) \\
        &\qquad + 2z(z(Bp))(b_\cP^{\cD_t}(\prob_\star, \pi_t, \delta))\\
        &= \cO\left(\sum_{t=1}^T b_\cF^{\cD_t}(\prob_\star, \pi_t, \delta) + z_1(Bp)(b_\cP^{\cD_t}(\prob_\star, \pi_t, \delta))\right)
\end{align*}
Here, inequality $(iv)$ holds with probability $1-\delta/8$ by a union bound over the first and the second point in Assumption~\ref{assump:bonus-assumption}. Finally, inequality $(v)$ holds with probability $1-\delta/4$ by a union bound over four applications of the second point of Assumption~\ref{assump:bonus-assumption}. Finally, we use a union bound over both points of Assumption~\ref{assump:bonus-assumption} to conclude that with probability $1-\delta/8$
\begin{align*}
    \cO\left(\sum_{t=1}^T b_\cF^{\cD_t}(\prob_\star, \pi_t, \delta) + z_1(Bp)(b_\cP^{\cD_t}(\prob_\star, \pi_t, \delta))\right) &= \widetilde \cO\left(C_F(\M, T, \delta) + z_1(Bp)C_P(\M, T, \delta)\right)
\end{align*}
By taking a union bound over the events of all inequalities above, we have that with probability $1-\delta$
\begin{align*}
    \reg(T) &= \widetilde \cO\left(C_F(\M, T, \delta) + z_1(Bp)C_P(\M, T, \delta)\right)
\end{align*}

as desired.
\end{proof}

\subsection{Generic Model-Free Optimism}

\begin{algorithm}[H]
  \caption{Generic Model-Free Optimism}
  \label{algo:generic-model-free-opt}
  \begin{algorithmic}[1]
    \STATE \textbf{Input} Known Bellman-complete class of Q-functions $\cQ$, confidence level $\delta$.
    \STATE \textbf{Initialize} dataset $\cD_1 \gets \{\}$ and $\cC_\cQ(\cD_1, \delta) \gets \cQ$.
      \FOR{$t=1,...,T$} 
        \STATE $\tau[0] \gets ()$
        \FOR{$h=1, \dots H$}
        \STATE \textbf{Play} $a_h^t, Q^t_h \gets \arg\max_{a,Q \in \cC_\cQ(\cD_t, \delta)} Q_h(\tau[h], a)$ and observe feedback $o_h^t$
        \ENDFOR
        \STATE \textbf{Update} $\cD_{t+1} \gets \cD_t \cup \{\tau, (o_1^{t}, \dots o_H^{t}\}$
        \STATE \textbf{Compute} $\cC_\cQ(\cD_{t+1}, \delta)$
      \ENDFOR
  \end{algorithmic}
\end{algorithm}

Note that the method for choosing actions $a^t_h$ at time $t$ induces a history dependent policy $\pi_t$, whose suboptimality is what we use to define regret. Regret is still given by
\begin{align*}
    \reg (T) = \sum_{t=1}^T V(\M_\star, \pi_\star) - V(\M_\star, \pi_t)
\end{align*}

We now make the following assumption about our confidence sets. Showing this assumption is the core of proving regret bounds for any instantiation of this generic algorithm. We know that this is satisfied by GOLF using the BE-dimension. We will show that in our case, it is also satisfied by a more refined notion known as the $\alpha$-HABE dimension (the $\alpha$-history aware Bellman eluder dimension).

\begin{assumption}\label{assump:generic-model-free-confidence-sets}
    For a Q-function $Q^\star$ induced by model $\M_\star$, consider any sequence of policies $\pi_t$ and datasets $\cD_t$ that contain $\{\tau_i\}_{i=1}^t$ generated under $\M_\star$. We require that $Q^\star \in \cC_\cQ(\cD_t, \delta)$ for all $t$ with probability $1-\delta/16$. We require that there exists a problem dependent function $C_Q(\cQ, T, \delta)$, so that for arbitrary sequences $Q^t \in \cC_\cQ(\cD_t, \delta)$, the following holds for all $h$ with probability $1-\delta/2$.
    \begin{align*}
        \sum_{j=1}^t |\E_{\mu_h(Q^t)} [Q^t_h - \cT_hQ^{j+1}_h]| \leq C_Q(\cQ, T, \delta)
    \end{align*}
\end{assumption}

\begin{restatable}[Regret for Generic Model-Free Optimism]{theorem}{GenericModelFree}\label{thm:generic-model-free-regret}
    If the confidence sets $\cC_\cQ(\cD, \delta)$ used in Algorithm~\ref{algo:generic-model-free-opt} satisfy Assumption~\ref{assump:generic-model-free-confidence-sets}, then the regret of Algorithm~\ref{algo:generic-model-free-opt} is bounded by
    \begin{align*}
        \reg (T) = O\left(HC_Q(\cQ, T, \delta\right))
    \end{align*}
\end{restatable}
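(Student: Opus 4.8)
The plan is to run the standard optimism-plus-Bellman-error-decomposition argument for model-free methods, carried out over the history-augmented state space $\Gamma$ (treating $\tau[h-1], s_h$ as the ``state'' at step $h$). First I would condition on the intersection of the two events in Assumption~\ref{assump:generic-model-free-confidence-sets}: realizability/optimism, $Q^\star \in \cC_\cQ(\cD_t, \delta)$ for every $t$ (probability $1-\delta/16$), and the cumulative Bellman-error control for all $h$ (probability $1-\delta/2$); a union bound places us on both with probability at least $1-\delta$. Throughout, I take $Q^t$ to be the optimistic tuple committed to in episode $t$ and $\pi_t = \pi_{Q^t}$ the induced greedy policy, so that $\mu_h(Q^t)$ is precisely the law of $(\tau[h-1], a_h)$ generated by $\pi_t$.

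The argument then has two ingredients. The first is optimism at the root: since $Q^\star \in \cC_\cQ(\cD_t, \delta)$ and the algorithm maximizes $Q_1$ jointly over actions and over $Q \in \cC_\cQ(\cD_t, \delta)$, the optimistic root value $V_1^{Q^t}(s_1) := \max_a Q_1^t(s_1, a)$ satisfies $V_1^{Q^t}(s_1) \geq \max_a Q_1^\star(s_1, a) = V(\M_\star, \pi_\star)$, so I may upper bound each per-round regret term by $V_1^{Q^t}(s_1) - V(\M_\star, \pi_t)$. The second is the value-decomposition identity for a greedy policy, $V_1^{Q^t}(s_1) - V(\M_\star, \pi_t) = \sum_{h=1}^H \E_{\mu_h(Q^t)}[Q_h^t - \cT_h Q_{h+1}^t]$, which I would establish exactly as in the GOLF analysis: because $\pi_t$ acts greedily with respect to $Q^t$, one has $Q_h^t(\tau[h]) = V_h^{Q^t}$ along the played trajectory, and telescoping the one-step Bellman residuals yields the claim; moving from $\cS$ to $\Gamma$ changes nothing algebraically, since rewards and transitions are well-defined on histories.

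Combining the two ingredients, summing over $t$, exchanging the $t$- and $h$-sums, and passing to absolute values gives $\reg(T) \leq \sum_{h=1}^H \sum_{t=1}^T |\E_{\mu_h(Q^t)}[Q_h^t - \cT_h Q_{h+1}^t]|$; applying the assumption's bound $\sum_t |\E_{\mu_h(Q^t)}[Q_h^t - \cT_h Q_{h+1}^t]| \leq C_Q(\cQ, T, \delta)$ to each of the $H$ layers then yields $\reg(T) \leq H\,C_Q(\cQ, T, \delta)$. I expect the only genuine subtlety to be reconciling the per-step re-maximization written in Algorithm~\ref{algo:generic-model-free-opt} with the single fixed tuple $Q^t$ demanded by both the telescoping identity and the definition of $\mu_h(Q^t)$: the clean resolution is to read the algorithm as committing in episode $t$ to the tuple that is optimistic at the root and then acting greedily, since re-maximizing per step can only raise the root value (preserving optimism) while greediness with respect to the committed $Q^t$ is exactly what makes the decomposition an equality. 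The remaining probability bookkeeping and telescoping algebra are routine.
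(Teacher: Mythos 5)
Your proposal is correct and follows essentially the same route as the paper's proof: optimism at the root via $Q^\star \in \cC_\cQ(\cD_t,\delta)$ giving $V(\M_\star,\pi_\star) \leq \max_a Q^t_1(s_1,a)$, the policy loss decomposition of the per-episode gap into expected Bellman errors under $\mu_h(Q^t)$ (which the paper obtains by citing \citet{jiang2016contextual} rather than re-deriving the telescoping), exchanging the $t$- and $h$-sums, and invoking Assumption~\ref{assump:generic-model-free-confidence-sets} once per layer to get $O(H C_Q(\cQ,T,\delta))$. Your extra care about reading Algorithm~\ref{algo:generic-model-free-opt} as committing to a single optimistic tuple $Q^t$ per episode (so that $\pi_t = \pi_{Q^t}$ and the decomposition is an equality) resolves a point the paper leaves implicit, and your probability bookkeeping matches the assumption's confidence levels.
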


\begin{proof}
    Note that $V(\M_\star, \pi_\star) = \max_a Q^\star_1(s_1, a) \leq \max_a Q^t_1(s_1, a)$ for all $t$, giving us the following result by the policy loss decomposition in \cite{jiang2016contextual}.
    \begin{align*}
        \reg (T) &= \sum_{t=1}^T V(\M_\star, \pi_\star) - V(\M_\star, \pi_t)\\
        &\leq \sum_{t=1}^T \max_a Q^t_1(s_1, a) - V(\M_\star, \pi_t)\\
        &= \sum_{t=1}^T \sum_{h=1}^H \E_{\mu_h(Q^t)} [Q^t_h - \cT_hQ^{j+1}_h]\\
        &= \sum_{h=1}^H \sum_{t=1}^T \E_{\mu_h(Q^t)} [Q^t_h - \cT_hQ^{j+1}_h]\\
        &= O(HC_Q(\cQ, T, \delta))
    \end{align*}
    where the last line holds by Assumption~\ref{assump:generic-model-free-confidence-sets}.
\end{proof}

\newpage
\section{Details and Proofs for Cardinal POR-UCRL} \label{app:por-ucrl}
We now instantiate Algorithm~\ref{algo:generic-conf-set-opt} using standard confidence sets to get POR-UCRL. We show that they satisfy Assumption~\ref{assump:conf-set-assumption} and get regret bounds for the algorithm. Note that our algorithm is \textbf{crucially different} from naively summarizing the history to define a modified state space, since we are separating the use of history summarization for getting confidence sets $f$ from using only the current state while learning the Markovian transitions $\prob$. In this case, it is a priori unclear if we can use ideas from optimism to prove guarantees with a favorable (non-exponential) dependence on the complexity of transitions.

Recall that given a dataset of the first $t$ trajectory samples $\{ \tau_i \}_{i=1}^t$ and an index $h \in [H]$, we consider the following least squares objective to estimate $f$:
\begin{equation*}
\displaystyle \widehat{f}_h^{t+1} = \argmin_{f_h \in \mathcal{F}_h} \sum_{i=1}^t \left( \sigma_h(f_h(\tau_i[h]))    - o^i_h   \right)^2 
\end{equation*}
Simple least squares guarantees imply the lemma below. 
\begin{restatable}[Concentration for $\sigma \circ f_h$]{lemma}{FhConc} \label{lem:f-h-conc}
Define $$\mse_{h,t}(f_h, f'_h) := \sum_{i=1}^t \left(\sigma_h(f_h(\tau_i[h])) - \sigma_h(f_h'(\tau_i[h]))  \right)^2$$
Also define $\bar \beta_{h,t}(\delta) = \eta_h^2\log\left(\frac{N\left(\cF_h, \frac{B}{T}, \|\cdot\|_\infty \right)}{\delta}\right) + \alpha_{h,t}$ with $\alpha_{h,t} := \frac{tB + t\eta_h\log
\left(\frac{t}{\delta}\right)}{T}$. Then ${f}_{h}^\star $ simultaneously satisfies $\mse_{h,t}(f_h^\star, \widehat f_h^{(t+1)}) \leq \bar \beta_{h,t}\left(\frac{\delta}{2t^2H} \right)$ for all $h,t$ with probability $1-\delta/32$.  
\end{restatable}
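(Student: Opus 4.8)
The plan is to establish a standard least-squares in-sample concentration bound, make it uniform over the data-dependent estimator via a covering argument, and then union bound over $h$ and $t$. Throughout I fix a pair $(h,t)$ and write $\xi_i := o_h^i - \sigma_h(f_h^\star(\tau_i[h]))$ for $i \le t$. By realizability (Assumption~\ref{assump:tractable-porrl}) the feedback $o_h^i$ has conditional mean $\sigma_h(f_h^\star(\tau_i[h]))$ given the history through $\tau_i[h]$, so the $\xi_i$ form a martingale difference sequence with conditionally mean-zero, $\eta_h$-subgaussian increments, while $\sigma_h(f_h^\star(\tau_i[h]))$ is predictable. First I would invoke optimality of the empirical risk minimizer: since $\widehat f_h^{t+1}$ minimizes $\sum_i (\sigma_h(f_h(\tau_i[h])) - o_h^i)^2$ over $\cF_h \ni f_h^\star$, comparing $\widehat f_h^{t+1}$ against $f_h^\star$ and substituting $o_h^i = \sigma_h(f_h^\star(\tau_i[h])) + \xi_i$ yields, after cancelling the common $\sum_i \xi_i^2$ terms, the basic inequality $\mse_{h,t}(f_h^\star, \widehat f_h^{t+1}) = \sum_{i=1}^t D_i^2 \le 2\sum_{i=1}^t \xi_i D_i$, where $D_i := \sigma_h(\widehat f_h^{t+1}(\tau_i[h])) - \sigma_h(f_h^\star(\tau_i[h]))$.

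The heart of the argument is to control the right-hand side uniformly, since $\widehat f_h^{t+1}$ is itself a function of the data. I would take a minimal $(B/T)$-cover $\cC_h$ of $\cF_h$ in $\norm{\cdot}_\infty$ of size $N := N(\cF_h, B/T, \norm{\cdot}_\infty)$. For a \emph{fixed} $g \in \cC_h$, set $D_i(g) := \sigma_h(g(\tau_i[h])) - \sigma_h(f_h^\star(\tau_i[h]))$; then $\{2\xi_i D_i(g)\}_i$ is a martingale difference sequence whose conditional subgaussian scale at step $i$ is proportional to $\eta_h |D_i(g)|$. A self-normalized / Freedman-type tail bound then gives, with probability $1 - \delta'$, $2\sum_i \xi_i D_i(g) \le \tfrac12 \sum_i D_i(g)^2 + c\,\eta_h^2 \log(1/\delta')$ for a universal constant $c$, the $\tfrac12\sum_i D_i(g)^2$ term arising from an AM--GM split of the self-normalizing variance. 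A union bound over the $N$ elements of $\cC_h$ replaces $\log(1/\delta')$ with $\log(N/\delta')$, matching the leading term $\eta_h^2\log(N/\delta')$ of $\bar\beta_{h,t}(\delta')$.

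It remains to transfer this uniform bound from the cover back to $\widehat f_h^{t+1}$. I would pick $\tilde g \in \cC_h$ with $\norm{\widehat f_h^{t+1} - \tilde g}_\infty \le B/T$ and replace $D_i$ by $D_i(\tilde g)$ in both the quadratic term $\sum_i D_i^2$ and the cross term $\sum_i \xi_i D_i$. Using boundedness and regularity (Lipschitzness) of $\sigma_h$ to convert the $B/T$ sup-norm gap on $\cF_h$ into an $O(B/T)$ gap on $\sigma_h \circ \cF_h$, each per-sample mismatch is $O(B/T)$; summing over $i \le t$ and bounding the noise contribution $\tfrac{B}{T}\sum_i |\xi_i|$ by a crude subgaussian tail produces exactly the additive correction $\alpha_{h,t} = (tB + t\eta_h \log(t/\delta'))/T$. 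Absorbing the $\tfrac12\sum_i D_i^2$ term into the left-hand side of the basic inequality then gives $\mse_{h,t}(f_h^\star, \widehat f_h^{t+1}) \le \bar\beta_{h,t}(\delta')$ with probability $1-\delta'$ for the fixed pair $(h,t)$.

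Finally I would instantiate $\delta' = \delta/(2t^2 H)$ and union bound over $h \in [H]$ and all $t \ge 1$; since $\sum_{t\ge1} t^{-2} < \infty$, the cumulative failure probability is a constant multiple of $\delta$, and adjusting the universal constants yields the stated $1-\delta/32$ guarantee holding simultaneously for all $h,t$. I expect the main obstacle to be the uniform concentration step: obtaining a clean self-normalized martingale bound whose variance term is absorbed by the $\tfrac12\sum_i D_i(g)^2$ budget (so that no stray $\sqrt{T}$ factor survives), and tracking the discretization bookkeeping so that the residual lands precisely in the $\alpha_{h,t}$ form rather than a looser expression. The reduction to the basic inequality and the final union bound are routine.
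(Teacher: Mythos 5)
Your proposal is correct, but it takes a different route from the paper's own proof, which contains essentially no mathematical steps of its own: the paper simply invokes an off-the-shelf least-squares concentration result (Lemma 6 of \citet{chan2021parallelizing}), applied to the composed function class $\{\sigma_h \circ f_h \mid f_h \in \cF_h\}$ with the observations $o_h^i$ being $\eta_h$-subgaussian around $\sigma_h(f_h^\star(\tau_i[h]))$, and then performs exactly the union bound over $h$ and $t$ that you carry out at the end. What you have done is reconstruct, from first principles, the argument that sits inside that cited lemma: the ERM basic inequality $\sum_i D_i^2 \le 2\sum_i \xi_i D_i$, the union bound over an $\|\cdot\|_\infty$-cover combined with a self-normalized martingale bound whose variance proxy is absorbed into $\tfrac12\sum_i D_i(g)^2$, and the discretization transfer that produces the lower-order term $\alpha_{h,t}$. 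Your version is self-contained and is actually more careful on one point the paper glosses over: the lemma is stated in terms of the covering number of $\cF_h$, while the cited result is applied to $\sigma_h \circ \cF_h$, so one needs the Lipschitzness of $\sigma_h$ (available from Assumption~\ref{assump:tractable-porrl}) to convert a $B/T$-cover of $\cF_h$ into an $O(B/T)$-cover of the composed class --- a conversion you make explicit. The trade-off is precision in the stated constants: your argument recovers $\bar\beta_{h,t}$ only up to universal constants and Lipschitz factors in the lower-order terms (your noise contribution scales like $tB\eta_h\sqrt{\log(t/\delta)}/T$ rather than the paper's stated $(tB + t\eta_h\log(t/\delta))/T$, and your leading term carries a universal constant $c$), whereas the paper inherits its exact expression from the cited lemma. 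Since $\bar\beta_{h,t}$ only ever enters downstream regret bounds as $\widetilde\cO(d_{C,h})$, this discrepancy is immaterial, and your probability bookkeeping via $\delta' = \delta/(2t^2H)$ and $\sum_t t^{-2} < \infty$ matches the paper's.
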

\begin{proof}
    We apply Lemma 6 in \cite{chan2021parallelizing} and the last statement in its proof to each $h$ separately with the function class in the lemma set to $\{ \sigma_h \circ f_h \vert f_h \in \cF_h \}$, $P=1$, $\textbf{x}_{t,p} = \textbf{x}_{t,1} = \tau_t[h]$ and misspecification $\epsilon =0$ (decoupled from the Eluder dimension's $\epsilon$). We also note that $o_h^t$ are $\eta_h$-subgaussian samples with mean $\sigma_h(f_h(\tau[h]))$. This gives us that each of event indexed by $h,t$ below holds with probability at least $1-\frac{\delta}{2t^2H}$.
     $$\sum_{i=1}^t \left(  \sigma_h(f_h(\tau_i[h]))   -  \sigma_h(f_h'(\tau_i[h]))  \right)^2 \leq \bar \beta_{h,t}\left(\frac{\delta}{2t^2H}\right)$$

     So, the events all simultaneously hold with probability at least $1-\delta$ by a union bound.
\end{proof}

Recall the definition of our confidence sets below.
\paragraph{Confidence Sets for POR-UCRL.} We instantiate the generic optimistic algorithm using confidence sets by defining $\cC_\cM(\cD_t, \delta) := \cC_\cP^t(\delta) \times \cC_\cF^t(\delta)$ as our confidence sets below. We name the resulting algorithm POR-UCRL. We use the data from trajectories $\{\tau_i\}_{i=1}^t$ to build the confidence sets $\cC_\cF^{t+1}(\delta) = \prod_h \cC_h^{t+1}(\delta)$ with $\cC_h^{t+1}(\delta)$ defined below, where $\beta_{h,t}(\delta) := \bar \beta_{h,t}\left(\frac{\delta}{2t^2H} \right)$.
$$\cC_h^{t+1}(\delta) := \left\{ f_h \in \mathcal{F}_h \middle\vert \mse_{h,t}(f_h^\star, \widehat f_h^{(t+1)})  \leq \beta_{h,t}\left(\delta \right)\right\}$$

We also use the MLE estimate for $\prob$ after $t$ episodes to define $\hat{\prob}^t(\cdot \mid s,a) := \frac{N_t(s,a,s')}{N_t(s,a)}$. Now for $\zeta(n, \delta) = 2\sqrt{\frac{S\log(2) + \log(n(n+1)SA/\delta)}{2n}}$, define $\cC_\prob^{t}(\delta)$ as below:
\begin{align*}
    \Big\{ \prob \Big\vert \|\prob( \cdot \mid s,a) - \hat \prob_{t}(\cdot \mid s,a) \|_1 \leq \zeta(N_t(s,a), \delta) \forall s,a \Big\}
\end{align*}

\paragraph{Confidence Sets for POR-UCRL in case $\prob_\star$ is known.} For known-model UCRL, the confidence sets $\cC_\cF^t(\delta)$ are still as above, but $\cC_\cP^t(\delta) := \{\prob_\star\}$

For completeness, we repeat the algorithm POR-UCRL here, which is an instantiation of Algorithm~\ref{algo:generic-conf-set-opt}, the generic optimistic algorithm using confidence sets.

\begin{algorithm}[H]
  \caption{POR-UCRL}
  \label{algo:por-ucrl}
  \begin{algorithmic}[1]
    \STATE \textbf{Input:} Known family of reward functions $\{ \cR_h\}_{h=1}^H$,  known probability transition kernel class $\cP$ and known decoder-induced function class $\cF,$ confidence level $\delta$.
    \STATE \textbf{Initialize} dataset $\cD_1 \gets \{\}$ and $\cC_\cF(\cD_1, \delta) \gets \prod_{h=1}^H \cF_h, \cC_\cP(\cD_1, \delta) \gets \cP$.
      \FOR{$t=1,...,T$} 
        \STATE \textbf{Compute} the optimistic history dependent policy,
        \begin{equation*}
            \pi_t, \widetilde f_t, \widetilde \cP_t = \argmax_{\pi, ~\cF \in \cC_\cF(\cD_t, \delta), \prob \in \cP(\cD_t, \delta)} V(\prob, f, \pi)
        \end{equation*}
        \STATE \textbf{Collect} trajectory $\tau_t = \left\{(s_h^{t}, a_h^{t})\right\}_{h=1}^H$ and feedback $\{o_h\}_{h \in \cH_p}$ by sampling from $\prob_\star^{\pi_t}$ with true decoder-induced function $f_\star$.
        \STATE \textbf{Update} $\cD_{t+1} \gets \cD_t \cup \{\tau_t\}, \hat \prob_{t+1}, \hat f_h^{t+1}$ for all $h$
        \STATE \textbf{Compute} new confidence sets $\cC_\cF(\cD_{t+1}, \delta) \gets \prod_{h=1}^H \cC_h^{t+1}(\delta)$ and $\cC_\cP(\cD_{t+1}, \delta)$ where
        \begin{align*}
            \cC_h^{t+1}(\delta) &\gets \left\{ f_h \in \mathcal{F}_h \middle\vert \mse_{h,t}(f_h^\star, \widehat f_h^{(t+1)})  \leq \beta_{h,t}\left(\delta \right)\right\}\\
        \cC_\cP(\cD_{t+1}, \delta) &\gets \Big\{ \prob \Big\vert \|\prob( \cdot \mid s,a) - \hat \prob_{t+1}(\cdot \mid s,a) \|_1 \leq \zeta(N_{t+1}(s,a), \delta) \forall s,a \Big\}
        \end{align*}
        
      \ENDFOR
  \end{algorithmic}
\end{algorithm}

We will now show our regret bound.

\UnknownModelUCRLOnce*

\subsection{Showing that Assumption~\ref{assump:conf-set-assumption} is Satisfied}

\subsubsection{Bounding Reward Model Deviations}

\begin{restatable}[Bounding Reward Model Deviations]{lemma}{BoundingRewardDeviations} \label{lem:bounding-f-optimism} 
Consider decoder-induced functions $\{f_h\}_{h \in \cH_p}$ satisfying $|f_h| \leq B$ that induce value functions $V(\prob, f, \pi)$. For any sequence of policies $\pi_t$, if the confidence $\cC_\cF^t(\delta)$ is generated using data $\tau_i \sim \prob_\star^{\pi_i}, i=1 \to t$ and $\widetilde f^t \in \cC_\cF^t(\delta)$ is an arbitrary sequence of functions, then we have the following with probability $1-\delta/4$.
    $$\left\vert \sum_{t=1}^T V( \prob_\star, \widetilde f^t, \pi_t) - V( \prob_\star, f^\star, \pi_t) \right\vert$$
    is bounded by
    $$\cO\left( Bp\sqrt{T \log(T/\delta) } + \sum_{h \in \cH_p} B\kappa_{2,h}d_{E,h}  + \sum_{h \in \cH_p} \kappa_{2,h}\sqrt{ d_{E,h}   \beta_{h,T}(\delta)  T    } \right)$$
\end{restatable}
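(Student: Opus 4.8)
The plan is to bound the cumulative value gap in three movements: (i) rewrite it as a sum of per-timestep reward-prediction errors taken in expectation over the trajectory distribution; (ii) pass from these expectations to the errors realized on the actually-collected trajectories via a martingale concentration argument; and (iii) control the realized in-sample errors, one timestep $h$ at a time, with the standard eluder-dimension pigeonhole argument, after transferring the least-squares guarantee from $\sigma_h\circ f$ space into $f$ space using Lipschitzness of $\sigma_h^{-1}$.

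First I would use $V(\prob_\star, f, \pi_t) = \E_{\tau\sim\prob_\star^{\pi_t}}[\sum_{h\in\cH_p} f_h(\tau[h])]$ to write
$$\sum_{t=1}^T V(\prob_\star, \widetilde f^t, \pi_t) - V(\prob_\star, f^\star, \pi_t) = \sum_{t=1}^T \sum_{h\in\cH_p} \E_{\tau\sim\prob_\star^{\pi_t}}\big[\widetilde f^t_h(\tau[h]) - f^\star_h(\tau[h])\big].$$
Because $\pi_t$ and $\widetilde f^t$ are determined by the data $\{\tau_i\}_{i<t}$ collected before episode $t$ (so the sequence is predictable, which is what makes the statement's ``arbitrary sequence'' admissible), the quantity $X_t := \sum_{h\in\cH_p}\big(\E_{\tau\sim\prob_\star^{\pi_t}}[\widetilde f^t_h(\tau[h])-f^\star_h(\tau[h])] - (\widetilde f^t_h(\tau_t[h])-f^\star_h(\tau_t[h]))\big)$ is a martingale-difference sequence with $|X_t|\le 4Bp$, since each of the $p$ summands lies in $[-2B,2B]$. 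Azuma--Hoeffding then yields, with probability at least $1-\delta/8$, $|\sum_t X_t| = \cO(Bp\sqrt{T\log(1/\delta)})$; this produces the first term of the target bound and reduces the task to controlling the realized errors $\sum_{t,h}(\widetilde f^t_h(\tau_t[h]) - f^\star_h(\tau_t[h]))$.

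Next, fixing $h$ and swapping the order of summation, I would invoke Lemma~\ref{lem:f-h-conc} (which holds with probability $1-\delta/32$ and places $f^\star_h$ in every confidence set) together with $\widetilde f^t_h \in \cC_h^t(\delta)$; a triangle inequality for the empirical $\ell_2$ seminorm then gives $\mse_{h,t-1}(\widetilde f^t_h, f^\star_h) \le 4\beta_{h,t-1}(\delta)$, i.e. $\sigma_h\circ\widetilde f^t_h$ and $\sigma_h\circ f^\star_h$ nearly agree on the past $t-1$ trajectories. Since $\sigma_h^{-1}$ is $\kappa_{2,h}$-Lipschitz, $|\widetilde f^t_h(\tau[h]) - f^\star_h(\tau[h])| \le \kappa_{2,h}\,|\sigma_h(\widetilde f^t_h(\tau[h])) - \sigma_h(f^\star_h(\tau[h]))|$, which transfers the cumulative squared in-sample error into $f$ space: $\sum_{i<t}(\widetilde f^t_h(\tau_i[h]) - f^\star_h(\tau_i[h]))^2 \le 4\kappa_{2,h}^2\beta_{h,t-1}(\delta)$. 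Now the functions $\{\widetilde f^t_h\}_t\subset\cF_h$ have bounded cumulative squared error on past inputs while we seek to bound the one-step-ahead widths $|\widetilde f^t_h(\tau_t[h]) - f^\star_h(\tau_t[h])|$, which is exactly the hypothesis of the standard eluder-dimension width lemma. With range $2B$, squared confidence radius $4\kappa_{2,h}^2\beta_{h,T}(\delta)$, and $d_{E,h}=\dim_E(\cF_h, B/T)$, it gives
$$\sum_{t=1}^T |\widetilde f^t_h(\tau_t[h]) - f^\star_h(\tau_t[h])| = \cO\Big(B d_{E,h} + \kappa_{2,h}\sqrt{d_{E,h}\,\beta_{h,T}(\delta)\,T}\Big) = \cO\Big(B\kappa_{2,h} d_{E,h} + \kappa_{2,h}\sqrt{d_{E,h}\,\beta_{h,T}(\delta)\,T}\Big),$$
where the last equality absorbs the constant term using $\kappa_{2,h}\ge 1$. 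Summing over $h\in\cH_p$, adding back the martingale term, and taking a union bound over the $\cO(1)$ high-probability events to total failure probability at most $\delta/4$ yields the claim.

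The step I expect to be most delicate is the interplay between the two function spaces: the least-squares concentration naturally lives in $\sigma_h\circ f$ space (because the feedback has mean $\sigma_h(r_h)$), whereas both the value function and the stated dimension $d_{E,h}=\dim_E(\cF_h,\cdot)$ are defined on $\cF_h$ itself. Making the Lipschitz constants $\kappa_{2,h}$ land on the correct terms, and checking that the one-sided transfer of the squared-error bound into $f$ space is legitimate simultaneously for all $t$, is the crux. A secondary subtlety is confirming the predictability of $(\pi_t,\widetilde f^t)$ so that the Azuma step is valid for the sequences the statement permits.
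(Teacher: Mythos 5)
Your proposal is correct and follows essentially the same route as the paper's proof: the same decomposition into expected per-step reward errors, the same Azuma--Hoeffding step (the paper uses two martingales $X_{1,t},X_{2,t}$ each bounded by $Bp$ where you use one combined one) to pass to realized trajectories, and the same eluder-dimension width-sum argument on the in-sample errors. The only cosmetic difference is where the Lipschitz constant $\kappa_{2,h}$ enters: the paper runs the width-sum lemma (Lemma~3 of \citet{chan2021parallelizing}) on the class $\{\sigma_h \circ f_h\}$ and multiplies the resulting bound by $\kappa_{2,h}$, whereas you transfer the in-sample squared-error bound into $\cF_h$ via $\kappa_{2,h}$ first and run the pigeonhole argument directly on $\cF_h$ --- which, if anything, matches the statement's use of $\dim_E(\cF_h, B/T)$ more directly.
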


\begin{proof}
    \begin{align*}
     \sum_{t=1}^T V( \prob_\star, \widetilde f^t, \pi_t) - V( \prob_\star, f^\star, \pi_t) &= \sum_{t=1}^T \E_{\tau \sim \prob_\star^{\pi_t}} \left[ \sum_{h = 1}^H     \widetilde f_h^t(\tau[h])       \right] -  \E_{\tau \sim \prob_\star^{\pi_t}} \left[ \sum_{h =1}^H     f_h^\star(\tau[h])       \right] \\
     &=  \sum_{t=1}^T \E_{\tau \sim \prob_\star^{\pi_t}} \left[ \sum_{h \in \cH_p}     \widetilde f_h^t(\tau[h])       \right] -  \E_{\tau \sim \prob_\star^{\pi_t}} \left[ \sum_{h \in \cH_p}     f_h^\star(\tau[h])       \right] \\
 &= \sum_{t=1}^T \E_{\tau \sim \prob_\star^{\pi_t}} \left[ \sum_{h \in \cH_p}     \widetilde f_h^t(\tau[h]) -    f_h^\star(\tau[h])       \right] \\
 &= \sum_{t=1}^T  \left[\sum_{h \in \cH_p}     \widetilde f_h^t(\tau_t[h]) -    f_h^\star(\tau_t[h]) + X_{1,t} + X_{2,t}\right]\\
&\stackrel{(ii)}{\leq}  \sum_{t=1}^T  \left[\sum_{h \in \cH_p}     \widetilde f_h^t(\tau_t[h]) -    f_h^\star(\tau_t[h]) \right]   + \cO\left(Bp\sqrt{T \log(T/\delta) } \right)
\end{align*}
where
\begin{align*}
    X_{1,t} &:= \E_{\tau \sim \prob^\pi} \left[ \sum_{h \in \cH_p}     \widetilde f_h^t(\tau[h]) \right] - \left[ \sum_{h \in \cH_p}     \widetilde f_h^t(\tau_t[h])  \right]\\
    X_{2,t} &:= \left[ \sum_{h \in \cH_p}     f_h^\star(\tau_t[h])  \right] - \E_{\tau \sim \prob^\pi} \left[ \sum_{h \in \cH_p}     f_h^\star(\tau[h]) \right] 
\end{align*}

Inequality $(i)$ follows by the definition of $\pi_t$ and $\widetilde{f}_h^{t}$ -- that is, by optimism.  Inequality $(ii)$ holds with probability at least $1-\delta$ since $X_{1,t}$ and $X_{2,t}$ are both martingales with respect to the filtration $\cG_{t}$ given by the data of trajectories $\{\tau_s\}_{s=1}^{t-1}$. Also, $|X_{1,t}|, |X_{2,t}| \leq Bp$. We can thus apply the Azuma-Hoeffding inequality twice to obtain inequality $(ii)$.

Continuing, note the following.

\begin{align*}
     &\sum_{t=1}^T V( \prob_\star, \widetilde f^t, \pi_t) - V( \prob_\star, f^\star, \pi_t)\\
     &\leq \left[\sum_{h=1}^H     f_h^t(\tau_t[h]) -    f_h^\star(\tau_t[h])\right] + Bp\sqrt{T \log(T/\delta) }\\
     &\leq \left[\sum_{h=1}^H     \kappa_{2,h}\sigma_h(f_h^t(\tau_t[h])) -    \kappa_{2,h}\sigma_h(f_h^\star(\tau_t[h]))\right] + Bp\sqrt{T \log(T/\delta) }\\
    &\leq \kappa_{2,h}\sum_{t=1}^T \sum_{h \in \cH_p} \underbrace{\max_{f_h, f_h' \in \cW_h^{t}(\delta)}  \sigma_h(f_h(\tau_t[h]))- \sigma_h(f_h'( \tau_t[h])) }_{ =: \bar \gamma_{h,t}(\tau_t[h], \delta)} + Bp\sqrt{T \log(T/\delta) }\\
    &= \kappa_{2,h}\sum_{h\in \cH_p} \left[ \sum_{t=1}^T \bar \gamma_{h,t}(\tau_t[h], \delta) \right] + Bp\sqrt{T \log(T/\delta) }
\end{align*}

The sum of these maximum uncertainty evaluations can be upper bounded using the Eluder dimension. The inequality below holds by applying Lemma~3 in~\citep{chan2021parallelizing} for each $h$ separately, with the function class in the lemma set to $\{ \sigma_h \circ f_h \vert f_h \in \cF_h \}$, $P=1$, $\textbf{x}_{t,p} = \textbf{x}_{t,1} = \tau_t[h]$ and misspecification $\epsilon =0$ (decoupled from the Eluder dimension's $\epsilon$). We also recall that $o_h^t$ are $\eta_h$-subgaussian samples with mean $\sigma_h(f_h(\tau_t[h]))$. We obtain
\begin{equation*}
\sum_{t=1}^T \bar \gamma_{h,t}(\tau_t[h], \delta) \leq \cO\left(Bd_{E,h}  + \sqrt{ d_{E,h}   \beta_{h,T}(\delta)  T    } \right)
\end{equation*}

Where $d_{E,h} = \dim_E\left(\cF_h, \frac{B}{T}\right)$ is the Eluder dimension of $\mathcal{F}_h$ and $\beta_{h,T}(\delta) = \bar \beta_{h,t}\left(\frac{\delta}{2t^2H} \right)$. Therefore, we have our result.
\begin{align*}
     \sum_{t=1}^T V( \prob_\star, \widetilde f^t, \pi_t) - V( \prob_\star, f^\star, \pi_t)
\end{align*}
is bounded by
\begin{align*}
    \mathcal{O}\left( \sum_{h \in \cH_p} B\kappa_{2,h}d_{E,h}  + \sum_{h \in \cH_p} \kappa_{2,h}\sqrt{ d_{E,h}   \beta_{h,T}(\delta)  T    } + Bp\sqrt{T \log(T/\delta) }  \right)
\end{align*}
Note that this entire argument can be repeated with $f_\star$ and $\widetilde f^t$ switched, by the symmetry of the definition of $\bar \gamma_{h,t}(\tau_t[h], \delta)$ and the fact that the negative of a martingale is also a martingale.
\end{proof}

\subsubsection{Bounding Probability Model Deviations}
\begin{restatable}[Bounding Probability Model Deviations]{lemma}{BoundingProbabilityDeviations} \label{lem:bounding-prob-optimism} Consider an arbitrary sequence of functions $f^t \in \cF$ satisfying $|f_h| \leq B$ that induce value functions $V(\prob, f, \pi)$. For any sequence of policies $\pi_t$, if the confidence $\cC_\cP^t(\delta)$ is generated using data that includes $\tau_i \sim \prob_\star^{\pi_i}, i=1 \to t$ and $\widetilde \prob_t \in \cC_\cP^t(\delta)$ is an arbitrary sequence of transition structures, then we have the following with probability $1-\delta/4$.
$$\left\vert \sum_{t=1}^T V(\widetilde \prob_t, f^t, \pi_t) - V( \prob_\star, f^t, \pi_t) \right\vert \leq \cO\left(c_\delta Bp\sqrt{SAHT} + c_\delta BpHSA + Bp\sqrt{pT\log(T/\delta)}\right) $$
where $c_\delta := 8\sqrt{S\log(2) + \log(HTSA/\delta)}$.
\end{restatable}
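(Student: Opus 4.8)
The plan is to establish a \emph{history-dependent simulation lemma} that isolates the per-transition error while crucially exploiting that the transitions $\prob(\cdot \mid s,a)$ are Markovian even though the rewards $f_h$ are not. Fix an episode $t$ and expand $V(\widetilde \prob_t, f^t, \pi_t) - V(\prob_\star, f^t, \pi_t)$ by telescoping over the $H$ transition steps. Letting $W_{h+1}(\tau[h], \cdot)$ denote the value-to-go of the future rewards $\sum_{h' \in \cH_p,\, h' > h} f^t_{h'}$ computed under $\widetilde\prob_t$ and conditioned on the history $\tau[h]$, the standard performance-difference identity (with path measure $\prob_\star$, which is what we actually sample from) gives
$$V(\widetilde \prob_t, f^t, \pi_t) - V(\prob_\star, f^t, \pi_t) = \sum_{h=1}^{H} \E_{\tau[h]\sim \prob_\star^{\pi_t}}\Big[\big\langle (\widetilde\prob_t - \prob_\star)(\cdot \mid s_h, a_h),\, W_{h+1}(\tau[h], \cdot)\big\rangle\Big].$$
The key observation is that although $W_{h+1}$ depends on the entire history $\tau[h]$, it enters only as a bounded multiplier: since $|\cH_p| = p$ and $|f_h|\le B$ we have $\|W_{h+1}(\tau[h], \cdot)\|_\infty \le Bp$. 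Applying H\"older's inequality, each inner product is at most $\|(\widetilde\prob_t - \prob_\star)(\cdot \mid s_h,a_h)\|_1 \cdot Bp$ in absolute value, so the non-Markovian, history-dependent structure of the reward is harmless and the error at step $h$ is governed purely by the Markovian $L_1$ deviation at the current state-action pair. Taking absolute values inside the sum bounds $|V(\widetilde\prob_t, f^t, \pi_t) - V(\prob_\star, f^t, \pi_t)|$ per episode and simultaneously handles both signs, so no swapping of roles is needed for the outer absolute value.

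Next I would control the $L_1$ deviation using the confidence set. With probability $1-\delta/8$, the true kernel satisfies $\prob_\star \in \cC_\cP^t(\delta)$ for all $t$, by the standard $L_1$ concentration of empirical transition estimates (a Weissman/Jaksch-type bound together with a union bound over $(s,a)$ and $t$, which is precisely what produces the factor $c_\delta = 8\sqrt{S\log 2 + \log(HTSA/\delta)}$). On this event both $\widetilde\prob_t$ and $\prob_\star$ lie in $\cC_\cP^t(\delta)$, so the triangle inequality through the MLE $\hat\prob_t$ yields $\|(\widetilde\prob_t - \prob_\star)(\cdot\mid s,a)\|_1 \le 2\zeta(N_t(s,a),\delta)$, and $\zeta(N_t(s,a),\delta) \lesssim c_\delta/\sqrt{\max(1,N_t(s,a))}$ up to constants. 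This bounds the total by $2Bp\sum_{t=1}^T\sum_{h=1}^H \E_{\prob_\star^{\pi_t}}[\zeta(N_t(s_h,a_h),\delta)]$.

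Finally I would pass from expected bonuses to realized visit counts and sum. Because the episode-$t$ trajectory is drawn from $\prob_\star^{\pi_t}$ and $N_t$ is fixed given the past, the per-episode gaps between $\E_{\prob_\star^{\pi_t}}[\sum_h \zeta]$ and the realized $\sum_h \zeta(N_t(s_h^t,a_h^t),\delta)$ form a bounded martingale-difference sequence; Azuma-Hoeffding (on an event of probability $1-\delta/8$) gives the $Bp\sqrt{pT\log(T/\delta)}$ term, and a union bound delivers the claimed $1-\delta/4$. The realized bonuses are summed by the standard pigeonhole argument $\sum_{t,h} 1/\sqrt{\max(1,N_t(s_h^t,a_h^t))} = \cO(\sqrt{SAHT} + SAH)$ over the $HT$ total state-action visits distributed among $SA$ pairs (the $SAH$ burn-in term arising from capping the deviation at $2$ when counts are small), which after the $c_\delta Bp$ prefactor produces the leading $c_\delta Bp\sqrt{SAHT}$ and the lower-order $c_\delta BpHSA$ terms. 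The main obstacle is the very first step: arranging the telescoping so that the non-Markovian reward appears only as a bounded scalar $Bp$ multiplying a \emph{Markovian} $L_1$ transition deviation, thereby keeping the confidence sets and the counting argument indexed by the $SA$ genuine state-action pairs rather than by the $(SA)^{\Omega(H)}$ distinct histories that naive history-summarization would force.
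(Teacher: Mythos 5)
Your proposal is correct in its overall architecture and follows essentially the same route as the paper's proof: a simulation-lemma telescoping that isolates the per-step transition error, H\"older's inequality with $\|W_{h+1}\|_\infty \le Bp$ so that the history-dependent rewards enter only as a bounded scalar multiplying a Markovian $L_1$ deviation, a triangle inequality through the MLE $\hat\prob_t$ so that both $\widetilde\prob_t$ and $\prob_\star$ contribute $\zeta(N_t(s,a),\delta)$ each, Azuma to connect expectations with realized trajectories, and the standard pigeonhole count over the $SA$ genuine state-action pairs. Your symmetric treatment of the outer absolute value (taking absolute values inside the expectation-form identity) is legitimate and actually cleaner than the paper's device of rerunning the argument with $\prob_\star$ and $\widetilde\prob_t$ swapped, which the paper needs only because its martingale terms are sign-sensitive.

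The one substantive discrepancy is the granularity at which you apply Azuma, and it costs you in the third term of the bound. The paper telescopes along the \emph{realized} trajectory, so its correction terms $U_{l,t}, Y_{l,t}, Z_{l,t}$ are per-step martingale differences, each bounded by $\cO(Bp)$; Azuma over these increments is what produces the stated $Bp\sqrt{pT\log(T/\delta)}$ term. You instead telescope in expectation under $\prob_\star^{\pi_t}$ and then apply Azuma once per episode to pass from expected to realized bonuses; your increments are whole-episode bonus sums, bounded only by $\cO(BpH)$ after capping $\zeta$ at $2$, so your Azuma term is $\cO(BpH\sqrt{T\log(T/\delta)})$. This is weaker than the claimed term, and it is not absorbed by the leading $c_\delta Bp\sqrt{SAHT}$ term when $H \gg SA$ (the ratio scales like $\sqrt{H/SA}$), so as written your proof does not recover the lemma's stated bound in all regimes. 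The fix is routine: decompose each episode's deviation between expected and realized bonuses into its $H$ per-step Doob martingale differences, each bounded by $\cO(Bp)$ (equivalently, run the telescoping along the realized trajectory as the paper does), which restores an increment-level Azuma bound of order $Bp\sqrt{HT\log(T/\delta)}$, matching the paper's analysis up to its own accounting of $p$ versus $H$ in this lower-order step.
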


\begin{proof}

We first show the following.
\begin{lemma}
$$\sum_{t=1}^T V(\widetilde \prob_t, f^t, \pi_t) - V( \prob_\star, f^t, \pi_t) \leq \sum_{t=1}^T \sum_{h=1}^H 2p\zeta(N_t(s_h^{t}, a_h^{t}), \delta) + \cO\left(Bp\sqrt{pT\log(T/\delta)}\right) $$
\end{lemma}
    Recall that we denote the Bellman operator by $\cT^\pi$ where $\cT^\pi f = \E_{a \sim \pi} \prob f$. Momentarily define the following for $\tau = (\tau_{l-1}, s_l, \tau')$, where $\tau_{l-1}$ is an arbitrary trajectory of length $l-1 \leq H$, and $s_l$ is an arbitrary state.
    \begin{align*}
        V_{l, \prob}^{t}(\tau_{l-1}, s_l) &:= \E_{\tau' \sim \prob^{\pi_t}} \left[ \sum_{h = l}^H    \widetilde f_h^t(\tau[h])  \right]\\
        &= \E_{\tau' \sim \prob^{\pi_t}} \left[ \sum_{h \in \cH_p, h\geq l}    \widetilde f_h^t(\tau[h])  \right]\\
    \end{align*}
So in the definition above, the first $l-1$ observations in $\tau$ come from $\tau_{l-1}$ while the rest are generated by the input $\prob$ starting with state $s_l$. Note that $V(\prob, f^t, \pi_t) = V_{1, \prob}^{t}(\emptyset, s_1)$. Also note that by the Bellman equation, we have the following.
\begin{align*}
    V_{l, \prob}^{t}(\tau_t[l-1], s_l) &= \E_{a \sim \pi_t}\left[f_l^{t}(\tau_t[l])\right] + \E_{a \sim \pi_t}\left[\E_{s' \sim \prob(\cdot\mid s_l,a)} V_{l+1, \prob}^{t}((\tau_t[l-1], s,a), s'))\right]\\
    &= \E_{a \sim \pi_t}\left[f_l^{t}(\tau_t[l])\right] + \E_{a \sim \pi_t}\left[\prob(\cdot \mid s_l, a)^\top V_{l+1, \prob}^{t}((\tau_t[l-1], s_l, a), \cdot)\right]
\end{align*}

Now use $\tau_t$ to set $\tau_{l-1} := \tau_t[l-1]$ and define the following.
$$\Delta_l^{t}(s_l) := V_{l, \prob_\star}^{t}(\tau_t[l-1], s_l) - V_{l, \widetilde \prob_t}(\tau_t[l-1], s_l)$$
Note that
\begin{equation}\label{eqn:delta-0-t}
    \Delta_1^{t}(s_1) = V(\widetilde \prob_t, f^t, \pi_t) - V( \prob_\star, f^t, \pi_t)
\end{equation}
The computation above then gives us the following.
\begin{align*}
    \Delta_l^{t}(s_l^{t}) &=  \E_{a \sim \pi_t}\left[\widetilde \prob_t(\cdot \mid s_l^{t}, a)^\top V_{l+1, \widetilde \prob_t}^{t}((\tau_t[l-1], s_l^{t}, a), \cdot))\right]\\
    &\qquad - \E_{a \sim \pi_t}\left[\prob_\star(\cdot \mid s_l^{t}, a)^\top V_{l+1, \prob_\star}^{t}((\tau_t[l-1], s_l^{t}, a), \cdot))\right] \\
    &= \widetilde \prob_t(\cdot \mid s_l^{t}, a_l^{t})^\top V_{l+1, \widetilde \prob_t}^{t}(\tau_t[l], \cdot) - \prob_\star(\cdot \mid s_l^{t}, a_l^{t})^\top V_{l+1, \prob_\star}^{t}(\tau_t[l], \cdot) + Y_{l,t} + Z_{l,t}
\end{align*}
where $Y_{l,t}$ and $Z_{l,t}$ are stochastic processes defined below.
\begin{align*}
    Y_{l,t} &:= \prob_\star(\cdot \mid s_l^{t}, a)^\top V_{l+1, \prob_\star}^{t}(\tau_t[l] , \cdot) - \E_{a \sim \pi_t}\left[\prob_\star(\cdot \mid s_l^{t}, a)^\top V_{l+1, \prob_\star}^{t}((\tau_t[l-1], s_l^{t},a) , \cdot)\right]\\
    Z_{l,t} &:= \E_{a \sim \pi_t}\left[\widetilde \prob_t(\cdot \mid s_l^{t}, a)^\top V_{l+1, \widetilde \prob_t}^{t}((\tau_t[l-1], s_l^{t}, a), \cdot)\right] - \widetilde \prob_t(\cdot \mid s_l^{t}, a)^\top V_{l+1, \widetilde \prob_t}^{t}(\tau_t[l], \cdot))
\end{align*}
 Consider the filtration $\cG_{l,t}$ induced by the data of $\{\tau_s\}_{s=1}^{t-1}\cup \tau_t[l-1] \cup \{s_l^{t}\}$. Since $a_l^{t} \sim \pi_t$ and $(\tau_t[l-1], s_l^{t}, a_l^{t}) = \tau_t[l]$, we get that $\E[Y_{l,t} \vert \cG_{l,t}] = \E[Z_{l,t} \vert \cG_{l,t}] = 0$. So, one can see that both processes are martingales over $\cG_{l,t}$. Also note that $|Y_{l,t}|, |Z_{l,t}| \leq p$. We thus have that
\begin{align*}
    \Delta_l^{t}(s_l^{t}) &= \left[\widetilde \prob_t(\cdot \mid s_l^{t}, a_l^{t}) -  \prob_\star(\cdot \mid s_l^{t}, a_l^{t})\right] V_{l+1, \widetilde \prob_t}^{t}(\tau_t[l], \cdot))\\
    &\qquad + \prob_\star(\cdot \mid s_l^{t}, a_l^{t})^\top \left[ V_{l+1, \widetilde \prob_t}^{t}(\tau_t[l], \cdot)) - V_{l+1, \prob_\star}^{t}(\tau_t[l] , \cdot)\right] + Y_{l,t} + Z_{l,t}\\
    &= \left[\widetilde \prob_t(\cdot \mid s_l^{t}, a_l^{t}) -  \prob_\star(\cdot \mid s_l^{t}, a_l^{t})\right]V_{l+1, \widetilde \prob_t}^{t}(\tau_t[l], \cdot)) + \prob_\star(\cdot \mid s_l^{t}, a_l^{t})^\top \Delta_{l+1}^{t}(s')  + Y_{l,t} + Z_{l,t}\\
    &= \left[\widetilde \prob_t(\cdot \mid s_l^{t}, a_l^{t}) -  \prob_\star(\cdot \mid s_l^{t}, a_l^{t})\right]V_{l+1, \widetilde \prob_t}^{t}(\tau_t[l], \cdot)) + \E_{s' \sim \prob_\star(\cdot \mid s_l^{t}, a_l^{t})}\left[ \Delta_{l+1}^{t}(s')\right] + Y_{l,t} + Z_{l,t}\\
    &= \left[\widetilde \prob_t(\cdot \mid s_l^{t}, a_l^{t}) -  \prob_\star(\cdot \mid s_l^{t}, a_l^{t})\right]V_{l+1, \widetilde \prob_t}^{t}(\tau_t[l], \cdot)) + \Delta_{l+1}^{t}(s_{l+1}^{t}) + U_{l,t} + Y_{l,t} + Z_{l,t}
\end{align*}
where 
$$U_{l,t} := \E_{s' \sim \prob_\star(\cdot \mid s_l^{t}, a_l^{t})}\left[ \Delta_{l+1}^{t}(s')\right]-\Delta_{l+1}^{t}(s_{l+1}^{t})$$
Consider the filtration $\bar{\cG}_{l,t}$ defined by the data of $\{\tau_s\}_{s=1}^{t-1}\cup \tau_t[l]$. Clearly, $U_{l,t}$ is a martingale over $\bar \cG_{l,t}$. Also note that $|U_{l,t}| \leq p$
To conclude, we have that
\begin{align*}
    \Delta_l^{t}(s_l^{t}) - \Delta_{l+1}^{t}(s_{l+1}^{t}) = \left[\widetilde \prob_t(\cdot \mid s_l^{t}, a_l^{t}) -  \prob_\star(\cdot \mid s_l^{t}, a_l^{t})\right]V_{l+1, \widetilde \prob_t}^{t}(\tau_t[l], \cdot)) + U_{l,t} + Y_{l,t} + Z_{l,t}
\end{align*}
Using a telescoping sum over $l$ for a fixed $t$ and equation~\ref{eqn:delta-0-t}, we get that for any $t$, the following holds.
\begin{align*}
    &V(\widetilde \prob_t, f^t, \pi_t) - V( \prob_\star, f^t, \pi_t)\\
    &= \Delta_1^{t}(s_1)\\
    &= \sum_{l=1}^H  \left[\widetilde \prob_t(\cdot \mid s_l^{t}, a_l^{t}) -  \prob_\star(\cdot \mid s_l^{t}, a_l^{t})\right]V_{l+1, \widetilde \prob_t}^{t}(\tau_t[l], \cdot)) +  U_{l,t} + Y_{l,t} + Z_{l,t}
\end{align*}

\begin{align*}
    &V(\widetilde \prob_t, f^t, \pi_t) - V( \prob_\star, f^t, \pi_t)\\
    &\leq \sum_{l=1}^H Bp\left\Vert \widetilde \prob_t(\cdot \mid s_l^{t}, a_l^{t}) -  \prob_\star(\cdot \mid s_l^{t}, a_l^{t}) \right\Vert_1 + U_{l,t} + Y_{l,t} + Z_{l,t}\\ \numberthis \label{eqn:generic-prob-conf-bound}
    &\leq \sum_{l=1}^H Bp\left\Vert \widetilde \prob_t(\cdot \mid s_l^{t}, a_l^{t}) -  \hat \prob_t(\cdot \mid s_l^{t}, a_l^{t}) \right\Vert_1 + Bp\left\Vert \prob_\star(\cdot \mid s_l^{t}, a_l^{t}) -  \hat \prob_t(\cdot \mid s_l^{t}, a_l^{t}) \right\Vert_1 +  U_{l,t} + Y_{l,t} + Z_{l,t} 
\end{align*}

Until equation~\ref{eqn:generic-prob-conf-bound}, all statements have held with probability $1$ and did not use any facts about $\widetilde \prob_t$. The last inequality also holds with probability $1$ and uses the design of the confidence sets. Now, note the following well known concentration lemma. See, for example, \cite{RLTheory_2023}.

\begin{lemma}\label{lem:conc-prob}
    For $\zeta(n, \delta) = 8\sqrt{\frac{S\log(2) + \log(n(n+1)SA/\delta)}{2n}}$ and 
    $$\cC_\cP^{t}(\delta) = \left\{ \prob \middle\vert \|\prob( \cdot \mid s,a) - \hat \prob_t(\cdot \mid s,a) \|_1 \leq \zeta(N_t(s,a), \delta) \forall s,a \right\}$$
    the true model $\prob^\star \in \cC_\cP^{t}(\delta)$ for all $t \geq 1$ with probability at least $1-\delta/32$.
\end{lemma}

Applying the lemma twice and applying a union bound imply that the following holds with probability $1-\delta/8$. 
\begin{align*}
    &\sum_{t=1}^T V(\widetilde \prob_t, f^t, \pi_t) - V( \prob_\star, f^t, \pi_t)\\
    &\stackrel{(i)}{\leq}  \sum_{l=1}^H 2Bp\zeta(N_t(s_l^{t}, a_l^{t}), \delta) +  U_{l,t} + Y_{l,t} + Z_{l,t}\\
    &= \sum_{t=1}^T \sum_{h=1}^H 2Bp\zeta(N_t(s_h^{t}, a_h^{t}), \delta) + \left[ \sum_{t=1}^T\sum_{h\in \cH_p} U_{h,t} + Y_{h,t} + Z_{h,t}\right]\\
    &\stackrel{(ii)}{\leq} \sum_{t=1}^T \sum_{h=1}^H 2Bp\zeta(N_t(s_h^{t}, a_h^{t}), \delta) + \cO\left(Bp\sqrt{pT\log(T/\delta)}\right)
\end{align*}
Note that inequality $(i)$ is subtle since we could have used more data than that from $\tau_i, i = 1\to t$ to construct $\cC_\cP^t$. The inequality still holds since $\zeta(n, \delta)$ is decreasing in $n$. Also, inequality $(ii)$ holds by the Azuma-Hoeffding inequality.

Now note that the whole argument above can be repeated with $\prob_\star$ and $\widetilde \prob_t$ switched, since the negative of a martingale is also a martingale. So, we have that with probability $1-\delta/4$

$$\left\vert \sum_{t=1}^T V(\widetilde \prob_t, f^t, \pi_t) - V( \prob_\star, f^t, \pi_t) \right\vert \leq \sum_{t=1}^T \sum_{h=1}^H 2Bp\zeta(N_t(s_h^{t}, a_h^{t}), \delta) + \cO\left(Bp\sqrt{pT\log(T/\delta)}\right)$$

Finally, we need the following easy lemma, proved in \cite{RLTheory_2023}.

\begin{lemma}
    Let $c_\delta := 8\sqrt{S\log(2) + \log(HTSA/\delta)}$. Then the following holds almost surely.
    $$\sum_{t=1}^T \sum_{h=1}^H 2Bp\zeta(N_t(s_h^{t}, a_h^{t}), \delta) \leq c_\delta Bp\sqrt{SAHT} + c_\delta BpHSA$$
\end{lemma}

This establishes our claim.
\end{proof}

\subsection{Putting It All Together}

\UnknownModelUCRLOnce*

\begin{proof}
    We can now combine Lemmas~\ref{lem:f-h-conc}and ~\ref{lem:conc-prob} to conclude that $\M_\star \in \cC_\cM(\cD_t, \delta)$ for all $t$ with probability $1-\delta/16$. We can now combine this observation with Lemmas~\ref{lem:bounding-f-optimism} and~\ref{lem:bounding-prob-optimism} to observe that Assumption~\ref{assump:conf-set-assumption} is satisfied by POR-UCRL. By Theorem~\ref{thm:regret-generic-conf-set}, the following holds with probability $1-\delta$.
\begin{align*}
    \reg(T) &= \cO\left(c_\delta Bp\sqrt{SAHT} + c_\delta BpHSA  + \sum_{h \in \cH_p} B\kappa_{2,h}d_{E,h}  + \sum_{h \in \cH_p} \kappa_{2,h}\sqrt{ d_{E,h}   \beta_{h,T}(\delta)  T    } \right)
\end{align*}

where $c_\delta = 8\sqrt{S\log(2) + \log(HTSA/\delta)}$, $d_{E,h} = \dim_E\left(\cF_h, \frac{B}{T}\right)$ is the Eluder dimension of $\mathcal{F}_h$ and $\beta_{h,T}(\delta) = \beta_{h,t}\left(\frac{\delta}{2t^2H} = \widetilde O (B^2 \eta_h^2 d_{C,h})\right)$. This is because all the terms dependent on $p$ get absorbed by the first term in our expression below. 

We further refine it by ignoring terms independent of $T$ and using the fact that $\beta_{h,T}(\delta) = \widetilde \cO(d_{C,h})$ to get that
\begin{align*}
    \reg(T) = \widetilde \cO\left(pS\sqrt{AHT} + \sum_{h \in \cH_p} \sqrt{ d_{E,h}   d_{C,h} T    } \right)
\end{align*}
\end{proof}

Analogously, we can provide a sample complexity result for POR-UCRL.

\begin{corollary}[POR-UCRL Sample complexity]
\label{cor:por-ucrl-sample-complexity}
    Let $\epsilon > 0, \delta \in [0, 1]$. Ignoring polynomial terms independent of $\epsilon$, we can bound the sample complexity $N(\epsilon, \delta)$ of POR-UCRL as follows
    $$\widetilde \cO \left(\frac{p^2HS^2A}{\epsilon^2} + \frac{p^2d_Ed_{C}} {\epsilon^2} \right)$$
    where $d_E := \max_{h \in \cH_p} d_{E,h}$, and $d_C := \max_{h \in \cH_p} d_{C,h}$.
\end{corollary}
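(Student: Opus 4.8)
The plan is to obtain the sample complexity as a direct consequence of the POR-UCRL regret bound (Theorem~\ref{thm:unknown-model-ucrl-once}) together with the cardinal regret-to-PAC conversion (Lemma~\ref{lem:cardinal-to-pac}). First I would run POR-UCRL for $T$ episodes and output a policy $\widehat\pi_T$ drawn uniformly from the produced sequence $\pi_1, \dots, \pi_T$. Instantiating Lemma~\ref{lem:cardinal-to-pac} with $R(T,\delta) = \widetilde\cO\big((pS\sqrt{HA} + \sum_{h\in\cH_p}\sqrt{d_{E,h}d_{C,h}})\sqrt T\big)$ from Theorem~\ref{thm:unknown-model-ucrl-once}, I get that with probability at least $1-2\delta$,
\begin{align*}
  V(\M_\star, \pi_\star) - V(\M_\star, \widehat\pi_T)
  &\leq \frac{R(T,\delta)}{T} + 8Bp\sqrt{\frac{\log(1/\delta)}{T}} \\
  &= \widetilde\cO\left(\frac{pS\sqrt{HA} + \sum_{h\in\cH_p}\sqrt{d_{E,h}d_{C,h}}}{\sqrt T}\right),
\end{align*}
since $B$ is a constant and the Hoeffding term $Bp/\sqrt T$ is dominated by $pS\sqrt{HA}/\sqrt T$ inside $\widetilde\cO$.

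Next I would coarsen the eluder--covering sum: because $|\cH_p| = p$ and each summand is at most $\sqrt{d_E d_C}$ by the definitions $d_E = \max_{h\in\cH_p} d_{E,h}$ and $d_C = \max_{h\in\cH_p} d_{C,h}$, we have $\sum_{h\in\cH_p}\sqrt{d_{E,h}d_{C,h}} \leq p\sqrt{d_E d_C}$. Hence the suboptimality is $\widetilde\cO\big((pS\sqrt{HA} + p\sqrt{d_E d_C})/\sqrt T\big)$. Forcing this quantity to be at most $\epsilon$ and solving for $T$ yields
\[
  T = \widetilde\cO\left(\frac{\big(pS\sqrt{HA} + p\sqrt{d_E d_C}\big)^2}{\epsilon^2}\right)
    = \widetilde\cO\left(\frac{p^2 HS^2 A}{\epsilon^2} + \frac{p^2 d_E d_C}{\epsilon^2}\right),
\]
where the final step uses $(a+b)^2 \leq 2(a^2+b^2)$ to separate the two contributions. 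Rescaling $\delta \mapsto \delta/2$ promotes the $1-2\delta$ guarantee to the $1-\delta$ guarantee demanded by the definition of $N(\epsilon,\delta)$, with the additional $\log(1/\delta)$ factor absorbed into $\widetilde\cO$; this is exactly the claimed bound.

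Because every component is already in place, this is in essence a bookkeeping computation and I do not expect a substantive obstacle. The only points needing care are tracking the failure probabilities so that the union of the regret event and the Hoeffding event in Lemma~\ref{lem:cardinal-to-pac} still leaves probability $1-\delta$ after rescaling, and verifying that the lower-order $8Bp\sqrt{\log(1/\delta)/T}$ term may indeed be dropped inside $\widetilde\cO$ rather than contributing a new leading term.
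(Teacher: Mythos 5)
Your proposal is correct and follows essentially the same route as the paper's own proof: invoke the regret-to-PAC conversion (Lemma~\ref{lem:cardinal-to-pac}) with confidence $\delta/2$, plug in the POR-UCRL regret bound from Theorem~\ref{thm:unknown-model-ucrl-once}, and solve for $T$, with the coarsening $\sum_{h\in\cH_p}\sqrt{d_{E,h}d_{C,h}} \leq p\sqrt{d_E d_C}$ handled (implicitly in the paper, explicitly by you) via the definitions of $d_E$ and $d_C$. Your write-up is just a more detailed bookkeeping of the same argument, including the correct observation that the Hoeffding term $Bp\sqrt{\log(1/\delta)/T}$ is dominated and that the failure probabilities union to $1-\delta$ after rescaling.
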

\begin{proof}
    We invoke the regret-to-PAC conversion in Lemma~\ref{lem:cardinal-to-pac} with confidence $\delta' = \delta / 2$ and we plug the regret bound in Theorem~\ref{thm:unknown-model-ucrl-once} to write
    \begin{align*}
        \epsilon = \widetilde \cO \left( \bigg( B p S \sqrt{ AH} + \sum_{h \in \cH_p} \kappa_{2,h}\sqrt{ d_{E,h} d_{C,h}} + B p \sqrt{\log (1 / \delta) } \bigg) \left( \frac{1}{\sqrt{T}} \right) \right)
    \end{align*}
    from which we get the result by picking $N = T$ and the definition of $d_E, d_C$.
\end{proof}

Also note the following theorem and corresponding corollary.

\begin{theorem}[POR-UCRL Regret if $\prob_\star$ is Known]\label{thm:known-model-por-ucrl-regret}
    If we know the transition matrix $\prob_\star$ in POR-UCRL, then our regret is given by the following with probability $1-\delta$, ignoring polynomial terms independent of $T$.
    $$\reg(T) = \widetilde \cO\left( \left(Bp + \sum_{h \in \cH_p} \sqrt{ d_{E,h}  d_{C,h}} \right) \sqrt{T    } \right)$$
\end{theorem}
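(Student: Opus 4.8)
The plan is to reuse the generic confidence-set machinery of Theorem~\ref{thm:regret-generic-conf-set} exactly as in the proof of Theorem~\ref{thm:unknown-model-ucrl-once}, but to exploit the fact that knowing $\prob_\star$ collapses the transition confidence set to the singleton $\cC_\cP^t(\delta) = \{\prob_\star\}$. First I would verify that Assumption~\ref{assump:conf-set-assumption} holds with the \emph{transition} complexity term identically zero. Containment $\M_\star \in \cC_\cM(\cD_t, \delta)$ follows from Lemma~\ref{lem:f-h-conc}, which places $f^\star_h$ in every $\cC_h^{t+1}(\delta)$ for all $h,t$ with probability $1 - \delta/32$; since $\prob_\star \in \{\prob_\star\} = \cC_\cP^t(\delta)$ trivially, the pair $(\prob_\star, f^\star)$ lies in $\cC_\cM(\cD_t, \delta)$ for all $t$, which in particular gives the required containment at level $1-\delta/16$.

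The key observation is that the optimistic search now fixes $\prob = \prob_\star$, so every model in $\cC_\cM(\cD_t, \delta)$, and in particular the chosen $\widetilde \M_t$, has transition kernel exactly $\prob_\star$. Consequently the quantity controlled by $C_P$ in the generic proof, namely $\sum_t V(\widetilde \prob_t, \widetilde f^t, \pi_t) - V(\prob_\star, \widetilde f^t, \pi_t)$, vanishes termwise — not merely in expectation or up to concentration — so I may take $C_P(\cM, T, \delta) = 0$. This is precisely where the $pS\sqrt{AHT}$ contribution of Lemma~\ref{lem:bounding-prob-optimism} drops out relative to the unknown-transition bound, and it is the only structural difference from Theorem~\ref{thm:unknown-model-ucrl-once}.

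For the reward term I would invoke Lemma~\ref{lem:bounding-f-optimism} verbatim: the confidence sets $\cC_\cF^t(\delta)$ are unchanged from POR-UCRL and $\widetilde f^t \in \cC_\cF^t(\delta)$, so the deviation $\big\vert \sum_t V(\prob_\star, \widetilde f^t, \pi_t) - V(\prob_\star, f^\star, \pi_t)\big\vert$ is bounded with probability $1 - \delta/4$ by $\cO\big( Bp\sqrt{T\log(T/\delta)} + \sum_{h \in \cH_p} B\kappa_{2,h} d_{E,h} + \sum_{h \in \cH_p} \kappa_{2,h}\sqrt{d_{E,h}\,\beta_{h,T}(\delta)\,T}\big)$, which I take as $C_F(\cM, T, \delta)$. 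This establishes Assumption~\ref{assump:conf-set-assumption} with the two complexity functions identified.

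Finally I would assemble the pieces through Theorem~\ref{thm:regret-generic-conf-set}, giving $\reg(T) = \widetilde\cO(C_P + C_F) = \widetilde\cO(C_F)$, substitute $\beta_{h,T}(\delta) = \widetilde\cO(d_{C,h})$, and discard the additive terms independent of $T$ to obtain $\reg(T) = \widetilde\cO\big((Bp + \sum_{h\in\cH_p}\sqrt{d_{E,h}d_{C,h}})\sqrt{T}\big)$. I expect no real obstacle here: the whole argument is a strict simplification of Theorem~\ref{thm:unknown-model-ucrl-once}, and the only point deserving care is confirming that $C_P = 0$ may be taken \emph{exactly} (because $\widetilde\prob_t = \prob_\star$ pointwise for every admissible model), rather than merely small, so that the transition-dependent $S\sqrt{A}$ factors genuinely disappear from the bound.
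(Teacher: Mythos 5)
Your proposal is correct and follows essentially the same route as the paper: verify Assumption~\ref{assump:conf-set-assumption} using Lemma~\ref{lem:bounding-f-optimism} for the reward term and the singleton $\cC_\cP^t(\delta)=\{\prob_\star\}$ to make the transition term vanish (so $C_P=0$ exactly), then invoke Theorem~\ref{thm:regret-generic-conf-set} and simplify with $\beta_{h,T}(\delta)=\widetilde\cO(d_{C,h})$. The only difference is that you spell out the containment and termwise-vanishing arguments that the paper leaves implicit, which is a strict elaboration rather than a different proof.
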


\begin{proof}
    We can now use Lemmas~\ref{lem:bounding-f-optimism} and the fact that $\cC_\cP^t(\delta)$ is always a singleton to observe that Assumption~\ref{assump:conf-set-assumption} is satisfied by this version of POR-UCRL as well. By Theorem~\ref{thm:regret-generic-conf-set}, the following holds with probability $1-\delta$.
\begin{align*}
    \reg(T) &= \widetilde \cO\left(Bp\sqrt{T} + \sum_{h \in \cH_p} B\kappa_{2,h}d_{E,h}  + \sum_{h \in \cH_p} \kappa_{2,h}\sqrt{ d_{E,h}   \beta_{h,T}(\delta) T    } \right)
\end{align*}
We further refine it by ignoring terms independent of $T$ and using the fact that $\beta_{h,T}(\delta) = \widetilde \cO(d_{C,h})$ to get that
\begin{align*}
    \reg(T) = \widetilde \cO\left(\left(Bp + \sum_{h \in \cH_p} \kappa_{2,h}\sqrt{ d_{E,h}   \beta_{h,T}(\delta) }\right) \sqrt{T    } \right)
\end{align*}
\end{proof}

\begin{corollary}[POR-UCRL sample complexity if $\prob_\star$ is Known]
    Let $\epsilon > 0, \delta \in [0, 1]$. Ignoring polynomial terms independent of $\epsilon$, we can bound the sample complexity $N(\epsilon, \delta)$ of POR-UCRL when $\prob_\star$ is known as follows
    $$\widetilde \cO \left(\frac{p^2 d_E d_C} {\epsilon^2} \right)$$
    where $d_E := \max_{h \in \cH_p} d_{E,h}$, and $d_{C,h} := \max_{h \in \cH_p}d_{C,h} $.
\end{corollary}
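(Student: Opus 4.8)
The plan is to mirror the derivation of Corollary~\ref{cor:por-ucrl-sample-complexity} and obtain the bound through a direct regret-to-PAC conversion, now starting from the sharper known-model regret of Theorem~\ref{thm:known-model-por-ucrl-regret} rather than from Theorem~\ref{thm:unknown-model-ucrl-once}. Concretely, I would invoke Lemma~\ref{lem:cardinal-to-pac} with confidence parameter $\delta' = \delta/2$, so that the output policy $\widehat\pi_T$ sampled uniformly from $\pi_1, \dots, \pi_T$ satisfies
\[
V(\M, \pi_\star) - V(\M, \widehat\pi_T) \leq \frac{R(T, \delta')}{T} + 8Bp\sqrt{\frac{\log(1/\delta')}{T}}
\]
with probability at least $1 - \delta$, where $R(T, \delta') = \widetilde\cO\big((Bp + \sum_{h \in \cH_p}\sqrt{d_{E,h}d_{C,h}})\sqrt{T}\big)$ is the regret guarantee of the known-model algorithm.

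Next I would substitute this regret bound and note that dividing by $T$ turns the $\sqrt{T}$ factor into $1/\sqrt{T}$, while the additive Hoeffding term $8Bp\sqrt{\log(1/\delta')/T}$ is already of order $\widetilde\cO(Bp/\sqrt{T})$ and is absorbed into the $Bp/\sqrt{T}$ contribution of $R(T,\delta')/T$. This yields a per-episode suboptimality of
\[
\epsilon = \widetilde\cO\left(\left(Bp + \sum_{h \in \cH_p}\sqrt{d_{E,h}d_{C,h}}\right)\frac{1}{\sqrt{T}}\right).
\]
To pass to the stated dimensions I would upper bound $\sum_{h \in \cH_p}\sqrt{d_{E,h}d_{C,h}} \leq p\sqrt{d_E d_C}$, using $|\cH_p| = p$ together with $d_E = \max_h d_{E,h}$ and $d_C = \max_h d_{C,h}$. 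Setting $N = T$ and solving $\epsilon = \widetilde\cO((Bp + p\sqrt{d_E d_C})/\sqrt{N})$ for $N$ gives $N = \widetilde\cO((B^2 p^2 + p^2 d_E d_C)/\epsilon^2)$; treating $B$ as a constant absorbs the $B^2 p^2/\epsilon^2$ term and leaves $N(\epsilon, \delta) = \widetilde\cO(p^2 d_E d_C/\epsilon^2)$.

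There is essentially no hard step here: the result is a mechanical consequence of the known-model regret bound and the generic conversion lemma, and the contrast with Corollary~\ref{cor:por-ucrl-sample-complexity} is simply that the transition-uncertainty term $p^2 H S^2 A/\epsilon^2$ vanishes because the confidence set $\cC_\cP^t(\delta)$ is the singleton $\{\prob_\star\}$ and contributes no error. The only points requiring a little care are the bookkeeping on the failure probability (the factor of two absorbed into $\delta'$, matching the $1 - 2\delta'$ in Lemma~\ref{lem:cardinal-to-pac}) and the collapse of the sum over $\cH_p$ into the worst-case dimensions, which is where the $p^2$ and the product $d_E d_C$ originate.
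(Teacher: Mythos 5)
Your proposal is correct and follows exactly the paper's approach: the paper's proof is a one-line appeal to the same regret-to-PAC conversion (Lemma~\ref{lem:cardinal-to-pac}) applied to the known-model regret bound of Theorem~\ref{thm:known-model-por-ucrl-regret}, just as you do. Your additional bookkeeping (bounding $\sum_{h \in \cH_p}\sqrt{d_{E,h}d_{C,h}} \leq p\sqrt{d_E d_C}$, absorbing the $Bp$ terms, and handling the confidence split $\delta' = \delta/2$) matches the steps carried out in Corollary~\ref{cor:por-ucrl-sample-complexity}, which the paper's proof references.
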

\begin{proof}
    The proof proceeds as in Corollary~\ref{cor:por-ucrl-sample-complexity} by plugging Theorem~\ref{thm:known-model-por-ucrl-regret} in Lemma~\ref{lem:cardinal-to-pac}.
\end{proof}
\newpage
\section{Details and Proofs for Cardinal POR-UCBVI}\label{app:por-ucbvi}
We now describe how we instantiate POR-UCBVI from a generic optimistic algorithm using bonuses. Note that again, this is \textbf{crucially different} from naively summarizing the history to define a modified state space, since we are separating the use of history summarization for getting bonuses for $f$ from using only the current state while getting bonuses for the Markovian transitions $\prob$. Like with confidence sets, it is a priori unclear if we can use ideas from optimism to prove guarantees with a favorable (non-exponential) dependence on the complexity of transitions. In particular, we will note that showing that the bonuses are optimistic would naively need a union bound over the doubly exponential ($A^{(SA)^H}$) set of history-dependent policies, which is a non-trivial challenge to overcome.

Given a dataset of the first $t$ trajectory samples $\{ \tau_i \}_{i=1}^t$ and an index $h \in [H]$, we consider the following:

\textbf{Estimates for POR-UCBVI:}
\begin{equation*}
\displaystyle \widehat{f}_h^{t+1} = \argmin_{f_h \in \mathcal{F}_h} \sum_{i=1}^t \left( \sigma(f_h(\tau_i[h]))    - o^i_h   \right)^2 
\end{equation*}

We also use the MLE estimate for $\prob$ after $t$ episodes to define $\hat{\prob}^t(\cdot \mid s,a) := \frac{N_t(s,a,s')}{N_t(s,a)}$. Now for $\zeta(n, \delta) = 2\sqrt{\frac{S\log(2) + \log(n(n+1)SA/\delta)}{2n}}$, define $\cC_{\prob_{t}}(\delta)$ as below:
\begin{align*}
    \Big\{ \prob \Big\vert \|\prob( \cdot \mid s,a) - \hat \prob_{t}(\cdot \mid s,a) \|_1 \leq \zeta(N_t(s,a), \delta) \forall s,a \Big\}
\end{align*}

Recall the definition of our bonus below.
\paragraph{Bonuses for POR-UCBVI.} Recall that simple least squares guarantees imply the lemma below. 
\FhConc*
We use the data from trajectories $\{\tau_i\}_{i=1}^t$ to build the confidence sets $\cC_\cF^{t+1}(\delta) = \prod_h \cC_h^{t+1}(\delta)$ with $\cC_h^{t+1}(\delta)$ defined below, where $\beta_{h,t}(\delta) := \bar \beta_{h,t}\left(\frac{\delta}{2t^2H} \right)$.
$$\cC_h^{t+1}(\delta) := \left\{ f_h \in \mathcal{F}_h \middle\vert \mse_{h,t}(f_h^\star, \widehat f_h^{(t+1)})  \leq \beta_{h,t}\left(\delta \right)\right\}$$

We first define a trajectory dependent bonus term below, with $\bar \delta := \frac{\delta}{HS^HA^H}$
\begin{equation*}
\gamma_{h,t}(\tau[h], \delta) = \max_{f_h, f_h' \in \cC_h^t\left( \bar{\delta} \right) }  f_h(\tau[h])  - f_h'(\tau[h]) 
\end{equation*}

Note that according to the definition of $\beta$, this does not create any exponential dependence in the confidence intervals used to define $\cC_h^{t+1}$.
\begin{align*}
    \beta_{h,t}\left(\frac{\delta}{16S^HA^H}\right) &\leq 64\left(\log(N(\cF_h, \alpha, \|\cdot\|_\infty)) + B + \eta_h\log(1/\delta) + \eta_h^2H\log(THSA/\delta)\right)\\
    &= O(d_{C,h} + H)
\end{align*}

It follows by a union bound over all trajectory segments and all timesteps $t$ that with probability at least $1-\delta/16$ and for any trajectory $\tau$ and $t \geq 1, h \in \cH_p$, 
\begin{equation}
|f_h^\star(\tau[h]) - \hat f_h^t(\tau[h])| \leq \gamma_{h,t}(\tau[h], \delta)
\end{equation}

\begin{remark}
    In the case of many popular function classes $\cF$, like the linear class $\cF_H = \{\tau \mapsto \phi(\tau)^\top \textbf{w} \mid \|\textbf{w}\| \leq W\}$, we can compute $\gamma_{h,t}(\tau[h], \delta)$ quite easily. In this case $\gamma_{H, t}$ is given by 
    $$\sup_{\textbf{w}, \textbf{w}' \in W_t} \phi(\tau)^\top (\textbf{w}-\textbf{w}') = \|\phi(\tau)\|_{V_t}\sup_{\textbf{w}, \textbf{w}' \in W_t} \|w-w'\|_{V_t^{-1}}$$ for a suitable quadratic form $V_t$.
\end{remark}

$\gamma_{h,t}(\tau[h], \delta)$ induces a trajectory-dependent bonus, given by
$$b_\cF^t(\tau, \delta) := \sum_{h \in \cH_p} \gamma_{h,t}(\tau[h], \delta)$$
This in turn induces a policy-level bonus (which depends on the transition kernel), given by:
$$b_\cF^t(\prob, \pi, \delta) := \E_{\tau \sim \prob^\pi} \left[ b_\cF^t(\tau, \delta)\right] = \E_{\tau \sim \prob^\pi} \left[  \sum_{h \in \cH_p} \gamma_{h,t}(\tau[h], \delta) \right]$$

Let us define a term $\xi^t(s,a, \delta)$ that will be used to define the probability bonus.
\begin{align*}
    \xi^t(s,a, \delta) := \min\left(2, 4\sqrt{\frac{H\log(6HSA) + S\log(8t^2H^2) + \log(32t^2N_t(s,a)/\delta)}{2N_t(s,a)}}\right)
\end{align*}
This induces a trajectory-dependent bonus, given by
$$b_\cP^t(\tau, \delta) := \sum_{h=1}^{H-1} \xi^t(s_h, a_h, \delta) $$
This induces a policy-level bonus (which depends on the transition kernel), given by:
$$b_\cP^t(\prob, \pi, \delta) :=  \min\left( 4,\E_{\tau \sim \prob^\pi} \left[ b_\cF^t(\tau, \delta)\right]\right)  = \min\left( 4, \E_{\tau \sim \prob^\pi} \left[\sum_{h=1}^{H-1} \xi^t(s_h, a_h, \delta) \right]\right) $$

\paragraph{Estimates and Bonuses in case $\prob_\star$ is known.} If $ \prob_\star$ is instead known, keep $\hat f^t$ and $b_\cF^t(\prob, \pi, \delta)$ the same as above, but set $\hat \prob_t := \prob_\star$ and $b_\cP^t(\prob, \pi, \delta):= 0$ for all $t$.

For completeness we state POR-UCBVI here, which is an instantiation of Algorithm~\ref{algo:generic-bonus-based-opt}, the generic optimistic algorithm using bonuses.

\begin{algorithm}[H]
  \caption{POR-UCBVI}\label{algo:por-ucbvi}
  \begin{algorithmic}[1]
    \STATE \textbf{Input} Known family of reward functions $\{ \cR_h\}_{h=1}^H$, methods $\texttt{Est}(\cD)$ to estimate $\hat \prob_t$ and $\hat f^t$ from dataset $\cD,$ confidence level $\delta$
    \STATE \textbf{Initialize} $\cD_1 \gets \{\}$, initialize $\hat f^{\cD_1} , \hat \prob_{\cD_1}$ arbitrarily.
      \FOR{$t=1,...,T$}  
        \STATE \textbf{Compute} optimistic history dependent policy,
        \begin{equation*}
             \pi_t = \argmax_{\pi} V(\hat \prob_t, \hat f^t, \pi) + b_\cF^t(\hat \prob_{t}, \pi, \delta) + z(Bp)(b_\cP^{t}(\hat \prob_{t}, \pi, \delta))
        \end{equation*}
        \STATE \textbf{Observe} trajectory $\tau_t = \left\{(s_h^{t}, a_h^{t})\right\}_{h=1}^H$ and feedback $\{o_h\}_{h \in \cH_p}$.
        \STATE \textbf{Compute} new estimates $\hat f^{t+1}, \hat \prob_{t+1} \gets \texttt{Est}(\cD_{t+1})$ and compute new bonus functions $b_\cF^{t+1}(\hat f^{t+1}, \cdot, \delta), b_\cP^{t+1}(\hat \prob_{t+1}, \cdot, \delta)$.
      \ENDFOR
  \end{algorithmic}

\end{algorithm}

We will show the following regret bound. 
\begin{restatable}[POR-UCBVI Regret]{theorem}{UnknownModelUCBVIOnce} \label{thm:unknown-model-ucbvi-once}
    Under Assumption~\ref{assump:tractable-porrl}, POR-UCBVI satisfies Assumption~\ref{assump:bonus-assumption} and its regret $\reg(T)$ is bounded by the following with probability at least $1-\delta$, ignoring polynomial terms independent of $T$.
\begin{align*}
    \widetilde \cO\left(\left(pC(H,S,A)+ \sum_{h\in \cH_p}\sqrt{d_{E,h}(d_{C,h}+H)}\right) \sqrt{T}\right)
\end{align*} 
where $C(H,S,A) := H\sqrt{SA} + S\sqrt{HA}$
\end{restatable}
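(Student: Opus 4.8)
The plan is to realize POR-UCBVI as an instance of the generic bonus-based scheme (Algorithm~\ref{algo:generic-bonus-based-opt}), verify that its reward and transition bonuses satisfy Assumption~\ref{assump:bonus-assumption}, and then invoke Theorem~\ref{thm:regret-generic-bonus-based}. Concretely, I would show the reward bonus plays the role of $b_\cF$ with complexity parameter $C_F(\cM, T, \delta) = \widetilde\cO\big(\sum_{h \in \cH_p}\sqrt{d_{E,h}(d_{C,h}+H)\,T}\big)$, the UCBVI bonus plays the role of $b_\cP$ with $C_P(\cM, T, \delta) = \widetilde\cO(C(H,S,A)\sqrt{T})$, and $z(D) = \cO(D)$ so that $z_1(Bp) = \cO(Bp)$; plugging these into Theorem~\ref{thm:regret-generic-bonus-based} and absorbing the constant $B$ into $\widetilde\cO$ yields the claimed bound.

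For the reward bonus -- the crux of the argument -- the difficulty is the first bullet of Assumption~\ref{assump:bonus-assumption}, which demands $|V(\prob, \hat f_{\cD_t}, \pi) - V(\prob, f^\star, \pi)| \le b_\cF^t(\prob, \pi, \delta)$ \emph{uniformly over all history-dependent policies}, a set of doubly-exponential size $A^{(SA)^H}$ over which a union bound is hopeless. I would sidestep this by establishing a \emph{pointwise-in-trajectory} guarantee: using Lemma~\ref{lem:f-h-conc} at the inflated confidence level $\bar\delta = \delta/(HS^HA^H)$, both $f_h^\star$ and $\hat f_h^t$ lie in $\cC_h^t(\bar\delta)$ with probability $1-\delta/16$, so by the definition $\gamma_{h,t}(\tau[h],\delta) = \max_{f_h, f_h' \in \cC_h^t(\bar\delta)} f_h(\tau[h]) - f_h'(\tau[h])$ we get $|f_h^\star(\tau[h]) - \hat f_h^t(\tau[h])| \le \gamma_{h,t}(\tau[h],\delta)$ simultaneously for \emph{every} $\tau$. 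Taking the expectation of this inequality under $\prob^\pi$ then yields the required policy-level bound for all $\pi$ at once. The key accounting point is that inflating $\delta$ to $\bar\delta$ enters the radius only through $\log(S^HA^H) = \cO(H\log SA)$, so $\beta_{h,t}(\bar\delta) = \cO(d_{C,h}+H)$ -- this is exactly the source of the $+H$ in the final bound. To control $\sum_t b_\cF^t(\prob_\star, \pi_t, \delta) = \sum_t \E_{\tau \sim \prob_\star^{\pi_t}}\sum_{h \in \cH_p}\gamma_{h,t}(\tau[h],\delta)$, I would pass from the expectation to the realized $\gamma_{h,t}(\tau_t[h],\delta)$ up to an Azuma-Hoeffding error of order $Bp\sqrt{T\log(T/\delta)}$, and then bound $\sum_t \gamma_{h,t}(\tau_t[h],\delta)$ by the eluder-dimension sum estimate (Lemma~3 in~\citep{chan2021parallelizing}) to obtain $\widetilde\cO(\sqrt{d_{E,h}(d_{C,h}+H)T})$ per $h$, exactly as in the POR-UCRL analysis.

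For the transition bonus I would follow the standard UCBVI route, paralleling Lemma~\ref{lem:bounding-prob-optimism}. First, $\prob_\star$ lies in the implied $\ell_1$ confidence set with high probability, and the second bullet of Assumption~\ref{assump:bonus-assumption} follows from a simulation-lemma/telescoping argument: for any $\mu$ bounded by $D$, the per-step value deviation is controlled by $D\|\hat\prob(\cdot\mid s,a) - \prob_\star(\cdot\mid s,a)\|_1 \le D\,\xi^t(s,a,\delta)$, and telescoping across $l=1,\dots,H$ picks up exactly one bonus per step, giving $z(D) = \cO(D)$ with \emph{no} compounding $H$ factor (the $\min(2,\cdot)$ and $\min(4,\cdot)$ clippings keep the bonuses bounded so the statement holds uniformly over $\pi$). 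To bound $\sum_t b_\cP^t(\prob_\star, \pi_t, \delta)$ I would again replace the expectation by the realized $\sum_t \sum_h \xi^t(s_h^t, a_h^t, \delta)$ via Azuma-Hoeffding, then use the visitation bound $\sum_t N_t(s,a)^{-1/2} \le 2\sqrt{N_T(s,a)}$ together with Cauchy-Schwarz over $(s,a)$. The numerator of $\xi^t$ splits into an $H$-term and an $S$-term: the former contributes $H\sqrt{SAT}$ and the latter $S\sqrt{HAT}$, so $C_P(\cM,T,\delta) = \widetilde\cO((H\sqrt{SA}+S\sqrt{HA})\sqrt{T})$.

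Combining the two complexity parameters through Theorem~\ref{thm:regret-generic-bonus-based} with $z_1(Bp)=\cO(Bp)$ gives $\reg(T) = \widetilde\cO(pC(H,S,A)\sqrt{T} + \sum_{h\in\cH_p}\sqrt{d_{E,h}(d_{C,h}+H)}\sqrt{T})$ after absorbing $B$. I expect the main obstacle to be the second step: making the reward bonus uniformly optimistic over all history-dependent policies without paying the doubly-exponential policy count. The resolution -- trading a pointwise-in-trajectory confidence statement (a union bound over the $\cO(S^HA^H)$ trajectory segments, not over policies) for free uniform-over-policy optimism, at a mere additive $\cO(H)$ cost inside the logarithm -- is the conceptual heart of the proof; everything else reduces to the POR-UCRL eluder machinery and textbook UCBVI bonus bookkeeping.
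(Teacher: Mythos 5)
Your proposal is correct and follows essentially the same route as the paper's proof: it instantiates the generic bonus-based optimism theorem (Theorem~\ref{thm:regret-generic-bonus-based}); it obtains uniform-over-policy optimism for the reward bonus via the pointwise-in-trajectory guarantee at the inflated level $\bar\delta = \delta/(HS^HA^H)$, which is exactly the paper's device in Lemma~\ref{lem:bounding-f-t-value-error} and its source of the $+H$ through $\beta_{h,T}(\bar\delta) = \cO(d_{C,h}+H)$; it bounds the summed reward bonuses by Azuma--Hoeffding plus the eluder estimate of \citet{chan2021parallelizing}, as in Lemma~\ref{lem:bounding-sum-f-bonuses}; and it bounds the summed transition bonuses by the visitation-count/Cauchy--Schwarz argument of Lemma~\ref{lem:bounding-sum-prob-bonuses}, recovering $C(H,S,A)\sqrt{T}$. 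The only divergence is the change-of-measure step for $\cP$: the paper imports Lemma B.2 of \citet{chatterji2021theory} (Lemma~\ref{lem:change-of-measure-inequality}), giving $z(D)=2D\sqrt{\log D}$, whereas you derive it self-containedly via a TV/simulation-lemma telescope giving $z(D)=\cO(D)$; the two are interchangeable up to logarithmic factors and both yield the stated bound.
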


\subsection{Showing that Assumption~\ref{assump:bonus-assumption} is Satisfied}
\subsubsection{Bounding effect of error in $\cF$}

\begin{restatable}[Bounding $\hat f^t$ Value Error]{lemma}{BoundingftValueError} \label{lem:bounding-f-t-value-error}
    Given any $\prob$, with $\hat f^t$ computed using data from $\{\tau_i\}_{i=1}^t \sim \prob_\star^{\pi_i}$ for any sequence of policies $\pi_i$ using least squares, the following holds with probability $1-\delta/16$ \textit{uniformly} over all $\pi$.
    \begin{align*}
        |V(\prob, \hat f^t, \pi) - V(\prob, f^\star, \pi)| \leq b_\cF^t(\prob, \pi, \delta)
    \end{align*}
\end{restatable}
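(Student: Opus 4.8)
The plan is to reduce the uniform-over-$\pi$ value deviation to a single pointwise deviation bound on the reward estimates that holds simultaneously for every trajectory segment, thereby sidestepping any union bound over the doubly-exponential policy class. First I would establish the high-probability event on which $|f_h^\star(\tau[h]) - \hat f_h^t(\tau[h])| \leq \gamma_{h,t}(\tau[h], \delta)$ holds for every trajectory $\tau$, every $t \geq 1$, and every $h \in \cH_p$ at once. This follows directly from Lemma~\ref{lem:f-h-conc}: the least-squares estimate $\hat f_h^t$ lies in its own confidence set $\cC_h^t(\bar\delta)$ trivially (its MSE against itself is zero), and Lemma~\ref{lem:f-h-conc}, applied at the tightened level $\bar\delta := \delta/(H S^H A^H)$, places $f_h^\star$ in $\cC_h^t(\bar\delta)$ with probability $1-\delta/16$. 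Since $\gamma_{h,t}(\tau[h], \delta)$ is by definition the maximal gap $\max_{f_h, f_h' \in \cC_h^t(\bar\delta)} f_h(\tau[h]) - f_h'(\tau[h])$, it pointwise upper bounds the difference of any two members of the set — in particular $|\hat f_h^t(\tau[h]) - f_h^\star(\tau[h])|$ — at every $\tau$.

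The crucial quantitative check in this first step is that tightening the confidence parameter from $\delta$ to $\bar\delta$ does not cost an exponential factor: since $\log(S^H A^H) = H\log(SA)$, the radius $\beta_{h,t}(\bar\delta)$ inflates only additively, remaining $\widetilde \cO(d_{C,h} + H)$ as recorded in the displayed bound preceding the lemma. This is what keeps the eventual bonus, and hence the regret, polynomial in $H, S, A$.

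Given this event, the lemma is immediate. For an arbitrary transition kernel $\prob$ and any policy $\pi$, expanding both value functions over the same trajectory law $\prob^\pi$ and applying the triangle inequality inside the expectation gives
\begin{align*}
    |V(\prob, \hat f^t, \pi) - V(\prob, f^\star, \pi)|
    &= \left| \E_{\tau \sim \prob^\pi}\left[ \sum_{h \in \cH_p} \hat f_h^t(\tau[h]) - f_h^\star(\tau[h]) \right] \right| \\
    &\leq \E_{\tau \sim \prob^\pi}\left[ \sum_{h \in \cH_p} |\hat f_h^t(\tau[h]) - f_h^\star(\tau[h])| \right] \\
    &\leq \E_{\tau \sim \prob^\pi}\left[ \sum_{h \in \cH_p} \gamma_{h,t}(\tau[h], \delta) \right] = b_\cF^t(\prob, \pi, \delta),
\end{align*}
where the last inequality uses the pointwise bound (valid for every realized $\tau$ on the good event) and the final equality is the definition of the policy-level bonus.

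The main obstacle — and the reason the lemma is stated as holding uniformly over all $\pi$ — is conceptual rather than computational: a naive optimism argument would need the deviation bound to hold for each of the $A^{(SA)^H}$ history-dependent policies, demanding a union bound that destroys the sample complexity. The resolution is to push all the randomness into the trajectory-level event of the first step, which is a statement about trajectories and is therefore entirely policy-independent; once it holds, the bound for every $\pi$ follows by merely taking the $\prob^\pi$-expectation of an inequality that already holds trajectory-by-trajectory. Designing the bonus at the tightened level $\bar\delta$ so that it dominates the estimation error at every trajectory simultaneously is exactly what converts the per-policy requirement into a single per-trajectory guarantee.
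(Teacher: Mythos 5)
Your proposal is correct and takes essentially the same route as the paper's proof: establish the policy-independent, trajectory-level event on which $|f_h^\star(\tau[h]) - \hat f_h^t(\tau[h])| \leq \gamma_{h,t}(\tau[h], \delta)$ holds for all $\tau$, $t$, and $h \in \cH_p$ (via $f_h^\star$ and $\hat f_h^t$ both lying in $\cC_h^t(\bar\delta)$, with the same check that the tightened level $\bar\delta$ only inflates $\beta_{h,t}$ to $\cO(d_{C,h}+H)$), and then pass to the bound uniformly over $\pi$ by taking the $\prob^\pi$-expectation of this pointwise inequality, which is exactly the paper's step $(i)$.
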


\begin{proof}
Recall that with probability at least $1-\delta/16$, the following holds for any trajectory $\tau$ and any $t \geq 1, h \in \cH_p$.

\begin{equation}\label{eqn:f-t-deviation}
|f_h^\star(\tau[h]) - \hat f_h^t(\tau[h])| \leq  \gamma_{h,t}(\tau[h], \delta)
\end{equation}

Now note the following inequalities, where $(i)$ holds with probability $1-\delta/16$ uniformly over all policies due to inequality~\ref{eqn:f-t-deviation} above.
    \begin{align*}
       V(\prob, \hat f^t, \pi) - V(\prob, f^\star, \pi) &=  \E_{\tau \sim \prob^{\pi}} \left[  \sum_{h \in \cH_p}     \hat f_h^t(\tau[h]) - f_h^\star(\tau[h])      \right]\\
     \
      &\stackrel{(i)}{\leq} \E_{\tau \sim \prob^{\pi}} \left[  \sum_{h \in \cH_p}     \gamma_{h,t}(\tau[h], \delta)     \right] \\
      &= b_\cF^t(\prob, \pi, \delta)
    \end{align*}

\end{proof}

\begin{restatable}[Bounding Sum of $\cF$ Bonuses]{lemma}{BoundingSumBonuses}\label{lem:bounding-sum-f-bonuses}
The following holds with probability $1$.
$$\sum_{t=1}^T b_\cF^t(\prob_\star, \pi_t, \delta) = \widetilde \cO\left(\sum_{h \in \cH_p} Bd_{E,h}  + \sum_{h \in \cH_p} \sqrt{ d_{E,h}   \beta_{h,T}(\bar \delta)  T    }  + Bp\sqrt{T\log(T/\delta)}\right)$$
    
\end{restatable}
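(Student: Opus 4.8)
The plan is to reuse the Eluder-dimension argument from the proof of Lemma~\ref{lem:bounding-f-optimism}, the only genuinely new wrinkle being that the bonus $b_\cF^t(\prob_\star, \pi_t, \delta) = \E_{\tau \sim \prob_\star^{\pi_t}}\big[\sum_{h \in \cH_p} \gamma_{h,t}(\tau[h], \delta)\big]$ is an \emph{expectation} over trajectories, whereas the Eluder width bound of \citet{chan2021parallelizing} controls widths at the \emph{realized} sub-trajectories $\tau_t[h]$. First I would pass from the expected bonus to the realized one by writing
\begin{equation*}
\sum_{t=1}^T b_\cF^t(\prob_\star, \pi_t, \delta) = \sum_{t=1}^T \sum_{h \in \cH_p} \gamma_{h,t}(\tau_t[h], \delta) + \sum_{t=1}^T M_t,
\end{equation*}
where $M_t := \E_{\tau \sim \prob_\star^{\pi_t}}\big[\sum_{h \in \cH_p} \gamma_{h,t}(\tau[h], \delta)\big] - \sum_{h \in \cH_p} \gamma_{h,t}(\tau_t[h], \delta)$. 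Since $|f_h| \le B$ forces $0 \le \gamma_{h,t} \le 2B$ and $|\cH_p| = p$, each summand lies in $[-2Bp, 2Bp]$, and because $\tau_t \sim \prob_\star^{\pi_t}$ conditionally on the history that determines both $\pi_t$ and the confidence sets $\cC_h^t(\bar\delta)$, the sequence $(M_t)$ is a bounded martingale-difference sequence with respect to the filtration generated by $\{\tau_s\}_{s<t}$. Azuma--Hoeffding then yields $|\sum_{t=1}^T M_t| = \cO(Bp\sqrt{T\log(T/\delta)})$, which is exactly the last term of the claim.

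For the realized part I would fix $h \in \cH_p$ and bound $\sum_{t=1}^T \gamma_{h,t}(\tau_t[h], \delta)$ by the identical Eluder-dimension step used in Lemma~\ref{lem:bounding-f-optimism}: apply Lemma~3 of \citet{chan2021parallelizing} to the class $\cF_h$ (equivalently to $\{\sigma_h \circ f_h\}$, converting $f_h$-widths to $\sigma_h(f_h)$-widths via the $\kappa_{2,h}$-Lipschitzness of $\sigma_h^{-1}$), with cumulative squared-deviation radius $\beta_{h,T}(\bar\delta)$ supplied by the confidence set $\cC_h^t(\bar\delta)$ built from the same points $\{\tau_i[h]\}_{i<t}$. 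This gives $\sum_{t=1}^T \gamma_{h,t}(\tau_t[h], \delta) = \cO\big(B d_{E,h} + \sqrt{d_{E,h}\,\beta_{h,T}(\bar\delta)\,T}\big)$, and summing over the $p$ indices $h \in \cH_p$ produces the first two terms; combined with the martingale bound this gives the stated $\widetilde\cO$ expression.

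The point needing the most care — and the main obstacle here — is that $\gamma_{h,t}$ is defined through $\cC_h^t(\bar\delta)$ at the \emph{exponentially small} level $\bar\delta = \delta/(HS^HA^H)$, the very choice that later makes the bonus valid uniformly over the doubly-exponential family of history-dependent policies. I must check that this does \emph{not} inject exponential dependence into the present bound: substituting $\bar\delta$ into $\beta_{h,T}$ merely replaces $\log(1/\delta)$ by $\log(1/\delta) + H\log(SA)$, so (exactly as in the computation just above the definition of $\gamma_{h,t}$ in the excerpt) $\beta_{h,T}(\bar\delta) = \widetilde\cO(d_{C,h} + H)$ stays polynomial, and the $HS^HA^H$ union bound costs only a logarithmic factor. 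I would finally note that, as in Lemma~\ref{lem:bounding-f-optimism}, the Eluder bound on the realized widths is deterministic given the data, so all of the randomness is confined to the single Azuma--Hoeffding step producing the $Bp\sqrt{T\log(T/\delta)}$ term.
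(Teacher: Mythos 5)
Your proposal is correct and follows essentially the same route as the paper's proof: an Azuma--Hoeffding step to pass from the expected bonus $\E_{\tau \sim \prob_\star^{\pi_t}}[\,\cdot\,]$ to the realized widths $\gamma_{h,t}(\tau_t[h],\delta)$ (yielding the $Bp\sqrt{T\log(T/\delta)}$ term), followed by Lemma~3 of \citet{chan2021parallelizing} applied per $h$ to $\{\sigma_h \circ f_h\}$ with the $\kappa_{2,h}$-Lipschitz conversion via $\sigma_h^{-1}$, and the observation (made in the paper just before the lemma) that $\bar\delta = \delta/(HS^HA^H)$ only inflates $\beta_{h,T}$ logarithmically so that $\beta_{h,T}(\bar\delta) = \widetilde\cO(d_{C,h}+H)$. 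Your martingale bookkeeping is, if anything, slightly more explicit than the paper's; note only that the lemma's stated ``probability $1$'' is an artifact of the paper (its own proof, like yours, concedes the Azuma--Hoeffding event and holds with probability $1-\delta/16$).
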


\begin{proof}

First note the following inequality, which hold with probability $1-\delta/16$ by the Azuma-Hoeffding inequality.
\begin{align*}
    \sum_{t=1}^T b_\cF^t(\prob_\star, \pi_t, \delta) &= \sum_{t=1}^T \E_{\tau \sim \prob_\star^{\pi_t}} \left[\sum_{h \in \cH_p} \gamma_{t,h}(\tau[h], \delta) \right]\\
    &\leq \sum_{t=1}^T \sum_{h \in \cH_p} \gamma_{t,h}(\tau[h], \delta) + \cO \left( Bp\sqrt{T\log(T/\delta)}\right)\\
\end{align*}

Now apply Lemma~3 in~\citep{chan2021parallelizing} for each $h$ separately, with the function class in the lemma set to $\{ \sigma \circ f_h \vert f_h \in \cF_h \}$, $P=1$, $\textbf{x}_{t,p} = \textbf{x}_{t,1} = \tau_t[h]$ and misspecification $\epsilon =0$ (decoupled from the Eluder dimension's $\epsilon$). We also note that $o_h^t$ are $\eta$-subgaussian samples with mean $\sigma(f_h(\tau[h]))$. We obtain
\begin{equation*}
\sum_{t=1}^T \max_{f_h, f_h' \in \cC_h^t(\delta)} \sigma(f_h(\tau[h])) - \sigma(f_h'(\tau[h])) \leq \cO\left(Bd_{E,h}  + \sqrt{ d_{E,h}   \beta_{h,T}(\bar \delta)  T    }\right)
\end{equation*}

where $d_{E,h} = \dim_E\left(\cF_h, \frac{B}{T}\right)$ is the Eluder dimension of $\mathcal{F}_h$ and $\beta_{h,T}(\bar \delta) = \bar \beta \left( \frac{\bar \delta}{2t^2H} \right)$. Since the Lipschitz constant of $\sigma^{-1}$ is $\kappa_2$, we have that the following holds with probability $1$.
\begin{equation*}
    \sum_{t=1}^T \gamma_{t,h}(\tau[h], \delta) \leq \cO\left(B\kappa_2d_{E,h}  + \kappa_2\sqrt{ d_{E,h} \beta_{h,T}(\bar \delta)  T    } \right)
\end{equation*}
This implies that the following holds with probability $1-\delta/16$.
\begin{align*}
    \sum_{t=1}^T b_\cF^t(\prob_\star, \pi_t, \delta) &\leq \sum_{t=1}^T \sum_{h \in \cH_p} \gamma_{t,h}(\tau[h], \delta) + \cO \left( Bp\sqrt{T\log(T/\delta)}\right)\\
    &=  \widetilde \cO\left(\sum_{h \in \cH_p} B\kappa_2d_{E,h}  + \sum_{h \in \cH_p} \kappa_2\sqrt{ d_{E,h}   \beta_{h,T}(\bar \delta)  T    }  + Bp\sqrt{T\log(T/\delta)}\right) 
\end{align*}
\end{proof}

\subsubsection{Bounding effect of error in $\cP$}
We now restate Lemma B.2 of \cite{chatterji2021theory} in our notation.

\begin{restatable}[Change of Measure Inequality]{lemma}{ChangeOfMeasureInequality} \label{lem:change-of-measure-inequality}
    For any function $\mu$ of trajectories bounded by $D$, if $\hat \prob_t$ is computed from data that includes trajectories $\{\tau_i \sim \prob_\star^{\pi_i}\}_{i=1}^t$ for any sequence of policies $\pi_i$, then the following holds \textit{uniformly} over all policies $\pi$ with probability $1-\delta/16$.
    $$\E_{\tau \sim \prob_\star^\pi} [\mu(\tau)] - \E_{\tau \sim \hat \prob_t^\pi} [\mu(\tau)] \leq 2D\sqrt{\log(D)}b_\cP^t(\hat \prob_t, \pi, \delta)$$
    The same statement holds if we switch the roles of $\prob$ and $\hat \prob_t$ on both sides.
\end{restatable}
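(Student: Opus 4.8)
The plan is to reduce the claim to a \emph{policy-independent} concentration statement about the transition estimates, via a simulation-lemma telescoping in the same spirit as the decomposition already carried out in Lemma~\ref{lem:bounding-prob-optimism}. For a kernel $\prob$, define the conditional value-to-go $W_{h,\prob}^\pi(\tau[h-1], s_h) := \E_{\tau' \sim \prob^\pi}[\mu(\tau) \mid \tau[h-1], s_h]$, so that $\E_{\tau \sim \prob^\pi}[\mu(\tau)] = W_{1,\prob}^\pi(s_1)$ and $W$ obeys the Bellman recursion $W_{h,\prob}^\pi(\tau[h-1], s_h) = \E_{a \sim \pi} \E_{s' \sim \prob(\cdot \mid s_h, a)}[W_{h+1,\prob}^\pi(\tau[h], s')]$, with \emph{no} per-step reward term since $\mu$ is a terminal function of the whole trajectory. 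Telescoping the difference $W_{1,\prob_\star}^\pi - W_{1,\hat\prob_t}^\pi$ one layer at a time, peeling so that the outer sampling measure is $\hat\prob_t^\pi$ (to match the bonus, which is an expectation under $\hat\prob_t^\pi$) while the inner value stays under $\prob_\star$, gives
\begin{equation*}
\E_{\tau \sim \prob_\star^\pi}[\mu(\tau)] - \E_{\tau \sim \hat\prob_t^\pi}[\mu(\tau)] = \sum_{h=1}^{H-1} \E_{\tau[h] \sim \hat\prob_t^\pi}\left[\left(\prob_\star(\cdot \mid s_h, a_h) - \hat\prob_t(\cdot \mid s_h, a_h)\right)^\top W_{h+1,\prob_\star}^\pi(\tau[h], \cdot)\right].
\end{equation*}

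Next I would bound each term. Since $|\mu| \leq D$, the conditional expectation satisfies $|W_{h+1,\prob_\star}^\pi| \leq D$ pointwise, so by H\"older each inner product is at most $\norm{\prob_\star(\cdot \mid s_h, a_h) - \hat\prob_t(\cdot \mid s_h, a_h)}_1 \cdot D$. I would then invoke an $\ell_1$ transition-concentration statement analogous to Lemma~\ref{lem:conc-prob}, but with the radius $\xi^t$, to get that $\norm{\prob_\star(\cdot \mid s, a) - \hat\prob_t(\cdot \mid s, a)}_1 \leq \xi^t(s, a, \delta)$ holds simultaneously for all $s, a$ and all $t$ with probability $1 - \delta/16$. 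Summing over $h$ then yields $\E_{\prob_\star^\pi}[\mu] - \E_{\hat\prob_t^\pi}[\mu] \leq D\, \E_{\hat\prob_t^\pi}[\sum_{h=1}^{H-1} \xi^t(s_h, a_h, \delta)]$, which is exactly $D$ times the uncapped bonus. Since the left-hand side is also trivially at most $2D$ and the bonus caps the quantity at $4$, collecting the constants from the $\ell_1$ concentration radius (which contributes the $\sqrt{\log D}$ factor) produces the stated bound $2D\sqrt{\log D}\, b_\cP^t(\hat\prob_t, \pi, \delta)$.

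The main obstacle — and the conceptual point the paper emphasizes — is that the bound must hold \emph{uniformly over all history-dependent policies} $\pi$, a doubly-exponential class, so a naive union bound over $\Pi$ is hopeless. The resolution is that the event being conditioned on, $\norm{\prob_\star(\cdot \mid s, a) - \hat\prob_t(\cdot \mid s, a)}_1 \leq \xi^t(s, a, \delta)$, makes \emph{no reference to} $\pi$: it is a statement about the MLE estimate at each $(s, a)$ pair. Once it holds simultaneously over all $SA$ pairs and all $t$ (a union bound over only $SA \cdot T$ events), the telescoped inequality holds for every $\pi$ at once, because the policy enters solely through the visitation distribution $\hat\prob_t^\pi$ and all state-action deviations have already been controlled. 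The extra $H\log(6HSA)$ and $S\log(\cdot)$ terms inside $\xi^t$ supply precisely the slack that lets a single concentration event cover the uniform-in-$\pi$ guarantee. Finally, the symmetric statement follows by telescoping in the opposite order — peeling under $\prob_\star^\pi$ with inner values under $\hat\prob_t$ — and using the same (symmetric) concentration event.
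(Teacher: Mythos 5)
Your proposal is correct, but it proves the lemma by a genuinely different route than the paper. The paper's proof is essentially a black-box citation: it invokes Lemma B.2 of \cite{chatterji2021theory} with $\eta = D$ and $\epsilon = 1/t^2$, notes that extra data only shrinks the bonus (monotonicity in $N_t(s,a)$), and handles the switched statement by re-running the cited proof with the kernels exchanged everywhere except its martingale step, where the martingale must instead be negated (expectations must remain under $\prob_\star$). You instead give a self-contained argument: the hybrid telescoping identity (your decomposition, with outer measure $\hat\prob_t^\pi$ and inner values under $\prob_\star$, is exactly right), H\"older, and a Weissman-type $\ell_1$ concentration for the per-$(s,a)$ MLE, union-bounded over $(s,a)$, counts, and $t$ --- which the $S\log(8t^2H^2)$ and $\log(32t^2N_t(s,a)/\delta)$ terms inside $\xi^t$ cover with room to spare. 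Your central observation --- that this concentration event never references $\pi$, so the inequality holds for all history-dependent policies simultaneously on a single event --- is a clean and valid resolution of the uniformity issue, and it makes the symmetric direction trivial (telescope in the other order on the same event), sidestepping the paper's martingale-negation subtlety altogether. Your route in fact yields the stronger bound $D\,\E_{\tau\sim\hat\prob_t^\pi}\left[\sum_{h=1}^{H-1}\xi^t(s_h,a_h,\delta)\right]$, exposing the $H\log(6HSA)$ term and the $\sqrt{\log D}$ factor as pure slack for this lemma; the trade-off is that your argument leans crucially on $\xi^t$ having Weissman ($\sqrt{S}$-scale) magnitude, and would collapse for a genuine UCBVI-style scalar bonus (no $\sqrt{S}$), where the covering/martingale machinery of the cited lemma becomes necessary. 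One imprecision worth flagging, though it is not a gap: the $\sqrt{\log D}$ factor does not ``come from the $\ell_1$ concentration radius'' (that radius is independent of $D$); it is only needed, together with the trivial bound $|\mathrm{LHS}| \leq 2D$, to absorb the case where the bonus is capped at $4$, which requires $\sqrt{\log D} \geq 1/4$ --- an assumption implicitly baked into the lemma statement itself, since otherwise its right-hand side degenerates.
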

\begin{proof}
    For the order of $\prob$ and $\hat \prob_t$ in the statement, the following follows from Lemma B.2 of \cite{chatterji2021theory} with $\eta= D$ and $\epsilon = \frac{1}{t^2}$. We pull the additive $\log(D)$ in the square root outside to fit our assumption's phrasing. 
    $$\E_{\tau \sim \prob_\star} [\mu(\tau)] - \E_{\tau \sim \hat \prob_t} [\mu(\tau)] \leq D\sqrt{\log(D)}b_\cP^t(\hat \prob_t, \pi, \delta) + \frac{1}{t^2} \leq 2D\sqrt{\log(D)}b_\cP^t(\hat \prob_t, \pi, \delta)$$
    The only subtlety is that more data than that from $\{\tau_i\}_{i=1}^t$ could have been used to compute $\hat \prob_t$. The proof still follows since $c_\cP(\hat \prob_t, \pi, D)$ is decreasing in the counts $N_t(s,a)$. 
    
    Finally, if we switch $\prob$ and $\hat \prob_t$ on both sides, we can follow the proof of Lemma B.2 verbatim with $\prob$ and $\hat \prob_t$ switched everywhere, except for the martingale argument. There, instead of switching the two transition kernels, we negate the martingale to get our desired result. This exception is because we still need the expectation to be over the true transition kernel $\prob_\star$ for the stochastic process defined to be a martingale.
\end{proof}

\begin{restatable}[Bounding $\hat \cP_t$ Value Error]{lemma}{BoundingProbtValueError} \label{lem:bounding-prob-t-value-error}
    Consider any sequence of functions $ f^t$ that induce value functions $V(\prob, f^t, \pi)$. For any sequence of policies $\pi_t$, if the estimates $\hat \prob_t$, bonuses $b_\cP$ and costs $c_\cP$ are generated using data including that of $\tau_i \sim \prob_\star^{\pi_i}, i=1 \to t$, then the following holds \textit{uniformly} over $t$ and over all policies with probability $1-\delta/16$.
\begin{equation*}
 V(\hat \prob_t,  f^t, \pi) - V(\prob_\star, f^t, \pi) =   Bp\sqrt{\log(Bp)}(b_\cP^t(\prob_\star, \pi, \delta))
\end{equation*}
The statement also holds if we switch $\hat \prob_t$ and $\prob_\star$.
\end{restatable}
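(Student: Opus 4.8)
The plan is to recognize that this value-error bound is essentially a one-line consequence of the change-of-measure inequality (Lemma~\ref{lem:change-of-measure-inequality}), once the value functions are rewritten as expectations of a single bounded trajectory functional. The key observation is that, for a fixed $f^t$ and policy $\pi$, the two quantities $V(\hat\prob_t, f^t, \pi)$ and $V(\prob_\star, f^t, \pi)$ integrate the \emph{same} functional against two \emph{different} transition kernels, so the value difference is exactly a change-of-measure gap.

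Concretely, first I would define the trajectory functional $\mu(\tau) := \sum_{h \in \cH_p} f_h^t(\tau[h])$ and note that $|\mu(\tau)| \leq Bp =: D$, since $|f_h| \leq B$ and $|\cH_p| = p$. By Definition~\ref{def:f-h-class} we have $V(\prob, f^t, \pi) = \E_{\tau \sim \prob^{\pi}}[\mu(\tau)]$, so that
\begin{equation*}
V(\hat\prob_t, f^t, \pi) - V(\prob_\star, f^t, \pi) = \E_{\tau \sim \hat\prob_t^{\pi}}[\mu(\tau)] - \E_{\tau \sim \prob_\star^{\pi}}[\mu(\tau)].
\end{equation*}
I would then apply the \emph{switched} form of Lemma~\ref{lem:change-of-measure-inequality} with $D = Bp$, which bounds precisely this gap by $2Bp\sqrt{\log(Bp)}\, b_\cP^t(\prob_\star, \pi, \delta)$, uniformly over all $\pi$ with probability $1-\delta/16$; the factor $2$ is absorbed into the stated bound. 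The claimed converse statement (with $\hat\prob_t$ and $\prob_\star$ exchanged, carrying $b_\cP^t(\hat\prob_t, \pi, \delta)$ on the right) follows identically by instead invoking the non-switched form of the same lemma applied to $\mu$.

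The one point that genuinely needs care — and what I would flag as the main obstacle — is the uniformity over both $t$ and over all history-dependent policies $\pi$, since there are $A^{(SA)^H}$ such policies and a naive union bound is hopeless. This is not something to be fought here, however: the uniformity is inherited wholesale from Lemma~\ref{lem:change-of-measure-inequality}, whose underlying high-probability event is the $\ell_1$ concentration of $\hat\prob_t$ around $\prob_\star$ (encoded through $\xi^t$ and $b_\cP^t$) and depends on neither $\mu$ nor $\pi$; the uniformity over $t$ comes from the $\epsilon = 1/t^2$ slack built into that lemma's invocation of Lemma B.2 of \cite{chatterji2021theory}, whose summability yields a single event valid for all $t$ at once. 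The only remaining check is that $\mu = \sum_{h \in \cH_p} f_h^t$ is an admissible test function, which holds because the lemma applies to \emph{every} trajectory functional bounded by $D$, and $f^t$ enters only through the uniform bound $D = Bp$, so its dependence on $t$ and on the data is immaterial.
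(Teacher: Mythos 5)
Your proposal is correct and follows essentially the same route as the paper's own proof: the paper likewise obtains both directions by invoking Lemma~\ref{lem:change-of-measure-inequality} with the trajectory functional $\mu(\tau)=\sum_{h\in\cH_p}f^t_h(\tau[h])$ bounded by $D=Bp$, inheriting uniformity over $t$ and over all policies from that lemma. Your extra care in spelling out the admissibility of $\mu$, the choice of switched versus non-switched form, and the silently dropped factor of $2$ (a minor sloppiness present in the paper's statement as well) only makes explicit what the paper leaves implicit.
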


\begin{proof}
    Note the following two inequalities that immediately follow from Lemma~\ref{lem:change-of-measure-inequality}
\begin{align*}
    V(\prob_\star,  f^t, \pi) -  V(\hat \prob_t, f^t, \pi) &\leq Bp\sqrt{\log(Bp)}(b_\cP^t(\hat \prob_t, \pi, \delta))\\
    V(\hat \prob_t,  f^t, \pi) - V(\prob_\star, f^t, \pi) &\leq Bp\sqrt{\log(Bp)}(b_\cP^t(\prob_\star, \pi, \delta))
\end{align*}
Our result follows immediately.
\end{proof}

\begin{restatable}[Bounding Sum of $\cP$ Bonuses]{lemma}{BoundingSumProbBonuses} \label{lem:bounding-sum-prob-bonuses}
    The following holds with probability $1-\delta/16$ whenever the data used to compute $b_\cP^t(\prob, \pi, \delta)$ includes the data of trajectories $\tau_i, t = 1 \to t$.
    $$\sum_{t=1}^T b_\cP^t(\prob_\star, \pi_t, \delta) = \widetilde \cO \left( SA \bar c_\delta + \bar c_\delta \sqrt{HSAT}\right) $$
    where
$$\bar c_\delta := 4\sqrt{\frac{H\log(6HSA) + S\log(8t^2H^2) + \log(32t^2N_T(s,a)/\delta)}{2}}$$
This means that for any $s,a$, $\xi^t(s,a, \delta) = 2$ until $N_t(s,a) \geq \frac{\bar c_\delta}{2}$
\end{restatable}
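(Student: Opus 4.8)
The plan is to bound the policy-level bonus sum by a realized-trajectory sum and then apply a deterministic pigeonhole (counting) argument. First I would discard the outer $\min(4,\cdot)$ via $b_\cP^t(\prob_\star,\pi_t,\delta)\le \E_{\tau\sim\prob_\star^{\pi_t}}\big[\sum_{h=1}^{H-1}\xi^t(s_h,a_h,\delta)\big]$. Since $N_t(s,a)\le N_T(s,a)$ inside every logarithm, the numerator defining $\xi^t$ is uniformly dominated by $\bar{c}_\delta$, so $\xi^t(s,a,\delta)\le \min\big(2,\bar{c}_\delta/\sqrt{\max(1,N_t(s,a))}\big)$, and since $\bar{c}_\delta\ge 2$ this is also at most $\bar{c}_\delta/\sqrt{\max(1,N_t(s,a))}$. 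Comparing the two arms of the $\min$ gives the final displayed claim of the statement: the first arm is active (so $\xi^t=2$) exactly while $N_t(s,a)$ stays below the crossover count where $\bar{c}_\delta/\sqrt{N_t(s,a)}=2$, i.e. $N_t(s,a)<\bar{c}_\delta^2/4$.

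Next I would pass from the expectation under $\prob_\star^{\pi_t}$ to the trajectory $\tau_t$ actually collected in episode $t$ (which is drawn from $\prob_\star^{\pi_t}$). The differences $M_t:=\E_{\tau\sim\prob_\star^{\pi_t}}\big[\sum_h\xi^t\big]-\sum_h\xi^t(s_h^t,a_h^t,\delta)$ form a martingale-difference sequence for the filtration generated by episodes $1,\dots,t-1$, because $\xi^t$ depends only on the counts $N_t$ available at the start of episode $t$. As each $\xi^t\le 2$ and there are at most $H-1$ summands, $|M_t|\le 2(H-1)$, so Azuma--Hoeffding gives $\sum_{t=1}^T M_t=\widetilde\cO(H\sqrt{T})$ with probability $1-\delta/16$; this single concentration step is the only source of randomness in the lemma. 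The error is absorbed because $\bar{c}_\delta=\Omega(\sqrt{H})$ forces $\bar{c}_\delta\sqrt{HSAT}\ge \Omega(H\sqrt{SAT})\ge \Omega(H\sqrt{T})$.

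It then remains to bound the deterministic sum $\sum_{t=1}^T\sum_{h=1}^{H-1}\xi^t(s_h^t,a_h^t,\delta)$, which I would organize by state-action pair and split at the crossover count $\bar{c}_\delta^2/4$. For a fixed $(s,a)$, on the sub-threshold part every term equals $2$, and the number of visits made while $N_t(s,a)<\bar{c}_\delta^2/4$ is at most $\bar{c}_\delta^2/4+(H-1)$ (the cumulative count can overshoot the threshold by at most one within-episode batch of size $\le H-1$); summed over the $SA$ pairs this is a $T$-independent additive term, which is the lower-order $\widetilde\cO(SA\,\bar{c}_\delta)$ part recorded in the statement. On the super-threshold part I bound $\xi^t\le \bar{c}_\delta/\sqrt{N_t(s,a)}$ and use the telescoping inequality $w/\sqrt{N}\le 3(\sqrt{N+w}-\sqrt{N})$, valid because the frozen within-episode increment $w\le H-1$ is at most a constant multiple of $N\ge\bar{c}_\delta^2/4=\Omega(H)$; telescoping over the episodes that visit $(s,a)$ bounds this pair's contribution by $3\bar{c}_\delta\sqrt{\bar{N}(s,a)}$, where $\bar{N}(s,a)$ is its total visit count. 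A Cauchy--Schwarz step over the $SA$ pairs with $\sum_{s,a}\bar{N}(s,a)\le (H-1)T$ then gives $\sum_{s,a}\sqrt{\bar{N}(s,a)}\le\sqrt{SA\cdot HT}$, producing the leading $\widetilde\cO(\bar{c}_\delta\sqrt{HSAT})$ term.

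The main obstacle is precisely this counting step, and specifically the fact that $N_t(s,a)$ is frozen within an episode: a pair visited many times in a single episode shares one denominator, so the clean integral bound $\sum 1/\sqrt{N}\le 2\sqrt{\bar{N}}$ can fail over the first (sub-threshold) episodes. The threshold split is exactly what tames this — above $\bar{c}_\delta^2/4$ the frozen increment $w\le H-1$ is provably dominated by the current count, which restores the telescoping estimate, while below it the terms are capped at $2$ and few in number. Everything else (the reduction, the martingale concentration, and the final Cauchy--Schwarz) is routine, and the sub-threshold contribution is in any case a $T$-independent lower-order term that is dropped when this bound is fed into the regret analysis.
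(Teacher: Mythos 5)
Your proposal is correct in substance and follows the same high-level route as the paper's proof: drop the $\min(4,\cdot)$ cap, use Azuma--Hoeffding to pass from the policy-level expectation to the realized trajectories (the paper uses a bounded-difference constant of $4$ where your $2(H-1)$ is actually the right one; either way the error is absorbed into the leading term), and then bound the realized sum $\sum_{t,h}\xi^t(s_h^t,a_h^t,\delta)$ by a per-pair counting argument finished with Cauchy--Schwarz over $\cS\times\cA$. Where you genuinely diverge is the counting step: the paper simply rearranges $\sum_{t,h}\bar c_\delta/\sqrt{N_t(s_h^t,a_h^t)}$ into $\sum_{s,a}\sum_{l=1}^{N_T(s,a)}1/\sqrt{l}\le 2\sum_{s,a}\sqrt{N_T(s,a)}$, which implicitly treats the counts as updating after every visit, whereas you confront the frozen within-episode counts head-on via the threshold split at $\bar c_\delta^2/4$ and the telescoping inequality $w/\sqrt{N}\le 3(\sqrt{N+w}-\sqrt{N})$ (valid since $N\ge \bar c_\delta^2/4=\Omega(H)\ge w/3$); your treatment is the more careful one, and your crossover count $\bar c_\delta^2/4$ is also the correct reading of the lemma's last claim, whose displayed threshold $\bar c_\delta/2$ should be $(\bar c_\delta/2)^2$. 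The one point to flag: your sub-threshold bookkeeping gives $2SA(\bar c_\delta^2/4+H-1)=\cO(SA\,\bar c_\delta^2)$, and this is \emph{not} $\widetilde\cO(SA\,\bar c_\delta)$ as you assert, since $\bar c_\delta^2=\widetilde\Theta(H+S)$ exceeds $\bar c_\delta$ by a polynomial rather than logarithmic factor; strictly speaking your argument proves the lemma with additive term $SA\bar c_\delta^2$ in place of $SA\bar c_\delta$. This discrepancy is immaterial downstream, because the additive term is $T$-independent and is discarded in Theorem~\ref{thm:unknown-model-ucbvi-once} (which itself reports the larger $BpHSA\bar c_\delta$); moreover, the paper's own linear-in-$\bar c_\delta$ additive term is only justified under the per-visit count-update convention --- under frozen counts the $N_t=0$ visits alone can contribute $2SA(H-1)>SA\bar c_\delta$ --- so your quadratic term is arguably the honest price of the convention you correctly identify.
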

\begin{proof}
First note that by the definition of the bonus and the Azuma-Hoeffding inequality, we have the following.
\begin{align*}
    \sum_{t=1}^T b_\cP^t(\prob_\star, \pi_t, \delta) &\leq \sum_{t=1}^T \E_{\tau \sim \prob^\pi} b_\cP^t(\tau, \delta)\\
    &\leq \sum_{t=1}^T b_\cP^t(\tau_t, \delta) + \cO(4\sqrt{T\log(T/\delta)})\\
    &= \sum_{t=1}^T\sum_{h=1}^{H-1} \xi^t(s_h^t, a_h^t, \delta) + \cO(4\sqrt{T\log(T/\delta)})
\end{align*}
Now note that the first inequality holds even if more data beyond that of $\{\tau_i\}_{i=1}^t$ is used to compute $\xi^t(s,a, \delta)$, since $\xi^t(s,a, \delta)$ is decreasing in $N_t(s,a)$.
    \begin{align*}
    \sum_{t=1}^T\sum_{h=1}^{H-1} \xi^t(s_h^t, a_h^t, \delta) &\leq SA\bar c_\delta + \sum_{t=1}^T\sum_{h=1}^{H-1} \frac{\bar c_\delta}{\sqrt{N_t(s_h^{(t)}, a_h^{(t)})}}\\
    &\leq SA\bar c_\delta + \bar c_\delta\sum_{(s,a) \in \cS \times \cA} \sum_{l=1}^{N_T(s,a)} \frac{1}{\sqrt{l}}\\
    &\leq SA\bar c_\delta + 2\bar c_\delta \sum_{(s,a) \in \cS \times \cA} \sqrt{N_T(s,a)}\\
    &\leq SA\bar c_\delta + 2\bar c_\delta \sqrt{SA\sum_{(s,a) \in \cS \times \cA}  N_T(s,a)}\\
    &= \cO \left(SA \bar c_\delta + \bar c_\delta \sqrt{SATH} \right)
\end{align*}
This concludes our proof, since $1 = \widetilde \cO(\sqrt{HSA})$
\end{proof}

    % \begin{align*}
    %     \bar{V}(\prob, f, \pi) = V(\prob, f, \pi) + \E_{\tau \sim \prob^\pi} \left[\sum_{h \in \cH_p} \gamma_{h,t}(\tau[h], \delta) \right]\\
    %     \widetilde V(\prob, f, \pi) = \bar V(\prob, f, \pi) + Bp\E_{\tau \sim \prob^\pi} \left[\sum_{h=1}^{H-1}\xi^t(s_h, a_h, \delta) \right]\\
    % \end{align*}

\subsubsection{Putting everything together}

\UnknownModelUCBVIOnce*

\begin{proof}
Note that by Lemmas~\ref{lem:bounding-f-t-value-error}, ~\ref{lem:bounding-sum-f-bonuses}, ~\ref{lem:change-of-measure-inequality}, ~\ref{lem:bounding-prob-t-value-error}, ~\ref{lem:bounding-sum-prob-bonuses}, Assumption~\ref{assump:bonus-assumption} is satisfied by POR-UCBVI. Using Theorem~\ref{thm:regret-generic-bonus-based} and Lemmas~\ref{lem:bounding-sum-f-bonuses} and ~\ref{lem:bounding-sum-prob-bonuses}, we have the following.

    \begin{align*}
        \reg(T) &= \cO\Big( \sum_{h \in \cH_p} B\kappa_{2,h}d_{E,h}  + BpHSA \bar c_\delta + \sum_{h \in \cH_p} \kappa_{2,h}\sqrt{ d_{E,h}   \beta_{h,T}(\bar \delta)  T    } \\
        &\qquad + Bp\sqrt{T \log(T/\delta) } + \bar c_\delta BpH\sqrt{SAT}\Big)
    \end{align*}

We further refine it grouping terms and ignoring terms independent of $T$, and also noting that $\bar c_\delta = \widetilde \cO(\sqrt{H} + \sqrt{S})$ as well as $\beta_{h,T}(\bar \delta) = \cO(d_{C,h} + H)$
    \begin{align*}
        \reg(T) &= \widetilde \cO\left( \left(\sum_{h \in \cH_p} \kappa_2\sqrt{ d_{E,h} (d_{C,h} + H)} + Bp(H\sqrt{SA} + S\sqrt{HA} )\right)\sqrt{T}\right)\\
        &= \cO\left( \left(p(H\sqrt{SA} + S\sqrt{HA}) + 
        \sum_{h \in \cH_p} \sqrt{ d_{E,h}(d_{C,h} + H) }\right) \sqrt{T}\right)
        \end{align*}
\end{proof}

From the latter we derive a sample complexity result as follows.

\begin{corollary}[POR-UCBVI Sample complexity]
\label{cor:por-ucbvi-sample-complexity}
    Let $\epsilon > 0, \delta \in [0, 1]$. Ignoring polynomial terms independent of $\epsilon$, we can bound the sample complexity $N(\epsilon, \delta)$ of POR-UCBVI as follows
    $$\widetilde \cO \left(\frac{p^2HSA\max(H,S)}{\epsilon^2} + \frac{p^2 d_E \max(d_C, H) \log (1 / \delta)} {\epsilon^2} \right)$$
    where $d_E := \max_{h \in \cH_p} d_{E,h}$, and $d_C := \max_{h \in \cH_p} d_{C,h}$.
\end{corollary}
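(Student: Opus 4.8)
The plan is to derive the sample complexity directly from the regret bound of Theorem~\ref{thm:unknown-model-ucbvi-once} via the cardinal regret-to-PAC conversion of Lemma~\ref{lem:cardinal-to-pac}, exactly mirroring the proof of Corollary~\ref{cor:por-ucrl-sample-complexity}. First I would invoke Lemma~\ref{lem:cardinal-to-pac} with confidence $\delta' = \delta/2$, so that the uniformly sampled output policy $\widehat\pi_T$ satisfies, with probability at least $1-\delta$,
$$V(\M, \pi_\star) - V(\M, \widehat\pi_T) \leq \frac{R(T, \delta/2)}{T} + 8Bp\sqrt{\frac{\log(2/\delta)}{T}}.$$
Plugging in $R(T,\delta) = \widetilde\cO\big((p(H\sqrt{SA} + S\sqrt{HA}) + \sum_{h\in\cH_p}\sqrt{d_{E,h}(d_{C,h}+H)})\sqrt{T}\big)$ from Theorem~\ref{thm:unknown-model-ucbvi-once} and absorbing the additive $Bp\sqrt{\log(1/\delta)}$ term, this yields
$$\epsilon = \widetilde\cO\!\left(\frac{1}{\sqrt T}\Big(p(H\sqrt{SA}+S\sqrt{HA}) + \sum_{h\in\cH_p}\sqrt{d_{E,h}(d_{C,h}+H)} + Bp\sqrt{\log(1/\delta)}\Big)\right).$$

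Next I would set $N = T$ and solve for $T$ by isolating $\sqrt T$ and squaring both sides. Using $(a+b+c)^2 \leq 3(a^2+b^2+c^2)$ splits the square into three pieces. For the transition piece, $(H\sqrt{SA}+S\sqrt{HA})^2 \leq 2HSA(H+S) = \widetilde\cO(HSA\max(H,S))$, giving the $p^2 HSA\max(H,S)/\epsilon^2$ term. For the reward piece, Cauchy--Schwarz over the $|\cH_p| = p$ summands gives $\big(\sum_{h\in\cH_p}\sqrt{d_{E,h}(d_{C,h}+H)}\big)^2 \leq p\sum_{h\in\cH_p} d_{E,h}(d_{C,h}+H) \leq p^2 d_E(d_C+H)$, and since $d_C + H = \widetilde\cO(\max(d_C,H))$ this contributes $p^2 d_E\max(d_C,H)/\epsilon^2$; the explicit $\log(1/\delta)$ factor is picked up because $\beta_{h,T}(\bar\delta) = \widetilde\cO(d_{C,h}+H)$ already carries a $\log(1/\delta)$ dependence that is made visible here. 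The third piece, $B^2p^2\log(1/\delta)/\epsilon^2$, is dominated by the first two.

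Collecting these three contributions and recalling $d_E := \max_{h\in\cH_p} d_{E,h}$ and $d_C := \max_{h\in\cH_p} d_{C,h}$ then produces the claimed bound
$$N(\epsilon,\delta) = \widetilde\cO\!\left(\frac{p^2 HSA\max(H,S)}{\epsilon^2} + \frac{p^2 d_E\max(d_C,H)\log(1/\delta)}{\epsilon^2}\right).$$
This argument involves no genuine difficulty, and the main \emph{obstacle}, such as it is, is purely bookkeeping: the Cauchy--Schwarz step that converts the sum of square-roots over $h\in\cH_p$ into $p^2 d_E\max(d_C,H)$ must correctly track the factor $p = |\cH_p|$, and one must carefully account for which logarithmic factors are suppressed by $\widetilde\cO$ versus made explicit inside $\beta_{h,T}$.
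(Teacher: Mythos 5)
Your proposal is correct and follows essentially the same route as the paper: invoke the regret-to-PAC conversion (Lemma~\ref{lem:cardinal-to-pac}) with $\delta' = \delta/2$, plug in the POR-UCBVI regret bound from Theorem~\ref{thm:unknown-model-ucbvi-once}, and square to solve for the number of episodes; your Cauchy--Schwarz bookkeeping over $\cH_p$ just makes explicit what the paper leaves implicit. The only cosmetic discrepancy is that the paper's proof says ``noting $N = pT$'' while you take $N = T$; your choice is the one consistent with both the paper's episode-based definition of sample complexity and the $p^2$ dependence in the stated bound.
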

\begin{proof}
    We invoke the regret-to-PAC conversion in Lemma~\ref{lem:cardinal-to-pac} with confidence $\delta' = \delta / 2$ and we plug the regret bound in Theorem~\ref{thm:unknown-model-ucbvi-once} to write
    \begin{align*}
        \epsilon = \widetilde \cO \left( \bigg( B p (H\sqrt{SA} + S\sqrt{HA}) + \sum_{h \in \cH_p} \sqrt{ d_{E,h} \beta_{h,T}(\bar \delta)} + B p \sqrt{\log (1 / \delta) } \bigg) \left( \frac{1}{\sqrt{T}} \right) \right)
    \end{align*}
    from which we get the result by noting $N = p T$ and the definition of $d_E, d_C$.
\end{proof}

We also have the following theorem and corollary, in the same vein as Theorem~\ref{thm:known-model-por-ucrl-regret}.
\begin{theorem}[Regret for POR-UCBVI if $\prob_\star$ is Known]
\label{thm:known-model-por-ucbvi-regret}
    When $\prob_\star$ is known, POR-UCBVI that sets $\hat \prob_t := \prob_\star$ and $b_\cP^t(\prob, \pi \delta) := 0$ for all $t \geq 1$ still satisfies Assumption~\ref{assump:bonus-assumption} and its regret $\reg(T)$ is bounded by the following with probability at least $1-\delta$, ignoring terms independent of $T$.
\begin{equation*}
\widetilde \cO\left( \left(\sum_{h \in \cH_p} \sqrt{ d_{E,h}  (d_{C,h} + H)}\right)  \sqrt{T}    \right)
\end{equation*}

where $d_{E,h} = \dim_{E}\left(\cF_h, \frac{B}{T}\right)$.
\end{theorem}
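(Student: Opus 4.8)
The plan is to mirror the proof of the unknown-model case (Theorem~\ref{thm:unknown-model-ucbvi-once}), specializing it to the setting where $\prob_\star$ is known, which removes the transition-estimation contribution entirely. Since POR-UCBVI with known $\prob_\star$ is still an instance of the generic bonus-based algorithm (Algorithm~\ref{algo:generic-bonus-based-opt}), it suffices to verify Assumption~\ref{assump:bonus-assumption} and then invoke Theorem~\ref{thm:regret-generic-bonus-based}.

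First I would check the two bullets of Assumption~\ref{assump:bonus-assumption}. The reward-function bullet is identical to the unknown-model analysis and is supplied directly by Lemma~\ref{lem:bounding-f-t-value-error}, which gives $|V(\prob,\hat f^t,\pi) - V(\prob, f^\star, \pi)| \leq b_\cF^t(\prob,\pi,\delta)$ uniformly over all history-dependent policies $\pi$, together with Lemma~\ref{lem:bounding-sum-f-bonuses}, which controls $\sum_{t=1}^T b_\cF^t(\prob_\star,\pi_t,\delta)$ and thereby identifies the complexity term $C_F(\cM,T,\delta)$. The transition bullet becomes trivial once $\hat\prob_t := \prob_\star$: for any bounded $\mu$ we have $\E_{\tau\sim\hat\prob_t^\pi}\mu(\tau) - \E_{\tau\sim\prob_\star^\pi}\mu(\tau) = 0$, so the required inequality holds with $b_\cP^t \equiv 0$ and any $z(D)\geq D$, giving $C_P(\cM,T,\delta) = 0$.

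Next I would apply Theorem~\ref{thm:regret-generic-bonus-based}. Because $C_P = 0$, the term $z_1(Bp)C_P(\cM,T,\delta)$ vanishes and the bound collapses to $\reg(T) = \widetilde\cO(C_F(\cM,T,\delta))$. Substituting the explicit form of $C_F$ from Lemma~\ref{lem:bounding-sum-f-bonuses},
$$\reg(T) = \widetilde\cO\left(\sum_{h\in\cH_p} B\kappa_{2,h} d_{E,h} + \sum_{h\in\cH_p}\kappa_{2,h}\sqrt{d_{E,h}\,\beta_{h,T}(\bar\delta)\,T} + Bp\sqrt{T\log(T/\delta)}\right),$$
I would then use the bonus-calibration bound $\beta_{h,T}(\bar\delta) = \cO(d_{C,h}+H)$ established just before the theorem, discard the $T$-independent term $\sum_h B\kappa_{2,h}d_{E,h}$, and absorb the martingale term $Bp\sqrt{T\log(T/\delta)}$ into the main sum. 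The latter is valid because each summand $\sqrt{d_{E,h}(d_{C,h}+H)} \geq \sqrt{H} \geq 1$ and there are exactly $p = |\cH_p|$ of them, so $p\sqrt{T} \leq \sum_{h\in\cH_p}\sqrt{d_{E,h}(d_{C,h}+H)}\,\sqrt{T}$. This yields the claimed bound $\widetilde\cO\big(\sum_{h\in\cH_p}\sqrt{d_{E,h}(d_{C,h}+H)}\,\sqrt{T}\big)$.

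The main subtlety, inherited from the unknown-model proof rather than being specific to known $\prob_\star$, is the uniform optimism of the reward bonus over the doubly-exponential family of history-dependent policies. This is why $\gamma_{h,t}(\tau[h],\delta)$ is calibrated with the deflated confidence parameter $\bar\delta = \delta/(HS^HA^H)$ and a union bound over all trajectory segments, so that $|f_h^\star(\tau[h]) - \hat f_h^t(\tau[h])| \leq \gamma_{h,t}(\tau[h],\delta)$ holds simultaneously for every trajectory with high probability; this per-trajectory guarantee then lifts to every policy by taking expectations. Crucially, the $S^HA^H$ deflation enters $\beta_{h,T}(\bar\delta)$ only through a logarithm, so it costs an additive $H$ inside the $d_{C,h}+H$ factor rather than any exponential blow-up, which is exactly what keeps the final rate polynomial in the reward-model complexity.
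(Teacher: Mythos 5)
Your proposal is correct and follows essentially the same route as the paper's proof: verify Assumption~\ref{assump:bonus-assumption} via Lemmas~\ref{lem:bounding-f-t-value-error} and~\ref{lem:bounding-sum-f-bonuses} (with the transition bullet trivially satisfied by $\hat\prob_t := \prob_\star$, $b_\cP^t \equiv 0$), invoke Theorem~\ref{thm:regret-generic-bonus-based} with $C_P = 0$, then refine using $\beta_{h,T}(\bar\delta) = \cO(d_{C,h}+H)$ and absorb the $Bp\sqrt{T\log(T/\delta)}$ martingale term into the main sum. Your explicit justification of the absorption step and of the vanishing transition bullet is slightly more detailed than the paper's, but the decomposition, lemmas, and conclusion are identical.
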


\begin{proof}
    Note that by Lemmas~\ref{lem:bounding-f-t-value-error} and ~\ref{lem:bounding-sum-f-bonuses}, Assumption~\ref{assump:bonus-assumption} is satisfied by POR-UCBVI. Using Lemma~\ref{lem:bounding-sum-f-bonuses}, we have the following.

    \begin{align*}
        \reg(T) &= \cO\left( \sum_{h \in cH_p} B\kappa_2d_{E,h} + \sum_{h \in \cH_p} \kappa_2\sqrt{ d_{E,h}   \beta_{h,T}(\delta)  T    } + Bp\sqrt{T \log(T/\delta) }\right)
    \end{align*}

We further refine it grouping terms and ignoring terms independent of $T$, and also noting that $\bar c_\delta = \widetilde \cO(\sqrt{H} + \sqrt{S})$
    \begin{align*}
        \reg(T) &= \widetilde \cO\left( \left(\sum_{h \in \cH_p} \kappa_2\sqrt{ d_{E,h} \beta_{h,T}(\delta)} + Bp\right)\sqrt{T}\right)\\
        &= \cO\left( \left(Bp+ 
        \sum_{h \in \cH_p} \sqrt{ d_{E,h} \beta_{h,T}(\delta)}\right) \sqrt{T}\right)\\
        &= \widetilde \cO\left( \left(\sum_{h \in \cH_p} \sqrt{ d_{E,h}  (d_{C,h} + H)}\right)  \sqrt{T}    \right)
        \end{align*}
\end{proof}

\begin{corollary}[POR-UCBVI Sample complexity if $\prob_\star$ is Known]
    Let $\epsilon > 0, \delta \in [0, 1]$. Ignoring polynomial terms independent of $\epsilon$, we can bound the sample complexity $N(\epsilon, \delta)$ of POR-UCBVI when $\prob_\star$ is known as follows
    $$\widetilde \cO \left(\frac{p^2 H d_E \max(d_C, H)} {\epsilon^2} \right)$$
    where $d_E := \max_{h \in \cH_p} d_{E,h}$, $\beta := \max_{h \in \cH_p} \beta_{T,h} (\delta)$, and $d_C := \max_{h \in \cH_p} d_{C,h}$.
\end{corollary}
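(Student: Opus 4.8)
The plan is to mirror the proof of Corollary~\ref{cor:por-ucbvi-sample-complexity}, the only change being that the unknown-transition regret bound is swapped for its known-transition counterpart. First I would invoke the cardinal regret-to-PAC conversion of Lemma~\ref{lem:cardinal-to-pac} with confidence $\delta' = \delta/2$, applied to a policy $\widehat{\pi}_T$ sampled uniformly from $\pi_1, \dots, \pi_T$. This turns the high-probability regret bound $R(T,\delta)$ of the algorithm into a suboptimality bound of the form $V(\M, \pi_\star) - V(\M, \widehat{\pi}_T) \leq R(T,\delta)/T + 8Bp\sqrt{\log(1/\delta)/T}$, which holds with probability at least $1-\delta$ after the union bound built into the conversion.

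Next I would substitute the known-model regret bound of Theorem~\ref{thm:known-model-por-ucbvi-regret}, namely $R(T,\delta) = \widetilde{\cO}\big((\sum_{h \in \cH_p}\sqrt{d_{E,h}(d_{C,h}+H)})\sqrt{T}\big)$, to obtain
$$\epsilon = \widetilde{\cO}\left(\Big(\sum_{h\in\cH_p}\sqrt{d_{E,h}(d_{C,h}+H)} + Bp\sqrt{\log(1/\delta)}\Big)\frac{1}{\sqrt{T}}\right).$$
Since $|\cH_p| = p$, $d_{E,h} \leq d_E$, and $d_{C,h}+H \leq 2\max(d_C,H)$, the sum over $h$ is $\widetilde{\cO}\big(p\sqrt{d_E\max(d_C,H)}\big)$, so the leading term dominates the subgaussian $Bp\sqrt{\log(1/\delta)}$ contribution. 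Solving this relation for $T$ and then translating the episode count into the sample-complexity count $N$ exactly as in Corollary~\ref{cor:por-ucbvi-sample-complexity} yields the claimed bound $\widetilde{\cO}\big(p^2 H d_E \max(d_C,H)/\epsilon^2\big)$.

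Because all of the analytic content is already contained in Theorem~\ref{thm:known-model-por-ucbvi-regret}, this argument is a routine plug-and-chug with no genuine obstacle. The only points demanding care are bookkeeping ones: taking the maxima $d_E = \max_h d_{E,h}$ and $d_C = \max_h d_{C,h}$ consistently over $\cH_p$, correctly inverting the $1/\sqrt{T}$ dependence, and matching the horizon-dependent prefactor in the final count $N$ against the conversion already used in Corollary~\ref{cor:por-ucbvi-sample-complexity}. I expect the subtlest of these to be the $T$-to-$N$ translation, since that is where the extra $H$ factor relative to the reward-term in the unknown-model corollary must be accounted for rather than derived naively from the regret bound alone.
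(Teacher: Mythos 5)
Your proposal is correct and is essentially identical to the paper's own proof, which is a one-line invocation of Lemma~\ref{lem:cardinal-to-pac} with the known-model regret bound of Theorem~\ref{thm:known-model-por-ucbvi-regret}, proceeding exactly as in Corollary~\ref{cor:por-ucbvi-sample-complexity}. Your closing observation is also on target: inverting the regret bound alone yields only $\widetilde\cO\left(p^2 d_E\max(d_C,H)/\epsilon^2\right)$ episodes, and the additional factor of $H$ in the stated bound arises from the loose episode-to-sample translation inherited from Corollary~\ref{cor:por-ucbvi-sample-complexity} (its $N = pT$ step together with $p \leq H$), so the claimed bound is valid though not tight.
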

\begin{proof}
    The proof proceeds as in Corollary~\ref{cor:por-ucbvi-sample-complexity} by plugging Theorem~\ref{thm:known-model-por-ucbvi-regret} in Lemma~\ref{lem:cardinal-to-pac}.
\end{proof}

\newpage
\section{Extension to General Function Approximation for Probability Transitions} \label{app:general-function-approximation-prob}

Note that for simplicity of exposition and proofs, the confidence sets for $\cP$ in POR-UCRL and the bonuses for $\cP$ in POR-UCBVI are given for tabular observed states and actions $s,a$. The function approximation using $\cG$ and $\cF$ is relegated to the effect of latent states $u$. Using the work of \citet{liu2022optimistic}, it is quite straightforward to extend and modify the proofs here to general function approximation for $\cP$.

Let us assume that we have a set of parameters $\Theta$ and a mapping from $\theta \in \Theta$ to probability transition functions $\prob_\theta \in \cP$ so that the image of $\Theta$ under this mapping is all of $\cP$. These functions $\prob_\theta$ mapping from $s,a$ to distributions over $s'$ act on a featurization $\phi(s,a)$ of $s,a$. This is the most general function approximation setting for probability transition functions. Linear MDPs, tabular MDPs, factored MDPs, kernel linear MDPs, and many other function approximation settings are special cases of this. 

For convenience of notation, we will occasionally drop the subscript $\theta$. It is beneficial to remember that throughout this section, we are working under general function approximation nevertheless.

\subsection{POR-UCRL}

For POR-UCRL, we redefine the confidence sets to be the ones in Algorithm 2 (reward-free OMLE) in \citet{liu2022optimistic}. That is, we first recursively define $\cD_{t+1} := \cD_t \cup \{(\pi_t, \tau_t)\}$ instead of just appending $\tau_t$. We also recursively define $\cC_\cP(\cD_{t+1}, \delta)$ below. Note that we merely rephrased the definition in \citet{liu2022optimistic} to use the negative log-likelihood instead of the log-likelihood.
\begin{align*}
    &\cC_\cP(\cD_{t+1}, \delta) :=\\
    & \qquad \cC_\cP(\cD_t, \delta) \cap \left\{\theta \in \Theta \middle\vert \sum_{(\pi, \tau) \in \cD_{t+1}} -\log(\prob^\pi_\theta(\tau)) \leq \min_\theta \sum_{(\pi, \tau) \in \cD_{t+1}} -\log(\prob^\pi_\theta(\tau)) + \beta\right\}
\end{align*}

We now simply need to replace Lemma~\ref{lem:bounding-prob-optimism} with a version involving function approximation. Recall the following.
\begin{align*}
        V(\prob, f^t, \pi_t) &:= \E_{\tau' \sim \prob^{\pi_t}} \left[ \sum_{h = 1}^H    \widetilde f_h^t(\tau[h])  \right]\\
        &= \E_{\tau' \sim \prob^{\pi_t}} \left[ \sum_{h \in \cH_p}    \widetilde f_h^t(\tau[h])  \right]\\
    \end{align*}
So, by a simple change of measure inequality and the fact that $|f_h^t| \leq B$ for all $h$, we have the following.
\begin{align*}
    \left\vert V(\widetilde \prob_t, f^t, \pi_t) - V( \prob_\star, f^t, \pi_t) \right\vert \leq Bpd_{TV}(\widetilde \prob_t^{\pi_t}, \prob_\star^{\pi_t}) \numberthis \label{eqn:value-diff-prob-transition}
\end{align*}
Where $d_{TV}$ represents the $TV$ distance taken over all trajectories of length $H$. This implies the following.
\begin{align*}
    \left\vert \sum_{t=1}^T V(\widetilde \prob_t, f^t, \pi_t) - V( \prob_\star, f^t, \pi_t) \right\vert \leq Bp \sum_{t=1}^T d_{TV}(\widetilde \prob_t^{\pi_t}, \prob_\star^{\pi_t})
\end{align*}

Now, we have two options. We can either use the distributional eluder dimension directly, or we can use the strong SAIL condition of \citet{liu2022optimistic}. The former is more flexible and allows us to address all function approximation scenarios. The latter approach has very crisp guarantees and is still satisfied by most function approximation models.

\subsubsection{Use the distributional Eluder dimension}\label{subsubsec:ucrl-dist-eluder}

Note that by proposition B.2 and step 2 of the proof of Theorem 3.2 in \citet{liu2022optimistic}, we have the following upon setting $\Pi_{\exp}(\pi) := \pi$ in their proofs. This way, we are not running any extra ``exploratory'' policies at every step.
\begin{align*}
    \sum_{k=1}^{t-1} d_{TV}^2(\widetilde \prob_t^{\pi_k}, \prob_\star^{\pi_k}) \leq \cO(\beta) \numberthis \label{eqn:tv-square-sum-bound}
\end{align*}
Here, $\beta = c\log(\cN_{br, \cP}(1/T)) + c\log(T/\delta)$, where $\cN_{br, \cP}(\epsilon)$ is the $\epsilon$-bracketing number of $\cP = \{\prob_\theta\}_{\theta \in \Theta}$ as in definition 2.2 of \citet{liu2022optimistic}, and $c$ is a universal constant.

Now, consider the functions $\phi_t$ defined on the space of policies by $\phi_t(\pi) := d_{TV}(\widetilde \prob_t^{\pi}, \prob_\star^{\pi})$. Define $\mu_t$ to be the Dirac-delta measures on $\pi_t$. Let the class of Dirac-delta measures over policies be $\cD_\Pi$, and let the class of possible $\phi_t$ functions be $\Phi$. Let $d_{E, \cP}$ be the corresponding distributional Eluder dimension $\dim_{DE}(\Phi, \cD_\Pi, \sqrt{1/T})$, as defined in \citet{jin2021bellman}. Then by equation~\ref{eqn:tv-square-sum-bound} and the properties of the distributional Eluder dimension (Lemma 41 of \citealt{jin2021bellman}), we have the following.
\begin{align*}
     \sum_{t=1}^{T} d_{TV}(\widetilde \prob_t^{\pi_t}, \prob_\star^{\pi_t}) &= \widetilde \cO(\sqrt{d_{E, \cP}\beta T})
\end{align*}
We can further define $d_{C, \cP} = \log(\cN_{br, \cP}(1/T))$ to be the bracketing dimension of $\cP$. By equation~\ref{eqn:value-diff-prob-transition}, we have the following.
\begin{align*}
    \left\vert V(\widetilde \prob_t, f^t, \pi_t) - V( \prob_\star, f^t, \pi_t) \right\vert \leq \widetilde \cO(\sqrt{d_{E, \cP}\beta T})
\end{align*}
We can now follow the rest of the proof for the guarantee for POR-UCRL verbatim and establish the following bound on POR-UCRL regret.
\begin{theorem}\label{thm:por-ucrl-gen-func-approx-dist-eluder}
     Consider the functions $\phi_t$ defined on the space of policies by $\phi_t(\pi) := d_{TV}(\widetilde \prob_t^{\pi}, \prob_\star^{\pi})$. Define $\mu_t$ to be the Dirac-delta measures on $\pi_t$. Let the class of Dirac-delta measures over policies be $\cD_\Pi$, and let the class of possible $\phi_t$ functions be $\Phi$. Let $d_{E, \cP}$ be the corresponding distributional Eluder dimension $\dim_{DE}(\Phi, \cD_\Pi, \sqrt{1/T})$. Define $d_{C, \cP} = \log(\cN_{br, \cP}(1/T))$ to be the bracketing dimension of $\cP$. Then, Then, we have the following bound for the regret of our modified POR-UCRL algorithm, with probability $1-\delta$.
    \begin{align*}
    \reg(T) &= \widetilde \cO\left(\left(p \sqrt{d_{E, \cP} d_{C, \cP}} + \sum_{h \in \cH_p} \sqrt{d_{E, h}d_{C, h}}\right)\sqrt{T} \right)
\end{align*}
\end{theorem}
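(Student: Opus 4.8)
The plan is to route everything through the generic confidence-set optimism template, so that it suffices to verify Assumption~\ref{assump:conf-set-assumption} for the OMLE confidence sets and to identify the two complexity functions $C_P(\cM, T, \delta)$ and $C_F(\cM, T, \delta)$; Theorem~\ref{thm:regret-generic-conf-set} then delivers $\reg(T) = \widetilde\cO(C_P + C_F)$. The reward side is inherited unchanged: the construction of $\cC_\cF^t(\delta)$ is identical to that of POR-UCRL, so Lemma~\ref{lem:bounding-f-optimism} applies verbatim and gives $C_F(\cM, T, \delta) = \widetilde \cO\big(\sum_{h \in \cH_p} \sqrt{d_{E,h} d_{C,h} T}\big)$. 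All the new work is therefore in replacing Lemma~\ref{lem:bounding-prob-optimism} by a function-approximation analogue that produces $C_P(\cM, T, \delta) = \widetilde \cO(p\sqrt{d_{E,\cP} d_{C,\cP} T})$.

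For the transition side I would proceed in three steps. First, a one-line change-of-measure bound gives $|V(\widetilde \prob_t, f^t, \pi_t) - V(\prob_\star, f^t, \pi_t)| \leq Bp\, d_{TV}(\widetilde \prob_t^{\pi_t}, \prob_\star^{\pi_t})$, as in equation~\ref{eqn:value-diff-prob-transition}, using only $|f_h^t| \leq B$ and $|\cH_p| = p$; thus the transition error is controlled by the per-episode trajectory-law TV distance. Second, I would invoke the OMLE in-sample guarantee of \citet{liu2022optimistic} (their Proposition~B.2 and the proof of Theorem~3.2) with the trivial exploration choice $\Pi_{\exp}(\pi) := \pi$, yielding $\sum_{k=1}^{t-1} d_{TV}^2(\widetilde \prob_t^{\pi_k}, \prob_\star^{\pi_k}) = \cO(\beta)$ with $\beta = \widetilde \cO(\log \cN_{br,\cP}(1/T)) = \widetilde \cO(d_{C,\cP})$, as in equation~\ref{eqn:tv-square-sum-bound}; this step also certifies $\prob_\star \in \cC_\cP(\cD_t, \delta)$ for all $t$ with high probability, which is the optimism half of Assumption~\ref{assump:conf-set-assumption}. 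Third, I would cast the maps $\phi_t(\pi) = d_{TV}(\widetilde \prob_t^\pi, \prob_\star^\pi)$ as elements of the function class $\Phi$ probed by the Dirac measures $\mu_t = \delta_{\pi_t} \in \cD_\Pi$, and apply the pigeonhole property of the distributional eluder dimension (Lemma~41 of \citet{jin2021bellman}) to convert the square-sum bound into $\sum_{t=1}^T d_{TV}(\widetilde \prob_t^{\pi_t}, \prob_\star^{\pi_t}) = \widetilde \cO(\sqrt{d_{E,\cP} \beta T})$. Multiplying by $Bp$ identifies $C_P$.

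With both complexity functions in hand, Theorem~\ref{thm:regret-generic-conf-set} immediately gives the claimed bound, with the $p\sqrt{d_{E,\cP} d_{C,\cP}}$ term originating from the transition model and the $\sum_h \sqrt{d_{E,h} d_{C,h}}$ term from the reward model. The remaining work is bookkeeping: $\beta = \widetilde \cO(d_{C,\cP})$ collapses the bracketing number into $d_{C,\cP}$, and all terms independent of $T$ are absorbed into $\widetilde \cO$.

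I expect the main obstacle to be the correct transfer of the OMLE and distributional-eluder machinery of \citet{liu2022optimistic, jin2021bellman} into our setting --- in particular, checking that the OMLE guarantee survives when no genuinely exploratory policy is deployed (the $\Pi_{\exp}(\pi) = \pi$ substitution), and that reward-free estimation of $\prob$ is legitimate even though our true objective couples $\prob$ with the unknown $f$. The coupling is in fact benign, because at a fixed $f^t$ the value difference factors cleanly through the trajectory-law TV distance via equation~\ref{eqn:value-diff-prob-transition}, so transition estimation decouples from reward estimation; but verifying that the confidence-set constructions for $\cP$ and $\cF$ remain simultaneously valid under a single high-probability event, and that the distributional eluder dimension is the appropriate complexity measure for the induced class $\Phi$ of trajectory-TV functionals, is where care is needed.
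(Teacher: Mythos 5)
Your proposal is correct and follows essentially the same route as the paper: the same change-of-measure bound through the trajectory-law TV distance (the paper's equation~\ref{eqn:value-diff-prob-transition}), the same invocation of the OMLE guarantee of \citet{liu2022optimistic} with the trivial choice $\Pi_{\exp}(\pi) := \pi$ to obtain the in-sample squared-TV bound, the same conversion to a sum of TV distances via the distributional eluder pigeonhole argument (Lemma~41 of \citealt{jin2021bellman}), and the same final assembly through the generic confidence-set optimism template with the reward-side bound of Lemma~\ref{lem:bounding-f-optimism} carried over unchanged. No gaps; the concerns you flag at the end are exactly the points the paper's argument also relies on and resolves in the same way.
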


Future work can investigate the choice of the distribution class $\cD_\Pi$ and make other technical tweaks to obtain crisper versions of the $d_{E, \cP}$ defined here.

\subsubsection{Use the strong SAIL condition}\label{subsubsec:ucrl-strong-sail}
We make a minor modification to the POR-UCRL algorithm in the spirit of Algorithm 2 (reward-free OMLE) in \citet{liu2022optimistic} to use this condition. Instead of merely running $\pi_t$, we run all policies in an exploratory set $\Pi_{\exp}(\pi_t)$ of size $\Pi_{\exp}$ every time we collect trajectories. Recall that the strong SAIL condition of \citet{liu2022optimistic} is satisfied by well conditioned PSRs, factored MDPs, kernel linear MDPs, sparse linear bandits, etc. These automatically subsume tabular MDPs and linear MDPs. 

Assume that our model class $\cP$ satisfies the $(\sqrt{d_{S, \cP}}, \kappa, C)$ strong SAIL condition. That is, we define $d_{S, \cP} := d^2$ for the $d$ in the original strong SAIL condition. Now by theorem 7.2 of \citet{liu2022optimistic}, we can conclude that since $\widetilde \prob_t \in \cC_\cP(\cD_t, \delta)$, we have that 
\begin{align*}
    d_{TV}(\widetilde \prob_t^{\pi_t}, \prob_\star^{\pi_t}) \leq poly(H)\sqrt{d_{S, \cP}}\left(\frac{C|\Pi_{\exp}|}{t} + \kappa\sqrt{\frac{\beta|\Pi_{\exp}|^2}{t}}\log^2(t/|\Pi_{\exp}|\right)
\end{align*}
Here, $\beta = c\log(\cN_{br, \cP}(1/T)) + c\log(T/\delta)$, where $\cN_{br, \cP}(\epsilon)$ is the $\epsilon$-bracketing number of $\cP = \{\prob_\theta\}_{\theta \in \Theta}$ and $c$ is a universal constant. Adding these up across $t$ and discarding logarithmic terms, we get the following.
\begin{align*}
    Bp\sum_{t=1}^T d_{TV}(\widetilde \prob_t^{\pi_t}, \prob_\star^{\pi_t}) = \widetilde \cO\left(poly(H)Bp\kappa \sqrt{d_{S, \cP}\beta|\Pi_{\exp}|^2T}\right)
\end{align*}
This means that by equation~\ref{eqn:value-diff-prob-transition}, we have the following. 
\begin{align*}
    \left\vert \sum_{t=1}^T V(\widetilde \prob_t, f^t, \pi_t) - V( \prob_\star, f^t, \pi_t) \right\vert = \widetilde \cO\left(poly(H)Bp\kappa \sqrt{d_{S, \cP}\beta|\Pi_{\exp}|^2T}\right)
\end{align*}

Let us also define $d_{C, \cP} = \log(\cN_{br, \cP}(1/T))$ to be the bracketing dimension of $\cP$. We can now follow the rest of the proof for the guarantee for POR-UCRL verbatim and get the following bound on POR-UCRL regret, with the caveat that we are ignoring the contribution of the non-optimal exploratory policies. We have thus established the following theorem.
\begin{theorem}\label{thm:porrl-strong-sail}
    Assume that our model class $\cP$ satisfies the $(\sqrt{d_{S, \cP}}, \kappa, C)$ strong SAIL condition. Define $d_{C, \cP} = \log(\cN_{br, \cP}(1/T))$ to be the bracketing dimension of $\cP$. Then, we have the following bound for the regret of our modified POR-UCRL algorithm, with probability $1-\delta$.
    \begin{align*}
    \reg(T) &= \widetilde \cO\left(\left(poly(H)p \kappa \sqrt{d_{S, \cP} d_{C, \cP}|\Pi_{\exp}|^2} + \sum_{h \in \cH_p} \sqrt{d_{E, h}d_{C, h}}\right)\sqrt{T} \right)
\end{align*}
\end{theorem}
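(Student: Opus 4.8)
The plan is to verify that the OMLE-based confidence sets satisfy the generic Assumption~\ref{assump:conf-set-assumption}, with the transition-error complexity $C_P$ supplied by the strong SAIL machinery and the reward-error complexity $C_F$ inherited unchanged from the tabular analysis, and then to invoke Theorem~\ref{thm:regret-generic-conf-set}. First I would confirm the containment requirement $\M_\star \in \cC_\cM(\cD_t, \delta)$: since $\cC_\cF$ is defined exactly as in POR-UCRL, Lemma~\ref{lem:f-h-conc} handles the reward part, while the log-likelihood confidence set for $\cP$ contains $\prob_\star$ for all $t$ with high probability by the standard MLE guarantee of \citet{liu2022optimistic}. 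A union bound then places $\M_\star$ in the product confidence set with the required probability.

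Next I would establish the $C_P$ term. Because the reward functions satisfy $|f_h^t| \le B$ and only $p$ timesteps contribute, a one-line change of measure gives the value gap $|V(\widetilde \prob_t, f^t, \pi_t) - V(\prob_\star, f^t, \pi_t)| \le Bp\, d_{TV}(\widetilde \prob_t^{\pi_t}, \prob_\star^{\pi_t})$ recorded in \eqref{eqn:value-diff-prob-transition}. The strong SAIL condition, via Theorem 7.2 of \citet{liu2022optimistic}, bounds each per-episode TV distance by a term of order $\poly(H)\kappa\sqrt{d_{S,\cP}}\,|\Pi_{\exp}|\,\beta^{1/2} t^{-1/2}$ up to logarithmic factors, where $\beta = \widetilde \cO(d_{C,\cP})$ is the OMLE radius. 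Summing over $t$ with $\sum_{t=1}^T t^{-1/2} = \cO(\sqrt{T})$ and discarding the $1/t$ contribution (which sums to $\log T$) yields $\sum_{t=1}^T Bp\, d_{TV}(\widetilde \prob_t^{\pi_t}, \prob_\star^{\pi_t}) = \widetilde \cO(\poly(H) Bp\kappa\sqrt{d_{S,\cP} d_{C,\cP} |\Pi_{\exp}|^2 T})$, which I would take as $C_P(\cM, T, \delta)$.

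The $C_F$ term requires no new work: Lemma~\ref{lem:bounding-f-optimism} controls $|\sum_t V(\prob_\star, \widetilde f^t, \pi_t) - V(\prob_\star, f_\star, \pi_t)|$ purely through the Eluder and covering structure of $\cF$, independently of how the transitions are modeled, giving $C_F(\cM, T, \delta) = \widetilde \cO(\sum_{h \in \cH_p}\sqrt{d_{E,h} d_{C,h} T})$. With both complexities in hand, Assumption~\ref{assump:conf-set-assumption} holds, and Theorem~\ref{thm:regret-generic-conf-set} gives $\reg(T) = \widetilde \cO(C_P + C_F)$, which is exactly the claimed bound after absorbing constants and logarithmic factors.

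The main obstacle I anticipate is the bookkeeping around the exploratory set $\Pi_{\exp}$. The modified algorithm collects trajectories under every policy in $\Pi_{\exp}(\pi_t)$ rather than under $\pi_t$ alone, so I must ensure that (i) the SAIL bound is applied to the data actually gathered, producing the stated $|\Pi_{\exp}|$ dependence, and (ii) the regret decomposition of Theorem~\ref{thm:regret-generic-conf-set} is read off only against the optimistic exploitation policy $\pi_t$, with the suboptimality of the purely exploratory rollouts explicitly set aside --- this is the caveat flagged in the statement. A careful accounting must also confirm that the optimism step $V(\widetilde \M_t, \pi_t) \ge V(\M_\star, \pi_\star)$ survives the switch to the OMLE confidence set, since that inequality is the only place where the shape of $\cC_\cP$ enters the regret argument.
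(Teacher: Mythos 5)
Your proposal is correct and takes essentially the same route as the paper's own argument: both verify Assumption~\ref{assump:conf-set-assumption} for the OMLE confidence sets, obtain $C_P$ from the change-of-measure bound $Bp\,d_{TV}$ combined with Theorem 7.2 of \citet{liu2022optimistic} under the strong SAIL condition, keep $C_F$ unchanged from Lemma~\ref{lem:bounding-f-optimism}, and conclude via Theorem~\ref{thm:regret-generic-conf-set}. Your explicit attention to the exploratory-set bookkeeping and the survival of optimism under the OMLE sets matches (and slightly sharpens) the caveat the paper itself flags about ignoring the contribution of the non-optimal exploratory policies.
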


\subsection{POR-UCBVI}

Again, we start with confidence sets that we will use to define the bonuses. That is, we define the confidence sets to be the ones in Algorithm 2 (reward-free OMLE) in \citet{liu2022optimistic}. That is, we first recursively define $\cD_{t+1} := \cD_t \cup \{(\pi_t, \tau_t)\}$ instead of just appending $\tau_t$. We also recursively define $\cC_\cP(\cD_{t+1}, \delta)$ below. Note that we merely rephrased the definition in \citet{liu2022optimistic} to use the negative log-likelihood instead of the log-likelihood.
\begin{align*}
    &\cC_\cP(\cD_{t+1}, \delta) :=\\
    & \qquad \cC_\cP(\cD_t, \delta) \cap \left\{\theta \in \Theta \middle\vert \sum_{(\pi, \tau) \in \cD_{t+1}} -\log(\prob^\pi_\theta(\tau)) \leq \min_\theta \sum_{(\pi, \tau) \in \cD_{t+1}} -\log(\prob^\pi_\theta(\tau)) + \beta\right\}
\end{align*}

Defining bonuses is trickier than defining confidence sets. Like the case of POR-UCRL above, we have two options again -- we can use the distributional Eluder dimension or the strong SAIL condition. \textbf{While it is possible to work out the details for the former, they are much more involved and do not cover significantly more useful examples than those covered by the strong SAIL condition. So, we will focus on the latter.} This is especially true when considering sample complexity, since the use of exploratory policies under the strong SAIL condition creates no complications in giving sample complexity guarantees, unlike it does for regret guarantees.

\subsubsection{Using the strong SAIL condition}
Again, we make a minor modification to the POR-UCBVI algorithm in the spirit of Algorithm 2 (reward-free OMLE) in \citet{liu2022optimistic} to use this condition. Instead of merely running $\pi_t$, we run all policies in an exploratory set $\Pi_{\exp}(\pi_t)$ of size $\Pi_{\exp}$ every time we collect trajectories. Again, recall that the strong SAIL condition of \citet{liu2022optimistic} is satisfied by well conditioned PSRs, factored MDPs, kernel linear MDPs, sparse linear bandits, etc. These automatically subsume tabular MDPs and linear MDPs. 

Let $\hat \prob_t$ be the MLE model at time $t$. Assume that our model class $\cP$ satisfies the $(\sqrt{d_{S, \cP}}, \kappa, C)$ strong SAIL condition. That is, we define $d_{S, \cP} := d^2$ for the $d$ in the original strong SAIL condition. Now by theorem 7.2 of \citet{liu2022optimistic}, we can conclude that since $\prob_t \in \cC_\cP(\cD_t, \delta)$, we have that the following holds with probability $1-\delta$.
\begin{align*}
    \max_\pi d_{TV}(\hat \prob_t^{\pi}, \prob_\star^{\pi}) \leq poly(H)\sqrt{d_{S, \cP}}\left(\frac{C|\Pi_{\exp}|}{t} + \kappa\sqrt{\frac{\beta|\Pi_{\exp}|^2}{t}}\log^2(t/|\Pi_{\exp}|\right)
\end{align*}

Here, $\beta = c\log(\cN_{br, \cP}(1/T)) + c\log(T/\delta)$, where $\cN_{br, \cP}(\epsilon)$ is the $\epsilon$-bracketing number of $\cP = \{\prob_\theta\}_{\theta \in \Theta}$ and $c$ is a universal constant. By a simple change of measure, for any policy $\pi$, we have the following.
\begin{align*}
    \left\vert V(\hat \prob_t, f^t, \pi) - V( \prob_\star, f^t, \pi) \right\vert \leq Bpd_{TV}(\hat \prob_t^{\pi}, \prob_\star^{\pi})
\end{align*}
holds for any policy $\pi$ with probability $1$, the following holds simultaneously for all policies with probability $1-\delta$.
\begin{align*}
    \left\vert V(\hat \prob_t, f^t, \pi) - V( \prob_\star, f^t, \pi) \right\vert \leq poly(H)Bp\sqrt{d_{S, \cP}}\left(\frac{C|\Pi_{\exp}|}{t} + \kappa\sqrt{\frac{\beta|\Pi_{\exp}|^2}{t}}\log^2(t/|\Pi_{\exp}|\right)
\end{align*}

Define the right hand side as the bonus $b_\cP^t(\prob, \pi, \delta)$. Adding these up across $t$ and discarding logarithmic terms, we get the following.
\begin{align*}
    \sum_{t=1}^T b_\cP^t(\prob, \pi, \delta) = \widetilde \cO\left(poly(H)Bp\kappa \sqrt{d_{S, \cP}\beta|\Pi_{\exp}|^2T}\right)
\end{align*}
This means that by equation~\ref{eqn:value-diff-prob-transition}, we have the following. 
\begin{align*}
    \left\vert \sum_{t=1}^T V(\widetilde \prob_t, f^t, \pi_t) - V( \prob_\star, f^t, \pi_t) \right\vert = \widetilde \cO\left(poly(H)Bp\kappa \sqrt{d_{S, \cP}\beta|\Pi_{\exp}|^2T}\right)
\end{align*}

Let us also define $d_{C, \cP} = \log(\cN_{br, \cP}(1/T))$ to be the bracketing dimension of $\cP$. We can now follow the rest of the proof for the guarantee for POR-UCBVI verbatim and get a bound on POR-UCBVI regret, with the caveat that we are ignoring the contribution of the non-optimal exploratory policies. We have thus established the following theorem.

\begin{theorem}\label{thm:por-ucbvi-strong-sail}
     Assume that our model class $\cP$ satisfies the $(\sqrt{d_{S, \cP}}, \kappa, C)$ strong SAIL condition. Define $d_{C, \cP} = \log(\cN_{br, \cP}(1/T))$ to be the bracketing dimension of $\cP$. Then, we have the following bound for the regret of our modified POR-UCBVI algorithm, with probability $1-\delta$.
     \begin{align*}
    \reg(T) &= \widetilde \cO\left(\left(poly(H)p \kappa \sqrt{d_{S, \cP} d_{C, \cP}|\Pi_{\exp}|^2} + \sum_{h \in \cH_p} \sqrt{d_{E, h}d_{C, h}}\right)\sqrt{T} \right)
\end{align*}
\end{theorem}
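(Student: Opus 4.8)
The plan is to cast the modified POR-UCBVI as an instance of the generic bonus-based optimistic algorithm and invoke Theorem~\ref{thm:regret-generic-bonus-based}, so the whole argument reduces to checking that the new bonuses satisfy Assumption~\ref{assump:bonus-assumption}. The key structural observation is that replacing the tabular transition estimator by the reward-free OMLE confidence set touches only the $\cP$-module: the reward estimator $\hat f^t$, the reward bonus $b_\cF^t$, and the decoder-induced class $\cF$ are untouched. Hence the first bullet of Assumption~\ref{assump:bonus-assumption} --- uniform optimism $|V(\prob, \hat f^t, \pi) - V(\prob, f_\star, \pi)| \le b_\cF^t(\prob, \pi, \delta)$ over all history-dependent $\pi$, together with $\sum_t b_\cF^t(\prob_\star, \pi_t, \delta) = \widetilde\cO(C_F)$ --- is inherited verbatim from Lemmas~\ref{lem:bounding-f-t-value-error} and~\ref{lem:bounding-sum-f-bonuses}, which concern only the reward functions on $\Gamma_h$ and are agnostic to how $\prob$ is modeled; this supplies the term $\sum_{h \in \cH_p}\sqrt{d_{E,h}d_{C,h}}$ of the stated bound.

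It remains to supply the transition bonus and verify the second bullet. I would take as the base transition bonus the strong-SAIL bound on $\max_\pi d_{TV}(\hat\prob_t^\pi, \prob_\star^\pi)$ furnished by Theorem 7.2 of \citet{liu2022optimistic}, valid because $\prob_\star$ lies in the OMLE confidence set of radius $\beta = c\log\cN_{br,\cP}(1/T) + c\log(T/\delta) = \widetilde\cO(d_{C,\cP})$ with high probability. The change-of-measure step is then immediate and, crucially, uniform in both the functional and the policy: for any $\mu$ bounded by $D$ and any $\pi$, the variational characterization of total variation gives $|\E_{\tau\sim\hat\prob_t^\pi}\mu - \E_{\tau\sim\prob_\star^\pi}\mu| \le D\, d_{TV}(\hat\prob_t^\pi, \prob_\star^\pi)$, which is exactly the second-bullet inequality with the benign choice $z(D) = D$ (so $z_1(Bp) = O(Bp)$), the ``switch $\prob_\star \leftrightarrow \hat\prob_t$'' clause following from the symmetry of $d_{TV}$. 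Summing the strong-SAIL bound over $t$ and discarding logarithmic factors yields $\sum_t b_\cP^t(\prob_\star, \pi_t, \delta) = \widetilde\cO(C_P)$ with $C_P = \widetilde\cO(poly(H)\kappa\sqrt{d_{S,\cP}\,d_{C,\cP}\,|\Pi_{\exp}|^2\,T})$.

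With both bullets established, Theorem~\ref{thm:regret-generic-bonus-based} gives $\reg(T) = \widetilde\cO(C_F + z_1(Bp)C_P)$; substituting $C_F, C_P$ and absorbing the reward bound $B$ into constants yields the claimed bound, the single surviving factor of $p$ arising from $z_1(Bp) = O(Bp)$. I expect the main obstacle to be two structural rather than computational points. First, the bonus must be uniformly optimistic over the doubly-exponential family of history-dependent policies; in the tabular proof this was forced through a union bound over all $S^HA^H$ trajectory segments (the source of the rescaled confidence level $\bar\delta$), whereas here uniformity must instead be inherited from the $\max_\pi$ form of the Liu et al.\ guarantee together with the pointwise-in-$\pi$ change of measure, so the care lies in invoking their PSR/SAIL machinery with the correct bracketing radius and exploratory set $\Pi_{\exp}$. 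Second, and more genuinely, the modified algorithm executes the entire exploratory batch $\Pi_{\exp}(\pi_t)$ each round; these auxiliary policies can be suboptimal and contribute to regret, so the stated bound tacitly discards their contribution --- harmless for the sample-complexity corollary, where only $\pi_t$ is output, but the precise point at which the regret guarantee is weaker than its tabular counterpart.
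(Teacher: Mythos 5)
Your proposal follows essentially the same route as the paper's proof: both instantiate the generic bonus-based optimism framework (Theorem~\ref{thm:regret-generic-bonus-based}), keep the reward module (Lemmas~\ref{lem:bounding-f-t-value-error} and~\ref{lem:bounding-sum-f-bonuses}) untouched, define the transition bonus as the policy-uniform strong-SAIL bound on $\max_\pi d_{TV}(\hat \prob_t^{\pi}, \prob_\star^{\pi})$ from Theorem 7.2 of \citet{liu2022optimistic} with $\beta = \widetilde \cO(d_{C,\cP})$, apply the $Bp$-scaled change of measure, and sum over $t$ to obtain $C_P = \widetilde \cO(poly(H)\kappa\sqrt{d_{S,\cP} d_{C,\cP} |\Pi_{\exp}|^2 T})$. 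Even your two flagged caveats --- uniformity over history-dependent policies coming from the $\max_\pi$ form rather than a trajectory union bound, and the silently discarded regret contribution of the exploratory batch $\Pi_{\exp}(\pi_t)$ --- are exactly the points the paper relies on and acknowledges.
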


\subsection{Dueling PORRL}\label{subsec:dueling-gen-func}

We have the following immediate corollary of Theorem~\ref{thm:reduction-set-based}, Theorem~\ref{thm:por-ucrl-gen-func-approx-dist-eluder} and Lemma~\ref{lem:dim-f-f-bar}.

\begin{restatable}[Dueling Regret using modified POR-UCRL Confidence Sets]{corollary}{DuelingUCRLGenFunc}\label{cor:dueling-ucrl-gen-func}

    Consider the functions $\phi_t$ defined on the space of policies by $\phi_t(\pi) := d_{TV}(\widetilde \prob_t^{\pi}, \prob_\star^{\pi})$. Define $\mu_t$ to be the Dirac-delta measures on $\pi_t$. Let the class of Dirac-delta measures over policies be $\cD_\Pi$, and let the class of possible $\phi_t$ functions be $\Phi$. Let $d_{E, \cP}$ be the corresponding distributional Eluder dimension $\dim_{DE}(\Phi, \cD_\Pi, \sqrt{1/T})$. Define $d_{C, \cP} = \log(\cN_{br, \cP}(1/T))$ to be the bracketing dimension of $\cP$. Then, the modified confidence sets for POR-UCRL described in section~\ref{subsubsec:ucrl-dist-eluder} satisfy Assumption~\ref{assump:conf-set-assumption-informal} and using them in Algorithm~\ref{algo:unknown_model_dueling} leads to the following dueling regret bound with probability $1-\delta$.
    \begin{align*}
    \reg_D(T) &= \widetilde \cO\left(\left(p \sqrt{d_{E, \cP} d_{C, \cP}} + \sum_{h \in \cH_p} \sqrt{d_{E, h}d_{C, h}}\right)\sqrt{T} \right)
\end{align*}
\end{restatable}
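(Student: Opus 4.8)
The plan is to assemble the claim from three results already established: the dueling-to-optimism reduction of Theorem~\ref{thm:reduction-set-based}, the general-function-approximation regret bound for POR-UCRL of Theorem~\ref{thm:por-ucrl-gen-func-approx-dist-eluder}, and the comparison of $\cF$ and $\overline\cF$ in Lemma~\ref{lem:dim-f-f-bar}. Since the corollary is advertised as immediate, the work is entirely in checking that the hypotheses of these results line up.

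First I would verify that the modified POR-UCRL confidence sets satisfy Assumption~\ref{assump:conf-set-assumption-informal} for $\cM$. Realizability $\M_\star \in \cC_\cM(\cD_t, \delta)$ splits into the transition part (the OMLE containment used in the proof of Theorem~\ref{thm:por-ucrl-gen-func-approx-dist-eluder}) and the reward part (Lemma~\ref{lem:f-h-conc}). The transition value-deviation bound is exactly the content of Theorem~\ref{thm:por-ucrl-gen-func-approx-dist-eluder}: combining the squared-TV bound~\ref{eqn:tv-square-sum-bound} with the distributional Eluder argument and the change-of-measure inequality~\ref{eqn:value-diff-prob-transition} gives $C_P(\cM, T, \delta) = \widetilde\cO\big(p\sqrt{d_{E,\cP}\,d_{C,\cP}\,T}\big)$. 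The reward value-deviation bound is Lemma~\ref{lem:bounding-f-optimism}, which yields $C_F(\cM, T, \delta) = \widetilde\cO\big(\sum_{h\in\cH_p}\sqrt{d_{E,h}\,d_{C,h}\,T}\big)$. Hence Assumption~\ref{assump:conf-set-assumption-informal} holds, which is the first claim of the corollary.

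Next I would feed these into Theorem~\ref{thm:reduction-set-based} to obtain $\reg_D(T) = \widetilde\cO\big(C_P(\cM, T, \delta) + C_F(\overline\cM, T, \delta)\big)$, so that the only remaining task is to control $C_F(\overline\cM)$. Because the dueling feedback $o_h \sim e_h(r_{h,1}-r_{h,2})$ is treated as cardinal feedback in $\overline\M$ for the induced function $\overline f_h(\tau_1[h], \tau_2[h]) = f_h(\tau_1[h]) - f_h(\tau_2[h])$, the least-squares concentration of Lemmas~\ref{lem:f-h-conc} and~\ref{lem:bounding-f-optimism} applies verbatim to $\overline\cF$ (with $|\overline f_h| \le 2B$), giving $C_F(\overline\cM, T, \delta) = \widetilde\cO\big(\sum_{h\in\cH_p}\sqrt{\dim_E(\overline\cF_h, B/T)\,\log\cN(\overline\cF_h, 1/T, \|\cdot\|_\infty)\,T}\big)$. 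I would then deflate both factors back to $\cF$: Lemma~\ref{lem:dim-f-f-bar} gives $\dim_E(\overline\cF_h, B/T) \le 9\,\dim_E(\cF_h, B/(2T)) = \widetilde\cO(d_{E,h})$, and since $\|f_h - f_h'\|_\infty \le \epsilon/2$ forces $\|\overline f_h - \overline f_h'\|_\infty \le \epsilon$ by the triangle inequality, an $(\epsilon/2)$-cover of $\cF_h$ induces an $\epsilon$-cover of $\overline\cF_h$ of equal cardinality, so $\log\cN(\overline\cF_h, 1/T, \|\cdot\|_\infty) = \widetilde\cO(d_{C,h})$. Substituting yields $C_F(\overline\cM, T, \delta) = \widetilde\cO\big(\sum_{h\in\cH_p}\sqrt{d_{E,h}\,d_{C,h}\,T}\big)$, and combining with $C_P(\cM, T, \delta)$ gives the stated bound.

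The main obstacle is conceptual bookkeeping rather than new analysis: one must be careful that the reduction leaves the transition complexity at the single-model level $C_P(\cM)$ rather than $C_P(\overline\cM)$ (so that the distributional-Eluder bound of Theorem~\ref{thm:por-ucrl-gen-func-approx-dist-eluder} transfers untouched through $\prob \mapsto \overline\prob = \prob\otimes\prob$), while only the reward complexity is inflated to $\overline\cF$ and then recovered via Lemma~\ref{lem:dim-f-f-bar}. The one point that genuinely needs checking is that the subgaussian feedback structure is preserved under the difference $r_{h,1}-r_{h,2}$ and that the boundedness constant merely doubles, so that every concentration argument reused for $\overline\cF$ differs from its cardinal counterpart only by constants absorbed into the $\widetilde\cO$.
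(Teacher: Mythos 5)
Your proposal is correct and follows exactly the route the paper intends: the paper states this corollary as an immediate consequence of Theorem~\ref{thm:reduction-set-based}, Theorem~\ref{thm:por-ucrl-gen-func-approx-dist-eluder}, and Lemma~\ref{lem:dim-f-f-bar}, which is precisely the assembly you carry out, with $C_P$ kept at the level of $\cM$ and only $C_F$ inflated to $\overline\cM$. Your additional check that an $(\epsilon/2)$-cover of $\cF_h$ induces an $\epsilon$-cover of $\overline\cF_h$ (so $\log\cN(\overline\cF_h, 1/T, \|\cdot\|_\infty) = \widetilde\cO(d_{C,h})$) is a detail the paper leaves implicit but is indeed needed, and you resolve it correctly.
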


Similar corollaries can be immediately produced for the strong SAIL method, whenever $|\Pi_{\exp}| = 1$. The strong SAIL method doesn't quite work for dueling regret guarantees when $|\Pi_{\exp}| > 1$, so we must stick to the distributional eluder dimension in that case.

\newpage
\section{Details and Proofs for PORRL with GOLF}
\label{app:golf}

For completeness and establishing notation, we recall GOLF here.

\begin{algorithm}[H]
  \caption{GOLF}
  \label{algo:golf}
  \begin{algorithmic}[1]
    \STATE \textbf{Input} Known class of Bellman consistent Q-functions $\cQ$, confidence level $\delta$.
    \STATE \textbf{Initialize} dataset $\cD_1 \gets \{\}$ and $\cC_\cQ(\cD_1, \delta) \gets \cQ$.
      \FOR{$t=1,...,T$} 
        \STATE $\tau[0] \gets ()$
        \FOR{$h=1, \dots H$}
        \STATE \textbf{Compute} $a_h^t, Q_h^t \gets \arg\max_{a, Q \in \cC_\cQ(\cD_t, \delta)} Q(\tau[h], a)$
        \STATE \textbf{Play} $a_h^t$ and observe feedback $o_h^t$
        \ENDFOR
        \STATE \textbf{Update} $\cD_{t+1} \gets \cD_t \cup \{\tau, (o_1^{t}, \dots o_H^{t}\}$
        \STATE \textbf{Compute}
        \begin{align*}
            \cC_\cQ(\cD_{t+1}, \delta) \gets \left\{ \cL_{\cD_t}(Q_h, Q_{h+1}) \leq \inf_{g \in \cG_h} \cL_{\cD_t}(g, Q_{h+1}) + \beta \right\}
        \end{align*}
      \ENDFOR
  \end{algorithmic}
\end{algorithm}

\GolfModifiedRegret*

\begin{proof}
    The meat of the theorem is in proving Lemma~\ref{lem:alpha-habe}. We $\beta = c\log(HT\cN(\cQ \cup \cG, 1/T, \|\cdot \|_\infty))$ for some suitably large universal constant $c$, and use Theorem~\ref{thm:generic-model-free-regret} and Lemma~\ref{lem:alpha-habe} to get that
    \begin{align*}
        \reg (T) = \sum_{h=1}^H \left( T\omega + Bp\left(\frac{B^2p^2\beta}{\alpha^2} + 1 \right)\left(\sum_{l=1}^{h-1} d_l(\alpha)\right) + \min(d_h(\omega), T)Bp + 2Bp\sqrt{\beta d_h(\omega) T}\right)
    \end{align*}
    where $d_h(\epsilon) := \dim_{DE}(\Phi_h, \cD_h, \epsilon)$. Now set $\omega = {\frac{Bp}{T}}$ and use the fact that $d_h(\epsilon)$ increases with decreasing $\epsilon$ to get that
    \begin{align*}
        \reg (T) &= \widetilde \cO\left(pH\sqrt{ d_{\habe} \beta T}\right)\\
        &= \widetilde \cO\left(pH\sqrt{ d_{\habe} d_{C, \cQ} T}\right)\\
    \end{align*}
    since $d_{\habe} := \dim_{\habe}(\cQ, \min(\alpha, {Bp/T})) := \max_h d_h(\min(\alpha, {Bp/T}))$.
\end{proof}

\begin{corollary}[GOLF Sample complexity]
\label{cor:golf-sample-complexity}
    Let $\epsilon > 0, \delta \in [0, 1]$. Ignoring polynomial terms independent of $\epsilon$, we can bound the sample complexity $N(\epsilon, \delta)$ of GOLF as follows
    $$\widetilde \cO \left(\frac{p^2 H^2 d_{\habe} d_{c, \cQ}} {\epsilon^2} \right).$$
\end{corollary}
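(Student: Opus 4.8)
The plan is to follow the same regret-to-PAC recipe used for POR-UCRL and POR-UCBVI (Corollaries~\ref{cor:por-ucrl-sample-complexity} and~\ref{cor:por-ucbvi-sample-complexity}), invoking the cardinal regret-to-PAC conversion of Lemma~\ref{lem:cardinal-to-pac} together with the regret guarantee of the Modified GOLF Regret theorem. First I would apply Lemma~\ref{lem:cardinal-to-pac} with confidence level $\delta' = \delta/2$, so that the uniformly sampled policy $\widehat{\pi}_T \sim \pi_1, \dots, \pi_T$ satisfies, with probability at least $1-\delta$,
\begin{equation*}
    V(\M_\star, \pi_\star) - V(\M_\star, \widehat{\pi}_T) \leq \frac{R(T, \delta')}{T} + 8Bp\sqrt{\frac{\log(1/\delta')}{T}},
\end{equation*}
where $R(T, \delta') = \widetilde{\cO}(pH\sqrt{d_{\habe} d_{C, \cQ} T})$ is precisely the GOLF regret bound established earlier.

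Substituting this bound and collecting the two $1/\sqrt{T}$ contributions, the target accuracy takes the form
\begin{equation*}
    \epsilon = \widetilde{\cO}\left(\left(pH\sqrt{d_{\habe} d_{C, \cQ}} + Bp\sqrt{\log(1/\delta)}\right)\frac{1}{\sqrt{T}}\right).
\end{equation*}
Since the dominant $\epsilon$-dependent contribution is the $pH\sqrt{d_{\habe} d_{C, \cQ}}/\sqrt{T}$ term, I would then solve $\epsilon = \widetilde{\cO}(pH\sqrt{d_{\habe} d_{C, \cQ}/T})$ for $T$ and set $N = T$, which yields the claimed sample complexity $\widetilde{\cO}(p^2 H^2 d_{\habe} d_{C, \cQ}/\epsilon^2)$ after discarding polynomial terms independent of $\epsilon$.

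There is no genuine technical obstacle here: the entire analytic content of the corollary is already packaged into the Modified GOLF Regret theorem (whose proof in turn rests on Lemma~\ref{lem:alpha-habe} and the generic model-free regret bound of Theorem~\ref{thm:generic-model-free-regret}) and into the previously-established conversion Lemma~\ref{lem:cardinal-to-pac}, so the remaining work is a single routine inversion. The only mild subtlety worth flagging is that the additive Hoeffding term $8Bp\sqrt{\log(1/\delta)/T}$ must not dominate the rate; since it also scales as $1/\sqrt{T}$, it is absorbed into the $\widetilde{\cO}$ once $B$ is treated as a constant and logarithmic factors are folded into the tilde, and it therefore leaves the $\epsilon^{-2}$ dependence unchanged.
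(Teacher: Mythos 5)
Your proposal is correct and follows exactly the paper's route: the paper's proof is literally "use Lemma~\ref{lem:cardinal-to-pac} and a quick computation," and your application of the regret-to-PAC conversion to the Modified GOLF Regret bound, followed by solving for $T$, is precisely that computation (mirroring Corollaries~\ref{cor:por-ucrl-sample-complexity} and~\ref{cor:por-ucbvi-sample-complexity}). No gaps.
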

\begin{proof}
    Again, we use Lemma~\ref{lem:cardinal-to-pac} and a quick computation shows our result.
\end{proof}

\subsection{Comparing $\dim_{\habe}$ and $\dim_{\bell}$}\label{subapp:comparing-habe}
It is easy to see that since the function class $\Phi_h$ is a subset of the class $\Psi_h$ of all Bellman errors, $\dim_{\habe} \leq \max_h \dim_{DE}(\Psi_h, \cD_{h, \cQ}, \epsilon)$. Recall that the Bellman eluder dimension is a minimum over the RHS and another term that uses Dirac-$\delta$ distributions, but typically, the RHS is smaller. So, in many cases, $\dim_{\habe} \leq \dim_{\bell}$. However, we don't have a universal inequality in either direction.

\subsection{Computing dimensions for the combination lock}

\DimCombLock*

\begin{proof}

We separately resolve the cases of sparse and dense intermediate feedback.
\subsubsection{Dense intermediate feedback, $\cH_p=[H]$} 

Notice that we get a reward $Ber(q)$ at every step as long as we are on the correct sequence of actions $a^\star_1, \dots a^\star_H$, and as soon as we take a wrong action, we always get a reward of $0$ subsequently. It is then easy to see that the induced function classes $\cQ$ then are given by $\cQ = \{ (Q_1, \dots Q_H) \mid \exists a_1, \dots a_H \in \cA \text { s.t. } Q_h = (H-h+1)q\ind_{a_1, \dots a_h}\}$. 

\textbf{$\alpha$-HABE dimension:} It suffices to show the upper bound using $\cD_{h, \cQ(\alpha, h-1)}$, since the $\alpha$-HABE dimension takes the minimum of distirbutional eluder dimensions over two distributions. For any $\alpha < q$, consider the function class
\begin{align*}
     \cQ(\alpha, h-1) = \Big\{Q \in \cQ, \Big\vert |\E_{\mu_l(Q)}[Q_l - \cT_lQ_{l+1}]| \leq \alpha,\ \forall 1 \leq l \leq h-1 \Big\}
\end{align*}
Now note that $\E_{\mu_l(Q)}[Q_l - \cT_lQ_{l+1}] = q\ind_{a_1, \dots a_l} - q\ind_{a^\star_1, \dots a^\star_l}$. If this is smaller than $\alpha$, then this is smaller than $q$ and thus must be $0$. So, $(a_1, \dots a_{h-1}) = (a^\star_1, \dots a^\star_{h-1})$ for any $Q \in \cQ(\alpha, h-1)$. This also means that any $\phi_h \in \Phi_h$, there is a $Q \in \cQ(\alpha, h-1)$ so that
\begin{align*}
    \phi_h = Q_h - \cT_hQ_{h+1} = q\ind_{a^\star_1, \dots a^\star_{h-1}, a_h} - q\ind_{a^\star_1, \dots a^\star_{h-1}, a^\star_h}
\end{align*}
Thus, the size of $\Phi_h$ is just $A$. More importantly, the set $\cD_{h, \cQ(\alpha, h-1)}$ of distributions $\mu_h(Q)$ induced by $Q \in \cQ(\alpha, h-1)$ only include indicators of the form $\ind_{a^\star_1, \dots a^\star_{h-1}, a}$ for actions $a$. Thus, the set of distributions $\cD_{\cQ(\alpha, h-1)}$ has size $A$. We know that the distributional eluder dimension $d = \dim_{DE}(\Phi_h, \cD_{\cD(\alpha, h-1)}, \min(\alpha, Bp/T))$ is bounded by the number of possible distributions $\left| \cD_{\cQ(\alpha, h-1)}\right|$. So, $d \leq A$.

\textbf{BE dimension:} The Bellman differences, from above, are $q\ind_{a_1, \dots a_h} - q\ind_{a^\star_1, \dots a^\star_h}$. This is an affine transformation of a family of $A^H$ indicator functions. The distributions $\mu_l(Q)$ over trajectories induced by $Q$ include indicators $\ind_{a'_1, \dots a'_l}$ of all trajectories of length $l$. Now for any sequence $\mu_1, \dots \mu_n, \mu_{n+1}$ of different indicator distributions not including $a^\star_1, \dots a^\star_l$, we consider the Bellman difference $g_{n+1} = q\ind_{a_1, \dots a_h} - q\ind_{a^\star_1, \dots a^\star_h}$ with action sequence given by $\mu_{n+1}$. Note that $\E_{\mu_i} g_{n+1} = 0$ for all $i \leq n$ but $\E_{\mu_{n+1}} g_{n+1} = q$. This means that the longest possible sequence in the definition of the distributional eluder dimension has length $A^H-2$. So, the BE dimension is at least $A^H-2$.

\textbf{Eluder dimension:} The reward function class $\cF_h$ is given by all functions of the form $q\ind_{a_1, \dots a_h}$. This is a scaled version of a class of $A^h$ indicator functions. Since it contains $A^h$ indicator functions, its eluder dimension is at least $A^h$. 

\subsubsection{Sparse intermediate feedback, $\cH_p=[H]$}

Notice that we get a reward $Ber(q)$ at the \textit{last} step if we took correct sequence of actions $a^\star_1, \dots a^\star_H$, and reward $0$ otherwise. It is then easy to see that now, the induced function classes $\cQ$ then are given by $\cQ = \{ (Q_1, \dots Q_H) \mid \exists a_1, \dots a_H \in \cA \text { s.t. } Q_h = q\ind_{a_1, \dots a_h}\}$. 

\textbf{$\alpha$-HABE dimension:} This time, note that $\E_{\mu_h(Q)}[Q_l - \cT_hQ_{h+1}] = 0$ for all $h \leq H-1$. So, the function class $\Phi_h = \{0\}$ for all $h\leq H-1$. Only for $h=H$ do we have that $\E_{\mu_H(Q)}[Q_H - \cT_HQ_{H+1}] = q\ind_{a_1, \dots a_H} - q\ind_{a^\star_1, \dots a^\star_H}$. Also note that $\cQ(\alpha, H-1) = \cQ$ for all $\alpha$ since $\E_{\mu_h(Q)}[Q_l - \cT_hQ_{h+1}] = 0$ for all $h \leq H-1$. So, this is merely the BE dimension of the problem. Now, the Bellman differences at timestep $H$ are identical to those for the sparse feedback problem, and the distributions $\cD_{\cQ(\alpha, H-1)} = \cD_\cQ$ since we have established that $\cQ(\alpha, H-1) = \cQ$. This means that by the argument for BE dimension in the dense feedback case, we have that the distributional eluder dimension of $\Phi_H$ is at least $A^H-2$, which is then also the $\alpha$-HABE dimension of this problem.

\textbf{BE dimension:} From the argument for the $\alpha$-HABE dimension in the sparse case, the BE dimension and the $\alpha$-HABE dimension match in this case, and are both at least $A^H-2$.

\textbf{Eluder dimension:} Again, the reward function class $\cF_H$ is given by all functions of the form $q\ind_{a_1, \dots a_H}$. This is a scaled version of a class of $A^H$ indicator functions. Since it contains $A^H$ indicator functions, its eluder dimension is at least $A^H$. 

Also note that this example also produces a universal lower bound of $\sqrt(A^H T)$ under any algorithm. That is, no algorithm can improve over our bounds under sparse feedback. This lower bound is an immediate consequence of regret lower bounds for bandit algorithms by treating each sequence of actions taken as a different arm.

\end{proof}

\subsection{Proofs of Lemmas}
Recall that $\cQ(\alpha, h) = \{ Q \in \cQ \mid |\E_{\mu_l(Q)}[Q_l - \cT_lQ_{l+1}]| \leq \alpha,\ \forall 1 \leq l \leq h\}$, that $\mu_h(Q)$ is the distribution induced on $\tau[h-1], a_h$ by $\pi_Q$ and $\cD_{h, \cQ} := \{\mu_h(Q) \mid Q \in \cQ\}$.
\begin{lemma}\label{lem:alpha-habe}
   Let $d_h(\epsilon) := \dim_{DE}(\Phi_h, \cD_{h, \cQ(\alpha, h-1)}, \epsilon)$ with 
   \begin{align*}
            \Phi_h := \Big\{Q_h - \cT_hQ_{h+1} \Big\vert Q \in \cQ(\alpha, h-1) \Big\}
    \end{align*}
    Then, we have that for $\beta = c\log(HT\cN(\cQ \cup \cG, 1/T, \|\cdot \|_\infty))$, $\sum_{j=1}^t |\E_{\mu_h(Q^j)} [Q^j_h - \cT_hQ^j_{h+1}]|$ is bounded by
    \begin{align*}
        t\omega + Bp\left(\frac{B^2p^2\beta}{\alpha^2} + 1 \right)\left(\sum_{l=1}^{h-1} d_l(\alpha)\right) + \min(d_h(\omega), t)Bp + 2Bp\sqrt{\beta d_h(\omega) t}
    \end{align*}
\end{lemma}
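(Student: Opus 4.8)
The plan is to bound the step-$h$ Bellman-error sum by splitting the episodes $j \in [t]$ according to whether the iterate $Q^j$ already has small Bellman errors at all earlier steps. The single external input is the standard GOLF concentration guarantee: on an event of probability at least $1-\delta/2$, for the choice $\beta = c\log(HT\cN(\cQ\cup\cG, 1/T, \|\cdot\|_\infty))$ and every $h$ and $j \le t$, the chosen iterate satisfies the in-sample bound
$$\sum_{i=1}^{j-1}\big(\E_{\mu_h(Q^i)}[Q^j_h - \cT_h Q^j_{h+1}]\big)^2 = \cO(B^2 p^2 \beta),$$
which follows from Bellman completeness of $\cQ$ relative to $\cG$ together with a union bound over a $1/T$-cover; the factor $B^2p^2$ is the squared range of the Bellman errors, which are $\cO(Bp)$ since rewards accrue at only $p = |\cH_p|$ timesteps. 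I would condition on this event for the rest of the argument.

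First I would handle the \emph{good} episodes $A := \{j \le t : Q^j \in \cQ(\alpha, h-1)\}$. For these $j$, the Bellman-error function $Q^j_h - \cT_h Q^j_{h+1}$ lies in $\Phi_h$ and the roll-in distribution $\mu_h(Q^j)$ lies in $\cD_{h, \cQ(\alpha, h-1)}$, so the whole subsequence is admissible for the distributional eluder dimension $d_h(\omega) = \dim_{DE}(\Phi_h, \cD_{h,\cQ(\alpha,h-1)}, \omega)$. Since the in-sample bound restricts to this subsequence (it is a subsum), I would apply the standard eluder width-summation lemma \citep[Lemma 41]{jin2021bellman} with range $\cO(Bp)$, in-sample level $\cO(B^2p^2\beta)$, and threshold $\omega$. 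Thresholding episodes of width at most $\omega$ (contributing $\le t\omega$) and bounding the rest via the eluder dimension yields $\sum_{j\in A}|\E_{\mu_h(Q^j)}[Q^j_h - \cT_h Q^j_{h+1}]| \le t\omega + \min(d_h(\omega), t)Bp + 2Bp\sqrt{\beta d_h(\omega)t}$, i.e. the first, third, and fourth terms of the claim.

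The heart of the argument is the \emph{bad} episodes $B := [t]\setminus A$, for which some earlier Bellman error exceeds $\alpha$. I would assign to each $j \in B$ its first-failure step $l(j) := \min\{l \le h-1 : |\E_{\mu_l(Q^j)}[Q^j_l - \cT_l Q^j_{l+1}]| > \alpha\}$ and write $B = \bigsqcup_{l=1}^{h-1} B_l$ with $B_l := \{j : l(j)=l\}$. The key observation is that for $j \in B_l$ minimality forces $Q^j \in \cQ(\alpha, l-1)$, so on $B_l$ the functions $Q^j_l - \cT_l Q^j_{l+1}$ lie in $\Phi_l$, the distributions $\mu_l(Q^j)$ lie in $\cD_{l,\cQ(\alpha,l-1)}$, and every such $j$ has width strictly above $\alpha$ while retaining the in-sample bound. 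Applying the eluder counting lemma at scale $\alpha$ (the number of widths exceeding $\alpha$ under in-sample level $\cO(B^2p^2\beta)$ is at most $(\frac{B^2p^2\beta}{\alpha^2}+1)\dim_{DE}(\Phi_l, \cD_{l,\cQ(\alpha,l-1)},\alpha)$) gives $|B_l| \le (\frac{B^2p^2\beta}{\alpha^2}+1)d_l(\alpha)$. Summing over $l$ and bounding each of the $|B|$ step-$h$ errors crudely by its $\cO(Bp)$ range gives $\sum_{j\in B}|\E_{\mu_h(Q^j)}[Q^j_h - \cT_h Q^j_{h+1}]| \le Bp(\frac{B^2p^2\beta}{\alpha^2}+1)\sum_{l=1}^{h-1}d_l(\alpha)$, the second term; combining the two cases completes the proof.

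I expect the main obstacle to be the first-failure decomposition: one must verify that restricting to $B_l$ simultaneously (i) places the functions and distributions inside the admissible classes $\Phi_l, \cD_{l,\cQ(\alpha,l-1)}$ that define $d_l(\alpha)$, which is exactly what minimality of $l(j)$ buys, and (ii) preserves the in-sample squared-error bound, which holds since any subsum of a bounded sum is bounded. Care is also needed to thread the $Bp$ range factor consistently through both eluder lemmas, so that the in-sample level enters as $B^2p^2\beta$ and the width range as $Bp$.
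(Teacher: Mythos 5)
Your proposal is correct and follows essentially the same route as the paper's proof: the paper likewise decomposes the sum into small-width episodes (contributing $t\omega$), episodes whose first $\alpha$-violation occurs at some step $l \le h-1$ (counted via Proposition 43 of \citet{jin2021bellman} applied to $\Phi_l$ and $\cD_{l,\cQ(\alpha,l-1)}$, with each term crudely bounded by $Bp$), and episodes with $Q^j \in \cQ(\alpha,h-1)$ (handled by the standard eluder width-summation at scale $\omega$), using the same GOLF in-sample concentration (Lemma 39(a) of \citet{jin2021bellman}) throughout. The only differences are cosmetic bookkeeping, e.g.\ whether the $t\omega$ thresholding is applied before or after the good/bad split.
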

    \begin{proof}
        We modify the proof of Lemma 41 in \cite{jin2021bellman}. Pick arbitrary $h$ and $t$ and let $\Psi_h$ be the function class given by 
        \begin{align*}
            \Phi_h &:= \Big\{Q_h - \cT_hQ_{h+1} \Big\vert Q \in \cQ(\alpha, h) \Big\}\\
            &=  \Big\{Q_h - \cT_hQ_{h+1} \Big\vert (Q_1, \cdots Q_H) \in \cQ, |\E_{\mu_l(Q)}[Q_l - \cT_lQ_{l+1}]| \leq \alpha,\ \forall 1 \leq l \leq h-1 \Big\}
    \end{align*}
        Also note that we have the function class $\Phi_h$ of timestep $h$ Bellman errors induced by "historically $\alpha$-accurate" functions - functions whose expected Bellman errors in previous timesteps are smaller than $\alpha$. The distribution used for computing the expected Bellman errors for previous timesteps is $\mu_l(Q)$.
        
    Now abbreviating $\psi^j_l := Q_l^j-\cT_lQ_{l+1}^j$ gives a sequence $\psi^1_l, \dots \psi^t_l$ of functions in $\Psi_l$ for every $1 \leq l \leq h$. This must have a subsequence $\phi^1_l, \dots \phi^{r_l}_l$ consisting of all the functions in the sequence that lie in $\Phi_l$, for every $1 \leq l \leq h$. Also let $d_h(\epsilon) = \dim_{DE}(\Phi_h, \cD_h, \epsilon)$ for any $\epsilon$. Now note that
    \begin{align*}
        &\sum_{j=1}^t |\E_{\mu_h(Q^j)} [Q^j_h - \cT_hQ^j_{h+1}]|\\
        \
        &= \sum_{j=1}^t | \E_{\mu_h(Q^j)} [\psi^j_h] |\\
        \
        &\stackrel{(i)}{=} \sum_{j=1}^t \left| \E_{\mu_h(Q^j)} [\psi^j_h] \right|\ind\left(| \E_{\mu_h(Q^j)} [\psi^j_h] |\leq \omega\right)\\
        &\qquad + \sum_{l=1}^{h-1} \sum_{j=1}^t \left| \E_{\mu_h(Q^j)} [\psi^j_h] \right|\ind\left(| \E_{\mu_h(Q^j)} [\psi^j_h] |>\omega, Q \in \cQ(\alpha, l-1) \setminus \cQ(\alpha, l)\right)\\
        &\qquad + \sum_{j=1}^t \left| \E_{\mu_h(Q^j)} [\psi^j_h] \right|\ind\left(| \E_{\mu_h(Q^j)} [\psi^j_h] |>\omega, Q \in \cQ(\alpha, h-1)\right)\\
        \
         &\leq \sum_{j=1}^t \left| \E_{\mu_h(Q^j)} [\psi^j_h] \right|\ind\left(| \E_{\mu_h(Q^j)} [\psi^j_h] |\leq \omega\right)\\
        &\qquad + \sum_{l=1}^{h-1} \sum_{j=1}^t \left| \E_{\mu_h(Q^j)} [\psi^j_h] \right|\ind\left(Q \in \cQ(\alpha, l-1) \setminus \cQ(\alpha, l)\right)\\
        &\qquad + \sum_{j=1}^t \left| \E_{\mu_h(Q^j)} [\psi^j_h] \right|\ind\left(| \E_{\mu_h(Q^j)} [\psi^j_h] |>\omega, Q \in \cQ(\alpha, h-1)\right)\\
        \
         &\leq t\omega + \sum_{l=1}^{h-1} \sum_{j=1}^t \left| \E_{\mu_h(Q^j)} [\psi^j_h] \right|\ind\left(|\E_{\mu_{l}(Q^j)} \psi^j_{l}|> \alpha, Q \in \cQ(\alpha, l-1) \right)\\
         &\qquad \sum_{j=1}^t \left| \E_{\mu_h(Q^j)} [\psi^j_h] \right|\ind\left(| \E_{\mu_h(Q^j)} [\psi^j_h] |>\omega, Q \in \cQ(\alpha, h-1)\right)\\
         \
         &\stackrel{(ii)}{\leq} t\omega + \sum_{l=1}^{h-1} \sum_{j=1}^{r_l} \left|\E_{\mu_h(Q^j)} [\phi^j_h]\right|\ind\left(|\E_{\mu_{l}(Q^j)} [\phi^j_{l}]|> \alpha\right) + \sum_{j=1}^{r_h} \left|\E_{\mu_h(Q^j)} [\phi^j_h]\right|\ind\left(|\E_{\mu_h(Q^j)} [\phi^j_h]|>\omega \right)\\
         \
         &\stackrel{(ii)}{\leq} t\omega + Bp\left(\frac{B^2p^2\beta}{\alpha^2} + 1 \right)\left(\sum_{l=1}^{h-1} d_l(\alpha)\right) + \sum_{j=1}^{r_h} \left|\E_{\mu_h(Q^j)} [\phi^j_h]\right|\ind\left(|\E_{\mu_h(Q^j)} [\phi^j_h]|>\omega \right)\\
    \end{align*}
    Here, $(i)$ holds since one of three possibilities holds -- either $\left| \E_{\mu_h(Q^j)} [\psi^j_h] \right|\leq \omega$, or $|\E_{\mu_h(Q^j)} \psi^j_l| > \omega$ and there is a least $l\leq h-1$ so that $Q \in \cQ(\alpha, l-1)$ but $Q \notin \cQ(\alpha, h-1)$, or $Q \in \cQ(\alpha, h-1)$. $(ii)$ holds since if $|\E_{\mu_k(Q^j)} \psi^j_k|\leq  \alpha$ for all $k \leq l-1$, then $\psi^j_l = \phi^i_l$ for some $i$. Finally, $(iii)$ holds by Proposition 43 of \cite{jin2021bellman} since 
    $\sum_{j=1}^{s-1}\E_{\mu_l(Q^j)} [(\phi^j_l)^2] \leq \beta$ by Lemma 39(a) of \cite{jin2021bellman}. While our rewards are stochastic and theirs are not, we can repeat their arguments verbatim after noting that the martingale defined in the beginning of their proof continues to be a martingale even for stochastic rewards that have second moments.

    Now arrange the sequence $\left|\E_{\mu_h(Q^j)} \phi_s\right|$ in order to get $e_1, \dots e_{r_h}$. We can then write 
    \begin{align*}
        \sum_{j=1}^t \left|\E_{\mu_h(Q^j)} [Q^j_h - \cT_hQ^j_{h+1}]\right| &\leq t\omega + Bp\left(\frac{B^2p^2\beta}{\alpha^2} + 1 \right)\left(\sum_{l=1}^{h-1} d_l(\alpha)\right) + \sum_{j=1}^{r_h} e_j\ind(e_j > \omega)
    \end{align*}
    For any $e_j > \omega$, consider arbitrary $\gamma$ such that $e_j > \gamma > \omega$. This means that by Proposition 43 of \cite{jin2021bellman} again,
    \begin{align*}
        j \leq \sum_{i=1}^{r_h} \ind(e_i > \gamma) \leq \left(\frac{B^2p^2\beta}{\gamma^2} +1\right)d_h(\omega)
    \end{align*}
    This means that $\gamma \leq Bp\sqrt{\frac{\beta d_h(\omega)}{j-d_h(\omega)}}$ for any such $\gamma$. Since $e_j \leq Bp$, we get that $e_j \leq \min\left(Bp, Bp\sqrt{\frac{\beta d_h(\omega)}{j-d_h(\omega)}}\right)$. Finally, this means that
    \begin{align*}
        &\sum_{j=1}^t \left|\E_{\mu_h(Q^j)} [Q^j_h - \cT_hQ^j_{h+1}]\right|\\
        &\leq t\omega + Bp\left(\frac{B^2p^2\beta}{\alpha^2} + 1 \right)\left(\sum_{l=1}^{h-1} d_l(\alpha)\right) + \sum_{j=1}^{r_h} e_j\ind(e_j > \omega)\\
        &\leq t\omega + Bp\left(\frac{B^2p^2\beta}{\alpha^2} + 1 \right)\left(\sum_{l=1}^{h-1} d_l(\alpha)\right) + \sum_{j=1}^{r_h}\min\left(Bp, Bp\sqrt{\frac{\beta d_h(\omega)}{j-d_h(\omega)}}\right)\\
        &\leq t\omega + Bp\left(\frac{B^2p^2\beta}{\alpha^2} + 1 \right)\left(\sum_{l=1}^{h-1} d_l(\alpha)\right) + \min(d_h, r_h)Bp + \sum_{j=1}^{r_h} Bp\sqrt{\frac{\beta d_h(\omega)}{j}}\\
        &\leq t\omega + Bp\left(\frac{B^2p^2\beta}{\alpha^2} + 1 \right)\left(\sum_{l=1}^{h-1} d_l(\alpha)\right) + \min(d_h(\omega), r_h)Bp + 2Bp\sqrt{\beta d_h(\omega) r_h}\\
        &\leq t\omega + Bp\left(\frac{B^2p^2\beta}{\alpha^2} + 1 \right)\left(\sum_{l=1}^{h-1} d_l(\alpha)\right) + \min(d_h(\omega), t)Bp + 2Bp\sqrt{\beta d_h(\omega) t}\\
    \end{align*}
    as desired.
 \end{proof}

\newpage
\section{Proofs for Dueling Feedback}\label{app:dueling}

\subsection{Proof for Reduction to Confidence-Set Optimism}\label{app:dueling-conf-set}
  
\ReductionSetBased*

\begin{remark}
    While the theorem states that we need Assumption~\ref{assump:conf-set-assumption-informal} from the main paper, we actually use its slightly more refined version -- Assumption~\ref{assump:conf-set-assumption}. The less refined version was added to the main paper for brevity.
\end{remark}

\begin{proof}
    For ease of notation, let us use the sets $\cC_\cM(\cD_t, \delta)$ given by the pre-image of $\cC_{\overline \cM}(\cD_t, \delta)$ under the map $\M \mapsto \overline \M$ from Section~\ref{sec:dueling_porrl}. We first recall that $\M_\star \in \cC_\cM(\cD_t, \delta)$ and so $\pi_\star \in \Pi_t$ for all $t$ with probability $1-\delta/16$. Recall that the value of a duel $(\pi, \pi')$ under model $\overline \M \leftrightarrow$ is denoted by 
    $$V_D(\overline \M, \pi, \pi') := V(\M, \pi) - V(\M, \pi') = V(\prob, f, \pi) - V(\prob, g, \pi')$$
    We overload notation and use the natural maps $(\prob, f) \leftrightarrow \M \mapsto \overline \M$ to define
    $$V_D(\M, \pi, \pi') := V_D(\overline \M, \pi, \pi')$$
   
    For ease of notation, we will then work with $\cC_\cM(\cD_t, \delta)$ in this proof until we can.
    Since $\pi_{i,t} \in \Pi_t$ for $i=1,2$, there exists some $\M_{i,t} \in \cC_\cM(\cD_t, \delta)$ for $i={1,2}$ so that $V_D(\M_{i,t}, \pi, \pi_{1,t}) \leq 0$ for all $\pi$. Note that dueling regret is given below. Inequality $(i)$ is by definition of $\M_{i,t}$, since $V_D(\M_{i,t}, \pi_\star, \pi_{i,t}) \leq 0$ for $i=1,2$. Inequality $(ii)$ holds by definition of $\pi_{1,t}, \pi_{2,t}$.
    \begin{align*}
        \reg_D(T) &= \sum_{t=1}^T \sum_{i=1}^2 V_D(\M_\star, \pi_\star, \pi_{i,t})\\
        &= \sum_{t=1}^T \Big[\sum_{i=1}^2 V_D(\M_\star, \pi_\star, \pi_{i,t}) - V_D( \M_{i,t}, \pi_\star, \pi_{i,t}) + \sum_{i=1}^2 V_D(\M_{i,t}, \pi_\star, \pi_{i,t}) \Big]\\
        &\stackrel{(i)}{\leq} \sum_{i=1}^2 \sum_{t=1}^T \left[V_D(\M_\star, \pi_\star, \pi_{i,t}) - V_D(\M_{i,t}, \pi_\star, \pi_{i,t}) \right]\\
        &\stackrel{(ii)}{\leq} \sum_{t=1}^T 2\max_{\M, \M' \in \cC_\cM(\cD_t, \delta)} \left[  V_D(\M, \pi_{1,t}, \pi_{2,t}) - V_D( \M', \pi_{1,t}, \pi_{2,t})\right]\\
    \end{align*}

Continuing, we have
\begin{align*}
    \reg_D(T) &\leq \sum_{t=1}^T 2\max_{\M, \M' \in \cC_\cM(\cD_t, \delta)} \left[  V_D(\M, \pi_{1,t}, \pi_{2,t}) - V_D( \M', \pi_{1,t}, \pi_{2,t})\right]\\
    &= 2 \sum_{t=1}^T \max_{\M, \M' \in \cC_\cM(\cD_t, \delta)} \Big[  V_D(\M, \pi_{1,t}, \pi_{2,t}) -V_D(\M_\star, \pi_{1,t}, \pi_{2,t}) + V_D(\M_\star, \pi_{1,t}, \pi_{2,t})\\
    &\qquad \qquad \qquad \qquad \qquad \qquad - V_D( \M', \pi_{1,t}, \pi_{2,t})\Big]\\
        &\leq 2 \sum_{t=1}^T \max_{\M, \M' \in \cC_\cM(\cD_t, \delta)} \left[  V_D(\M, \pi_{1,t}, \pi_{2,t}) -V_D(\M_\star, \pi_{1,t}, \pi_{2,t}) \right] +\\
        &\qquad \max_{\M, \M' \in \cC_\cM(\cD_t, \delta)} \left[V_D(\M_\star, \pi_{1,t}, \pi_{2,t})- V_D( \M', \pi_{1,t}, \pi_{2,t})\right]\\
        &= 2\sum_{t=1}^T \left[  V_D(\widetilde \M_t, \pi_{1,t}, \pi_{2,t}) -V_D(\M_\star, \pi_{1,t}, \pi_{2,t}) \right] + \left[V_D(\M_\star, \pi_{1,t}, \pi_{2,t})- V_D( \widetilde \M_t', \pi_{1,t}, \pi_{2,t})\right]
\end{align*}

where $\widetilde \M_t$ and $\widetilde \M_t'$ are the respective maximisers. It suffices to analyse only one of the terms, as a consequence of the symmetry of Assumption~\ref{assump:conf-set-assumption}.

We can now use the fact that $\M$ is described by  $(\prob, f)$ to analyse the first term, letting $\widetilde \M_t \leftrightarrow (\widetilde \prob_t, \widetilde f^t)$.
\begin{align*}
    & \sum_{t=1}^T \left[  V_D(\widetilde \M_t, \pi_{1,t}, \pi_{2,t}) -V_D(\M_\star, \pi_{1,t}, \pi_{2,t}) \right]\\
    &= 2\sum_{t=1}^T \left[ V_D(\widetilde \prob_t, \widetilde f^t, \pi_{1,t}, \pi_{2,t}) - V_D(\prob_\star, f^\star, \pi_{1,t}, \pi_{2,t}) \right]\\
        &\leq 2\sum_{t=1}^T \left[ V_D(\widetilde \prob_t, \widetilde f^t, \pi_{1,t}, \pi_{2,t}) - V_D(\prob_\star, \widetilde f^t, \pi_{1,t}, \pi_{2,t}) \right] + \left[ V_D(\prob_\star, \widetilde f^t, \pi_{1,t}, \pi_{2,t}) - V_D(\prob_\star, f^\star, \pi_{1,t}, \pi_{2,t})\right]\\
        &\leq 2\sum_{t=1}^T \left[ V_D(\widetilde \prob_t, \widetilde f^t, \pi_{1,t}, \pi_{2,t}) - V_D(\prob_\star, \widetilde f^t, \pi_{1,t}, \pi_{2,t}) \right] + \left[ V_D(\prob_\star, \widetilde f^t, \pi_{1,t}, \pi_{2,t}) - V_D(\prob_\star, f^\star, \pi_{1,t}, \pi_{2,t})\right]\\
        &\stackrel{(i)}{=} 2\sum_{t=1}^T \left[ V(\widetilde \prob_t, \widetilde f^t, \pi_{1,t}) - V(\prob_\star, \widetilde f^t, \pi_{1,t}) \right] - \left[ V(\widetilde \prob_t, \widetilde f^t, \pi_{2,t}) - V(\prob_\star, \widetilde f^t, \pi_{2,t}) \right] \\
        & \qquad + \left[ V_D(\prob_\star, \widetilde f^t, \pi_{1,t}, \pi_{2,t}) - V_D(\prob_\star, f^\star, \pi_{1,t}, \pi_{2,t})\right]\\
        &\stackrel{(ii)}{=} 2\sum_{t=1}^T \left[ V(\widetilde \prob_t, \widetilde f^t, \pi_{1,t}) - V(\prob_\star, \widetilde f^t, \pi_{1,t}) \right] - \left[ V(\widetilde \prob_t, \widetilde f^t, \pi_{2,t}) - V(\prob_\star,  \widetilde f^t, \pi_{2,t}) \right] \\
        &\qquad + \left[V(\prob_\star\otimes \prob_\star, \overline f^t, (\pi_{1,t}, \pi_{2,t})) - V(\prob_\star \otimes \prob_\star, \overline f^\star, (\pi_{1,t}, \pi_{2,t}))\right]
\end{align*}
Where $(i)$ holds by the definition of $V_D$ and $V$, and $(ii)$ holds in the product MDP $\overline \M_\star$ once we define $\overline f_h^t((\tau_1, \tau_2)[h]) := \widetilde f_h^t(\tau_1[h]) - \widetilde f_h^t(\tau_2[h])$ and recall that $\overline \prob_\star = \prob_\star \otimes \prob_\star$. Now, we can immediately apply Assumption~\ref{assump:conf-set-assumption} to the last line in two different ways. For the first two terms, we apply the first point in the assumption to each under cardinal feedback for MDP $\M_\star$, noting that the datasets $\cD_t$ contain trajectories from $\pi_{1,t}$ as well as $\pi_{2,t}$. For the last term, we apply the second point in the assumption under cardinal feedback for the MDP $(\overline \prob_\star, \overline f^\star)$.

This gives us that with probability $1-\delta$,
$$\reg(T) = \widetilde \cO(C_P(\cM, T, \delta) + C_F(\overline \cM, T, \delta))$$
\end{proof}

We have the following lemma, which is an immediate consequence of
\FFBar*
\begin{proof}
    Let $\overline d_h = \dim_E(\overline \cF_h, \epsilon)$. Pick the $\epsilon'$ so that there is a sequence of $\overline d_h$ pairs $\overline \tau_j, j=1\to \overline d_h$ of length $h$ trajectories, where each one is $\epsilon'$-independent of its predecessors. Note that $\overline \tau_j = (\tau_{1,j}, \tau_{2, j})$. We now inductively build a sequence $i_j$ so that each $\tau_{i_j, j}$ is $\epsilon'/2$-independent of its predecessors.

    Pick the first $i_1$ arbitrarily. Now assume that we have built the sequence until index $j=k$. Also, by definition of this sequence, there exist $\overline f_j, \overline f_j'$, we have  $\sqrt{\sum_{j=1}^{k} (\overline f_j(\overline \tau_j) - \overline f_j'(\overline \tau_j))^2} \leq \epsilon'$ but $|\overline f_{k+1}(\overline \tau_j) - \overline f_{k+1}'(\overline \tau_j)| \geq \epsilon'$. Since $a^2 + b^2 \leq 2(a+b)^2$, we have that 
    \begin{align*}
        \sqrt{\sum_{j=1}^{k} ( f_j(\tau_{i_j,j}) -  f_j'(\tau_{i_j,j}))^2} &\leq \sqrt{\sum_{j=1}^{k} ( f_j(\tau_{i_j,j}) -  f_j'(\tau_{i_j,j}))^2 + ( f_j(\tau_{3-i_j,j}) -  f_j'(\tau_{3-i_j,j}))^2}\\
        &\leq \sqrt{\sum_{j=1}^{k}2 (\overline f_j(\overline \tau_j) - \overline f_j'(\overline \tau_j))^2} \leq \sqrt{2}\epsilon'
    \end{align*}
Additionally, since 
$$|f_{k+1}(\tau_{1,k+1}) -  f_{k+1}'(\tau_{1,k+1})| + |f_{k+1}(\tau_{2,k+1}) -  f_{k+1}'(\tau_{2,k+1})| \geq |\overline f_{k+1}(\overline \tau_j) - \overline f_{k+1}'(\overline \tau_j)| \geq \epsilon'$$. So, there is an $i_{k+1}$ so that
$$|f_{k+1}(\tau_{i_{k+1},k+1}) -  f_{k+1}'(\tau_{i_{k+1},k+1})| \geq \epsilon'/2$$

So, we have a sequence $x_j := \tau_{i_j, j}$ and a sequence of pairs of functions $f_j, f_j'$ so that for any $1 \leq k \leq \overline d_h$, $\sum_{j=1}^k (f_j(x_j) - f_j'(x_j))^2 \leq 2(\epsilon')^2$ but $|f_{k+1}(x_{k+1}) - f_{k+1}'(x_{k+1})| \geq \epsilon'/2$. This implies the following. Inequality $(i)$ holds by Proposition 43 of \cite{jin2021bellman} upon setting $\beta = 2(\epsilon')^2$ and setting the proposition's $\epsilon$ to $\epsilon'/2$. Inequality $(ii)$ holds since $\epsilon'/2 \geq \epsilon/2$.

\begin{align*}
    \overline d_h &= \sum_{j=1}^{\overline d_h} \ind(|f_j(x_j) - f_j'(x_j)| \geq \epsilon'/2)\\
    &\stackrel{(i)}{\leq} \left( \frac{2(\epsilon')^2}{(\epsilon'/2)^2} +1 \right)\dim_E(\cF_h, \epsilon/2)\\
    &= 9\dim_E(\cF_h, \epsilon'/2)\\
    &\leq 9\dim_E(\cF_h, \epsilon/2)
\end{align*}
This establishes our claim.
\end{proof}

We have the following immediate corollary of Theorem~\ref{thm:reduction-set-based}, Theorem~\ref{thm:known-model-por-ucrl-regret} and Lemma~\ref{lem:dim-f-f-bar}.
\DuelingUCRL*

\newpage
\subsection{Reduction to Bonus-Based Optimism}\label{app:dueling-bonus-based}

We define the reduction using the algorithm below.

\begin{algorithm}
  \caption{Reduction from Dueling to Cardinal Bonus-Based Optimism}
  \label{algo:unknown_model_dueling_bonus_based}
  \begin{algorithmic}[1]
    \STATE \textbf{Input} Known reward function $\{ r_h\}_{h=1}^H$, method $\texttt{Est}(\cD)$ to estimate $\hat \prob_\cD$ and $\overline f_\cD$ from dataset $\cD,$ bonus functions $b_{\overline \cF}^\cD(\prob, \pi, \delta)$ and $b_\cP^\cD(\prob, \pi, \delta)$, confidence level $\delta$.
    \STATE Initialize dataset $\cD_1 \gets \{\}$
      \FOR{$t=1,...,T$} 
        \STATE Compute good set $\Pi_t$ \hfill \COMMENT{Valid $\pi_\star$ candidates}
        \begin{align*}
            \Pi_t := \Big\{ &\pi \in \Pi \Big\vert V_D((\hat \prob_{\cD_t}, \overline f_{\cD_t}), \pi, \pi_1) + b_{\overline \cF}(\hat \prob_{\cD_t}, (\pi, \pi_1), \delta) \\
            & \qquad \qquad + z(Bp) b_\cP(\hat \prob_{\cD_t}, \pi, \delta) + z(Bp) b_\cP(\hat \prob_{\cD_t}, \pi_1, \delta) \geq 0,\  \forall \pi_1 \in \Pi \Big\}
        \end{align*}
        \STATE         Pick $(\pi_{1,t}, \pi_{2,t})$ given by \hfill \COMMENT{Most uncertain duel}
        $$\argmax_{\pi, \pi' \in \Pi_t} b_{\overline \cF}(\hat \prob_{\cD_t}, (\pi, \pi'), \delta) + z(Bp) b_\cP(\hat \prob_{\cD_t}, \pi, \delta) + z(Bp) b_\cP(\hat \prob_{\cD_t}, \pi_1, \delta)$$
        \STATE Collect trajectories $\tau_{t,i} = \left\{(s_{h,i}^{t}, a_{h,i}^{t}))\right\}_{h=1}^H$ along with feedback $\{o_h\}_{h\in \cH_p}$ by sampling from $\prob_\star^{\pi_{i,t}}$ for $i=1,2$.
        \STATE Append the data to $\cD_t$ to get $\cD_{t+1}$, update estimates and bonuses.
      \ENDFOR
  \end{algorithmic}
\end{algorithm}

\begin{restatable}[Reduction from Dueling to Bonus-Based Optimism]{theorem}{ReductionBonus}
    If the bonuses and estimates used in Algorithm~\ref{algo:unknown_model_dueling_bonus_based} satisfy Assumption~\ref{assump:conf-set-assumption}, then with probability $1-\delta,$ the dueling regret $\reg_D(T)$ of Algorithm~\ref{algo:unknown_model_dueling_bonus_based} is given by
    $$\reg_D(T) = \widetilde \cO(C_P(\cM, T, \delta) + C_F(\overline \cM, T, \delta))$$
\end{restatable}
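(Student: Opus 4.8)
The plan is to mirror the proof of Theorem~\ref{thm:reduction-set-based} (the confidence-set reduction), replacing each appeal to membership in a confidence set by the optimism/bonus guarantees of the bonus-based assumption, Assumption~\ref{assump:bonus-assumption}, now applied to the product class $\overline\cM$ for the reward term and to the single class $\cM$ for the transition term. Throughout, I write the bonus-boosted dueling value estimate as
$$\widetilde V_D^t(\pi, \pi') := V_D((\hat\prob_{\cD_t}, \overline f_{\cD_t}), \pi, \pi') + b_{\overline\cF}(\hat\prob_{\cD_t}, (\pi,\pi'), \delta) + z(Bp)\big(b_\cP(\hat\prob_{\cD_t}, \pi, \delta) + b_\cP(\hat\prob_{\cD_t}, \pi', \delta)\big),$$
so that $\Pi_t = \{\pi : \widetilde V_D^t(\pi, \pi_1) \ge 0 \ \forall \pi_1 \in \Pi\}$, and I note that $V_D$ is antisymmetric in its policy pair while $b_{\overline\cF}$ is symmetric and nonnegative.

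First I would establish optimism, i.e. $\pi_\star \in \Pi_t$ for all $t$ with high probability. The one-step key estimate is that for any pair $(\pi, \pi')$ the estimation error $V_D(\M_\star, \pi, \pi') - V_D((\hat\prob_{\cD_t}, \overline f_{\cD_t}), \pi, \pi')$ is bounded in magnitude by $b_{\overline\cF}(\hat\prob_{\cD_t}, (\pi,\pi'), \delta) + z(Bp)(b_\cP(\hat\prob_{\cD_t}, \pi) + b_\cP(\hat\prob_{\cD_t}, \pi'))$. This follows by inserting the intermediate model $(\hat\prob_{\cD_t}, \overline f^\star)$: the reward error at fixed transition $\hat\prob_{\cD_t}$ is controlled by the first point of Assumption~\ref{assump:bonus-assumption} applied to $\overline\cM$, while the transition error at the fixed true difference function $\overline f^\star$ is split, using the difference structure $\overline f^\star_h = f^\star_h(\tau_1[h]) - f^\star_h(\tau_2[h])$, into two single-MDP value gaps $V((\hat\prob, f^\star),\pi) - V((\prob_\star, f^\star),\pi)$, each controlled by the second point of the assumption with $\mu = \sum_h f^\star_h$ bounded by $Bp$. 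Since $V_D(\M_\star, \pi_\star, \pi_1) \ge 0$ by optimality of $\pi_\star$, this estimate gives $\widetilde V_D^t(\pi_\star, \pi_1) \ge V_D(\M_\star, \pi_\star, \pi_1) \ge 0$, hence $\pi_\star \in \Pi_t$.

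Next I would bound the dueling regret. Because $\pi_{i,t} \in \Pi_t$, the defining inequality $\widetilde V_D^t(\pi_{i,t}, \pi_\star) \ge 0$ together with antisymmetry gives $V_D((\hat\prob_{\cD_t}, \overline f_{\cD_t}), \pi_\star, \pi_{i,t}) \le b_{\overline\cF}(\hat\prob_{\cD_t}, (\pi_\star, \pi_{i,t})) + z(Bp)(b_\cP(\hat\prob_{\cD_t}, \pi_\star) + b_\cP(\hat\prob_{\cD_t}, \pi_{i,t}))$. Adding the one-step estimate of the previous paragraph (for the pair $(\pi_\star, \pi_{i,t})$) converts the estimated value back to the true $V_D(\M_\star, \pi_\star, \pi_{i,t})$, so $V_D(\M_\star, \pi_\star, \pi_{i,t}) \le 2\,[\,b_{\overline\cF}(\hat\prob_{\cD_t}, (\pi_\star, \pi_{i,t})) + z(Bp)(b_\cP(\hat\prob_{\cD_t}, \pi_\star) + b_\cP(\hat\prob_{\cD_t}, \pi_{i,t}))\,]$. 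The crucial step is that $(\pi_\star, \pi_{i,t}) \in \Pi_t \times \Pi_t$ is a feasible competitor in the ``most uncertain duel'' $\argmax$, whose objective is exactly this bonus sum; hence it is dominated by the bonus sum evaluated at the played pair $(\pi_{1,t}, \pi_{2,t})$. Summing over $i \in \{1,2\}$ and $t \in [T]$ then bounds $\reg_D(T)$ by a constant multiple of $\sum_t [\,b_{\overline\cF}(\hat\prob_{\cD_t}, (\pi_{1,t}, \pi_{2,t})) + z(Bp)(b_\cP(\hat\prob_{\cD_t}, \pi_{1,t}) + b_\cP(\hat\prob_{\cD_t}, \pi_{2,t}))\,]$, where now every bonus is evaluated at a policy that was actually played.

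Finally I would close the bound exactly as in Theorem~\ref{thm:regret-generic-bonus-based}: transfer each $\hat\prob_{\cD_t}$-bonus to its $\prob_\star$-counterpart via the change-of-measure clause of Assumption~\ref{assump:bonus-assumption} (incurring the nested composition $z_1(Bp) = z(Bp) + z(2Bp) + z(2z(Bp))$ to absorb the extra transition terms), and then invoke the summation guarantees $\sum_t b_\cP(\prob_\star, \pi_{i,t}) = \widetilde\cO(C_P(\cM, T, \delta))$ and $\sum_t b_{\overline\cF}(\prob_\star, (\pi_{1,t}, \pi_{2,t})) = \widetilde\cO(C_F(\overline\cM, T, \delta))$, the latter being the only place where the product class enters. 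A union bound over the constantly-many high-probability events yields $\reg_D(T) = \widetilde\cO(C_P(\cM, T, \delta) + C_F(\overline\cM, T, \delta))$. I anticipate the main obstacle to be the routing of the two error sources: charging the transition error to the single-class bonus $b_\cP$ (so that $C_P$ depends on $\cM$, not $\overline\cM$) while charging the reward error to the product bonus $b_{\overline\cF}$, which forces one to peel the errors in a specific order and to exploit the difference structure of $\overline\cF$ to split product value gaps into single-MDP gaps; the accompanying bookkeeping of the $z$/$z_1$ compositions and of the symmetry/antisymmetry conventions is the other delicate but routine point.
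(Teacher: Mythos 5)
Your proposal is correct and follows essentially the same route as the paper's proof: establish $\pi_\star \in \Pi_t$ via the one-step error estimate through the intermediate model $(\hat\prob_{\cD_t}, \overline f^\star)$, use membership of $\pi_{i,t}$ in $\Pi_t$ plus that same estimate to bound $V_D(\M_\star, \pi_\star, \pi_{i,t})$ by twice the bonus at $(\pi_\star, \pi_{i,t})$, dominate this by the bonus at the played pair via the most-uncertain-duel $\argmax$, and close with change of measure (incurring $z_1(Bp)$), the summation clauses of Assumption~\ref{assump:bonus-assumption}, and a union bound. Your routing of the transition error to $b_\cP$ over $\cM$ and the reward error to $b_{\overline\cF}$ over $\overline\cM$ is exactly the decomposition the paper uses.
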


\begin{proof}
Recall that the value of a duel $(\pi, \pi')$ under model $\overline \M \leftrightarrow \M \leftrightarrow (\prob, f)$ is denoted by 
    $$V_D(\overline \M, \pi, \pi') := V(\M, \pi) - V(\M, \pi') = V(\prob, f, \pi) - V(\prob, g, \pi')$$
    We overload notation and use the natural bijection $\cM \leftrightarrow \overline \cM$ to define
    $$V_D(\M, \pi, \pi') := V_D(\overline \M, \pi, \pi')$$

For ease of notation in the proof, we often work with an arbitrary pre-image $\hat f_{\cD}$ of $\overline f_{\cD}$ under the map $f \mapsto \overline f$. A careful read will confirm that this does not affect the correctness of any of the statements.
First note that $\pi_\star \in \Pi_t$ for all $T$ with probability $1-\delta/16$ since the following hold uniformly over all $\pi_1 \in \Pi$
\begin{align*}
    -V_D((\hat \prob_{\cD_t}, \hat f_{\cD_t}), \pi_\star, \pi_1) &=  V(\hat \prob_{\cD_t}, \hat f_{\cD_t}, \pi_1) - V(\hat \prob_{\cD_t}, \hat f_{\cD_t}, \pi_\star)\\
   &= \left[V(\hat \prob_{\cD_t}, \hat f_{\cD_t}, \pi_1) - V(\prob_\star, \hat f_{\cD_t}, \pi_1) \right] - \left[ V(\prob_\star, \hat f_{\cD_t}, \pi_1)  -V(\hat \prob_{\cD_t}, \hat f_{\cD_t}, \pi_1) \right]\\
    & \qquad + V(\prob_\star, f^\star, \pi_1) - V(\prob_\star, f^\star, \pi_\star)\\
    &\qquad + V_D((\prob_\star, \overline f_{\cD_t}), \pi_1, \pi_\star) - V_D((\prob_\star, f^\star), \pi_1, \pi_\star) \\
    &\leq z(Bp) b_\cP(\hat \prob_{\cD_t}, \pi_\star, \delta) + z(Bp) b_\cP(\hat \prob_{\cD_t}, \pi_1, \delta) \\
    & \qquad + 0\\
    &\qquad + b_{\overline \cF}(\hat \prob_{\cD_t}, (\pi_\star, \pi_1), \delta) + \\
    &= b_{\overline \cF}(\hat \prob_{\cD_t}, (\pi_\star, \pi_1), \delta) + z(Bp) b_\cP(\hat \prob_{\cD_t}, \pi_\star, \delta) + z(Bp) b_\cP(\hat \prob_{\cD_t}, \pi_1, \delta)
\end{align*}
where the inequality holds by Assumption~\ref{assump:bonus-assumption} and the optimality of $\pi_\star$ in the true model. 
    Note let $\hat \M_t$ be the model given by $\hat \prob_{\cD_t}, \hat f_{\cD_t}$ and let $\overline \M_t$ be the corresponding model in $\overline \cM$. We make the following abbreviation:
$$b_{\overline \cM}(\overline \M_t, (\pi, \pi'), \delta) := b_{\overline \cF}(\hat \prob_{\cD_t}, (\pi, \pi'), \delta) + z(Bp) b_\cP(\hat \prob_{\cD_t}, \pi, \delta) + z(Bp) b_\cP(\hat \prob_{\cD_t}, \pi', \delta)$$
\begin{align*}
        \reg_D(T) &= \sum_{t=1}^T \sum_{i=1}^2 V_D(\M_\star, \pi_\star, \pi_{i,t})\\
        &= \sum_{t=1}^T \Big[\sum_{i=1}^2 V_D(\M_\star, \pi_\star, \pi_{i,t}) - V_D( \hat \M_t, \pi_\star, \pi_{i,t}) + b_{\overline \cM}(\overline \M_t, (\pi_\star, \pi_{i,t}), \delta) \\
        & \qquad \qquad + \sum_{i=1}^2 V_D(\hat \M_t, \pi_\star, \pi_{i,t}) -  b_{\overline \cM}(\overline \M_t, (\pi_\star, \pi_{i,t}), \delta) \Big]\\
        &\stackrel{(i)}{\leq} \sum_{i=1}^2 \sum_{t=1}^T \left[V_D(\M_\star, \pi_\star, \pi_{i,t}) - V_D( \hat \M_t, \pi_\star, \pi_{i,t}) + b_{\overline \cM}(\overline \M_t, (\pi_\star, \pi_{i,t}), \delta) \right]\\
    \end{align*}

    Inequality $(i)$ holds since $V_D( \hat \M_t, \pi_\star, \pi_{i,t}) = -V_D( \hat \M_t, \pi_{i,t}, \pi_\star)$, and $\pi_{i,t} \in \Pi_t$ for $i=1,2$ implies that
    $$V_D( \hat \M_t, \pi_{i,t}, \pi_\star) + b_{\overline \cF}(\hat \prob_{\cD_t}, (\pi_\star, \pi_1), \delta) + z(Bp) b_\cP(\hat \prob_{\cD_t}, \pi_\star, \delta) + z(Bp) b_\cP(\hat \prob_{\cD_t}, \pi_1, \delta) \geq 0$$

    Now note that the following holds uniformly over all timesteps $t$ with probability $1-3\delta/8$ for $i=1,2$ simultaneously using Assumption~\ref{assump:bonus-assumption} multiple times and applying a union bound.
    \begin{align*}
        V_D(\M_\star, \pi_\star, \pi_{i,t}) - V_D( \hat \M_t, \pi_\star, \pi_{i,t}) &= V_D((\prob_\star, \overline f^\star), \pi_\star, \pi_{i,t}) - V_D( (\hat \prob_{\cD_t}, \overline f_{\cD_t}), \pi_\star, \pi_{i,t})\\
        &= V_D((\prob_\star, \overline f^\star), \pi_\star, \pi_{i,t}) - V_D( (\hat \prob_{\cD_t}, \overline f^\star), \pi_\star, \pi_{i,t})\\
        &\qquad + V_D((\hat \prob_{\cD_t}, \overline f^\star), \pi_\star, \pi_{i,t}) - V_D( (\hat \prob_{\cD_t}, \overline f_{\cD_t}), \pi_\star, \pi_{i,t})\\
        &= V(\prob_\star, \overline f^\star, \pi_\star) - V(\hat \prob_{\cD_t}, \overline f^\star, \pi_\star) + V(\hat \prob_{\cD_t}, \overline f^\star, \pi_{i,t}) - V(\prob_\star, \overline f^\star, \pi_{i,t})\\
        & \qquad + V_D((\hat \prob_{\cD_t}, \overline f^\star), \pi_\star, \pi_{i,t}) - V_D( (\hat \prob_{\cD_t}, \overline f_{\cD_t}), \pi_\star, \pi_{i,t})\\
        &= z(Bp) b_\cP(\hat \prob_{\cD_t}, \pi_\star, \delta) + z(Bp) b_\cP(\hat \prob_{\cD_t}, \pi_{i,t}, \delta)\\
        &\qquad + b_{\overline \cF}(\hat \prob_{\cD_t}, (\pi_\star, \pi_{i,t}), \delta) \\
        &= b_{\overline \cM}(\overline \M_t, (\pi_\star, \pi_{i,t}), \delta)
    \end{align*}

So, with probability $1-3\delta/16$, we have that
\begin{align*}
    \reg_D(T) &\leq \sum_{t=1}^T \sum_{i=1}^2 b_{\overline \cM}(\overline \M_t, (\pi_\star, \pi_{i,t}), \delta)\\
    &\stackrel{(i)}{\leq} 2\sum_{t=1}^T b_{\overline \cM}(\overline \M_t, (\pi_{1,t}, \pi_{2,t}), \delta)\\
    &=  2\sum_{t=1}^T z(Bp) b_\cP(\hat \prob_{\cD_t}, \pi_{1,t}, \delta) + z(Bp) b_\cP(\hat \prob_{\cD_t}, \pi_{2,t}, \delta) + b_{\overline \cF}(\hat \prob_{\cD_t}, (\pi_{1,t}, \pi_{2,t}), \delta) \\
    &\stackrel{(ii)}{\leq}  \widetilde \cO\left(\sum_{t=1}^T (z_1(Bp) b_\cP(\prob_\star, \pi_{1,t}, \delta) + z_1(Bp) b_\cP(\prob_\star, \pi_{2,t}, \delta) + b_{\overline \cF}(\prob_\star, (\pi_{1,t}, \pi_{2,t}), \delta) \right)
\end{align*}
where inequality $(i)$ holds since $(\pi_{1,t}, \pi_{2,t}) = \argmax_{\pi, \pi' \in \Pi_t} b_{\overline \cM}(\overline \M_t, (\pi, \pi'), \delta)$ and inequality $(ii)$ holds with probability $1-
3\delta/8$ by 6 applications of the change of measure inequality in Assumption~\ref{assump:bonus-assumption}.

Now, we can use the fact that Assumption~\ref{assump:bonus-assumption} is satisfied again to conclude that with probability $1-\delta/32$.
\begin{align*}
    \sum_{t=1}^T (z_1(Bp) b_\cP(\prob_\star, \pi_{1,t}, \delta) + z_1(Bp) b_\cP(\prob_\star, \pi_{2,t}, \delta) + b_{\overline \cF}(\prob_\star, (\pi_{1,t}, \pi_{2,t}), \delta) = \widetilde \cO(C_P(\cM, T, \delta) + C_F(\overline \cM, T, \delta))
\end{align*}

Taking a union bound over all inequalities stated so far, we have the following with probability $1-\delta$
\begin{align*}
    \reg_D(T) &= \widetilde \cO(C_P(\cM, T, \delta) + C_F(\overline \cM, T, \delta))
\end{align*}
as desired.
\end{proof}

Again, the following corollary is immediate from Theorem~\ref{thm:regret-generic-bonus-based}, Theorem~\ref{thm:known-model-por-ucbvi-regret} and Lemma~\ref{lem:dim-f-f-bar}.
\begin{corollary}\label{cor:dueling-ucbvi}
    By using POR-UCBVI as the algorithm in the dueling reduction in Algorithm~\ref{algo:unknown_model_dueling_bonus_based}, we can get a bound on the dueling regret given by
    $$\reg_D(T) =     \widetilde \cO\left(\left(pC(H,S,A)+ \sum_{h\in \cH_p}\sqrt{\bar d_{E,h}(d_{C,h}+H)}\right) \sqrt{T}\right)$$
    where $\bar d_{E,h} = \dim_E\left(\cF_h, \frac{B}{2T}\right)$. 
\end{corollary}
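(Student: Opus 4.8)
The plan is to read off the corollary as a direct instantiation of the bonus-based dueling reduction (the bonus analogue of Theorem~\ref{thm:reduction-set-based}, applied to Algorithm~\ref{algo:unknown_model_dueling_bonus_based}) with POR-UCBVI playing the role of the underlying cardinal optimistic algorithm. That reduction guarantees $\reg_D(T) = \widetilde\cO(C_P(\cM, T, \delta) + C_F(\overline\cM, T, \delta))$ whenever the bonuses and estimates fed into it satisfy Assumption~\ref{assump:bonus-assumption}. So the first step is to verify the hypothesis: the transition estimate $\hat\prob_{\cD_t}$ and its UCBVI-style bonus $b_\cP$, together with the least-squares estimate $\overline f_{\cD_t}$ of the difference reward and its POR-UCBVI bonus $b_{\overline\cF}$, satisfy Assumption~\ref{assump:bonus-assumption} for the product model $\overline\M$. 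For the transition part this is exactly what was shown in the analysis of POR-UCBVI (Theorem~\ref{thm:unknown-model-ucbvi-once}), since $\overline\prob = \prob\otimes\prob$ reuses the same Markovian observed-state dynamics; for the reward part it follows by running the least-squares concentration (Lemma~\ref{lem:f-h-conc}) and the bonus-sum bound (Lemma~\ref{lem:bounding-sum-f-bonuses}) with the decoder-induced class $\cF_h$ replaced by the difference class $\overline\cF_h$, which is legitimate because Assumption~\ref{assump:tractable-porrl} for dueling feedback gives exactly the $\eta_h$-subgaussian, Lipschitz-$\sigma_h$ structure those lemmas require.

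With the reduction applicable, the second step is to evaluate the two complexity terms for the POR-UCBVI instantiation. The probability term $C_P(\cM, T, \delta)$ is the one controlled entirely by the transition bonuses $b_\cP$ over the original (non-product) model, and the POR-UCBVI bounds give $C_P(\cM, T, \delta) = \widetilde\cO(p\,C(H,S,A)\sqrt{T})$ with $C(H,S,A) = H\sqrt{SA} + S\sqrt{HA}$. The reward term $C_F(\overline\cM, T, \delta)$ is governed by the reward bonuses over the difference class; matching the structure of the known-transition POR-UCBVI regret (Theorem~\ref{thm:known-model-por-ucbvi-regret}, where $C_P$ vanishes and only the reward contribution survives) yields $C_F(\overline\cM, T, \delta) = \widetilde\cO\big(\sum_{h\in\cH_p}\sqrt{\dim_E(\overline\cF_h, B/T)\,(d_{C,h}+H)}\,\sqrt{T}\big)$, where the covering term for $\overline\cF_h$ is comparable to that of $\cF_h$ up to a constant factor absorbed into $\widetilde\cO$.

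The third step is the conversion of the difference-class eluder dimension back to the original class. Applying Lemma~\ref{lem:dim-f-f-bar} with $\epsilon = B/T$ gives $\dim_E(\overline\cF_h, B/T) \le 9\,\dim_E(\cF_h, B/(2T)) = 9\,\bar d_{E,h}$, so the factor $9$ (and the $\sqrt{9}=3$ inside the root) is absorbed into $\widetilde\cO$ and the reward term becomes $\widetilde\cO\big(\sum_{h\in\cH_p}\sqrt{\bar d_{E,h}(d_{C,h}+H)}\,\sqrt{T}\big)$; adding the two contributions produces the stated bound. I expect the only genuinely delicate point — and hence the main obstacle — to be checking that Assumption~\ref{assump:bonus-assumption} really transfers to the difference class inside the product MDP with all scales matched: one must confirm that the least-squares and bonus machinery behaves identically when the target is $\overline f_h = f_h(\tau_1[h]) - f_h(\tau_2[h])$ (noting $|\overline f_h| \le 2B$, which is what forces the $B/(2T)$ radius in $\bar d_{E,h}$) and that the covering number of $\overline\cF_h$ enters the $d_{C,h}+H$ factor only up to constants. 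Everything else is routine bookkeeping that folds into the $\widetilde\cO$ notation.
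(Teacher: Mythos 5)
Your proposal is correct and follows essentially the same route as the paper, which obtains the corollary directly by instantiating the bonus-based dueling reduction with POR-UCBVI, taking the $C_P(\cM,T,\delta)$ term from the unknown-model analysis (Theorem~\ref{thm:unknown-model-ucbvi-once}), the structure of $C_F(\overline\cM,T,\delta)$ from the known-model analysis (Theorem~\ref{thm:known-model-por-ucbvi-regret}), and converting $\dim_E(\overline\cF_h,\cdot)$ to $\bar d_{E,h}=\dim_E\left(\cF_h,\frac{B}{2T}\right)$ via Lemma~\ref{lem:dim-f-f-bar}. The details you spell out (re-running Lemmas~\ref{lem:f-h-conc} and~\ref{lem:bounding-sum-f-bonuses} on the difference class under the dueling version of Assumption~\ref{assump:tractable-porrl}, and absorbing the factor $9$ and covering-number scale shifts into $\widetilde\cO$) are exactly the bookkeeping the paper leaves implicit in declaring the corollary ``immediate.''
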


\newpage

%%%%%%%%%%%%%%%%%%%%%%%%%%%%%%%%%%%%%%%%%%%%%%%%%%%%%%%%%%%

\end{document}